\setlist{nosep}
\apptocmd{\sloppy}{\hbadness 10000\relax}{}{}
\newcommand{\wtilde}{\widetilde}
\title{Benefits of Additive Noise\\ in Composing Classes with Bounded Capacity}
\author{
    Alireza Fathollah Pour\thanks{McMaster University, \texttt{fathola@mcmaster.ca}}
    \and 
    Hassan Ashtiani\thanks{McMaster University, \texttt{zokaeiam@mcmaster.ca}. Hassan Ashtiani is also a faculty affiliate at Vector Institute and supported by an NSERC Discovery Grant.}
}
\numberwithin{equation}{section}
\begin{document}

\maketitle

\begin{abstract}

We observe that given two (compatible) classes of functions $\cF$ and $\cH$ with small capacity as measured by their uniform covering numbers, the capacity of the composition class $\cH \circ \cF$ can become prohibitively large or even unbounded. We then show that adding a small amount of Gaussian noise to the output of $\cF$ before composing it with $\cH$ can effectively control the capacity of $\cH \circ \cF$, offering a general recipe for modular design. To prove our results, we define new notions of uniform covering number of random functions with respect to the total variation and Wasserstein distances. We instantiate our results for the case of multi-layer sigmoid neural networks. Preliminary empirical results on MNIST dataset indicate that the amount of noise required to improve over existing uniform bounds can be numerically negligible (i.e., element-wise i.i.d. Gaussian noise with standard deviation $10^{-240}$).\footnote{The source codes are available at \url{https://github.com/fathollahpour/composition_noise} } 
\end{abstract}
\newpage
\setcounter{tocdepth}{1}
\tableofcontents
\newpage
\section{Introduction}

Let $\cF$ be a class of functions from $\cX$ to $\cY$, and $\cH$ a class of functions from $\cY$ to $\cZ$. Assuming that $\cF$ and $\cH$ have bounded ``capacity'',  can we bound the capacity of their composition, i.e., $\cH \circ \cF=\{h \circ f \mid f\in \cF, h\in\cH\}$? Here, by capacity we mean learning-theoretic quantities such as VC dimension, fat-shattering dimension, and (uniform) covering numbers associated with these classes (see \cite{vapnik1999nature,anthony1999neural,shalev2014understanding,mohri2018foundations} for an introduction). Being able to control the capacity of composition of function classes is useful, as it offers a modular approach to design sophisticated classes (and therefore learning algorithms) out of simpler ones. To be concrete, we want to know if the uniform covering number (as defined in the next section) of $\cH \circ \cF$ can be ``effectively'' bounded as a function of the uniform covering numbers of $\cF$ and $\cH$.

The answer to the above questions is true when $\cF$ is a set of binary valued functions (i.e., $\cY=\{0,1\}$ in the above). More generally, the capacity of the composition class (as measured by the uniform covering number) can be bounded as long as $|\cY|$ is relatively small (see Proposition~\ref{prop7}). But what if $\cY$ is an infinite set, such as the natural case of $\cY=[0,1]$? Unfortunately, in this case the capacity of $\cH \circ \cF$ (as measured by the covering number) 
can become unbounded (or excessively large)
even when both $\cF$ and $\cH$ have bounded (or small) capacities; see Propositions~{\ref{prop8}} and {\ref{prop9}}.

Given the above observation, we ask whether there is a general and systematic way to control the capacity of the composition of bounded-capacity classes. More specifically, we are interested in the case where the domain sets are multi-dimensional real-valued vectors (e.g., $\cX\subset\bR^d$, $\cY\subset\bR^p$, and $\cZ\subset \bR^q$). The canonical examples of such classes are those associated with neural networks.

A common approach to control the capacity of $\cH \circ \cF$ is assuming that $\cH$ and $\cF$ have bounded capacity and $\cH$ consists of Lipschitz functions (with respect to appropriate metrics). Then the capacity of $\cH \circ \cF$ can be bounded as long as $\cH$ has a small ``global cover'' (see Remark~\ref{rem:global}). This observation has been used to bound the capacity of neural networks in terms of the magnitude of their weights~\citep{bartlett1996valid}. More generally, the capacity of neural networks that admit Lipschitz  continuity can be bounded based on their group norms and spectral norms \citep{pmlr-v40-Neyshabur15,bartlett2017spectrally,golowich2018size}. One benefit of this approach is that the composition of Lipschitz classes is still Lipschitz (although with a larger Lipschitz constant). 

While building classes of functions from composition of Lipschitz classes is useful, it does not necessarily work as a general recipe. In fact, some commonly used classes of functions do not admit a small Lipschitz constant. Consider the class of single-layer neural networks defined over bounded input domain $[-B,B]^d$ and with the sigmoid activation function. While the sigmoid activation function itself is Lipschitz, the Lipschitz constant of the network depends on the magnitude of the weights. Indeed, we empirically observe that this can turn Lipschitzness-based bounds on the covering number of neural networks worse than classic VC-based bounds. 

Another limitation of using Lipschitz classes is that they cannot be easily ``mixed and matched'' with other (bounded-capacity) classes. For example, suppose $\cF$ is a class of $L$-Lipschitz functions (e.g., multi-layer sigmoid neural networks with many weights but small magnitudes). Also, assume $\cH$ is a non-Lipschitz class with bounded uniform covering number (e.g., one layer sigmoid neural network with unbounded weights). Then although both $\cF$ and $\cH$ have bounded capacity, $\cH\circ\cF$ is not Lipschitz and its capacity cannot be generally controlled.

We take a different approach for composing classes of functions. A key observation that we make and utilize is that adding a little bit of noise while ``gluing'' two classes can help in controlling the capacity of their composition. In order to prove such results, we define and study uniform covering numbers of random functions with respect to total variation and Wasserstein metrics. The bounds for composition then come naturally through the use of data processing inequality for the total variation distance metric.

{\bf Contributions and Organization.} 

\begin{itemize}
    \item Section~\ref{sec:notations} provides the necessary notations and includes the observations that composing real-valued functions can be more challenging than binary valued functions (Propositions~\ref{prop7},\ref{prop8}, and \ref{prop9}).
   
    \item In Section~\ref{sec:randhypo}, we define a new notion of covering number for random functions (Definition~\ref{def:unif_cnr}) with respect to total variation (TV) and Wasserstein distances.
   
    \item The bulk of our technical results appear in Section~\ref{sec:results}. These include a composition result for random classes with respect to the TV distance (Lemma~\ref{lemma:compose_tv}) that is based on the data processing inequality. 
    We also show how one can translate TV covering numbers to conventional $\|.\|_2$ counterparts (Theorem~\ref{thm:tv_to_ell2}) and vice versa (Corollary~\ref{coll:ell2_to_tv}). A useful tool is Theorem~\ref{thm:wasser_tv} which translates Wasserstein covers to TV covers when a Gaussian noise is added to the output of functions.
    
    \item Section~\ref{sub:NN} provides a stronger type of covering number for classes of single-layer noisy neural networks with the sigmoid activation function (Theorem~\ref{lemma:net}).

    \item In Section~\ref{sec:applications}, we use the tools developed in the previous sections and prove a novel bound on the $\|.\|_2$ covering number of noisy deep neural networks (Theorem~\ref{thm:neuralnet}). We then instantiate our results (Corollary~\ref{thm:ours_main}) and compare it with several other covering number bounds in Section~\ref{subsec:cover_bounds}.
    
    \item In Section~\ref{sec:nvac} we define NVAC, a metric for comparing generalization bounds (Definition~\ref{def:nvac}) based on the number of samples required to make the bound non-vacuous.
    
    \item We offer some preliminary experiments, comparing various generalization bounds in Section~\ref{sec:experiments}. We observe that even a negligible amount of Gaussian noise can improve NVAC over other approaches without affecting the accuracy of the model on train or test data.

\end{itemize}

\section{Related work}
Adding various types of noise have been empirically shown to be beneficial in training neural networks. In dropout noise \citep{JMLR:v15:srivastava14a} (and its variants such as DropConnect \citep{wan2013regularization}) the output of some of the activation functions (or weights) are randomly set to zero. These approaches are thought to act as a regularizer. Another example is Denoising AutoEncoders~\citep{vincent2008extracting}, which adds noise to the input of the network while training stacked autoencoders. 

 There has been efforts on studying the theory behind the effects of noise in neural networks.  \citet{jim1996analysis} study the effects of different types of additive and multiplicative noise on convergence speed and generalization of recurrent neural networks (RNN) and suggest that noise can help to speed up the convergence on local minima surfaces. \citet{lim2021noisy} formalize the regularization effects of noise in RNNs and show that noisy RNNs are more stable and robust to input perturbations. \citet{WANG2019177} and \citet{gao2016dropout} analyze the networks with dropout noise and find bounds on Rademacher complexities that are dependent on the product of norms and dropout probability. It is noteworthy that our techniques and results are quite different, and require a negligible amount of additive noise to work, while existing bounds for dropout improve over conventional bounds only if the amount of noise is substantial. Studying dropout noise with the tools developed in this paper is a direction for future research.

Studying PAC learning and its sample complexity is by now a mature field; see \citet{vapnik1999nature,shalev2014understanding,mohri2018foundations}. In the case of neural networks, standard Vapnik-Chervonenkis-based complexity bounds have been established~\citep{baum1988size, maass1994neural,goldberg1995bounding,vidyasagar1997theory, sontag1998vc,koiran1998vapnik,bartlett1998almost,bartlett2003vapnik, bartlett2019nearly}. These offer generalization bounds that depend on the number of parameters of the neural network. There is also another line of work that aims to prove a generalization bound that mainly depends on the norms of the weights and Lipschitz continuity properties of the network rather than the number of parameters~\citep{bartlett1996valid,anthony1999neural, zhang2002covering,pmlr-v40-Neyshabur15, bartlett2017spectrally,neyshabur2017pac,golowich2018size,arora2018stronger,nagarajan2018deterministic,long2020size}. We provide a more detailed discussion of some of these results in Appendix~\ref{app:coverapproaches}. Finally, we refer the reader to~\citet{anthony1999neural} for an introductory discussion on this subject.

The above-mentioned bounds are usually vacuous for commonly used data sets and architectures. \citet{DR17} (and later \citet{zhou2019non}) show how to achieve a non-vacuous bound using the PAC Bayesian framework. These approaches as well as compression-based methods~\citep{arora2018stronger} are, however, examples of ``two-step'' methods; see Appendix~\ref{app:coverapproaches} for more details. It has  been argued that uniform convergence theory may not fully explain the performance of neural networks~\citep{nagarajan2019uniform, zhang2021understanding}. One conjecture is that implicit bias of gradient descent~\citep{gunasekar2017implicit,arora2019implicit,ji2020gradient,chizat2020implicit,ji2021characterizing} can lead to benign overfitting~\citep{belkin2018overfitting,belkin2019does,bartlett2020benign}; see \citet{bartlett2021deep} for a recent overview.

In a recent line of work, generalization has been studied from the perspective of information theory~\citep{russo2016controlling, xu2017information, russo2019much, steinke2020reasoning}, showing that a learning algorithm will generalize if the (conditional) mutual information between the training sample and the learned model is small. Utilizing these results, a number of generic generalization bounds have been proved for Stochastic Gradient Langevin Descent (SGLD)~\citep{raginsky2017non, haghifam2020sharpened} as well as Stochastic Gradient Descent (SGD)~\cite{neu2021information}. Somewhat related to our ``noise analysis'', these approaches (virtually) add noise to the parameters to control the mutual information. In contrast, we add noise between modules for composition (e.g., in between layers of a neural network). Furthermore, we prove uniform (covering number) bounds while these approaches are for generic SGD/SGLD and are mostly agnostic to the structure of the hypothesis class. Investigating the connections between our analysis and information-theoretic techniques is a direction for future research.

\section{Notations and background}\label{sec:notations}
\paragraph{Notation.} $\cX\subseteq \bR^d$ and $\cY \subseteq \bR^p$ denote two (domain) sets. For $x\in \cX$, let $\|x\|_1,\|x\|_2$, and $\|x\|_{\infty}$ denote the $\ell_1,\ell_2$, and $\ell_{\infty}$ norm of the vector $x$, respectively. We denote the cardinality of a set $S$ by $|S|$. The set of natural numbers smaller or equal to $m$ are denoted by $[m]$. A hypothesis is a Borel function $f:\bR^d\rightarrow\bR^p$, and a hypothesis class $\cF$ is a set of hypotheses.

We also define the random counterparts of the above definitions and use an overline to distinguish them from the non-random versions. $\rv{\cX}$ denotes the set of all random variables defined over ${\cX}$ that admit a generalized density function.\footnote{Both discrete (by using Dirac delta function) and absolutely continuous random variables admit generalized density function.} We sometimes abuse the notation and write $\rv{x}\in \cX$ rather than $\rv{x}\in \rv{\cX}$ (e.g., $\rv{x}\in \bR^d$ is a random variable taking values in $\bR^d$).
By $\rv{y}=f(\rv{x})$ we denote a random variable that is the result of mapping $\rv{x}$ using a Borel function ${f}:\bR^d\rightarrow\bR^p$.
We use $\rv{f}:\bR^d\rightarrow\bR^p$ to indicate that the mapping itself can be random. We use $\rv{\cF}$ to signal that the class can include random hypotheses. 
We conflate the notation for random hypotheses so that they can be applied to both random and non-random inputs (e.g., $\rv{f}(\rv{x})$ and $\rv{f}(x)$).\footnote{Technically, we consider $\rv{f}(x)$ to be $\rv{f}(\rv{\delta_x})$, where $\rv{\delta_x}$ is a random variable with Dirac delta measure on $x$.} 

\begin{definition}[Composition of two hypothesis classes]
We denote by $h\circ f$ the function $h(f(x))$ (assuming the range of $f$ and the domain of $h$ are compatible).
The composition of two hypothesis classes $\cF$ and $\cH$ is defined by $\cH\circ\cF=\{h\circ f\ \mid h\in \cH, f\in \cF\}$. Composition of classes of random hypotheses is defined similarly by $\rv{\cH}\circ\rv{\cF}=\{\rv{h}\circ \rv{f}\ \mid \rv{h}\in \rv{\cH}, \rv{f}\in \rv{\cF}\}$.
\end{definition}

The following singleton class $\rv{\cG_{\sigma}}$ will be used to create noisy functions (e.g., using $\rv{\cG_{\sigma}} \circ \cF$).
\begin{definition}[The Gaussian Noise Class]
The $d$-dimensional noise class with scale $\sigma$ is denoted by $\rv{\cG_{\sigma, d}}=\{\rv{g_{\sigma,d}}\}$. Here, $\rv{g_{\sigma,d}}:\bR^d\rightarrow\bR^d$ is a random function defined by $\rv{g_{\sigma,d}}(\rv{x})=\rv{x}+\rv{z}$, where $\rv{z}\sim \cN(\mathbf{0},\sigma^2 I_d)$. When it is clear from the context we drop $d$ and write $\rv{\cG_{\sigma}}=\{\rv{g_{\sigma}}\}$.
\end{definition}

In the rest of this section, we define the standard notion of uniform covering numbers for hypothesis classes. Intuitively, classes with larger uniform covering numbers have more capacity/flexibility, and therefore require more samples to be learned.

\begin{definition}[Covering number] 
Let $(\cX,\rho)$ be a metric space. We say that a set $A\subset \cX$ is $ \epsilon$-covered by a set $ C\subseteq A$ with respect to $ \rho$, if for all $a\in A$ there exists $c\in C$ such that $\displaystyle \rho(a,c)\leq\epsilon$. The cardinality of the smallest set $ C$ that $ \epsilon$-covers $A$ is denoted by $ N(\epsilon,A,\rho)$ and it is referred to as the $\epsilon$-covering number of $A$ with respect to metric $\rho$.
\end{definition}

\begin{definition}[Extended metrics]
Let $(\cX, \rho)$ be a metric space. Let $u=(a_1, \ldots, a_m),v=(b_1, \ldots, b_m)\in \cX^m$ for $m\in \bN$. The $\infty$-extended and $\ell_2$-extended metrics over $\cX^m$ are defined by $\rho^{\infty, m}(u,v)=\sup_{1\leq i \leq m} \rho(a_i,b_i)$ and $\rho^{\ell_2, m}(u,v)=\sqrt{\frac{1}{m}\sum_{i=1}^m (\rho(a_i,b_i))^2}$, respectively. We drop $m$ and use $\rho^\infty$ or $\rho^{\ell_2}$ if it is clear from the context.
\end{definition}

\begin{remark}\label{rem:one_infty}
The extended metrics are used in Definition~\ref{def:unif_cn} and capture the distance of two hypotheses on an input sample of size $m$.
A typical example of $\rho$ is the Euclidean distance over $\bR^p$, for which the extended metrics are denoted by $\|.\|^{\infty,m}_2$ and $\|.\|^{\ell_2,m}_2$. Unlike $\infty$-extended metric, the $\ell_2$-extended metric is normalized by $1/\sqrt{m}$, and therefore we have $\rho^{\ell_2,m}(u,v)\leq\rho^{\infty,m}(u,v)$ for all $u,v\in \cX^m$.
\end{remark}
\begin{definition}[Uniform covering number]\label{def:unif_cn} 
Let $(\cY,\rho)$ be a metric space and $\cF$ a hypothesis class of functions from $\cX$ to $\cY$. For a set of inputs $S=\{x_1, x_2, \ldots, x_m\}\subseteq \cX$, we define the restriction of $\cF$ to $S$ as $\cF_{|S}=\{(f(x_1), f(x_2), \ldots, f(x_m)) :f\in \cF\}\subseteq \cY^m$. The uniform $\epsilon$-covering numbers of hypothesis class $\cF$ with respect to metrics $\rho^{\infty},\rho^{\ell_2}$ are denoted by $N_{U}(\epsilon,\cF,m,\rho^{\infty})$ and $N_{U}(\epsilon,\cF,m,\rho^{\ell_2})$ and are the maximum values of $ N(\epsilon,\cF_{|S},\rho^{\infty, m})$ and $ N(\epsilon,\cF_{|S},\rho^{\ell_2, m})$ over all $S\subseteq \cX$ with $|S|=m$, respectively.
\end{definition}
It is well-known that the Rademacher complexity and therefore the generalization gap of a class can be bounded based on logarithm of the uniform covering number. For sake of brevity, we defer those results to Appendix~\ref{app:dudley}. Therefore, our main object of interest is bounding (logarithm of) the uniform covering number. The following propositions show that there is a stark difference between classes of functions with finite range versus continuous valued functions when it comes to bounding the uniform covering number of composite classes; the proofs can be found in Appendix \ref{app:props}.
\begin{proposition}\label{prop7}
Let $\cY$ be a finite domain ($|\cY|=k$) and $\rho(y,\hat{y})=\indicator\{y \neq \hat{y}\}$ be a metric over $\cY$. For any class $\cF$ of functions from $\cX$ to $\cY$ and any class $\cH$ of functions from $\cY$ to $\bR^d$ we have $N_U(\epsilon,\cH\circ\cF,m,\|.\|_2^{\infty})\leq N_1.N_U(\epsilon,\cH,mN_1,\|.\|_2^{\infty})$ where $N_1=N_U(0.5,\cF,m,\rho^{\infty})$.
\end{proposition}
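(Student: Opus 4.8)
The plan is to run a direct ``enumerate the intermediate values, then cover'' argument, exploiting the fact that under the discrete metric $\rho$ a $0.5$-cover is automatically an \emph{exact} cover. Fix an arbitrary input sample $S=\{x_1,\dots,x_m\}\subseteq\cX$; it suffices to show $N(\epsilon,(\cH\circ\cF)_{|S},\|.\|_2^{\infty,m})\le N_1\cdot N_U(\epsilon,\cH,mN_1,\|.\|_2^{\infty})$ and then take the maximum over all such $S$.

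First I would take a minimal $0.5$-cover of $\cF_{|S}$ with respect to $\rho^{\infty,m}$; by definition of $N_1$ it has at most $N_1$ elements, and since the cover may be chosen inside $\cF_{|S}$ there are functions $f_1,\dots,f_{N_1}\in\cF$ realizing it. Because $\rho(y,\hat y)\in\{0,1\}$, the inequality $\rho^{\infty,m}\bigl((f(x_i))_i,(f_j(x_i))_i\bigr)\le 0.5$ forces $f(x_i)=f_j(x_i)$ for every $i\in[m]$; hence every $f\in\cF$ agrees on $S$ with some $f_j$. Next I would collect all resulting intermediate values into a single set $T=\{f_j(x_i):j\in[N_1],\,i\in[m]\}\subseteq\cY$, which has $|T|\le mN_1$, and take an $\epsilon$-cover $h_1,\dots,h_M\in\cH$ of $\cH_{|T}$ with respect to $\|.\|_2^{\infty,|T|}$; using monotonicity of the uniform covering number in the sample size together with $|T|\le mN_1$, we may take $M\le N_U(\epsilon,\cH,mN_1,\|.\|_2^{\infty})$.

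The proposed cover of $(\cH\circ\cF)_{|S}$ is the family $\{h_\ell\circ f_j : \ell\in[M],\,j\in[N_1]\}$, of cardinality at most $N_1\cdot N_U(\epsilon,\cH,mN_1,\|.\|_2^{\infty})$. To verify it, take any $h\circ f\in\cH\circ\cF$: pick $j$ with $f(x_i)=f_j(x_i)$ for all $i\in[m]$, and pick $\ell$ with $\|h(t)-h_\ell(t)\|_2\le\epsilon$ for all $t\in T$; since each $f_j(x_i)\in T$ we get $\|h(f(x_i))-h_\ell(f_j(x_i))\|_2=\|h(f_j(x_i))-h_\ell(f_j(x_i))\|_2\le\epsilon$ for every $i\in[m]$, i.e. $h_\ell\circ f_j$ lies within $\epsilon$ of $h\circ f$ in $\|.\|_2^{\infty,m}$. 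Maximizing over all $S$ with $|S|=m$ then gives the claim.

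I expect the only real subtlety to be bookkeeping rather than mathematics: one must ensure that the \emph{single} auxiliary sample $T$ simultaneously contains all intermediate points $f_j(x_i)$ (this is precisely what forces the sample size $mN_1$, not $m$, inside the covering number of $\cH$), and one must invoke monotonicity of $N_U$ in the sample size to absorb collisions that make $|T|<mN_1$. It is worth flagging the structural reason the argument is special to finite $\cY$: for a continuous target metric a $0.5$-cover of $\cF_{|S}$ no longer pins down $f$ on $S$, so the intermediate values cannot be enumerated — which is exactly the obstruction exhibited by Propositions~\ref{prop8} and~\ref{prop9}.
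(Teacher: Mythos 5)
Your proof is correct and follows essentially the same route as the paper's: use the discrete metric to upgrade a $0.5$-cover of $\cF_{|S}$ to an exact cover, collect the at most $mN_1$ intermediate values (your $T$, the paper's $S'$), cover $\cH$ on that set, and compose. Your explicit flagging of the monotonicity step used to pass from $|T|\le mN_1$ to the covering number at sample size $mN_1$ is a small clarification the paper leaves implicit, but the argument is the same.
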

\begin{proposition}\label{prop8}
Let $\cF=\{f_w(x)=w x\mid w\in (0,1), x\in (0,1)\}$ be a class of functions and $\cH=\{h(y)=1/y \mid y\in(0,1)\}$ be a singleton class. Then, $N_U(\epsilon,\cF,m,\|.\|_2^{\ell_2})\leq\lceil 2/\epsilon^2\rceil$ and $N_U(\epsilon,\cH,m,\|.\|_2^{\ell_2})=1$, but $N_U(\epsilon,\cH\circ\cF,m,\|.\|_2^{\ell_2})$ is unbounded.
\end{proposition}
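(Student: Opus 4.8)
The plan is to dispatch the three claims in turn; the first two are routine $\epsilon$-net bookkeeping, and the third is where the obstruction lives. For the bound on $\cF$: fix any sample $S=\{x_1,\dots,x_m\}\subseteq(0,1)$ and observe that for $w,w'\in(0,1)$ the restricted functions satisfy
\[
\sqrt{\tfrac1m\sum_{i=1}^m (wx_i-w'x_i)^2}\;=\;|w-w'|\,\sqrt{\tfrac1m\sum_{i=1}^m x_i^2}\;\le\;|w-w'|,
\]
since each $x_i\in(0,1)$. Thus covering $\cF_{|S}$ in $\|.\|_2^{\ell_2,m}$ reduces to covering the one-dimensional parameter interval $(0,1)$ in the usual metric, and a standard $\epsilon$-net whose centers lie inside $(0,1)$ (so that the chosen functions genuinely belong to $\cF_{|S}$, as the definition of covering number requires) has size at most $\lceil 1/\epsilon\rceil$, which is bounded by $\lceil 2/\epsilon^2\rceil$. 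Taking the maximum over $S$ gives the first claim. The equality $N_U(\epsilon,\cH,m,\|.\|_2^{\ell_2})=1$ is immediate: $\cH$ is a singleton, so $\cH_{|S}$ is a single point of $\bR^m$ for every $S$.

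For the unboundedness of $\cH\circ\cF$, the point to exploit is that $y\mapsto 1/y$ is not uniformly continuous near $0$, while $f_w(x)=wx$ can push its argument arbitrarily close to $0$. Concretely, I would take the one-point sample $S=\{1/2\}$. Then
\[
(\cH\circ\cF)_{|S}\;=\;\Big\{\tfrac{1}{w/2}:w\in(0,1)\Big\}\;=\;\{2/w:w\in(0,1)\}\;=\;(2,\infty),
\]
an unbounded subset of $\bR$. Since a finite union of $\epsilon$-balls in $\bR$ is bounded, no finite set $\epsilon$-covers $(2,\infty)$ in $\|.\|_2^{\ell_2,1}=|\cdot-\cdot|$, so $N\big(\epsilon,(\cH\circ\cF)_{|S},\|.\|_2^{\ell_2,1}\big)=\infty$ and hence $N_U(\epsilon,\cH\circ\cF,1,\|.\|_2^{\ell_2})=\infty$. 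The same holds for every $m$: take $S=\{1/2,x_2,\dots,x_m\}$ with distinct $x_i\in(0,1)$ and use $\rho^{\ell_2,m}(u,v)\ge\frac{1}{\sqrt m}|u_1-v_1|$, so an $\epsilon$-cover of $(\cH\circ\cF)_{|S}$ would, upon projecting to the first coordinate, yield a finite $\epsilon\sqrt m$-cover of the unbounded set $(2,\infty)$, a contradiction.

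There is no serious obstacle here; it is a matter of picking the right witness and respecting the definitions. The only subtleties worth flagging are (i) the ``$C\subseteq A$'' clause in the definition of covering number, which forces the net parameters to stay within $(0,1)$; (ii) the $1/\sqrt m$ normalization in the $\ell_2$-extended metric, which is what makes the loose bound $\lceil 2/\epsilon^2\rceil$ comfortably valid and, conversely, only weakens the unboundedness argument by a harmless factor of $\sqrt m$; and (iii) reading ``unbounded'' literally as $+\infty$, which already holds at $m=1$.
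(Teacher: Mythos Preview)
Your proof is correct and follows the same logical skeleton as the paper's: the singleton claim is identical, and the unboundedness argument exploits the same mechanism (as $w\to 0$ the value $1/(wx)$ blows up, so no finite net can cover the restriction). The only presentational differences are that you give a self-contained Lipschitz argument for the $\cF$ bound (the paper just cites Zhang 2002), and you exhibit a specific witness sample $S=\{1/2\}$ whereas the paper fixes an arbitrary sample and argues by contradiction; your version is arguably cleaner, while the paper's shows the covering number is infinite for \emph{every} sample, a slightly stronger (but unneeded) fact.
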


\begin{proposition}\label{prop9}
For every $\epsilon>\epsilon'>0$, there exist hypothesis classes $\cF$ and $\cH$ such that for every $m$ we have $N_U(\epsilon',\cH,m,\|.\|_2^{\infty})\leq m+1$ and 
$N_U(\epsilon',\cF,m,\|.\|_2^{\infty})=1$, yet $N_U(\epsilon,\cH\circ\cF,m,\|.\|_2^{\infty}) \geq 2^m$.
\end{proposition}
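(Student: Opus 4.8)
The plan is to exploit an asymmetry that is invisible for finite-range classes: a class $\cF$ can be uncountably rich and still have uniform covering number $1$, simply because its whole range is squeezed into an interval of width $\epsilon'$, while a single post-composition map in $\cH$ can re-expand that interval into a gap wider than $2\epsilon$. Concretely, I would fix an infinite domain $\cX$, say $\cX=\bN$, set $\cF=\{f_S : S\subseteq\bN\}$ with $f_S(i)=\epsilon'\,\indicator\{i\in S\}$, so every $f\in\cF$ takes values in $\{0,\epsilon'\}\subseteq[0,\epsilon']$, and let $\cH=\{h\}$ be the singleton with $h(0)=0$ and $h(y)=3\epsilon$ for $y>0$ (any fixed Borel $h$ with $h(0)=0$ and $h(\epsilon')>2\epsilon$ works). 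These classes do not depend on $m$.

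First I would dispatch the two ``small capacity'' claims, both immediate. Since every $f\in\cF$ maps into $[0,\epsilon']$ and the all-zeros tuple $\mathbf 0$ equals $f_\emptyset$ restricted to any $S$ with $|S|=m$, the singleton $\{\mathbf 0\}$ is an $\epsilon'$-cover of $\cF_{|S}$ in $\|.\|_2^{\infty,m}$ (every coordinate differs from $0$ by at most $\epsilon'$), so $N_U(\epsilon',\cF,m,\|.\|_2^\infty)=1$. And $\cH$ is a singleton, hence $N_U(\epsilon',\cH,m,\|.\|_2^\infty)=1\le m+1$ trivially.

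The substance is the lower bound for $\cH\circ\cF$. I would fix $S=\{1,\dots,m\}$; then for each $T\subseteq\{1,\dots,m\}$ we have $(h\circ f_T)(i)=3\epsilon\,\indicator\{i\in T\}$, so the restriction of $h\circ f_T$ to $S$ is the tuple $3\epsilon\cdot(\indicator\{1\in T\},\dots,\indicator\{m\in T\})$. As $T$ ranges over the $2^m$ subsets of $\{1,\dots,m\}$ this gives $2^m$ distinct tuples in $\{0,3\epsilon\}^m$, and any two distinct ones disagree in some coordinate and are therefore at $\|.\|_2^{\infty,m}$-distance exactly $3\epsilon>2\epsilon$. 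A standard packing argument then closes it: any $\epsilon$-ball contains at most one of these points (two of them in a common $\epsilon$-ball would be within distance $2\epsilon$ of each other), so every $\epsilon$-cover of $(\cH\circ\cF)_{|S}$ has size at least $2^m$, i.e.\ $N_U(\epsilon,\cH\circ\cF,m,\|.\|_2^\infty)\ge 2^m$.

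I expect the only real obstacle to be conceptual rather than computational: one must resist the reflex that ``uniform covering number $1$'' means ``essentially a singleton class.'' Here $\cF$ \emph{must} be large — if $\cF$ were a singleton, $(\cH\circ\cF)_{|S}$ would be a subset of a restriction of $\cH$, so its covering number could not exceed $N_U(\epsilon,\cH,m,\|.\|_2^\infty)\le N_U(\epsilon',\cH,m,\|.\|_2^\infty)\le m+1$. If one prefers the bound $m+1$ for $\cH$ to be tight rather than loose, the same $\cF$ works with $\cH=\{h_t:t\in\bR\}$, $h_t(y)=3\epsilon\,\indicator\{y\ge t\}$, whose restriction to any $m$ points realizes exactly the $m+1$ pairwise-$3\epsilon$-separated threshold patterns (so $N_U(\epsilon',\cH,m,\|.\|_2^\infty)=m+1$), and picking a threshold $t\in(0,\epsilon')$ reads off each coordinate and recovers all $2^m$ patterns after composition as above.
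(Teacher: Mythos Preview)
Your proof is correct. The underlying mechanism is the same as the paper's---$\cF$ is forced to have $\|.\|_2^\infty$ covering number $1$ by squeezing its range into an interval of width $\epsilon'$, while retaining enough freedom to encode all $2^m$ subsets of an $m$-point sample, and a single threshold-type map in $\cH$ then amplifies the $\epsilon'$ gap past $2\epsilon$---but your instantiation is different and noticeably cleaner. The paper takes $\cX\subseteq\bR$, lets $\cF$ be the class of all functions within $\epsilon'/2$ of the identity, and lets $\cH$ be the full family of thresholds $h_a(x)=\indicator\{x\ge a\}$; it then exhibits a specific packed input set $S'\subseteq[0,\epsilon'/2]$ and, for each target pattern in $\{0,1\}^m$, constructs a perturbation $f\in\cF$ that pushes each coordinate above or below a fixed threshold. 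Your discrete domain $\cX=\bN$ with $f_T(i)=\epsilon'\,\indicator\{i\in T\}$ bypasses that construction entirely: the $2^m$ patterns are already present in $\cF_{|S}$ and $h$ just rescales. The paper's route has the advantage that the bound $N_U(\epsilon',\cH,m,\|.\|_2^\infty)\le m+1$ is tight (thresholds on $m$ points realise exactly $m+1$ patterns), whereas your singleton $\cH$ makes it loose---but you already note this and give the one-line fix (take $\cH$ to be the thresholds $h_t(y)=3\epsilon\,\indicator\{y\ge t\}$ and pick $t\in(0,\epsilon')$), which recovers the paper's $\cH$ while keeping your simpler $\cF$.
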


\section{Covering random hypotheses}\label{sec:randhypo}

We want to establish the benefits of adding (a little bit of) noise when composing hypothesis classes. Therefore, we need to analyze classes of \emph{random} hypotheses. One way to do this is to replace each hypothesis with its expectation, creating a deterministic version of the hypothesis class. Unfortunately, this approach misses the whole point of having noisy hypotheses (and their benefits in composition). Instead, we extend the definition of uniform covering numbers to classes of random hypotheses $\rv{\cF}$. The following is basically the random counterpart of Definition~\ref{def:unif_cn}. 

\begin{definition}[Uniform covering number for classes of random hypotheses]\label{def:unif_cnr}
Let $(\rv{\cY},\rho)$ be a metric space and $\rv{\cF}$ a class of random hypotheses from $\rv{\cX}$ to $\rv{\cY}$. For a set of random variables $\rv{S}=\{\rv{x_1}, \rv{x_2}, \ldots, \rv{x_m}\}\subseteq \rv{\cX}$, we define the restriction of $\rv{\cF}$ to $\rv{S}$ as $\displaystyle \rv{\cF}_{|\rv{S}}=\{(\rv{f}(\rv{x_1}), \rv{f}(\rv{x_2}), \ldots, \rv{f}(\rv{x_m})) :\rv{f}\in \rv{\cF}\}\subseteq{\rv{\cY}}^m$. 
Let $\Gamma\subseteq \rv{\cX}$. The uniform $\epsilon$-covering numbers of $\rv{\cF}$ with respect to $\Gamma$ and metrics $\rho^{\infty}$ and $\rho^{\ell_2}$ are defined by
\begin{equation*}
\begin{aligned}
    N_{U}(\epsilon,\rv{\cF},m,\rho^{\infty}, \Gamma) = \sup_{S\subseteq \Gamma, |S|=m} N(\epsilon,\rv{\cF}_{|\rv{S}},\rho^{\infty, m}),\\
    N_{U}(\epsilon,\rv{\cF},m,\rho^{\ell_2}, \Gamma) = \sup_{S\subseteq \Gamma, |S|=m} N(\epsilon,\rv{\cF}_{|\rv{S}},\rho^{\ell_2, m}).
\end{aligned}
\end{equation*}
\end{definition}

\begin{remark}
Unlike in Definition~\ref{def:unif_cn} where $\rho$ is usually the $\|.\|_2$ metric in the Euclidean space, here in Definition~\ref{def:unif_cnr} $\rho$ is defined over random variables. More specifically, we will use the Total Variation and Wasserstein metrics as concrete choices for $\rho$.
\end{remark}

\begin{remark}
The specific choices that we use for $\Gamma$ are
\begin{itemize}
    \item $\Gamma=\rv{\cX_d}$: the set of all random variables defined over $\bR^d$ that admit a generalized density function.
    \item $\Gamma=\rv{\cX_{B,d}}$:  the set of all random variables defined over $[-B,B]^d$ that admit a generalized density function.
    \item $\Gamma=\rv{\Delta_{d}}=\{\rv{\delta_x} \mid x\in \bR^d\}$ and $\Gamma=\rv{\Delta_{B,d}}=\{\rv{\delta_x} \mid x\in [-B,B]^d\}$, where $\rv{\delta_x}$ is the random variable associated with Dirac delta measure on $x$.
    \item $\Gamma=\rv{\cG_{\sigma,d}} \circ \rv{\cX_{B,d}}=\{\rv{g_{\sigma,d}}(\rv{x})\mid \rv{x}\in \rv{\cX_{B, d}}\}$: all members of $\rv{\cX_{B,d}}$ after being ``smoothed'' by adding (convolving with) Gaussian noise. 
\end{itemize}
\end{remark}

\begin{remark}\label{rem:global}
Some hypothesis classes that we work with have ``global'' covers, in the sense that the uniform covering number does not depend on $m$. We therefore use the following notation
\[
N_{U}(\epsilon,\rv{\cF},\infty,\rho^{{\infty}}, \Gamma) = \lim_{m\to \infty} N_{U}(\epsilon,\rv{\cF},m,\rho^{{\infty}}, \Gamma).
\]
\end{remark}

We now define Total Variation (TV) and Wasserstein metrics over probability measures rather than random variables, but with a slight abuse of notation we will use them for random variables too.

\begin{definition}[Total Variation Distance]
Let $\mu$ and $\nu$ denote two probability measures over $\cX$ and let $\Omega$ be the Borel sigma-algebra over $\cX$. The TV distance between $\mu$ and $\nu$ is defined by
\begin{equation*}
    d_{TV}(\mu,\nu)=\sup_{B\in\Omega}|\mu(B)-\nu(B)|.
\end{equation*}
Furthermore, if $\mu$ and $\nu$ have densities $f$ and $g$ then
\begin{equation*}
    d_{TV}(\mu,\nu)=\sup_{B\in\Omega}\Big|\int_{B}(f(x)-g(x))dx\Big|=\frac{1}{2} \int_{\cX}\left|f(x)-g(x)\right|dx=\frac{1}{2}\|f-g\|_1.
\end{equation*}
\end{definition}
\begin{definition}[Wasserstein Distance]
Let $\mu$ and $\nu$ denote two probability measures over $\cX$, and $\Pi(\mu,\nu)$ be the set of all their couplings. The Wasserstein distance between $\mu$ and $\nu$ is defined by
\begin{equation*}
    d_\cW(\mu,\nu)=\left(\inf_{\pi\in\Pi(\mu,\nu)}\int_{\cX\times\cX}\|x-y\|_2d\pi(x,y)\right).
\end{equation*}
\end{definition}
The following proposition makes it explicit that the conventional uniform covering number with respect to $\|.\|_2$ (Definition~\ref{def:unif_cn}) can be regarded as a special case of Definition~\ref{def:unif_cnr}.
\begin{proposition}\label{prop12}
Let $\cF$ be a class of (deterministic) hypotheses from $\bR^d$ to $\bR^p$. Then

$N_{U}(\epsilon,\cF,m,\|.\|_2^{\infty})=N_{U}(\epsilon,{\cF},d_\cW^{\infty}, m, \rv{\Delta_d})$ and $N_{U}(\epsilon,\cF,m,\|.\|_2^{\ell_2} )=N_{U}(\epsilon,{\cF},d_\cW^{\ell_2}, m, \rv{\Delta_d})$.

\end{proposition}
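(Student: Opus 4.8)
The plan is to exhibit a distance-preserving bijection between the restricted class $\cF_{|S}$ living in $\cY^m$ (equipped with $\|\cdot\|_2^{\infty,m}$, resp.\ $\|\cdot\|_2^{\ell_2,m}$) and the restricted random class $\rv{\cF}_{|\rv{S}}$ living in $(\rv{\cY})^m$ (equipped with $d_\cW^{\infty,m}$, resp.\ $d_\cW^{\ell_2,m}$), for the particular choice $\rv{S}=\{\rv{\delta_{x_1}},\ldots,\rv{\delta_{x_m}}\}$ associated with an arbitrary $S=\{x_1,\ldots,x_m\}\subseteq\bR^d$. Since a covering number depends only on the metric structure of the set being covered (and on the requirement that the cover be a subset of that set), such a bijection forces the two covering numbers of the restrictions to coincide; taking the supremum over all size-$m$ subsets then yields the two claimed identities.

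First I would record the elementary fact that for points $a,b\in\bR^p$ one has $d_\cW(\rv{\delta_a},\rv{\delta_b})=\|a-b\|_2$: the coupling set $\Pi(\rv{\delta_a},\rv{\delta_b})$ is the singleton $\{\rv{\delta_{(a,b)}}\}$, so the infimum defining $d_\cW$ is just $\int_{\bR^p\times\bR^p}\|x-y\|_2\,d\rv{\delta_{(a,b)}}(x,y)=\|a-b\|_2$. Next, because each $f\in\cF$ is deterministic, $f(\rv{\delta_x})$ is, by the convention introduced in Section~\ref{sec:notations}, the random variable $\rv{\delta_{f(x)}}$; hence $\rv{\cF}_{|\rv{S}}=\{(\rv{\delta_{f(x_1)}},\ldots,\rv{\delta_{f(x_m)}}):f\in\cF\}$ is precisely the image of $\cF_{|S}=\{(f(x_1),\ldots,f(x_m)):f\in\cF\}$ under the coordinate-wise map $\Phi:(y_1,\ldots,y_m)\mapsto(\rv{\delta_{y_1}},\ldots,\rv{\delta_{y_m}})$.

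Then I would verify that $\Phi$ is an isometry onto its image for both pairs of extended metrics: by the previous step $d_\cW(\rv{\delta_{y_i}},\rv{\delta_{y_i'}})=\|y_i-y_i'\|_2$ coordinate-wise, and the extended metrics are assembled from the base metric in the same way on both sides ($\sup_i(\cdot)$ for the $\infty$-version, $\sqrt{\frac{1}{m}\sum_i(\cdot)^2}$ for the $\ell_2$-version), so $d_\cW^{\infty,m}(\Phi u,\Phi v)=\|u-v\|_2^{\infty,m}$ and likewise for the $\ell_2$-extended metric. An isometric bijection preserves covering numbers, since $C$ $\epsilon$-covers $A$ if and only if $\Phi(C)$ $\epsilon$-covers $\Phi(A)$, and $C\subseteq A\iff\Phi(C)\subseteq\Phi(A)$; therefore $N(\epsilon,\cF_{|S},\|\cdot\|_2^{\infty,m})=N(\epsilon,\rv{\cF}_{|\rv{S}},d_\cW^{\infty,m})$ and similarly for $\ell_2$. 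Finally, the assignment $S\mapsto\{\rv{\delta_x}:x\in S\}$ is a bijection between size-$m$ subsets of $\bR^d$ and size-$m$ subsets of $\rv{\Delta_d}$, so taking suprema over $S$ on both sides of these equalities gives $N_U(\epsilon,\cF,m,\|\cdot\|_2^\infty)=N_U(\epsilon,\cF,d_\cW^\infty,m,\rv{\Delta_d})$ and its $\ell_2$ analogue.

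I do not expect a genuine obstacle here; the only points needing a little care are the bookkeeping around the convention $f(x):=f(\rv{\delta_x})$ (so that a deterministic map indeed outputs a Dirac random variable), the triviality of the coupling set for Dirac measures, and the observation that covering numbers are insensitive to the coordinate ordering implicit in writing $\cF_{|S}$ and $\rv{\cF}_{|\rv{S}}$ as sets of $m$-tuples, which is harmless because both extended metrics are permutation-symmetric.
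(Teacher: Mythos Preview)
Your proposal is correct and is essentially a fully spelled-out version of the paper's own one-line justification, which simply remarks that the result follows from Definitions~\ref{def:unif_cn} and~\ref{def:unif_cnr} together with the identity $d_\cW(\rv{\delta_x},\rv{\delta_y})=\|x-y\|_2$. Your isometry-plus-bijection argument is exactly the unpacking of that remark, so there is no substantive difference in approach.
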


The proposition is the direct consequence of the Definitions \ref{def:unif_cn} and \ref{def:unif_cnr} once we note that the Wasserstein distance between Dirac random variables is just their $\ell_2$ distance, i.e.,  $d_\cW(\rv{\delta_x}, \rv{\delta_y})=\|x-y\|_2$.

\section{Bounding the uniform covering number}\label{sec:results}

This section provides tools that can be used in a general recipe for bounding the uniform covering number. The ultimate goal is to bound the (conventional) $\|.\|_2^\infty$ and $\|.\|_2^{\ell_2}$ uniform covering numbers for (noisy) compositions of hypothesis classes. In order to achieve this, we will show how one can turn TV covers into $\|.\|_2$ covers (Theorem~\ref{thm:tv_to_ell2}) and vice versa (Corollary~\ref{coll:ell2_to_tv}). But what is the point of going back and forth between $\|.\|_2$ and TV covers? Basically, the data processing inequality ensures an effective composition (Lemma~\ref{lemma:compose_tv}) for TV covers. Our analysis goes through a number of steps, connecting covering numbers with respect to $\|.\|_2$, Wasserstein, and TV distances. The missing proofs of this section can be found in Appendix \ref{app:sec:results}.

The following theorem considers the deterministic class $\cH$ associated with expectations of random hypotheses from $\rv{\cF}$, and shows that bounding the uniform covering number of $\rv{\cF}$ with respect to TV distance is enough for bounding the uniform covering number of $\cH$ with respect to $\|.\|_2$ distance.

\begin{theorem}[From a TV cover to a $\|.\|_2$ cover]\label{thm:tv_to_ell2}
Consider any class $\rv{\cF}$ of random hypotheses $\rv{f}:\bR^d\rightarrow[-B,B]^p$ with bounded output. 
Define the (nonrandom) hypothesis class $\cH=\{h:\bR^d\to [-B,B]^p \mid h(x)=\expects{\rv{f}}{~\rv{f}({x})}, \rv{f}\in \rv{\cF} \}$.
Then for every $\epsilon>0$, $m\in \bN$ these two inequalities hold:
\begin{equation*}
\begin{aligned}
     N_U(2B\epsilon\sqrt{p},\cH,m,\|.\|_2^{{\infty}}) \leq N_U(\epsilon,\rv{\cF},m,d_{TV}^{{\infty}}, \rv{\Delta_{d}}) \leq N_U(\epsilon,\rv{\cF},m,d_{TV}^{{\infty}},  \rv{\cX_{d}}),\\
      N_U(2B\epsilon\sqrt{p},\cH,m,\|.\|_2^{{\ell_2}}) \leq N_U(\epsilon,\rv{\cF},m,d_{TV}^{{\ell_2}}, \rv{\Delta_{d}}) \leq N_U(\epsilon,\rv{\cF},m,d_{TV}^{{\ell_2}}, \rv{\cX_{d}}).
\end{aligned}
\end{equation*}
\end{theorem}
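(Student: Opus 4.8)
The plan is to reduce the two inequalities to a single core estimate: if $\rv{f_1}, \rv{f_2}: \bR^d \to [-B,B]^p$ are random hypotheses whose laws on a point $x$ are close in TV, then their expectations $h_1(x) = \expects{}{\rv{f_1}(x)}$ and $h_2(x) = \expects{}{\rv{f_2}(x)}$ are close in $\|.\|_2$. Concretely, I claim the pointwise bound
\begin{equation*}
\|h_1(x) - h_2(x)\|_2 \leq 2B\sqrt{p}\cdot d_{TV}(\rv{f_1}(x), \rv{f_2}(x)).
\end{equation*}
To see this, write the difference of expectations coordinatewise. For each coordinate $j \in [p]$, the random variable $(\rv{f_1}(x))_j$ takes values in $[-B,B]$, so $|\expects{}{(\rv{f_1}(x))_j} - \expects{}{(\rv{f_2}(x))_j}|$ is an integral of the bounded function $t \mapsto t$ (range of length $2B$) against the signed measure that is the difference of the two marginal laws. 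A bounded-test-function characterization of TV (any $g$ with $\|g\|_\infty \leq c$ satisfies $|\int g \,d\mu - \int g\,d\nu| \leq 2c\, d_{TV}(\mu,\nu)$; here shift $t$ to lie in $[0,2B]$ so the constant is $2B$ applied to the joint TV, and the marginal TV is at most the joint TV under any coupling — actually one uses $d_{TV}$ of the marginals, which is $\leq d_{TV}$ of the joint laws by the data-processing inequality) gives $|\expects{}{(\rv{f_1}(x))_j} - \expects{}{(\rv{f_2}(x))_j}| \leq 2B\, d_{TV}(\rv{f_1}(x), \rv{f_2}(x))$. Summing the squares over the $p$ coordinates and taking the square root yields the factor $\sqrt{p}$, giving the displayed pointwise inequality.

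Given this pointwise estimate, the first (leftmost) inequality in each line follows by lifting to samples. Fix a sample $S = \{x_1,\dots,x_m\} \subseteq \bR^d$ and consider the corresponding Dirac random variables in $\rv{\Delta_d}$. Take an optimal $\epsilon$-cover of $\rv{\cF}_{|\rv{S}}$ with respect to $d_{TV}^{\infty,m}$ (respectively $d_{TV}^{\ell_2,m}$) of size $N_U(\epsilon,\rv{\cF},m,d_{TV}^{\infty},\rv{\Delta_d})$. For any $\rv{f} \in \rv{\cF}$ there is a center $\rv{f'}$ in the cover with $d_{TV}(\rv{f}(x_i),\rv{f'}(x_i)) \leq \epsilon$ for all $i$ (resp. in the $\ell_2$-averaged sense). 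Applying the pointwise bound coordinate-by-coordinate on the sample, the induced deterministic tuples $(h(x_1),\dots,h(x_m))$ and $(h'(x_1),\dots,h'(x_m))$ satisfy $\|h(x_i)-h'(x_i)\|_2 \leq 2B\sqrt{p}\,\epsilon$ for each $i$ (resp. $\sqrt{\frac1m \sum_i \|h(x_i)-h'(x_i)\|_2^2} \leq 2B\sqrt{p}\,\epsilon$, since $x \mapsto x^2$ applied termwise preserves the inequality under the average). Hence $\{(h'(x_1),\dots,h'(x_m))\}$ over centers $\rv{f'}$ forms a $2B\epsilon\sqrt{p}$-cover of $\cH_{|S}$, so $N(2B\epsilon\sqrt{p},\cH_{|S},\|.\|_2^{\infty,m}) \leq N_U(\epsilon,\rv{\cF},m,d_{TV}^{\infty},\rv{\Delta_d})$; taking the sup over $S$ with $|S|=m$ gives the claim. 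The second (rightmost) inequality in each line is immediate from Definition~\ref{def:unif_cnr}: $\rv{\Delta_d} \subseteq \rv{\cX_d}$, so enlarging the index set $\Gamma$ over which the supremum is taken can only increase the uniform covering number.

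The main obstacle is getting the constant in the pointwise estimate exactly right — specifically justifying the factor $2B$ rather than $B$, and confirming that it is the TV distance of the \emph{output} random variables $\rv{f}(x_i)$ (which is what the $d_{TV}^{\infty}$ cover controls) rather than some joint law that appears. The clean way is: for a bounded measurable $g$ with values in an interval of length $\ell$, and any two probability measures $\mu,\nu$, one has $|\int g\,d\mu - \int g\,d\nu| = |\int (g - c)\,d(\mu-\nu)| \leq \frac{\ell}{2}\|\text{density of }\mu - \text{density of }\nu\|_1 = \ell\, d_{TV}(\mu,\nu)$, choosing $c$ to be the midpoint of the interval so $\|g-c\|_\infty \leq \ell/2$ and then using $d_{TV} = \frac12\|\cdot\|_1$. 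Here $\ell = 2B$ (the coordinate ranges over $[-B,B]$), giving the constant $2B$. Everything else is bookkeeping: coordinatewise summation for the $\sqrt p$, the trivial monotonicity in $\Gamma$, and the standard cover-transfer argument, all of which go through identically for the $\infty$-extended and $\ell_2$-extended metrics.
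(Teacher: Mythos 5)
Your proof is correct and follows essentially the same route as the paper's: the trivial $\rv{\Delta_d}\subseteq\rv{\cX_d}$ inclusion for the right-hand inequality, a pointwise bound $\|h_1(x)-h_2(x)\|_2\le 2B\sqrt{p}\,d_{TV}(\rv{f_1}(x),\rv{f_2}(x))$, and then the standard cover-transfer argument (with the extra termwise averaging step for the $\ell_2$-extended metric). The only cosmetic difference lies in how you obtain the pointwise bound: you invoke the bounded-test-function characterization of TV coordinate by coordinate (range $2B$ per coordinate) together with the data-processing inequality for the marginals, then aggregate with $\sqrt{p}$; the paper instead decomposes the $p$-dimensional densities $I,\hat{I}$ into a common part $I_{\min}$ and two disjoint ``difference'' densities $I_{\mathrm{diff}},\hat{I}_{\mathrm{diff}}$, each supported in $[-B,B]^p$ with mean of norm at most $B\sqrt{p}$, so the gap in means is $d_{TV}\cdot\|\int x(\hat{I}_{\mathrm{diff}}-I_{\mathrm{diff}})\,dx\|_2\le 2B\sqrt{p}\,d_{TV}$. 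Both derivations are equally elementary and yield the identical constant.
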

But what is the point of working with the TV distance? An important ingredient of our analysis is the use of data processing inequality which holds for the TV distance (see  Lemma~\ref{lemma:DPI}). The following lemma uses this fact, and shows how one can compose classes with bounded TV covers. 

\begin{lemma}[Composing classes with bounded TV covers]\label{lemma:compose_tv}
Let $\rv{\cF}$ be a class of random hypotheses from $\bR^d$ to $\bR^p$, and $\rv{\cH}$ be a class of random hypotheses from $\bR^p$ to $\bR^q$. For every $\epsilon, \epsilon'>0$, and every $m\in \bN$ these three inequalities hold:
\begin{align*}
  N_U\left(\epsilon+\epsilon',\rv{\cH}\circ\rv{\cF},m,d_{TV}^{{\infty}}, \rv{\cX_{d}}\right) \leq  N_U\left(\epsilon',\rv{\cH},{m}N_1,d_{TV}^{{\infty}}, \rv{\cX_{p}}\right).N_1,\\
  N_U\left(\epsilon+\epsilon',\rv{\cH}\circ\rv{\cF},m,d_{TV}^{{\infty}}, \rv{\Delta_d}\right) \leq  N_U\left(\epsilon',\rv{\cH},{m}N_2,d_{TV}^{{\infty}}, \rv{\cX_{p}}\right).N_2,\\
  N_U\left(\epsilon+\epsilon',\rv{\cH}\circ\rv{\cF},m,d_{TV}^{\ell_2}, \rv{\Delta_d}\right) \leq  N_U\left(\epsilon',\rv{\cH},{m}N_3,d_{TV}^{{\infty}}, \rv{\cX_{p}}\right).N_3, 
\end{align*}
where $N_1=N_U\left(\epsilon,\rv{\cF},m,d_{TV}^{{\infty}}, \rv{\cX_{d}}\right)$, $N_2=N_U\left(\epsilon,\rv{\cF},m,d_{TV}^{{\infty}}, \rv{\Delta_d}\right)$ and $N_3=N_U\left(\epsilon,\rv{\cF},m,d_{TV}^{{\ell_2}}, \rv{\Delta_d}\right)$.
\end{lemma}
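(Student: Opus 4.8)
\textbf{Proof plan for Lemma~\ref{lemma:compose_tv}.}

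The plan is to build, for each class of random hypotheses, an explicit cover of the composition by combining a cover of $\rv{\cF}$ with a cover of $\rv{\cH}$, and to control the error accumulated under composition using the data processing inequality for the TV distance (Lemma~\ref{lemma:DPI}). I will treat the first inequality in detail and then indicate how the other two follow by the same argument with the appropriate choice of $\Gamma$. Fix a sample $S=\{\rv{x_1},\ldots,\rv{x_m}\}\subseteq \rv{\cX_d}$. First I would take a minimal $\epsilon$-cover $\{\rv{f_1},\ldots,\rv{f_{N_1}}\}$ of $\rv{\cF}_{|\rv{S}}$ with respect to $d_{TV}^{\infty,m}$, so that every $\rv{f}\in\rv{\cF}$ has some index $j$ with $\sup_i d_{TV}(\rv{f}(\rv{x_i}),\rv{f_j}(\rv{x_i}))\le\epsilon$. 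For each fixed $j$, consider the $m$-tuple of random variables $(\rv{f_j}(\rv{x_1}),\ldots,\rv{f_j}(\rv{x_m}))$, all of which lie in $\rv{\cX_p}$; take a minimal $\epsilon'$-cover of $\rv{\cH}$ restricted to this $m$-tuple with respect to $d_{TV}^{\infty,m}$ — but note that the lemma's right-hand side uses a cover of $\rv{\cH}$ over a sample of size $mN_1$, so the cleaner route is to fix \emph{one} cover $\{\rv{h_1},\ldots,\rv{h_{N_1'}}\}$ realizing $N_1' = N_U(\epsilon',\rv{\cH},mN_1,d_{TV}^{\infty},\rv{\cX_p})$ evaluated on the single length-$mN_1$ sample obtained by concatenating the $N_1$ tuples $(\rv{f_j}(\rv{x_1}),\ldots,\rv{f_j}(\rv{x_m}))$ for $j=1,\ldots,N_1$. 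This gives a candidate cover of $\rv{\cH}\circ\rv{\cF}$ of size $N_1\cdot N_1'$ consisting of the functions $\rv{h_k}\circ\rv{f_j}$ (restricted to $\rv{S}$).

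The key step is the error estimate. Given an arbitrary $\rv{h}\circ\rv{f}\in\rv{\cH}\circ\rv{\cF}$, pick $j$ with $\sup_i d_{TV}(\rv{f}(\rv{x_i}),\rv{f_j}(\rv{x_i}))\le\epsilon$, and pick $k$ with $\sup_i d_{TV}(\rv{h}(\rv{f_j}(\rv{x_i})),\rv{h_k}(\rv{f_j}(\rv{x_i})))\le\epsilon'$. For each $i$, I would bound $d_{TV}(\rv{h}(\rv{f}(\rv{x_i})),\rv{h_k}(\rv{f_j}(\rv{x_i})))$ by the triangle inequality through the intermediate point $\rv{h}(\rv{f_j}(\rv{x_i}))$: the second piece is $\le\epsilon'$ by choice of $k$, and the first piece, $d_{TV}(\rv{h}(\rv{f}(\rv{x_i})),\rv{h}(\rv{f_j}(\rv{x_i})))$, is $\le d_{TV}(\rv{f}(\rv{x_i}),\rv{f_j}(\rv{x_i}))\le\epsilon$ by the data processing inequality (applying the possibly-random map $\rv{h}$, after conditioning on the randomness of $\rv{h}$ and using convexity of TV, or by invoking Lemma~\ref{lemma:DPI} directly for Markov kernels). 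Taking the supremum over $i$ shows $\rv{h}\circ\rv{f}$ is $(\epsilon+\epsilon')$-covered by $\rv{h_k}\circ\rv{f_j}$ in $d_{TV}^{\infty,m}$. Since $S$ was an arbitrary length-$m$ sample from $\rv{\cX_d}$, taking the supremum over $S$ yields the first inequality.

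For the second inequality, the only change is that $S$ ranges over $\rv{\Delta_d}$ rather than $\rv{\cX_d}$, which changes $N_1$ to $N_2$; the intermediate tuples $(\rv{f_j}(\rv{x_i}))_i$ still lie in $\rv{\cX_p}$ (not necessarily in $\rv{\Delta_p}$, since $\rv{f_j}$ may be genuinely random even on point masses), so the inner cover of $\rv{\cH}$ is still taken over $\rv{\cX_p}$ — this is why the statement is not symmetric in the $\Gamma$'s. For the third inequality, I would replace the outer $\infty$-extended metric by the $\ell_2$-extended metric: the cover of $\rv{\cF}$ is now an $\epsilon$-cover in $d_{TV}^{\ell_2,m}$ giving $N_3$, and in the error estimate I would use that the data processing inequality is applied coordinatewise, so $\frac1m\sum_i d_{TV}(\rv{h}(\rv{f}(\rv{x_i})),\rv{h}(\rv{f_j}(\rv{x_i})))^2 \le \frac1m\sum_i d_{TV}(\rv{f}(\rv{x_i}),\rv{f_j}(\rv{x_i}))^2 \le \epsilon^2$, while the inner cover of $\rv{\cH}$ is taken in $d_{TV}^{\infty}$ (hence also controls the $\ell_2$-extended distance by Remark~\ref{rem:one_infty}), and the triangle inequality for the $\ell_2$-extended metric combines the two to get $\epsilon+\epsilon'$. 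I expect the main obstacle to be the careful handling of the randomness of $\rv{h}$ in the data processing step — making precise that $d_{TV}(\rv{h}(\rv{u}),\rv{h}(\rv{v}))\le d_{TV}(\rv{u},\rv{v})$ when $\rv{h}$ is itself random and independent of $\rv{u},\rv{v}$ — which requires either conditioning on $\rv{h}$ and using joint convexity of $d_{TV}$, or phrasing $\rv{h}$ as a fixed Markov kernel so that Lemma~\ref{lemma:DPI} applies verbatim; everything else is bookkeeping with triangle inequalities and suprema over samples.
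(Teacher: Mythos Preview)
Your proposal is correct and follows essentially the same approach as the paper's own proof: build a cover of the composition from the product of covers, use the data processing inequality (Lemma~\ref{lemma:DPI}) to push the $\rv{\cF}$-cover error through $\rv{h}$, and combine with the $\rv{\cH}$-cover error via the triangle inequality, handling the $\ell_2$ case coordinatewise exactly as you describe. Your observation about why the inner cover of $\rv{\cH}$ must be taken over $\rv{\cX_p}$ rather than $\rv{\Delta_p}$ is precisely the content of Remark~\ref{rmk:compos}, and the paper invokes Lemma~\ref{lemma:DPI} directly for the random $\rv{h}$ without further comment, so your worry about that step is already addressed by the stated generality of that lemma.
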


\begin{remark}\label{rmk:compos}
In Lemma~\ref{lemma:compose_tv}, for $\rv{\cH}$, we required the stronger notion of cover with respect to $\rv{\cX_d}$ (i.e., the input to the hypotheses can be any random variable with a density function), whereas for $\rv{\cF}$ a cover with respect to $\rv{\Delta_d}$ sufficed in some cases. As we will see below, finding a cover with respect to $\rv{\Delta_d}$ is easier since one can reuse conventional $\|.\|_2$ covers. However, finding covers with respect to $\rv{\cX_d}$ is more challenging. In the next section we show how to do this for a class of neural networks.
\end{remark}
The next step is bounding the uniform covering number with respect to the TV distance (TV covering number for short). It will be useful to be able to bound TV covering number with Wasserstein covering number. However, this is generally impossible since closeness in Wasserstein distance does not imply closeness in TV distance. Yet, the following theorem establishes that one can bound the TV covering number as long as some Gaussian noise is added to the output of the hypotheses.

\begin{theorem}[From a Wasserstein cover to a TV cover]\label{thm:wasser_tv}
Let $\rv{\cF}$ be a class of random hypotheses from $\bR^d$ to $\bR^p$, and $\rv{\cG_{\sigma,p}}$ be a Gaussian noise class. Then for every $\epsilon>0$ and $m\in \bN$ we have  
\begin{equation*}
\begin{aligned}
     N_U\left(\frac{\epsilon}{2\sigma},\rv{\cG_{\sigma, p}}\circ\rv{\cF},m,d_{TV}^{{\infty}}, \rv{\cX_d}\right)\leq N_U(\epsilon,\rv{\cF},m,d_\cW^{{\infty}}, \rv{\cX_d}),\\
         N_U\left(\frac{\epsilon}{2\sigma},\rv{\cG_{\sigma, p}}\circ\rv{\cF},m,d_{TV}^{{\infty}}, \rv{\Delta_d}\right)\leq N_U(\epsilon,\rv{\cF},m,d_\cW^{{\infty}}, \rv{\Delta_d}).
\end{aligned}
\end{equation*}
\end{theorem}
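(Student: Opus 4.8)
The plan is to prove this fact "pointwise" in the sample, reducing it to a one-dimensional-in-$m$ statement about a single tuple of inputs, and then to exploit the fact that adding the \emph{same} Gaussian noise to two distributions that are Wasserstein-close makes them TV-close, with a quantitative loss of $1/(2\sigma)$. Concretely, fix $m$ and a sample $S = \{\rv{x_1},\dots,\rv{x_m}\} \subseteq \Gamma$ (where $\Gamma$ is either $\rv{\cX_d}$ or $\rv{\Delta_d}$; the argument is identical for both). Let $C$ be a minimal $\epsilon$-cover of $\rv{\cF}_{|\rv{S}}$ with respect to $d_\cW^{\infty,m}$, so $|C| = N(\epsilon, \rv{\cF}_{|\rv{S}}, d_\cW^{\infty,m}) \le N_U(\epsilon,\rv{\cF},m,d_\cW^\infty,\Gamma)$. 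I claim that the set $\{(\rv{g_\sigma}(\rv{v_1}),\dots,\rv{g_\sigma}(\rv{v_m})) : (\rv{v_1},\dots,\rv{v_m}) \in C\}$ is a $(\epsilon/2\sigma)$-cover of $(\rv{\cG_\sigma}\circ\rv{\cF})_{|\rv{S}}$ with respect to $d_{TV}^{\infty,m}$. Since $|\rv{\cG_\sigma}| = 1$, the composition class is just $\{(\rv{g_\sigma}(\rv{f}(\rv{x_1})),\dots) : \rv{f}\in\rv{\cF}\}$, so its restriction to $\rv{S}$ is exactly the image of $\rv{\cF}_{|\rv{S}}$ under coordinatewise convolution with $\cN(0,\sigma^2 I_p)$; the cover cardinality is therefore unchanged and the bound on covering numbers follows once the metric claim is established.

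The core lemma I would isolate and prove is the single-coordinate statement: for random variables $\rv{u},\rv{v}$ over $\bR^p$, $d_{TV}(\rv{g_\sigma}(\rv{u}), \rv{g_\sigma}(\rv{v})) \le d_\cW(\rv{u},\rv{v})/(2\sigma)$. To see this, take an optimal coupling $\pi$ of $\rv{u},\rv{v}$ achieving $d_\cW$, and couple the two noisy variables by adding the \emph{same} noise realization $\rv{z}$. Conditioned on $(\rv{u},\rv{v}) = (a,b)$, the two outputs are $\cN(a,\sigma^2 I)$ and $\cN(b,\sigma^2 I)$, whose TV distance is a standard quantity bounded by $\|a-b\|_2/(2\sigma)$ (via Pinsker applied to the explicit KL between equal-covariance Gaussians, $\mathrm{KL} = \|a-b\|_2^2/(2\sigma^2)$, giving $d_{TV}\le \|a-b\|_2/(2\sigma)$). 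Then by convexity of TV (or simply the coupling/data-processing argument: the mixture over $\pi$ of the conditional couplings is a valid coupling of the two output laws), $d_{TV}(\rv{g_\sigma}(\rv{u}),\rv{g_\sigma}(\rv{v})) \le \expect_{(a,b)\sim\pi} d_{TV}(\cN(a,\sigma^2 I),\cN(b,\sigma^2 I)) \le \expect_{(a,b)\sim\pi}\|a-b\|_2/(2\sigma) = d_\cW(\rv{u},\rv{v})/(2\sigma)$.

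With the coordinate lemma in hand, the $\infty$-extended version is immediate: if $(\rv{v_1},\dots,\rv{v_m})\in C$ is within $d_\cW^{\infty,m}$-distance $\epsilon$ of $(\rv{f}(\rv{x_1}),\dots,\rv{f}(\rv{x_m}))$, then $\max_i d_\cW(\rv{f}(\rv{x_i}),\rv{v_i}) \le \epsilon$, so $\max_i d_{TV}(\rv{g_\sigma}(\rv{f}(\rv{x_i})), \rv{g_\sigma}(\rv{v_i})) \le \epsilon/(2\sigma)$, which is exactly $d_{TV}^{\infty,m}$-closeness at scale $\epsilon/(2\sigma)$. Taking the supremum over $S\subseteq\Gamma$ with $|S|=m$ yields both displayed inequalities.

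The main obstacle is the coordinate lemma, specifically getting the constant $1/(2\sigma)$ rather than something weaker, and making the "add the same noise, then average TV over the coupling" step rigorous for the generalized densities the paper allows (including Dirac inputs in the $\rv{\Delta_d}$ case). The cleanest route is the convexity-of-TV route: TV is jointly convex in its arguments, and the map $a\mapsto \cN(a,\sigma^2 I)$ pushed through the coupling expresses each output law as a $\pi$-mixture, so $d_{TV}$ of the outputs is at most the $\pi$-average of the Gaussian-shift TV distances; bounding $d_{TV}(\cN(a,\sigma^2I),\cN(b,\sigma^2I))$ by $\|a-b\|_2/(2\sigma)$ is then the only analytic computation, and it is standard (it reduces to $\Phi(t/2)-\Phi(-t/2) \le t/2 \cdot \varphi(0)\cdot\ldots$ type bounds, or just Pinsker). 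I would also remark that the optimal Wasserstein coupling exists and is measurable so the mixture is well-defined; for Dirac inputs the coupling is trivial and the bound is just the two-Gaussian estimate directly.
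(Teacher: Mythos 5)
Your proof is correct and takes essentially the same route as the paper's: both reduce to the coordinate estimate $d_{TV}(\rv{g_\sigma}(\rv{u}),\rv{g_\sigma}(\rv{v}))\le d_\cW(\rv{u},\rv{v})/(2\sigma)$ for an arbitrary pair of input random variables, and both establish it by integrating the conditional Gaussian-shift TV bound (Pinsker applied to equal-covariance Gaussians, giving $\|a-b\|_2/(2\sigma)$) over a coupling that attains the Wasserstein distance. The only cosmetic difference is the packaging: the paper isolates a general kernel-convolution Lipschitz lemma (Lemma~\ref{lemma:wasser_conv}), pulls out $\sup_{y\ne z}\|K(\cdot-y)-K(\cdot-z)\|_1/\|y-z\|_2$ before integrating, and then instantiates the supremum with Theorem~\ref{thm:tv_gaussian}, whereas you keep the conditional TV inside the coupling integral and invoke joint convexity of TV directly—these are the same computation.
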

Intuitively, the Gaussian noise smooths out densities of random variables that are associated with applying transformation in $\rv{\cF}$ to random variables in $\rv{\cX_d}$ or $\rv{\Delta_d}$. As a result, the proof of Theorem~\ref{thm:wasser_tv} has a step on relating the Wasserstein distance between two smoothed (by adding random Gaussian noise) densities to their total variation distance (see Lemma~\ref{lemma:wasser_conv}). Finally, we can use Proposition~\ref{prop12} to relate the Wasserstein covering number with the $\|.\|_2$ covering number. The following corollary is the result of Proposition~\ref{prop12} and Theorem~\ref{thm:wasser_tv} that is stated for both $d_{TV}^{\ell_2}$ and $d_{TV}^{\infty}$ extended metrics.

\begin{corollary}[From a $\|.\|_2$ cover to a TV cover]\label{coll:ell2_to_tv}
Let $\cF$ be a class of hypotheses $f:\bR^d\rightarrow\bR^p$ and $\rv{\cG_{\sigma,p}}$ be a Gaussian noise class. Then for every $\epsilon>0$ and $m\in \bN$ we have
\begin{equation*}
\begin{aligned}
     N_U(\frac{\epsilon}{2\sigma},\rv{\cG_{\sigma,p}}\circ\cF,m,d_{TV}^{{\infty}},\rv{\Delta_d})\leq N_U(\epsilon,\cF,m,\|.\|_2^{{\infty}}),\\
    {N_U(\frac{\epsilon}{2\sigma},\rv{\cG_{\sigma,p}}\circ\cF,m,d_{TV}^{\ell_2},\rv{\Delta_d})\leq N_U(\epsilon,\cF,m,\|.\|_2^{\ell_2})}.
\end{aligned}
\end{equation*}
\end{corollary}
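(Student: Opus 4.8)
The plan is to derive the corollary directly by combining Proposition~\ref{prop12} with Theorem~\ref{thm:wasser_tv}, applied to the restriction $\Gamma = \rv{\Delta_d}$. First recall that Proposition~\ref{prop12} identifies the conventional $\|.\|_2$ uniform covering number of a deterministic class $\cF$ with its Wasserstein uniform covering number over Dirac inputs: $N_U(\epsilon,\cF,m,\|.\|_2^{\infty}) = N_U(\epsilon,\cF,d_\cW^{\infty},m,\rv{\Delta_d})$, and likewise for the $\ell_2$-extended versions. The key point is that this holds because $d_\cW(\rv{\delta_x},\rv{\delta_y}) = \|x-y\|_2$, so a set of functions that $\epsilon$-covers $\cF_{|S}$ in the $\|.\|_2^{\infty,m}$ (resp. $\|.\|_2^{\ell_2,m}$) metric is exactly a set that $\epsilon$-covers $\rv{\cF}_{|\rv{S}}$ in $d_\cW^{\infty,m}$ (resp. $d_\cW^{\ell_2,m}$), for $\rv{S} = \{\rv{\delta_{x_1}},\ldots,\rv{\delta_{x_m}}\}$.

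Next I would invoke Theorem~\ref{thm:wasser_tv} with $\Gamma = \rv{\Delta_d}$ to pass from the Wasserstein cover to a TV cover of the noised class: $N_U\!\left(\frac{\epsilon}{2\sigma},\rv{\cG_{\sigma,p}}\circ\cF,m,d_{TV}^{\infty},\rv{\Delta_d}\right) \le N_U(\epsilon,\cF,m,d_\cW^{\infty},\rv{\Delta_d})$. Chaining this with the Proposition~\ref{prop12} identity gives the first inequality of the corollary. The only subtlety is that Theorem~\ref{thm:wasser_tv} as stated is phrased for a class $\rv{\cF}$ of \emph{random} hypotheses, whereas here $\cF$ is deterministic; but a deterministic class is a (degenerate) special case of a random class, so the theorem applies verbatim. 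For the $\ell_2$-extended version, Theorem~\ref{thm:wasser_tv} is stated only for $d_{TV}^{\infty}$ and $d_\cW^{\infty}$, so I would need to check that its proof carries over to the $\ell_2$-extended metrics — the argument in Theorem~\ref{thm:wasser_tv} works coordinate-wise on the $m$ inputs (bounding the TV distance of each noised output by $\frac{1}{2\sigma}$ times the Wasserstein distance of the corresponding pre-noise output), and such a per-coordinate bound upgrades to both the $\sup$-aggregation and the root-mean-square aggregation without change. Combined with the $\ell_2$ case of Proposition~\ref{prop12}, this yields the second inequality.

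The main obstacle, such as it is, is purely a matter of bookkeeping: making sure the two restriction sets line up ($\rv{\Delta_d}$ in both invoked results) and that the factor $\frac{1}{2\sigma}$ in the radius is tracked consistently, together with the minor verification that the per-coordinate TV-versus-Wasserstein bound underlying Theorem~\ref{thm:wasser_tv} is insensitive to whether we aggregate the $m$ coordinates via $\sup$ or via the normalized $\ell_2$ norm. There is no genuinely hard step here — the corollary is a clean composition of two already-established results, which is presumably why it is labelled a corollary rather than a theorem.
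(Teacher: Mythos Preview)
Your proposal is correct and matches the paper's approach: combine Proposition~\ref{prop12} with the Wasserstein-to-TV conversion of Theorem~\ref{thm:wasser_tv} over $\rv{\Delta_d}$, and for the $\ell_2$-extended case redo the per-coordinate TV bound (Gaussian outputs with means $f(x_k)$ and $\hat f_i(x_k)$) and aggregate in root-mean-square. The paper's written proof is slightly more direct in that it invokes the Gaussian TV bound $d_{TV}(\cN(\mu_1,\sigma^2 I),\cN(\mu_2,\sigma^2 I))\le \tfrac{1}{2\sigma}\|\mu_1-\mu_2\|_2$ explicitly rather than routing through Theorem~\ref{thm:wasser_tv}, but this is exactly the per-coordinate step you identified.
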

The following theorem shows that we can get a stronger notion of TV cover with respect to $\rv{\cX_{B,d}}$ from a $\|.\|_2$ global cover, given that some Gaussian noise is added to the output of hypotheses. 
\begin{theorem}[From a global $\|.\|_2$ cover to a global TV cover]\label{thm:global_ell2_to_tv}
Let $\cF$ be a class of hypotheses $f:\bR^d\rightarrow\bR^p$ and $\rv{\cG_{\sigma,p}}$ be a Gaussian noise class. Then for every $\epsilon>0$ and $m\in \bN$ we have
\begin{equation*}
    N_U\left(\frac{\epsilon}{2\sigma},\rv{\cG_{\sigma,p}}\circ\cF,\infty,d_{TV}^{\infty},\rv{\cX_{B,d}}\right)\leq N_U(\epsilon,\cF,\infty,\|.\|_2^{\infty}).
\end{equation*}
\end{theorem}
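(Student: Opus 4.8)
The plan is to follow the proof of Corollary~\ref{coll:ell2_to_tv}, but to exploit the fact that the cover of $\cF$ on the right-hand side is \emph{global} — its size does not depend on $m$ — in order to enlarge the admissible inputs from Dirac measures $\rv{\Delta_{B,d}}$ to all of $\rv{\cX_{B,d}}$. Put $N=N_U(\epsilon,\cF,\infty,\|.\|_2^\infty)$ and assume $N<\infty$ (otherwise there is nothing to prove). Since $N_U(\epsilon,\cF,m,\|.\|_2^\infty)$ is nondecreasing in $m$ with limit $N$ (restricting a cover from a superset to a subset of points can only lose cover elements), every finite point set $S\subseteq[-B,B]^d$ admits a proper $\epsilon$-cover of $\cF_{|S}$ of size $\le N$, realized by functions $f^S_1,\dots,f^S_N\in\cF$ (pad with repeats if a smaller cover exists).

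The first and main step is to convert this into a single fixed finite family of functions covering $\cF$ in sup-norm over the whole domain: there exist $g_1,\dots,g_N\colon[-B,B]^d\to\bR^p$ such that for every $f\in\cF$ some $j$ has $\sup_{x\in[-B,B]^d}\|f(x)-g_j(x)\|_2\le\epsilon$. I would obtain this by a compactness argument. Fix an ultrafilter $\cU$ on the finite subsets of $[-B,B]^d$ containing every up-set $\{S:S\supseteq S_0\}$ (these have the finite intersection property, so such $\cU$ exists). Each finite cover induces a coloring $c_S\colon\cF\to[N]$, where $c_S(f)=j$ if $f^S_j$ $\epsilon$-covers $f$ on $S$; let $c=\lim_{S\to\cU}c_S\in[N]^\cF$, which exists by compactness of $[N]^\cF$. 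For a fixed representative $f_0\in c^{-1}(j)$ (if this set is nonempty) and any $x$, the value $f^S_j(x)$ lies in the compact ball $\bar B(f_0(x),\epsilon)$ for $\cU$-almost all $S$ (those with $c_S(f_0)=j$ and $x\in S$), so $g_j(x):=\lim_{S\to\cU}f^S_j(x)$ is well defined; and for any $f\in c^{-1}(j)$, the same reasoning gives $\|f(x)-f^S_j(x)\|_2\le\epsilon$ for $\cU$-almost all $S$, hence $\|f(x)-g_j(x)\|_2\le\epsilon$ in the limit. (For the classes treated in Section~\ref{sec:applications} the global cover is already presented as such a family of functions, making this step immediate.)

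Given $G=\{g_1,\dots,g_N\}$, the rest is routine and reuses the Gaussian-smoothing estimate behind Theorem~\ref{thm:wasser_tv}. Fix $m$ and $\rv{x_1},\dots,\rv{x_m}\in\rv{\cX_{B,d}}$; each $\rv{x_i}$ is supported in $[-B,B]^d$. Given $f\in\cF$, choose $j$ as above; coupling $f(\rv{x_i})$ and $g_j(\rv{x_i})$ through the common source $\rv{x_i}$ gives $d_\cW(f(\rv{x_i}),g_j(\rv{x_i}))\le\expects{\rv{x_i}}{\|f(\rv{x_i})-g_j(\rv{x_i})\|_2}\le\epsilon$ for every $i$, so the $N$ tuples $(g_j(\rv{x_1}),\dots,g_j(\rv{x_m}))$, $j\in[N]$, form an $\epsilon$-cover of $\cF_{|\rv{S}}$ in $d_\cW^{\infty,m}$, whence $N_U(\epsilon,\cF,m,d_\cW^\infty,\rv{\cX_{B,d}})\le N$ for all $m$. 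Applying Theorem~\ref{thm:wasser_tv} with $\Gamma=\rv{\cX_{B,d}}$ (its proof is unchanged for this choice of $\Gamma$) composes the Gaussian noise on top and yields $N_U(\tfrac{\epsilon}{2\sigma},\rv{\cG_{\sigma,p}}\circ\cF,m,d_{TV}^\infty,\rv{\cX_{B,d}})\le N$ for every $m$; letting $m\to\infty$ finishes the proof. One can also argue directly that $d_{TV}(\rv{g_\sigma}(f(\rv{x_i})),\rv{g_\sigma}(g_j(\rv{x_i})))\le\expects{\rv{x_i}}{d_{TV}(\cN(f(\rv{x_i}),\sigma^2 I_p),\cN(g_j(\rv{x_i}),\sigma^2 I_p))}\le\expects{\rv{x_i}}{\|f(\rv{x_i})-g_j(\rv{x_i})\|_2/(2\sigma)}\le\epsilon/(2\sigma)$, using joint convexity of $d_{TV}$ and the standard TV bound between equal-covariance Gaussians.

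The crux is Step~1: extracting a genuine finite sup-norm cover over a possibly uncountable domain from the sole fact that the covering number is $m$-independent. Two things require care there: keeping the radius at exactly $\epsilon$ rather than $2\epsilon$, which the construction above achieves by taking the \emph{cover functions themselves} — not ``centers'' of the color classes — as the ultralimits; and the measurability of the $g_j$, together with the fact that they may fall outside $\cF$. The latter points are immaterial if improper covers are permitted (as they are for the Dudley-type generalization bounds used downstream), and otherwise cost at most a factor $2$ in the radius. Everything after Step~1 is a one-line reuse of the Gaussian-smoothing estimate already underlying Theorem~\ref{thm:wasser_tv}.
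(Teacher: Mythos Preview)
Your argument is correct and, once Step~1 is done, coincides with the paper's proof: given functions $\hat f_1,\dots,\hat f_N$ that $\epsilon$-cover $\cF$ in sup norm over $[-B,B]^d$, both you and the paper couple $f(\rv{x})$ with $\hat f_i(\rv{x})$ through the common source $\rv{x}$ to obtain $d_\cW(f(\rv{x}),\hat f_i(\rv{x}))\le\epsilon$ for every $\rv{x}\in\rv{\cX_{B,d}}$, and then invoke the Gaussian-smoothing step of Theorem~\ref{thm:wasser_tv} to pass to $d_{TV}\le\epsilon/(2\sigma)$.

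The one real difference is your Step~1. The paper does not derive a sup-norm cover from the limit definition in Remark~\ref{rem:global}; it simply \emph{starts} by writing ``Let $C=\{\hat f_i\mid\hat f_i\in\cF,\ i\in[r]\}$ be a global $\epsilon$-cover for $\cF$ with respect to $\|.\|_2$,'' i.e., it treats $N_U(\epsilon,\cF,\infty,\|.\|_2^\infty)=r$ as synonymous with the existence of $r$ functions in $\cF$ covering in sup norm. In every application in the paper this is how global covers actually arise (explicit parameter grids, as in Theorem~\ref{lemma:net} or Lemma~14.8 of Anthony--Bartlett), so the shortcut is harmless there. Your ultrafilter construction supplies what the paper omits in full generality, at the cost you correctly flag: the limit functions $g_j$ may be non-measurable and may lie outside $\cF$, so one either accepts an improper cover or pays a factor $2$ in the radius. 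The paper sidesteps both issues precisely by assuming the cover is handed to it as elements of $\cF$.
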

The proof involves finding a Wasserstein covering number and using Theorem~\ref{thm:wasser_tv} to obtain TV covering number.

\section{Uniform TV covers for single-layer neural networks}\label{sub:NN}

In this section, we study the uniform covering number of single-layer neural networks with respect to the total variation distance. This will set the stage for the next section, where we want to use the tools from Section~\ref{sec:results} to bound covering numbers of deeper networks. We start with the following definition for the class of single-layer neural networks.
\begin{definition}[Single-Layer Sigmoid Neural Networks]\label{def:neuralnet}
Let $\Phi:\bR^p\rightarrow {[0,1]}^p$ be the element-wise sigmoid activation function defined by $\Phi((x^{(1)},\ldots, x^{(p)}))=(\phi(x^{(1)}),\ldots, \phi(x^{(p)}))$, where $\phi(x)=\frac{1}{1+e^{-x}}$ is the sigmoid function.
The class of single-layer neural networks with $d$ inputs and $p$ outputs is defined by $\net{d}{p}{L}=\{f_W:\bR^d\to[0,1]^p \mid f_W(x)=\Phi(W\transpose x), W\in\bR^{d\times p}\}$.  
\end{definition}
\begin{remark}
We choose sigmoid function for simplicity, but our analysis for finding uniform covering numbers of neural networks (Theorem~\ref{lemma:net}) is not specific to the sigmoid activation function. We present a stronger version of Theorem~\ref{lemma:net} in Appendix~\ref{app:sub:NN} which works for any activation function that is Lipschitz, monotone, and bounded.
\end{remark}
As mentioned in Remark~\ref{rmk:compos}, Lemma~\ref{lemma:compose_tv} requires stronger notion of covering numbers with respect to $\rv{\cX_d}$ and TV distance. In fact, the size of this kind of cover is infinite for deterministic neural networks defined above. In contrast, Theorem~\ref{lemma:net} shows that one can bound this covering number as long as some Gaussian noise is added to the input and output of the network. The proof is quite technical, starting with estimating the smoothed input distribution ($\rv{g_{\sigma}}(x)$) with mixtures of Gaussians using kernel density estimation (see Lemma~\ref{lema:gmm} in Appendix~\ref{app:gmm}). Then a cover for mixtures of Gaussians with respect to Wasserstein distance is found. Finally, Theorem~\ref{thm:wasser_tv} helps to find the cover with respect to total variation distance. For a complete proof of theorem see Appendix~\ref{app:sub:NN}.

\begin{theorem}[A global total variation cover for noisy neural networks with unbounded weights]\label{lemma:net} 
For every $p,d\in \bN, \epsilon>0, \sigma<5d/\epsilon$ we have
\begin{align*}
   N_U(\epsilon,\rv{\cG_{\sigma}}\circ\net{d}{p}{L},\infty,d_{TV}^{\infty},\rv{\cG_\sigma}\circ\rv{\cX_{1,d}})\leq  \left(30\frac{d^{5/2}\sqrt{\ln\left((5d-\epsilon\sigma)/(\epsilon\sigma)\right)}}{\epsilon^{3/2}\sigma^2}\ln\left(\frac{5d}{\epsilon\sigma}\right)\right)^{p(d+1)}.
\end{align*}
\end{theorem}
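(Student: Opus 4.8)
## Proof Proposal for Theorem~\ref{lemma:net}

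The plan is to combine three ingredients: (i) a kernel-density-estimation argument to approximate each smoothed input density by a finite mixture of Gaussians, (ii) a parameter-counting cover for the resulting finite-dimensional family of mixtures of Gaussians under the Wasserstein metric, and (iii) Theorem~\ref{thm:wasser_tv} to convert the Wasserstein cover into a total-variation cover. Concretely, fix any $\rv{x}\in\rv{\cG_\sigma}\circ\rv{\cX_{1,d}}$, so $\rv{x}=\rv{u}+\rv{z}$ with $\rv{u}$ supported on $[-1,1]^d$ and $\rv{z}\sim\cN(\mathbf 0,\sigma^2 I_d)$; its density is the convolution of an arbitrary distribution on a bounded set with a Gaussian kernel. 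Such a density is well-approximated in Wasserstein distance by an empirical-kernel-density estimate $\frac1N\sum_{j=1}^N\cN(u_j,\sigma^2 I_d)$ with the centers $u_j$ lying on a fine grid of $[-1,1]^d$; the number of grid points needed, and hence $N$, is polynomial in $d/(\epsilon\sigma)$ with the stated logarithmic factors. This is the content of Lemma~\ref{lema:gmm} in the appendix, which I would invoke directly.

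Next I would push this mixture through the (deterministic) network map $f_W(x)=\Phi(W^\top x)$. Because $\rv{\cG_\sigma}\circ\net{d}{p}{L}$ adds output noise of scale $\sigma$ as well, the object to cover is $\Phi(W^\top\rv{x})+\rv{z'}$ with $\rv{z'}\sim\cN(\mathbf 0,\sigma^2 I_p)$. Applying $f_W$ to the Gaussian-mixture approximation of $\rv{x}$ does not yield a Gaussian mixture (the sigmoid is nonlinear), but one can first cover the space of weight matrices $W$ in a way that controls $\|W^\top x - \widehat W^\top x\|$ uniformly over the bounded grid, then observe that after adding output noise of scale $\sigma$ the relevant distances are Lipschitz-controlled: small perturbations of $W^\top\rv{x}$ translate, via the $1/4$-Lipschitzness and boundedness of $\Phi$ together with the output-noise smoothing, into a controlled Wasserstein (hence, by Theorem~\ref{thm:wasser_tv}, total-variation) perturbation of the final random output. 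The key point is that the only "free" continuous parameters are the $d\times p$ entries of $W$ plus the (bounded, gridded) locations of the $N$ mixture centers, which after the network map collapse into at most $p(d+1)$ effective coordinates per output-input pair — this is where the exponent $p(d+1)$ comes from. Covering each such coordinate to the required precision costs the per-coordinate factor displayed in the bound, and raising it to the $p(d+1)$ power gives the result.

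The main obstacle, I expect, is step (ii)–(iii): establishing the right Wasserstein-to-TV quantitative estimate for mixtures of Gaussians and, in particular, showing that a Wasserstein-close pair of mixtures becomes TV-close after the fixed-scale Gaussian smoothing, with the $1/(2\sigma)$ loss factor of Theorem~\ref{thm:wasser_tv} and no additional dimension-dependent blowup beyond what is already absorbed into the polynomial prefactor. One must be careful that the Wasserstein cover of the mixture family is taken in the \emph{extended} metric $d_\cW^{\infty,m}$ over samples of size $m$, uniformly in $m$ (hence the "$\infty$" global cover) — this works precisely because the mixture-of-Gaussians approximation and its cover depend only on the ambient geometry of $[-1,1]^d$ and $\sigma$, not on $m$ or on which particular inputs $x_1,\dots,x_m$ are chosen. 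A secondary technical annoyance is tracking the constants (the $30$, the $5d/\epsilon$ constraint on $\sigma$) through the KDE error bound, the grid spacing, and the Lipschitz estimates; I would handle these by being slightly generous at each stage rather than optimizing. Once the Wasserstein cover of size $\bigl(\text{per-coordinate factor}\bigr)^{p(d+1)}$ is in hand, Theorem~\ref{thm:wasser_tv} applied with noise scale $\sigma$ converts it to a $d_{TV}^\infty$ cover at scale $\epsilon/(2\sigma)\mapsto\epsilon$ with the same cardinality, completing the proof.
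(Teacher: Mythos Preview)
Your proposal has the right scaffolding (GMM approximation of the smoothed input, then a Wasserstein cover, then Theorem~\ref{thm:wasser_tv}), but it contains a genuine gap exactly at the step that gives the theorem its content: the weights are \emph{unbounded}. You write that one can ``cover the space of weight matrices $W$ in a way that controls $\|W^\top x-\widehat W^\top x\|$ uniformly over the bounded grid.'' That is impossible with a finite cover when $W$ ranges over all of $\bR^{d\times p}$; if you only grid a bounded region of weight space you leave uncountably many $W$ uncovered, and if you try to grid all of $\bR^{d\times p}$ the cover is infinite. The Lipschitzness of $\Phi$ does not help here, because the issue is upstream of the activation: $\|W^\top x-\widehat W^\top x\|$ itself can be made arbitrarily large. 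This is precisely why the theorem is nontrivial and why its title emphasizes ``unbounded weights.''

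The paper resolves this by a two-regime argument on each row $V_i$ of $W^\top$. For $\|V_i\|_2\le B_v$ (with $B_v$ chosen as a function of $\epsilon,\sigma$), it grids both the direction and the norm of $V_i$ and uses the Lipschitz bound you had in mind, together with a high-probability tail bound on $\|\rv{x}\|_2$ coming from the GMM approximation. For $\|V_i\|_2>B_v$, the key observation---which your proposal does not contain---is that $\phi(V_i^\top x)$ is \emph{saturated} (within $\epsilon$ of $0$ or $1$) except on a thin slab $\{|V_i^\top x|\le u\}$ whose probability under the smoothed input is $O(u/(\sigma B_v^2))$; moreover, by monotonicity of $\phi$ and a small-angle comparison, $\phi(V_i^\top x)$ and $\phi(\widehat V_i^\top x)$ saturate on the \emph{same side} outside these slabs. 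Hence for large weights only the direction needs to be gridded, and a single representative norm suffices. This is what makes the cover finite and produces the exponent $p(d+1)$: each of the $p$ rows is covered by a grid of size roughly $(B_v/\delta)^d\cdot(B_v/\zeta)$, i.e.\ $d$ angular coordinates plus one norm coordinate. Your attribution of the ``$+1$'' to the mixture centers is incorrect; the GMM is only an analytic device for bounding probabilities of events under $\rv{x}$ and contributes nothing to the cover cardinality.
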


Note that the dependence of the bound on $1/\sigma$ is polynomial. The assumption $\sigma\ll 5d/\epsilon$ holds for any reasonable application (we will use $\sigma\ll1$ in the experiments). In contrast to the analyses that exploit Lipschitz continuity, the above theorem does not require any assumptions on the norms of weights. 
Theorem~\ref{lemma:net} is a key tool in analyzing the uniform covering number of deeper networks. 
\begin{remark}
Another approach to find a TV cover for neural networks is to find ``global'' $\|.\|_2$ covers and apply Theorem~\ref{thm:global_ell2_to_tv}. We know of only one such bound for neural networks with real-valued output in the literature, i.e., Lemma~14.8 in \citet{anthony1999neural}. 
This bound can be translated to multi-output layers (see Lemma~\ref{lemma:real_high} in Appendix~\ref{app:coverapproaches}).
However, unlike Theorem~\ref{lemma:net}, the final bound would depend on the norms of weights of the network and requires Lipschitzness assumption. 
\end{remark}

\section{Uniform covering numbers for deeper networks}\label{sec:applications}
In the following, we discuss how one can use Theorem~\ref{lemma:net} and techniques provided in Section~\ref{sec:results} to obtain bounds on covering number for deeper networks. For a $T$-layer neural network, it is useful to separate the first layer from the rest of the network. The following theorem offers a bound on the uniform covering number of (the expectation of) a noisy network based on the usual $\|.\|_2^{\ell_2}$ covering number of the first layer and the TV covering number of the subsequent layers.
\begin{theorem}\label{thm:neuralnet}
    Let $\net{d}{p_1}{L_1},\net{p_1}{p_2}{L_2},\ldots,\net{p_{T-1}}{p_T}{L_T}$ be $T$ classes of neural networks. Denote the $T$-layer noisy network by
    \begin{equation*}
        \rv{\cF}=\rv{\cG_{\sigma}}\circ\net{p_{T-1}}{p_T}{L_T}     \circ \ldots \circ \rv{\cG_{\sigma}}\circ\net{p_1}{p_2}{L_2}     \circ    \rv{\cG_{\sigma}}\circ\net{d}{p_1}{L_1},
    \end{equation*}
    and let $\cH=\{h:\bR^d\to [0,1]^{p_T} \mid h(x)=\expects{\rv{f}}{~\rv{f}({x})}, \rv{f}\in \rv{\cF} \}$.
    Denote the uniform covering numbers of compositions of neural network classes with the Gaussian noise class (with respect to $d_{TV}^{\infty}$) as
    \begin{equation}\label{eq:thm:neuralnet}
        N_i=N_U\left(\frac{\epsilon}{T\sqrt{p_T}},\rv{\cG_{\sigma}}\circ\net{p_{i-1}}{p_i}{L_i},\infty,d_{TV}^{\infty},\rv{\cG_\sigma}\circ\rv{\cX_{1,p_{i-1}}}\right), \,\, 2\leq i\leq T,
    \end{equation}
    and the uniform covering number of $\rv{\cG_{\sigma}}\circ\net{d}{p_1}{L_1}$ with respect to $\|.\|_2^{\ell_2}$ as
    \begin{equation*}
        N_1 = N_U\left(\frac{2\sigma\epsilon}{T\sqrt{p_T}},\net{d}{p_1}{L_1},m,\|.\|_2^{\ell_2}\right).
    \end{equation*}
    Then we have
    \begin{equation*}
        N_U\left(\epsilon,\cH,m,\|.\|_2^{\ell_2}\right)\leq \prod_{i=1}^T N_i.
    \end{equation*}
\end{theorem}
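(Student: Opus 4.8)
The plan is to decompose the $T$-layer noisy network into its first layer and the remaining $T-1$ layers, bound each piece with the appropriate covering-number tool, and then glue everything together with the TV composition lemma and the TV-to-$\|.\|_2$ conversion. Write $\rv{\cF} = \rv{\cF}_{\geq 2} \circ (\rv{\cG_\sigma}\circ\net{d}{p_1}{L_1})$, where $\rv{\cF}_{\geq 2} = \rv{\cG_{\sigma}}\circ\net{p_{T-1}}{p_T}{L_T} \circ \cdots \circ \rv{\cG_{\sigma}}\circ\net{p_1}{p_2}{L_2}$ is the random class computing layers $2$ through $T$. The target is a $\|.\|_2^{\ell_2}$ cover of the deterministic class $\cH$ of expectations, so by Theorem~\ref{thm:tv_to_ell2} (the $\ell_2$ line, with output dimension $p_T$ and bound $B$ — note sigmoid outputs lie in $[0,1]$, so one takes $B=1/2$ after recentering, absorbing the factor into the accounting) it suffices to produce a $d_{TV}^{\ell_2}$ cover of $\rv{\cF}$ at scale proportional to $\epsilon$ over $\rv{\Delta_d}$. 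That is exactly the quantity that the third inequality of Lemma~\ref{lemma:compose_tv} controls.

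Next I would apply Lemma~\ref{lemma:compose_tv} (third inequality, the $d_{TV}^{\ell_2},\rv{\Delta_d}$ case) with ``$\rv{\cF}$'' $\leftarrow \rv{\cG_\sigma}\circ\net{d}{p_1}{L_1}$ and ``$\rv{\cH}$'' $\leftarrow \rv{\cF}_{\geq 2}$: this bounds $N_U(\epsilon'+\epsilon'', \rv{\cF}_{\geq 2}\circ(\rv{\cG_\sigma}\circ\net{d}{p_1}{L_1}), m, d_{TV}^{\ell_2}, \rv{\Delta_d})$ by $N_3 \cdot N_U(\epsilon'', \rv{\cF}_{\geq 2}, mN_3, d_{TV}^{\infty}, \rv{\cX_{p_1}})$, where $N_3 = N_U(\epsilon', \rv{\cG_\sigma}\circ\net{d}{p_1}{L_1}, m, d_{TV}^{\ell_2}, \rv{\Delta_d})$. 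For the factor $N_3$, I invoke Corollary~\ref{coll:ell2_to_tv} (the $\ell_2$ line): $N_3 \leq N_U(2\sigma\cdot(\text{scale}), \net{d}{p_1}{L_1}, m, \|.\|_2^{\ell_2})$, which is precisely $N_1$ once the scale is set to $\epsilon/(T\sqrt{p_T})$ up to the factor coming from $B$. For the second factor, $\rv{\cF}_{\geq 2}$ is itself a composition of $T-1$ noisy layers, so I would iterate the first inequality of Lemma~\ref{lemma:compose_tv} (the $d_{TV}^{\infty}, \rv{\cX}$ case, which is closed under further composition) $T-2$ times, peeling off one noisy layer at a time. Each peel contributes a multiplicative factor $N_i = N_U(\cdot, \rv{\cG_\sigma}\circ\net{p_{i-1}}{p_i}{L_i}, \infty, d_{TV}^{\infty}, \rv{\cG_\sigma}\circ\rv{\cX_{1,p_{i-1}}})$ from Theorem~\ref{lemma:net} — here one must check that the random input fed into layer $i$ indeed lies in the class $\rv{\cG_\sigma}\circ\rv{\cX_{1,p_{i-1}}}$ over which Theorem~\ref{lemma:net}'s global cover is valid, which holds because the previous layer outputs (sigmoids, hence in $[0,1]^{p_{i-1}}\subseteq[-1,1]^{p_{i-1}}$) get convolved with $\rv{\cG_\sigma}$ before entering layer $i$, and because the cover in Theorem~\ref{lemma:net} is global ($m=\infty$), the enlarged sample sizes $mN_3, mN_3N_T, \ldots$ produced along the way are harmless.

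The arithmetic that ties it together is the error budget: each of the $T$ pieces is covered at radius $\epsilon/(T\sqrt{p_T})$ in the relevant extended TV (or $\|.\|_2^{\ell_2}$) metric, the $T-1$ TV pieces add up by the triangle inequality inside Lemma~\ref{lemma:compose_tv} to a total TV radius $\leq (T-1)\cdot\epsilon/(T\sqrt{p_T}) \leq \epsilon/\sqrt{p_T}$, Theorem~\ref{thm:tv_to_ell2} then multiplies by $2B\sqrt{p_T} = \sqrt{p_T}$ (with $B=1/2$) to land at final $\|.\|_2^{\ell_2}$ radius $\leq\epsilon$, and the covering-number bounds multiply to give $\prod_{i=1}^T N_i$. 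The main obstacle I anticipate is \emph{bookkeeping the domain classes $\Gamma$ correctly through the composition}: Lemma~\ref{lemma:compose_tv} requires the outer class $\rv{\cH}$ to have a cover over the strong domain $\rv{\cX_p}$ (all density-admitting random variables), whereas Theorem~\ref{lemma:net} only gives a cover over the smaller domain $\rv{\cG_\sigma}\circ\rv{\cX_{1,p_{i-1}}}$. One must verify that this smaller domain is exactly what arises as the intermediate signal — i.e., that the output of $\rv{\cG_\sigma}\circ\net{\cdot}{p_{i-1}}{\cdot}$ applied to anything is of the form ``bounded random variable plus independent Gaussian noise'' and thus lies in $\rv{\cG_\sigma}\circ\rv{\cX_{1,p_{i-1}}}$ — and that the version of Lemma~\ref{lemma:compose_tv} being chained is compatible with restricting to this sub-domain. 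Getting these domain inclusions and the matching of $\epsilon$-scales (including the stray factors of $2\sigma$ from Corollary~\ref{coll:ell2_to_tv} and $2B\sqrt{p}$ from Theorem~\ref{thm:tv_to_ell2}) to line up exactly with the constants in the theorem statement is the delicate part; the rest is a mechanical induction on the number of layers.
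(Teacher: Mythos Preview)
Your proposal is correct and follows essentially the same route as the paper: separate the first layer from the rest, convert the first layer's $\|.\|_2^{\ell_2}$ cover to a $d_{TV}^{\ell_2}$ cover via Corollary~\ref{coll:ell2_to_tv}, compose the remaining $T-1$ noisy layers by iterating Lemma~\ref{lemma:compose_tv} (using the global TV covers over $\rv{\cG_\sigma}\circ\rv{\cX_{B,p_{i-1}}}$), glue the two parts with one more application of Lemma~\ref{lemma:compose_tv}, and finish with Theorem~\ref{thm:tv_to_ell2} using $B=1/2$ after recentering the sigmoid. The domain-bookkeeping concern you flag is exactly the subtlety the paper handles by observing that each intermediate signal is a sigmoid output (hence in $[-1/2,1/2]^{p_{i-1}}$ after shifting) convolved with $\rv{\cG_\sigma}$, so the restricted-domain version of Lemma~\ref{lemma:compose_tv} applies.
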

The proof of Theorem~\ref{thm:neuralnet} involves applying Corollary~\ref{coll:ell2_to_tv} to turn the $\|.\|_2$ cover of first layer into a TV cover. We then find a TV cover for rest of the network by applying Lemma~\ref{lemma:compose_tv} recursively to compose all the other layers. We will compose the first layer with the rest of the network and bound the covering number by another application of Lemma~\ref{lemma:compose_tv}. Finally, we turn the TV covering number (of the entire network) back into $\|.\|_2^{\ell_2}$ covering number using Theorem~\ref{thm:tv_to_ell2}. The complete proof can be found in Appendix~\ref{app:sec:application}. The above bound does not depend on the norm of weights and therefore we can use it for networks with large weights.

The $\|.\|_2^{\ell_2}$ covering number of the first layer (i.e., $N_1$ in above) can be bounded using standard approaches in the literature. For instance, in the following corollary we will use the bound of Lemma 14.7 in \citet{anthony1999neural}. Other $N_i$'s can be bounded using Theorem~\ref{lemma:net}. The proof can be found in Appendix~\ref{app:coverapproaches}.

\begin{corollary}[Covering number bound of Theorem~\ref{thm:neuralnet}]\label{thm:ours_main}
 Let $\net{d}{p_1}{L_1},\net{p_1}{p_2}{L_2},\ldots,\net{p_{T-1}}{p_T}{L_T}$ be $T$ classes of neural networks. Denote the $T$-layer noisy network by
    \begin{equation*}
        \rv{\cF}=\rv{\cG_{\sigma}}\circ\net{p_{T-1}}{p_T}{L_T}     \circ \ldots \circ \rv{\cG_{\sigma}}\circ\net{p_1}{p_2}{L_2}     \circ    \rv{\cG_{\sigma}}\circ\net{d}{p_1}{L_1},
    \end{equation*}
    and let $\cH=\{h:\bR^d\to [0,1]^{p_T} \mid h(x)=\expects{\rv{f}}{~\rv{f}({x})}, \rv{f}\in \rv{\cF} \}$.
    Then we have
    \begin{equation*}
    \begin{aligned}
            &\ln N_U\left(\epsilon,\cH,m,\|.\|_2^{\ell_2}\right)\\
            &\leq \sum_{i=2}^T p_i.p_{i-1}\ln\left(30\frac{(T\sqrt{p_T})^{3/2}p_{i-1}^{5/2}\sqrt{\ln\left(\frac{\displaystyle 5T\sqrt{p_T}p_{i-1}-\epsilon\sigma}{\displaystyle \epsilon\sigma} \right)}}{\epsilon^{3/2}\sigma^2}\ln\left(\frac{5Tp_{i-1}\sqrt{p_T}}{\epsilon\sigma}\right)\right)\\
            & + dp_1\ln\left(\frac{Tem\sqrt{p_T}}{2\epsilon\sigma}\right).
    \end{aligned}
    \end{equation*}
\end{corollary}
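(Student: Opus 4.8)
The plan is to combine Theorem~\ref{thm:neuralnet} with the two explicit single-layer covering number bounds, bounding each factor $N_i$ and then taking logarithms.

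First I would bound the factors $N_i$ for $2\le i\le T$. These are the quantities defined in~\eqref{eq:thm:neuralnet}, i.e.\ the TV covering numbers of the noisy single layers $\rv{\cG_{\sigma}}\circ\net{p_{i-1}}{p_i}{L}$ at accuracy $\epsilon/(T\sqrt{p_T})$ with respect to $\rv{\cG_\sigma}\circ\rv{\cX_{1,p_{i-1}}}$. These are \emph{exactly} the covering numbers controlled by Theorem~\ref{lemma:net}, instantiated with input dimension $d\mapsto p_{i-1}$, output dimension $p\mapsto p_i$, and target accuracy $\epsilon\mapsto\epsilon/(T\sqrt{p_T})$. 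Substituting these into the bound of Theorem~\ref{lemma:net} gives
\[
N_i\le\left(30\frac{p_{i-1}^{5/2}\sqrt{\ln\!\big((5p_{i-1}-(\epsilon/(T\sqrt{p_T}))\sigma)/((\epsilon/(T\sqrt{p_T}))\sigma)\big)}}{(\epsilon/(T\sqrt{p_T}))^{3/2}\sigma^2}\ln\!\Big(\frac{5p_{i-1}}{(\epsilon/(T\sqrt{p_T}))\sigma}\Big)\right)^{p_i(p_{i-1}+1)}.
\]
Taking $\ln$ and pulling $\epsilon/(T\sqrt{p_T})$ through the powers of $\epsilon$ and the logarithms (multiplying through numerator and denominator of the inner $\ln$ by $T\sqrt{p_T}$ to get the stated $5T\sqrt{p_T}p_{i-1}-\epsilon\sigma$ over $\epsilon\sigma$ form) yields, for each $i$, the summand
\[
p_i p_{i-1}\ln\!\left(30\frac{(T\sqrt{p_T})^{3/2}p_{i-1}^{5/2}\sqrt{\ln\!\big(\tfrac{5T\sqrt{p_T}p_{i-1}-\epsilon\sigma}{\epsilon\sigma}\big)}}{\epsilon^{3/2}\sigma^2}\ln\!\Big(\frac{5Tp_{i-1}\sqrt{p_T}}{\epsilon\sigma}\Big)\right),
\]
where I have also replaced the exponent $p_i(p_{i-1}+1)$ by the slightly smaller-looking $p_i p_{i-1}$ — actually here I should be careful and either keep $p_i(p_{i-1}+1)$ or absorb the $+1$; the cleanest route is to note $p_i(p_{i-1}+1)\le p_i p_{i-1}$ is false, so I would instead write the exponent honestly and note the paper's display uses $p_i p_{i-1}$, which suggests the intended accounting folds the bias term into a reindexed weight matrix of size $p_i\times(p_{i-1}+1)$ and the display's $p_{i-1}$ already denotes that augmented dimension; I would state this convention explicitly.

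Next I would bound $N_1$. Here $N_1=N_U(2\sigma\epsilon/(T\sqrt{p_T}),\net{d}{p_1}{L},m,\|.\|_2^{\ell_2})$, an ordinary $\ell_2$ uniform covering number of a single sigmoid layer. Invoking Lemma~14.7 of~\citet{anthony1999neural} (as the corollary statement directs), which bounds the $\ell_2$ uniform covering number of a single-output sigmoid unit on bounded inputs, and then accounting for $p_1$ outputs (the $\ell_2$-extended metric over a $p_1$-dimensional output composes coordinate-wise, costing a product over the $p_1$ units, i.e.\ a factor $p_1$ in the log and $d$ parameters per unit), gives a bound of the shape $\ln N_1\le d p_1\ln(\text{const}\cdot e m/(\text{accuracy}))$. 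Plugging in accuracy $2\sigma\epsilon/(T\sqrt{p_T})$ produces the stated $d p_1\ln\!\big(Tem\sqrt{p_T}/(2\epsilon\sigma)\big)$ term, possibly after discarding a harmless constant or weakening a constant to reach this clean form.

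Finally, Theorem~\ref{thm:neuralnet} gives $N_U(\epsilon,\cH,m,\|.\|_2^{\ell_2})\le\prod_{i=1}^T N_i$, so $\ln N_U\le\sum_{i=1}^T\ln N_i$; assembling the $i=1$ term and the $i=2,\dots,T$ terms from the two paragraphs above yields exactly the claimed inequality. I expect the only real friction to be bookkeeping: correctly threading the replacement $\epsilon\mapsto\epsilon/(T\sqrt{p_T})$ through both single-layer bounds and through the inner nested logarithms so that the algebra lands on the precise constants $30$, $3/2$, and $2$ in the statement, and cleanly reconciling the $p_i(p_{i-1}+1)$ parameter count of Theorem~\ref{lemma:net} with the $p_i p_{i-1}$ written in the corollary (most likely by the augmented-dimension convention, which I would flag at the start of the proof). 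No step is conceptually hard — it is an instantiation — but the constant-chasing is where an error could creep in, so I would verify the two substitutions carefully and note explicitly wherever a constant is being weakened for readability.
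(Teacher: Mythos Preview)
Your proposal is correct and follows exactly the paper's approach: instantiate Theorem~\ref{lemma:net} for layers $2$ through $T$ with $\epsilon\mapsto\epsilon/(T\sqrt{p_T})$, invoke the Anthony--Bartlett single-layer bound for $N_1$, and combine via Theorem~\ref{thm:neuralnet}. The exponent discrepancy you flagged is real---the paper's own proof simply writes $p_i p_{i-1}$ in place of the $p_i(p_{i-1}+1)$ that Theorem~\ref{lemma:net} actually delivers, without comment or an augmented-dimension convention---so this is a minor looseness in the stated corollary rather than something you are missing.
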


One can generalize the above analysis in the following way: instead of separating the first layer, one can basically ``break'' the network from any layer, use existing $\|.\|_2$ covering number bounds for the first few layers, and Theorem~\ref{lemma:net} for the rest. See Lemma~\ref{lemma:compos_old} in Appendix~\ref{app:sec:application} for details.

\subsection{Analyzing different covering number bounds}\label{subsec:cover_bounds}
\begin{table}[h!]
  \centering
\begin{tabular}{|c|c|c|}
      \hline
      \rule{0pt}{3ex}   
    Approach & Logarithm of covering number: $\ln N_U(\epsilon,\cF,m,\|.\|_2^{\ell_2})$ & Nature\\
    \hline
    \rule{0pt}{4ex}   
     Corollary~\ref{thm:ours_main} & $O\left(W_{win}\ln\left(\frac{(T\sqrt{p_{T}})^{3/2}d_{max}^{5/2}}{\epsilon^{3/2}\sigma^2}\right) + d_{max}d\ln(\frac{mT\sqrt{p_{T}}}{\epsilon\sigma})\right)$ & $\mathcal{MOL}$\\[2ex]
     \hline
     \rule{0pt}{4ex}
     Norm-based (Theorem~\ref{thm:norm_based}) & $O\left( \left( \frac{1}{\epsilon}\right)^{2T}\left({p_{T}}\right)^{T+1}(2V)^{T^2+T}\log_2(2d)\right)$ & $\mathcal{RVO}$\\[2ex]
     \hline
     \rule{0pt}{4ex}
     Pseudo-dim-based (Theorem~\ref{thm:pdim_based}) & $O\left( p_{T} (W_{rvo}r_{rvo})^2\ln\left( \frac{m\sqrt{p_{T}}}{(W_{rvo}r_{rvo})^2\epsilon}\right)\right)$ & $\mathcal{RVO}$ \\[2ex]
     \hline
     \rule{0pt}{4ex}
     Lipschitzness-based (Theorem~\ref{thm:lipsch-based}) & $O\left(p_{T}W_{rvo}\ln\left( \frac{m\sqrt{p_{T}}W_{rvo}V^T}{\epsilon(V-1)}\right)\right)$ & $\mathcal{RVO}$\\[2ex]
     \hline
     \rule{0pt}{4ex}
     Spectral (Theorem~\ref{thm:spectral}) & $O\left( \frac{\|X\|_F^2\ln(w^2)}{\epsilon^2} \left(\prod_{i=1}^Ts_i^2\right)\left(\sum_{i=1}^T(\frac{b_i}{s_i})^{2/3}\right)^3\right)$ & $\mathcal{MOL}$\\[2ex]
     \hline
\end{tabular}
    \caption{Covering number of a $T$-layer sigmoid network from $\bR^d$ to $\bR^{p_T}$ defined by $\cF=\net{p_{T-1}}{p_T}{L_T}     \circ \ldots \circ\net{p_1}{p_2}{L_2}    \circ\net{d}{p_1}{L_1}$. Corollary~\ref{thm:ours_main} is computed on the $T$-layer noisy sigmoid network. $\|X\|_F$ denotes the normalized Frobenious norm of input matrix $X\in\bR^{d\times m}$ (see Appendix~\ref{app:coverapproaches} for more details). The definition of other quantifiers used in these bounds can be found in Table~\ref{table:ref}.}\label{table:cover}
\end{table}

\begin{table}[h!]
  \centering
\begin{tabular}{|c|c|c|}
      \hline
    Quantifier & Definition & Description\\
    \hline
    \rule{0pt}{3ex}
    $d_{max}$ & $\max_{1\leq i <T-1} p_i$ & Maximum number of neurons in a hidden layer\\[1ex]
     \hline
    \rule{0pt}{3ex}
    $W_{rvo}$ & $dp_1+\sum_{i=2}^{T-1}p_i.p_{i-1}+p_{T-1}$ & \makecell{Total number of parameters of the real-valued networks \\corresponding to each dimension of the output}\\[1ex]
     \hline
    \rule{0pt}{3ex}
    $W_{win}$ & $\sum_{i=2}^{T}p_i.p_{i-1}$ & \makecell{Total number of parameters excluding \\the weights between input and first hidden layers}\\[1ex]
    \hline
    \rule{0pt}{3ex}
    $r_{rvo}$ & $1+\sum_{i=1}^{T-1}p_i$ & \makecell{Total number of neurons in all but the input layer of \\the real-valued networks corresponding\\ to each dimension of the output}\\[1ex]
    \hline
    \rule{0pt}{3ex}
    $w$ & $\max\left\{d,d_{max},p_T\right\}$ & Maximum number of neurons in all layers of the network\\[1ex]
        \hline
    \rule{0pt}{3ex}
    $V$ & $\max _{1\leq i \leq  T}\|W_i\|_{1,\infty}$ & Maximum  $\ell_1$ norm of incoming weights to a neuron\\[1ex]
        \hline
    \rule{0pt}{3ex}
    $s_i$ & $\|W_i\|_{\sigma}$ & Spectral norm of $W_i$\\[1ex]
        \hline
    \rule{0pt}{3ex}
    $b_i$ & $\|W_i\|_{2,1}$ & $\|.\|_{2,1}$ norm of $W_i$ (see Appendix~\ref{app:coverapproaches})\\[1ex]
    \hline
    \end{tabular}
    \caption{Definition of quantifiers used in Table~\ref{table:cover}. Here, $W_i\in\bR^{p_{i-1}\times p_{i}}$ denotes the weight vector associated with $\net{p_{i-1}}{p_i}{}$ for $2\leq i\leq T$ and $W_1\in\bR^{d\times p_1}$ is the weight vector associated with $\net{d}{p_1}{}$. It is noteworthy that the total number of parameters of the network, $dp_1+\sum_{i=2}^{T}p_i.p_{i-1}$, is always smaller than $ p_TW_{rvo}$.}\label{table:ref}
\end{table}
For the remainder of this section we will qualitatively compare some of the approaches in finding covering number with our approach in Corollary~\ref{thm:ours_main}. (Later in the next section we will propose a quantitative metric (see Definition~\ref{def:nvac}) to compare these approaches based on their suggested generalization bounds). Particularly, we compare the following approaches: Corollary~\ref{thm:ours_main}, Norm-based (Theorem 14.17 in \citet{anthony1999neural}), Lipschitzness-based (Theorem 14.5 in \citet{anthony1999neural}), Pseudo-dim-based (Theorem 14.2 in \citet{anthony1999neural}), and Spectral (\citet{bartlett2017spectrally}). Some of these bounds work naturally for multi-output layers, which we will denote by $\mathcal{MOL}$, while some of them are derived for real-valued outputs, which we denote by $\mathcal{RVO}$. In Appendix~\ref{app:coverapproaches}, we recommend one possible approach to turn $\mathcal{RVO}$ covering number bounds into $\mathcal{MOL}$ bounds (Lemma~\ref{lemma:real_high}). A simplified form of these bounds is presented in Table~\ref{table:cover}. More details about these bounds and their exact forms can be found in Appendix~\ref{app:coverapproaches}.

{\bf Qualitative comparison of bounds on the logarithm of covering number.} In the following, whenever we reference to an approach, we are considering the logarithm of the covering number. For the definition of quantifiers see Table~\ref{table:ref}. Corollary~\ref{thm:ours_main} and Pseudo-dim-based bounds have no dependence on the norms of weights while Norm-based and Spectral bounds mostly depend (polynomially) on the norms with Spectral having a slight dependence of $\ln(w^2)$ on the size of the network. Pseudo-dim-based bound has the worst dependence on the size of the network, i.e., $\wtilde{O}\left(p_T\left(d_{max}^3+d_{max}^2d\right)^2\right)$, where $\wtilde{O}$ hides logarithmic factors. On the other hand, comparing Corollary~\ref{thm:ours_main} and Lipschitzness-based bound requires more attention. In terms of dependence on the size of the network, Corollary~\ref{thm:ours_main} and Lipschitzness-based bound are incomparable: Corollary~\ref{thm:ours_main} has a dependence of $\wtilde{O}\left(d_{max}^2+d_{max}d\right)$ while  Lipschitzness-based bound depends on $\wtilde{O}\left(p_T(d_{max}^2+d_{max}d)\right)$. However, in contrast to Corollary~\ref{thm:ours_main}, Lipschitzness-based bound has a dependence on the norms of the weights. Another important dependence is on $1/\epsilon$. Corollary~\ref{thm:ours_main} has a logarithmic dependence on $1/\epsilon$. While Lipschitzness-based and Pseudo-dim-based bounds also enjoy the logarithmic dependence, the Norm-based and Spectral bounds depend polynomially on $1/\epsilon$. It is also worth mentioning that Corollary~\ref{thm:ours_main}, Pseudo-dim-based and Lipschitzness-based bounds depend on $\ln(m)$. The empirical results of Section~\ref{sec:experiments} suggest that Corollary~\ref{thm:ours_main} can outperform all other bounds including Lipschitzness-based bound.

\section{NVAC: a metric for comparing generalization bounds}\label{sec:nvac}
We want to provide tools to compare different approaches in finding covering numbers and their suggested generalization bounds. First, we define the notion of a generalization bound for classification.
Let $\cY=[k]$ and $\cF$ be a class of functions from $\cX$ to $\bR^k$. Let $\cA$ be an algorithm that receives a labeled sample $S=\left((x_1,y_1),\ldots,(x_m,y_m)\right) \in (\cX \times \cY)^m$ and outputs a function $\hat{h}\in\cF$. Note that the output of this function is a real vector so it can capture margin-based classifiers too.
Let $l^{0-1}:\bR^k\times [k] \to \{0,1\}$ be the ``thresholded'' 0-1 loss function defined by $l^{0-1}(u,y)=\indicator\{{\text{argmax}}_iu^{(i)}\neq y\}$ where $u^{(i)}$ is the $i$-th dimension of $u$. 
\begin{definition}[Generalization Bound for Classification]\label{def:gb}
A (valid) generalization bound for $\cA$ with respect to $l^{0-1}$ and another (surrogate) loss function $l$ is a function $\text{GB}:\cF\times (\cX\times\cY)^m\to \bR$ such that for every distribution $\cD$ over $\cX\times \cY$, if $S\sim\cD^m$, then with probability at least 0.99 (over the randomness of $S$) we have
\begin{equation*}
    \left|\frac{1}{m}\sum_{(x,y)\in S}l(\hat{h}(x),y) - \expects{(x,y)\sim \cD}{l^{0-1}(\hat{h}(x),y)}\right| \leq \text{GB}(\hat{h},S).
\end{equation*}
\end{definition}
For example, $GB(\hat{h}, S)=2$ is a useless but valid generalization bound. Various generalization bounds that have been proposed in the literature are examples of a $GB$. Note that $GB$ can depend both on $S$ (for instance on $|S|$) and on $\hat{h}$ (for example, on the norm of the weights of network). 

It is not straightforward to empirically compare generalization bounds since they are often vacuous for commonly used applications. \citet{jiang2019fantastic} address this by looking at other metrics, such as the correlation of each bound with the actual generalization gap. While these metrics are informative, it is also useful to know how far off each bound is from producing a ``non-vacuous'' bound~\citep{DR17}. Therefore, we will take a more direct approach and propose the following metric. 
\begin{definition}[NVAC]\label{def:nvac} Let $\hat{h}$ be a hypothesis, $S\in(\cX\times\cY)^m$ a sample,  and $GB$ a generalization bound for algorithm $\cA$. Let $S^n$ denote a sample of size $mn$ which includes $n$ copies of $S$. Let $n^*$ be the smallest integer such that the following holds:
\begin{equation*}
    \text{GB}(\hat{h},S^{n^*}) + \frac{1}{|S^{n^*}|}\sum_{(x,y)\in S^{n^*}}l(\hat{h}(x),y) \leq 1.
\end{equation*}
We define NVAC to be ${|S^{n^*}|=mn^*}$.
\end{definition}
Informally speaking, NVAC is an upper bound on the minimum number of samples required to obtain a non-vacuous generalization bound. Approaches that get tighter upper bounds on covering number will generally result in smaller NVACs. In Appendix~\ref{app:nvacfromCover}, we will show how one can calculate NVAC using the uniform covering number bounds.

\section{Experiments}\label{sec:experiments}
In Section~\ref{subsec:cover_bounds} we qualitatively compared different approaches in bounding covering number. In this section, we empirically compare the exact form of these bounds (see Appendix~\ref{app:coverapproaches}) using the NVAC metric. 

We train fully connected neural networks on MNIST dataset. We use a network with an input layer, an output layer, and three hidden layers each containing $250$ hidden neurons as the baseline architecture. See Appendix~\ref{app:nvacgraphs} for the details of the learning settings. The left two graphs in Figure~\ref{fig:nvac_all} depict NVACs as functions of the depth and width of the network. It can be observed that our approach achieves the smallest NVAC. The Norm-based bound is the worst and is removed from the graph (see Appendix~\ref{app:nvacgraphs}). Overall, bounds that are based on the norm of the weights (even the spectral norm) perform poorly compared to those that are based on the size of the network. This is an interesting observation since we have millions of parameters ($\approx 3\times 10^9$) in some of the wide networks and one would assume approaches based on norm of weights should be able to explain generalization behaviour better. Aside from the fact that our bound does not have any dependence on the norms of weights, there are several reasons why it performs better. First, the NVAC in Spectral and Norm-based approaches have an extra polynomial dependence on $1/\epsilon$, compared to all other approaches. Moreover, these bounds depend on product of norms and group norms which can get quite large. Finally, our method works naturally for multi-output layers, while the Pseudo-dim-based, Lipschitzness-based, and Norm-based approaches work for real-valued output (and therefore one needs to bound the cover for each output separately).

The covering number bound of Corollary~\ref{thm:ours_main} has a polynomial dependence on $1/\sigma$. Therefore, NVAC has a mild logarithmic dependence on $1/\sigma$ (see Appendix~\ref{app:nvacfromCover} for details). The third graph in Figure~\ref{fig:nvac_all} corroborates that even a negligible amount of noise ($\sigma \approx 10^{-240}$) is sufficient to get tighter bounds on NVAC compared to other approaches.
Finally, the right graph in Figure~\ref{fig:nvac_all} shows that even with a considerable amount of noise (e.g, $\sigma=0.2$), the train and test accuracy of the model remain almost unchanged. This is perhaps expected, as the dynamics of training neural networks with gradient descent is already noisy even without adding Gaussian noise. Overall, our preliminary experiment shows that small amount of noise does not affect the performance, yet it enables us to prove tighter generalization bounds.

 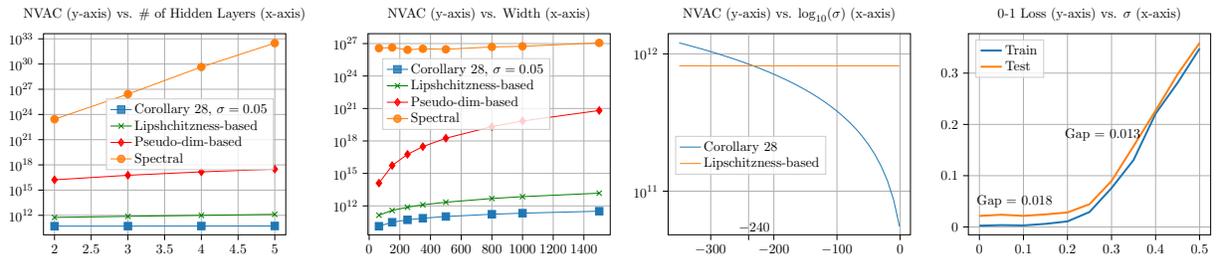
\begin{figure}
    \centering
    \subfloat{
\begin{tikzpicture}[scale=0.47]

\definecolor{darkgray176}{RGB}{176,176,176}
\definecolor{darkorange25512714}{RGB}{255,127,14}
\definecolor{green01270}{RGB}{0,127,0}
\definecolor{lightgray204}{RGB}{204,204,204}
\definecolor{steelblue31119180}{RGB}{31,119,180}

\begin{axis}[
legend cell align={left},
legend style={
  fill opacity=0.8,
  draw opacity=1,
  text opacity=1,
  at={(0.95,0.5)},
  anchor=east,
  draw=lightgray204
},
log basis y={10},
tick align=outside,
tick pos=left,
title={NVAC (y-axis) vs. \# of Hidden Layers (x-axis)},
x grid style={darkgray176},
xlabel={},
xmajorgrids,
xmin=1.85, xmax=5.15,
xtick style={color=black},
y grid style={darkgray176},
ylabel={},
ymajorgrids,
ymin=4791534854.11726, ymax=4.10693389987566e+33,
ymode=log,
ytick style={color=black},
ytick={1000000,1000000000,1000000000000,1e+15,1e+18,1e+21,1e+24,1e+27,1e+30,1e+33,1e+36,1e+39},
yticklabels={
  \(\displaystyle {10^{6}}\),
  \(\displaystyle {10^{9}}\),
  \(\displaystyle {10^{12}}\),
  \(\displaystyle {10^{15}}\),
  \(\displaystyle {10^{18}}\),
  \(\displaystyle {10^{21}}\),
  \(\displaystyle {10^{24}}\),
  \(\displaystyle {10^{27}}\),
  \(\displaystyle {10^{30}}\),
  \(\displaystyle {10^{33}}\),
  \(\displaystyle {10^{36}}\),
  \(\displaystyle {10^{39}}\)
}
]
\addplot [semithick, steelblue31119180, mark=square*, mark size=3, mark options={solid}]
table {%
2 51348186946.4345
3 51897254520.7146
4 52940188934.7658
5 54163313232.3868
};
\addlegendentry{Corollary~28, $\sigma=0.05$}
\addplot [semithick, green01270, mark=x, mark size=3, mark options={solid}]
table {%
2 561348889381.558
3 756692656869.744
4 998775160950.438
5 1300399857111.24
};
\addlegendentry{Lipshchitzness-based}
\addplot [semithick, red, mark=diamond*, mark size=3, mark options={solid}]
table {%
2 1.67727514437642e+16
3 5.81160033330929e+16
4 1.47368111391297e+17
5 3.11306486643974e+17
};
\addlegendentry{Pseudo-dim-based}
\addplot [semithick, darkorange25512714, mark=*, mark size=3, mark options={solid}]
table {%
2 2.70414353980701e+23
3 2.65862473449465e+26
4 4.48382303730719e+29
5 3.07972375913828e+32
};
\addlegendentry{Spectral}
\end{axis}

\end{tikzpicture}}
    \subfloat{\begin{tikzpicture}[scale=0.47]

\definecolor{darkgray176}{RGB}{176,176,176}
\definecolor{darkorange25512714}{RGB}{255,127,14}
\definecolor{green01270}{RGB}{0,127,0}
\definecolor{lightgray204}{RGB}{204,204,204}
\definecolor{steelblue31119180}{RGB}{31,119,180}

\begin{axis}[
legend cell align={left},
legend style={
  fill opacity=0.8,
  draw opacity=1,
  text opacity=1,
  at={(0.06,0.7)},
  anchor=west,
  draw=lightgray204
},
log basis y={10},
tick align=outside,
tick pos=left,
title={NVAC (y-axis) vs. Width (x-axis)},
x grid style={darkgray176},
xlabel={},
xmajorgrids,
xmin=-7.8, xmax=1571.8,
xtick style={color=black},
xticklabel style={/pgf/number format/1000 sep=},
y grid style={darkgray176},
ylabel={},
ymajorgrids,
ymin=2120642490.60856, ymax=8.21223903613458e+27,
ymode=log,
ytick style={color=black},
ytick={1000000,1000000000,1000000000000,1e+15,1e+18,1e+21,1e+24,1e+27,1e+30,1e+33},
yticklabels={
  \(\displaystyle {10^{6}}\),
  \(\displaystyle {10^{9}}\),
  \(\displaystyle {10^{12}}\),
  \(\displaystyle {10^{15}}\),
  \(\displaystyle {10^{18}}\),
  \(\displaystyle {10^{21}}\),
  \(\displaystyle {10^{24}}\),
  \(\displaystyle {10^{27}}\),
  \(\displaystyle {10^{30}}\),
  \(\displaystyle {10^{33}}\)
}
]
\addplot [semithick, steelblue31119180, mark=square*, mark size=3, mark options={solid}]
table {%
64 13205924003.3771
150 30686361777.6472
250 51216675135.9733
350 72268338983.3325
500 104064325981.589
800 168856734692.262
1000 211558469685.488
1500 322519292017.711
};
\addlegendentry{Corollary~28, $\sigma=0.05$}
\addplot [semithick, green01270, mark=x, mark size=3, mark options={solid}]
table {%
64 137766868695.87
150 380186511096.053
250 756692656869.744
350 1247122380783.46
500 2185495634834.64
800 4809059777031.86
1000 7131565315725.06
1500 14937255156100.9
};
\addlegendentry{Lipshchitzness-based}
\addplot [semithick, red, mark=diamond*, mark size=3, mark options={solid}]
table {%
64 126913696006677
150 5.37782869049592e+15
250 5.81160033330929e+16
350 2.97997493555181e+17
500 1.79263869708445e+18
800 2.09689808144143e+19
1000 6.98024506489934e+19
1500 6.52683570575932e+20
};
\addlegendentry{Pseudo-dim-based}
\addplot [semithick, darkorange25512714, mark=*, mark size=3, mark options={solid}]
table {%
64 3.7956495693336e+26
150 4.33315129139668e+26
250 2.65862473449465e+26
350 3.29559739224765e+26
500 2.97220906335308e+26
800 4.87732395291279e+26
1000 5.58337408836247e+26
1500 1.15447141003031e+27
};
\addlegendentry{Spectral}
\end{axis}

\end{tikzpicture}}
    \subfloat{
\begin{tikzpicture}[scale=0.47]

\definecolor{darkgray176}{RGB}{176,176,176}
\definecolor{darkorange25512714}{RGB}{255,127,14}
\definecolor{lightgray204}{RGB}{204,204,204}
\definecolor{steelblue31119180}{RGB}{31,119,180}

\begin{axis}[
legend cell align={left},
legend style={
  fill opacity=0.8,
  draw opacity=1,
  text opacity=1,
  at={(0.03,0.3)},
  anchor=south west,
  draw=lightgray204
},
log basis y={10},
tick align=outside,
tick pos=left,
title={NVAC (y-axis) vs. $\log_{10}(\sigma)$ (x-axis)},
x grid style={darkgray176},
xlabel={},
xmajorgrids,
xmin=-367.45, xmax=16.45,
xtick style={color=black},
extra x ticks = {-240},
extra x tick labels = {$\displaystyle {-240}$},
extra x tick style={tick label style={black, above, yshift=0.5ex,xshift = 1ex}}, 
y grid style={darkgray176},
ylabel={},
ymajorgrids,
ymin=48116445600.4435, ymax=1403874506518.26,
ymode=log,
ytick style={color=black},
ytick={1000000000,10000000000,100000000000,1000000000000,10000000000000,100000000000000},
yticklabels={
  \(\displaystyle {10^{9}}\),
  \(\displaystyle {10^{10}}\),
  \(\displaystyle {10^{11}}\),
  \(\displaystyle {10^{12}}\),
  \(\displaystyle {10^{13}}\),
  \(\displaystyle {10^{14}}\)
}
]
\addplot [semithick, steelblue31119180]
table {%
-350 1202575100826.28
-340 1169818942720.83
-330 1137061330873.22
-320 1104302208430.04
-310 1071541530768.81
-300 1038779111372.96
-290 1006014939873.49
-280 973248798430.389
-270 940480642154.419
-260 907710209903.461
-250 874937437302.547
-240 842162129414.239
-230 809384036228.212
-220 776602955711.312
-210 743818580854.64
-200 711030680988.774
-190 678238857771.857
-180 645442752962.077
-170 612641891522.535
-160 579835749336.43
-150 547023659477.148
-140 514204930773.753
-130 481378618610.674
-120 448543622981.957
-110 415698623198.75
-100 382841854946.518
-90 349971187563.575
-80 317083715449.454
-70 284175612968.699
-60 251241527568.391
-50 218273692964.384
-40 185260220592.061
-30 152181450701.918
-20 119001229281.451
-10 85639909035.2751
-1.30102999566398 56372957173.8613
-1 55334196723.3854
};
\addlegendentry{Corollary~28}
\addplot [semithick, darkorange25512714]
table {%
-350 819158966931.956
-340 819158966931.956
-330 819158966931.956
-320 819158966931.956
-310 819158966931.956
-300 819158966931.956
-290 819158966931.956
-280 819158966931.956
-270 819158966931.956
-260 819158966931.956
-250 819158966931.956
-240 819158966931.956
-230 819158966931.956
-220 819158966931.956
-210 819158966931.956
-200 819158966931.956
-190 819158966931.956
-180 819158966931.956
-170 819158966931.956
-160 819158966931.956
-150 819158966931.956
-140 819158966931.956
-130 819158966931.956
-120 819158966931.956
-110 819158966931.956
-100 819158966931.956
-90 819158966931.956
-80 819158966931.956
-70 819158966931.956
-60 819158966931.956
-50 819158966931.956
-40 819158966931.956
-30 819158966931.956
-20 819158966931.956
-10 819158966931.956
-1.30102999566398 819158966931.956
-1 819158966931.956
};
\addlegendentry{Lipschitzness-based}
\end{axis}

\end{tikzpicture}}
    \subfloat{
\begin{tikzpicture}[scale=0.47]
\definecolor{darkgray176}{RGB}{176,176,176}
\definecolor{darkorange25512714}{RGB}{255,127,14}
\definecolor{lightgray204}{RGB}{204,204,204}
\definecolor{steelblue31119180}{RGB}{31,119,180}

\begin{axis}[
legend cell align={left},
legend style={
  fill opacity=0.8,
  draw opacity=1,
  text opacity=1,
  at={(0.03,0.97)},
  anchor=north west,
  draw=lightgray204
},
tick align=outside,
tick pos=left,
title={0-1 Loss (y-axis) vs. $\sigma$ (x-axis)},
x grid style={darkgray176},
xlabel={},
xmajorgrids,
xmin=-0.025, xmax=0.525,
xtick style={color=black},
y grid style={darkgray176},
ylabel={},
ymajorgrids,
ymin=-0.0153042395669967, ymax=0.376948904487863,
ytick style={color=black}
]
\addplot [ultra thick, steelblue31119180]
table {%
0 0.0025254487991333
0.05 0.0036021186709404
0.1 0.00300220310688015
0.15 0.00606749057769773
0.2 0.0108754231333733
0.25 0.0291773204505443
0.3 0.0758824227154254
0.35 0.130637828558683
0.4 0.220969605840743
0.45 0.281140586957335
0.5 0.348085655480623
};
\addlegendentry{Train}
\addplot [ultra thick, darkorange25512714]
table {%
0 0.0215000510215759
0.05 0.0239086238741875
0.1 0.0216412252783775
0.15 0.0245171236395836
0.2 0.0283142243027688
0.25 0.0444091235399247
0.3 0.0896480229496955
0.35 0.157704420901835
0.4 0.226935819379985
0.45 0.297850917667151
0.5 0.359119216121733
};
\addlegendentry{Test}
\node at (axis cs: 0.08,0.05) {Gap =  0.018};
\node at (axis cs: 0.28,0.18) {Gap =  0.013};
\end{axis}
\end{tikzpicture}}
    \caption{The left two graphs depict NVAC of different generalization bounds as a function of the number of hidden layers and width of the network. 
    The Norm-based approach is excluded because of its excessively high NVAC (see Appendix~\ref{app:nvacgraphs}).
    The third graph plots NVAC against $\log_{10}(\sigma)$ ($\sigma$ is standard deviation of noise) for the two best approaches. The rightmost graph plots the train/test 0-1 losses for different values of $\sigma$. The gaps between the train and test losses are shown for $\sigma=0,0.3$.}
    \label{fig:nvac_all}
\end{figure}

{\bf Limitations and Future Work.}
Our analysis is based on the assumption that the activation function is bounded. Therefore, extending the results to ReLU neural networks is not immediate, and is left for future work. Also, our empirical analysis is preliminary and is mostly used as a sanity check. Further empirical evaluations can help to better understand the role of noise in training neural networks.

\medskip

\bibliography{refs.bib}



\appendix

\section{Miscellaneous facts}
\begin{lemma}[Data processing inequality for TV distance]\label{lemma:DPI}
Given two random variables $\rv{x_1},\rv{x_2} \in \rv{\cX}$, and a (random) Borel function $f:\cX\rightarrow\cY$, 
\begin{equation*}
    d_{TV}(f(\rv{x_1}),f(\rv{x_2}))\leq d_{TV}(\rv{x_1},\rv{x_2}).
\end{equation*}
\end{lemma}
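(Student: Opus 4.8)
The statement is the classical data processing inequality for the total variation distance, so the plan is to reduce the random-function case to the deterministic case and then prove the latter by the standard argument on test sets. First I would handle the case where $f$ is deterministic. Let $\mu_1,\mu_2$ be the laws of $\rv{x_1},\rv{x_2}$, and let $f_\#\mu_1, f_\#\mu_2$ denote the pushforward measures, i.e.\ the laws of $f(\rv{x_1}), f(\rv{x_2})$. Using the supremum definition of $d_{TV}$, for any Borel set $B\subseteq\cY$ we have $f^{-1}(B)$ Borel (since $f$ is Borel), and $(f_\#\mu_i)(B) = \mu_i(f^{-1}(B))$. Hence
\begin{equation*}
  |(f_\#\mu_1)(B) - (f_\#\mu_2)(B)| = |\mu_1(f^{-1}(B)) - \mu_2(f^{-1}(B))| \leq \sup_{A \in \Omega_{\cX}} |\mu_1(A) - \mu_2(A)| = d_{TV}(\mu_1,\mu_2).
\end{equation*}
Taking the supremum over $B$ on the left gives $d_{TV}(f(\rv{x_1}), f(\rv{x_2})) \leq d_{TV}(\rv{x_1},\rv{x_2})$.

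Next I would extend this to a random Borel function $\rv{f}$. The natural model (consistent with the paper's convention that $\rv{f}(x)$ means $\rv{f}(\rv{\delta_x})$) is that $\rv{f}$ is drawn according to some distribution over deterministic Borel maps, independently of the input, so that $f(\rv{x_i})$ has law $\mathbb{E}_{\rv f}[\, f_\#\mu_i\,]$ — a mixture over the randomness of $\rv{f}$ of the pushforwards. Then for any Borel $B$,
\begin{equation*}
  \bigl|\mathbb{E}_{\rv f}[(f_\#\mu_1)(B)] - \mathbb{E}_{\rv f}[(f_\#\mu_2)(B)]\bigr| \leq \mathbb{E}_{\rv f}\bigl|(f_\#\mu_1)(B) - (f_\#\mu_2)(B)\bigr| \leq \mathbb{E}_{\rv f}\bigl[d_{TV}(\mu_1,\mu_2)\bigr] = d_{TV}(\mu_1,\mu_2),
\end{equation*}
using Jensen (or just the triangle inequality for the absolute value of an integral) and the deterministic case inside the expectation. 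Taking the supremum over $B$ finishes the proof. Equivalently, one can phrase this via the density/$\ell_1$ characterization: if $\rv{x_1},\rv{x_2}$ admit generalized densities, so do their images under $\rv f$, and the map on densities induced by $\rv f$ is an $\ell_1$-contraction (it is a Markov/stochastic kernel), which is exactly $d_{TV}(\rv{f}(\rv{x_1}),\rv{f}(\rv{x_2})) = \tfrac12\|K p_1 - K p_2\|_1 \leq \tfrac12\|p_1 - p_2\|_1 = d_{TV}(\rv{x_1},\rv{x_2})$, where $K$ is the kernel and $p_i$ the densities.

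The only genuine subtlety — and the step I would be most careful about — is measurability bookkeeping: making sure $f^{-1}(B)$ lands in the relevant sigma-algebra (immediate from $f$ Borel), and, in the random-function case, making sure the joint object $(\rv f, \rv{x_i})$ is set up so that $f(\rv{x_i})$ is a well-defined random variable with the claimed mixture law and so that Fubini/Tonelli applies when swapping $\mathbb{E}_{\rv f}$ with the set-function evaluation. Given the paper's standing assumptions (everything Borel, densities generalized), this is routine, so the core of the argument is the two displayed inequalities above. No appeal to Lemma~\ref{lemma:DPI}'s downstream uses is needed; this lemma is self-contained.
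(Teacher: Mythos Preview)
Your argument is correct and is the standard proof of the data processing inequality for total variation. The paper, however, does not actually prove Lemma~\ref{lemma:DPI}: it is listed in the ``Miscellaneous facts'' appendix with only its statement and no proof, treated as a well-known result. So there is nothing to compare against; your write-up simply supplies the omitted justification, and both the deterministic pushforward step and the extension to random $f$ via averaging over the randomness of the map are fine.
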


The next theorem, bounds the total variation distance between two Gaussian random variables.
\begin{theorem}[Total variation distance between Gaussians with same covariance]\label{thm:tv_gaussian}
Let $\cN(\mu_1,\sigma^2I_{d})$ and $\cN(\mu_2,\sigma^2I_{d})$ be two Gaussian random variables, where $I_d$ is the $d$-by-$d$ identity matrix. Then we have,
\begin{equation*}
    d_{TV}(\cN(\mu_1,\sigma^2I_{d}),\cN(\mu_2,\sigma^2I_{d}))\leq \frac{1}{2\sigma}\left\| \mu_1-\mu_2\right\|_2.
\end{equation*}
\end{theorem}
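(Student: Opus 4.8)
The plan is to reduce to the one-dimensional case by exploiting the invariance of total variation distance under bijective maps, and then to carry out an explicit one-dimensional estimate. Set $t=\|\mu_1-\mu_2\|_2/\sigma$; if $t=0$ the claim is trivial, so assume $t>0$. First I would translate by $-\mu_1$ and rescale by $1/\sigma$ (a bijection of $\bR^d$, hence TV-preserving), reducing the problem to bounding $d_{TV}(\cN(\mathbf 0,I_d),\cN(v,I_d))$ with $\|v\|_2=t$. Next, pick an orthogonal matrix $Q$ with $Qv=(t,0,\dots,0)$; since orthogonal maps fix $\cN(\mathbf 0,I_d)$ and push $\cN(v,I_d)$ forward to $\cN((t,0,\dots,0),I_d)$, the quantity is unchanged. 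Finally, both of these are product measures agreeing on coordinates $2,\dots,d$, so writing the $L^1$ formula $d_{TV}=\tfrac12\int|f-g|$ and integrating out the shared coordinates shows the $d$-dimensional quantity equals the one-dimensional $d_{TV}(\cN(0,1),\cN(t,1))$.

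For the one-dimensional estimate I would again use $d_{TV}=\tfrac12\int|f-g|$ with $f(x)=\varphi(x)$ and $g(x)=\varphi(x-t)$, where $\varphi$ is the standard normal density. Solving $\varphi(x)=\varphi(x-t)$ gives the single crossing point $x=t/2$, with $f\ge g$ on $(-\infty,t/2]$. Therefore
\[
 d_{TV}(\cN(0,1),\cN(t,1))=\int_{-\infty}^{t/2}\big(\varphi(x)-\varphi(x-t)\big)\,dx=\int_{-t/2}^{t/2}\varphi(x)\,dx ,
\]
the second equality by a change of variables and the symmetry of $\varphi$. Bounding the integrand by its maximum gives $\int_{-t/2}^{t/2}\varphi(x)\,dx\le t\,\varphi(0)=t/\sqrt{2\pi}\le t/2$, which is exactly $\|\mu_1-\mu_2\|_2/(2\sigma)$.

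An alternative and shorter route avoids the reduction entirely: combine Pinsker's inequality $d_{TV}(\mu,\nu)\le\sqrt{\tfrac12 D_{\mathrm{KL}}(\mu\|\nu)}$ with the closed-form Gaussian divergence $D_{\mathrm{KL}}\big(\cN(\mu_1,\sigma^2 I_d)\,\|\,\cN(\mu_2,\sigma^2 I_d)\big)=\|\mu_1-\mu_2\|_2^2/(2\sigma^2)$, which immediately yields $d_{TV}\le\|\mu_1-\mu_2\|_2/(2\sigma)$. I expect the only real friction in the first approach to be writing the reduction to one dimension cleanly (justifying TV-invariance under the affine and orthogonal changes of variable and the factorization over product measures) and pinning down the crossing point of the two densities; after that it is a routine calculus bound. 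The Pinsker route instead shifts the work onto recalling (or re-deriving by a short Gaussian integral) the KL formula, and is the version I would likely present.
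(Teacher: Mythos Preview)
Your second route via Pinsker's inequality and the closed-form KL divergence is exactly the proof the paper gives. Your first, reduction-to-one-dimension argument is also correct and is a genuinely different approach: it avoids Pinsker entirely, identifies $d_{TV}(\cN(0,1),\cN(t,1))=\int_{-t/2}^{t/2}\varphi(x)\,dx$ exactly, and then bounds this by $t/\sqrt{2\pi}$, which is in fact a sharper constant than the paper's $t/2$. The paper's proof is shorter and requires only remembering two standard facts; your direct computation buys a tighter bound and makes the dependence on $t$ completely explicit (indeed it gives the exact TV value $2\Phi(t/2)-1$), at the cost of spelling out the invariances and the product-measure reduction.
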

\begin{proof}
Form Pinsker's inequality we know that for any two distributions $P$ and $Q$ we have 
\begin{equation}\label{eq:a0}
    d_{TV}(P,Q)\leq \sqrt{\frac{1}{2}d_{KL}(P,Q)},
    \end{equation}
where $d_{KL}(P,Q)$ is the Kullback-Liebler (KL) divergence between $P$ and $Q$. We can then find the KL divergence between $\cN(\mu_1,\sigma^2I_{d})$ and $\cN(\mu_2,\sigma^2I_{d})$ as (see e.g., \citet{diakonikolas2019robust}) 
\begin{equation}\label{eq:a1}
    d_{KL}\left(\cN(\mu_1,\sigma^2I_{d}),\cN(\mu_2,\sigma^2I_{d})\right)\leq \frac{1}{2\sigma^2}\|\mu_1-\mu_2\|_2^2.
\end{equation}
Combining Equations~\ref{eq:a0} and \ref{eq:a1} concludes the result.
\end{proof}

\begin{lemma}\label{lemma:chi}
Let $Y\sim\chi_n^2$ be a chi-squared random variable with $n$ degrees of freedom. Then we have \citep{laurent2000adaptive}
\begin{equation*}
    \bP[Y-n\geq 2\sqrt{nt}+2t]\leq e^{-t}.
\end{equation*}
\end{lemma}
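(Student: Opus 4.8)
The plan is to apply the standard Chernoff (Cram\'er) method to the moment generating function of a chi-squared variable. First I would write $Y = \sum_{i=1}^n X_i^2$ with $X_1,\dots,X_n$ i.i.d.\ standard normals, and recall that for $0 \le \lambda < 1/2$,
\[
\mathbb{E}\big[e^{\lambda Y}\big] = \prod_{i=1}^n \mathbb{E}\big[e^{\lambda X_i^2}\big] = (1-2\lambda)^{-n/2},
\]
where the per-coordinate identity $\mathbb{E}[e^{\lambda X_1^2}] = (1-2\lambda)^{-1/2}$ is a one-dimensional Gaussian integral. Markov's inequality applied to $e^{\lambda Y}$ then yields, for every such $\lambda$ and every $u > 0$,
\[
\bP[Y - n \ge u] \le e^{-\lambda(n+u)}(1-2\lambda)^{-n/2},
\]
and I will ultimately take $u = 2\sqrt{nt}+2t$.

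Next I would optimize the exponent over $\lambda$. Differentiating $-\lambda(n+u) - \tfrac n2 \ln(1-2\lambda)$ and setting the result to zero gives $1-2\lambda = n/(n+u)$, i.e.\ $\lambda = u/(2(n+u))$, which indeed lies in $[0,1/2)$. Substituting this value back collapses the bound to
\[
\ln \bP[Y - n \ge u] \le -\frac u2 + \frac n2 \ln\!\Big(1 + \frac un\Big).
\]

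The remaining (and only mildly delicate) step is to verify that the right-hand side is at most $-t$ for $u = 2\sqrt{nt}+2t$. Writing $a = \sqrt{t/n}$ one gets $\sqrt{nt} = na$, hence $u/n = 2a + 2a^2$ and $\tfrac u2 - t = na$, so the target inequality $\tfrac n2\ln(1+u/n) \le \tfrac u2 - t$ reduces to $\ln(1 + 2a + 2a^2) \le 2a$, equivalently $1 + 2a + 2a^2 \le e^{2a}$. This holds for all $a \ge 0$ because every term of the Taylor series $e^{2a} = 1 + 2a + 2a^2 + \tfrac{4a^3}{3} + \cdots$ is nonnegative, so $e^{2a}$ dominates its first three terms. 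Chaining this with the optimized Chernoff bound gives $\ln \bP[Y - n \ge 2\sqrt{nt}+2t] \le -t$, which is the claim. (Since the result is attributed to \citet{laurent2000adaptive}, one may alternatively just invoke their Lemma~1.) The main obstacle is entirely the bookkeeping in this last substitution, and it is slight; there is no conceptual difficulty.
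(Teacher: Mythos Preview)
Your proof is correct. The paper itself does not prove this lemma at all: it simply states the inequality and attributes it to \citet{laurent2000adaptive}, so there is no in-paper argument to compare against. Your Chernoff/Cram\'er computation is exactly the standard derivation (and essentially what Laurent and Massart do), and the final reduction to $1+2a+2a^2\le e^{2a}$ via the Taylor expansion is clean and valid; there is nothing to add.
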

\begin{lemma}\label{lemma:norm_gaussian}
Let $x=\sum_{i=1}^m w_ig_i$ be a random variable, where $g_i$ are $d$-dimensional Gaussian random variables with means $\mu_i\in[-B,B]^d$ and covariance matrices of $\sigma^2I_d$. We have
\begin{equation*}
    \bP\left[\|x\|_2\geq (B+\sigma)\sqrt{d}+\sigma\sqrt{2t}\right] \leq e^{-t}.
\end{equation*}
\end{lemma}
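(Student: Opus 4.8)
The plan is to reduce the statement to a tail bound for the Euclidean norm of a single multivariate Gaussian and then invoke the $\chi^2$ concentration inequality of Lemma~\ref{lemma:chi}. First I would note that, the $g_i$ being (jointly) Gaussian, $x=\sum_{i=1}^m w_i g_i$ is itself Gaussian with mean $\bar\mu=\sum_{i=1}^m w_i\mu_i$ and covariance $\left(\sum_{i=1}^m w_i^2\right)\sigma^2 I_d$. Since the $w_i$ form a convex combination (i.e., $w_i\geq 0$ and $\sum_{i=1}^m w_i=1$), each coordinate of $\bar\mu$ is a convex combination of numbers in $[-B,B]$, so $\|\bar\mu\|_\infty\leq B$ and hence $\|\bar\mu\|_2\leq B\sqrt{d}$; moreover $\sum_{i=1}^m w_i^2\leq \sum_{i=1}^m w_i=1$, so the effective standard deviation $\sigma'=\sigma\sqrt{\sum_i w_i^2}$ satisfies $\sigma'\leq\sigma$. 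Writing $x=\bar\mu+\sigma' z$ with $z\sim\cN(\mathbf{0},I_d)$, the triangle inequality gives $\|x\|_2\leq \|\bar\mu\|_2+\sigma'\|z\|_2\leq B\sqrt{d}+\sigma\|z\|_2$.

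Next I would control $\|z\|_2$. Since $\|z\|_2^2\sim\chi_d^2$, Lemma~\ref{lemma:chi} (with $n=d$) gives $\bP\left[\|z\|_2^2\geq d+2\sqrt{dt}+2t\right]\leq e^{-t}$. Completing the square, $d+2\sqrt{dt}+2t\leq d+2\sqrt{2dt}+2t=\left(\sqrt{d}+\sqrt{2t}\right)^2$, so on the complementary event (of probability at least $1-e^{-t}$) we have $\|z\|_2\leq\sqrt{d}+\sqrt{2t}$. Substituting into the bound from the previous paragraph yields $\|x\|_2\leq B\sqrt{d}+\sigma\left(\sqrt{d}+\sqrt{2t}\right)=(B+\sigma)\sqrt{d}+\sigma\sqrt{2t}$ with probability at least $1-e^{-t}$, which is exactly the claim.

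I do not anticipate a genuine obstacle here: the only steps needing care are justifying $\sum_i w_i^2\leq 1$ from the convex-combination hypothesis and the completing-the-square manipulation that converts the raw $\chi^2$ tail into the clean form $\sqrt{d}+\sqrt{2t}$. If the intended reading is instead that $x$ is drawn from the \emph{mixture} $\sum_{i=1}^m w_i\,\cN(\mu_i,\sigma^2 I_d)$, the same argument goes through after the observation $\bP\left[\|x\|_2\geq\tau\right]=\sum_{i=1}^m w_i\,\bP\left[\|g_i\|_2\geq\tau\right]\leq\max_{1\leq i\leq m}\bP\left[\|g_i\|_2\geq\tau\right]$, and then bounding each component $g_i\sim\cN(\mu_i,\sigma^2 I_d)$ with $\|\mu_i\|_\infty\leq B$ exactly as above.
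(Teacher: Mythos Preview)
Your proposal is correct. The paper's intended reading is the \emph{mixture} interpretation (this is how the lemma is used later, where $\bar h=\sum_i w_i g_i$ is explicitly called a mixture of Gaussians), and its proof is precisely your fallback argument: write $\bP[\|x\|_2\geq R]=\sum_i w_i\,\bP[\|g_i\|_2\geq R]$, apply the triangle inequality $\|g_i\|_2\leq\|\mu_i\|_2+\sigma\|y_i\|_2$ with $y_i\sim\cN(\mathbf 0,I_d)$, and then invoke Lemma~\ref{lemma:chi} together with the same completing-the-square step you wrote down.

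Your primary route, treating $x$ as a single Gaussian $\cN(\bar\mu,\sigma'^2 I_d)$ with $\sigma'^2=\sigma^2\sum_i w_i^2\leq\sigma^2$, is a genuinely different (and slightly sharper) argument: it exploits the variance reduction from $\sum_i w_i^2\leq\sum_i w_i=1$, which the mixture argument cannot use. The trade-off is that it needs the extra hypothesis that the $g_i$ are jointly Gaussian (e.g.\ independent), which the statement does not supply and which is irrelevant in the paper's application. So your secondary paragraph is the one that actually matches the paper; your primary paragraph is a valid strengthening under an added assumption.
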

\begin{proof}
We know that for any $R\in\bR$
\begin{equation*}
\begin{aligned}
     \bP\left[\|x\|_2^2\geq R^2\right] = \sum_{i=1}^m w_i\bP\left[\|g_i\|_2^2 \geq R^2\right] = \sum_{i=1}^m w_i\bP\left[\|\sigma y_i+\mu_i\|_2^2 \geq R^2\right]= \sum_{i=1}^m w_i\bP\left[\|\sigma y_i+\mu_i\|_2 \geq R\right],
\end{aligned}
\end{equation*}
where $y_i\sim \cN(0,I_d)$ are standard normal random variables. Using triangle inequality we can rewrite the above equation as
\begin{equation*}
    \begin{aligned}
      \bP\left[\|x\|_2^2\geq R^2\right] \leq \sum_{i=1}^m w_i\bP\left[\|\sigma y_i\|_2+\|\mu_i\|_2 \geq R\right]\leq \sum_{i=1}^m w_i\bP\left[\|\sigma y_i\|_2+ B\sqrt{d} \geq R\right].
    \end{aligned}
\end{equation*}
We can, therefore, conclude that
\begin{equation*}
    \bP\left[\|x\|_2^2\geq R^2\right] \leq \bP\left[\|y_i\|_2^2  \geq \left(\frac{R-B\sqrt{d}}{\sigma}\right)^2\right].
\end{equation*}
Setting $R=(B+\sigma)\sqrt{d}+\sigma\sqrt{2t}$, we can write
\begin{equation*}
    \begin{aligned}
    &\bP\left[\|x\|_2\geq (B+\sigma)\sqrt{d}+\sigma\sqrt{2t}\right]\\
     &=\bP\left[\|x\|_2^2\geq \left((B+\sigma)\sqrt{d}+\sigma\sqrt{2t}\right)^2\right]\\
     &\leq \bP\left[\|y_i\|_2^2  \geq (\sqrt{d}+\sqrt{2t})^2\right]\\
     &\leq  \bP\left[\|y_i\|_2^2  \geq d+2t+2\sqrt{dt}\right]\\
     &\leq e^{-t}.
    \end{aligned}
\end{equation*}
\end{proof}
\section{Proofs of propositions in Section~\ref{sec:notations}}\label{app:props}
\subsection{Proof of Proposition~\ref{prop7}}
\begin{proof}
Fix an input set $S=\{x_1,\ldots,x_m\}$. Let $C=\{\hat{f_i}_{|S}\mid\hat{f}_i\in\cF,i\in[r_1]\}$ be $0.5$-cover for $\cF_{|S}$ with respect to $\rho^{\infty}$. Therefore, given any $f_{|S}\in\cF_{|S}$ there exists $\hat{f_i}_{|S}\in C$ such that
\begin{equation}\label{eq:prop1}
    \rho^{\infty}\left( (f(x_1),\ldots,f(x_m)), (\hat{f_i}(x_1),\ldots,\hat{f_i}(x_m))\right)\leq 0.5
\end{equation}
Since $\rho\left(f(x),\hat{f_i}(x)\right)=\indicator\{f(x)\neq \hat{f_i}(x)\}$, Equation~\ref{eq:prop1} suggests that $f(x_k)=\hat{f_i}(x_k)$ for any $k\in[m]$. Let $S'=\{\hat{f_i}(x_k)\mid i\in[r_1],\,k\in[m]\}$ and $C'=\{\hat{h_j}_{|S'}\mid \hat{h}_j\in\cH,j\in[r_2]\}$ be an $\epsilon$-cover for $\cH_{|S'}$ with respect to $\|.\|_2^{\infty}$. We know that $|S'|\leq mr_1.$ Denote $\hat{\cQ}=\{\hat{h}_j\circ \hat{f_i}\mid i\in[r_1],\,j\in[r_2]\}$. We will prove that $\hat{\cQ}_{|S}$ is an $\epsilon$-cover for $(\cH\circ\cF)_{|S}$ with respect to $\|.\|_2^{\infty}$. Consider $(h\circ f)_{|S}=\left( h(f(x_1)),\ldots,h(f(x_m))\right)\in (\cH\circ\cF)_{|S}$. Since $C$ is a $0.5$-cover for $\cF_{|S}$, from equation \ref{eq:prop1}, we know that there exists $\hat{f_i}\in\cF$ such that $f(x_k)=\hat{f_i}(x_k)$ for any $k\in[m]$. On the other hand, for any $k\in[m]$, $\hat{f_i}(x_k)$ is an element of $S'$, consequently, there exists $\hat{h_j}\in\cH$ such that
\begin{equation*}
\begin{aligned}
     &\left\|\left( h(\hat{f_i}(x_1)),\ldots,h(\hat{f_i}(x_m))) - (\hat{h_j}(\hat{f_i}(x_1)),\ldots,\hat{h_j}(\hat{f_i}(x_m)))\right)\right\|_2^{\infty}\\
     &=\left\|\left( h(f(x_1)),\ldots,h(f(x_m)))-(\hat{h_j}(\hat{f_i}(x_1)),\ldots,\hat{h_j}(\hat{f_i}(x_m)))\right)\right\|_2^{\infty}\\
    & \leq \epsilon
\end{aligned}
\end{equation*}
From the above equation, we can conclude that $(\cH\circ\cF)_{|S}$ is $\epsilon$-covered by $\hat{\cQ}_{|S}$. Clearly, $\left|\hat{\cQ}_{|S}\right|\leq r_1r_2$ and we know that $mr_1\leq mN_1$. As a result, $N(\epsilon,\cH_{|S'},\|.\|_2^{\infty})\leq N_U(\epsilon,\cH,mr_1,\|.\|_2^{\infty})\leq N_U(\epsilon,\cH,mN_1,\|.\|_2^{\infty})$. This result holds for any input set $S\subset \cX^m$ with $|S|=m$, therefore, it follows that
\begin{equation*}
    N_U(\epsilon,\cH\circ\cF,m,\|.\|_2^{\infty})\leq N_1.N_U(\epsilon,\cH,mN_1,\|.\|_2^{\infty}).
\end{equation*}
\end{proof}
\subsection{Proof of Proposition~\ref{prop8}}
\begin{proof}
The proof for the bound of $N_U(\epsilon,\cF,m,\|.\|_2^{\ell_2})$ can be found under Theorem 3 in \citet{zhang2002covering}. Since $\cH$ is a singleton class, it is easy to verify $N_U(\epsilon,\cH,m,\|.\|_2^{\infty})=1$. We prove that the covering number of $\cH\circ\cF$ is unbounded by contradiction. Let $S=\{x_1,\ldots,x_m\}\in (0,1)^m$ be an input set where $0< x_1\leq\ldots\leq x_m$. Denote $C=\{(h\circ \hat{f_i})_{|S}=(\frac{1}{\hat{w_i}x_1},\ldots,\frac{1}{\hat{{w_i}x_m}})\mid \hat{f_i}\in\cF,i\in[r]\}$ to be an $\epsilon$-cover for $(\cH\circ\cF)_{|S}$ where $|C|=r_1$ is finite. We know that $\hat{w}_i>0$ for $i\in[r]$. Denote $w^*=\min_{i\in[r]} \hat{w_i}$. Take any $\displaystyle w< \frac{1}{\frac{1}{w^*}+x_1\epsilon} \leq \frac{1}{\frac{1}{\hat{w}_i}+x_1\epsilon}$ and denote the corresponding function by $f\in\cF$, i.e., $f(x)=wx$. we know that for every $i\in[r]$
\begin{equation*}
    \frac{1}{wx_1} > \frac{1}{\hat{w}_ix_1} + \epsilon.
\end{equation*}
This means that
\begin{equation*}
\begin{aligned}
     &\left\|\left(\frac{w}{x_1},\ldots,\frac{w}{x_m}\right) - \left(\frac{\hat{w}_i}{x_1},\ldots,\frac{\hat{w}_i}{x_m}\right) \right\|_2\\
     &=\sqrt{\frac{1}{m}\sum_{i=1}^m \left(\frac{w}{x_1}-\frac{\hat{w}_i}{x_1}\right)^2}\geq \epsilon
\end{aligned}
\end{equation*}
Therefore, there is no $(h\circ\hat{f_i})_{|S}\in C$ such that $\left \|(h\circ\hat{f_i})_{|S} - (h\circ f)_{|S}\right\|_2^{\ell_2}\leq \epsilon$, which contradicts with the assumption that $C$ is an $\epsilon$-cover for $(\cH\circ\cF)_{|S}$.
\end{proof}
\subsection{Proof of Proposition~\ref{prop9}}
\begin{proof}
Let $\cF_{\gamma,\epsilon}$ denote the class of all functions $f_{\gamma,\epsilon}$ from $\cX$ to $\bR$ such that $|f(x)-x|\leq \gamma$ for any $x\in\cX$, where $\gamma \leq \epsilon/2$. Fix an input set $S=\{x_1,\ldots,x_m\}$. We know that given any $f_{\gamma,\epsilon},f'_{\gamma,\epsilon}\in\cF_{\gamma,\epsilon}$ and $i\in[m]$, 
\begin{equation*}
\begin{aligned}
 &\|f_{\gamma,\epsilon}(x_i)-f'_{\gamma,\epsilon}(x_i)\|\leq\|f_{\gamma,\epsilon}(x_i)-x_i\|+\|x_i-f'_{\gamma,\epsilon}(x_i)\|\leq\epsilon.
\end{aligned}
\end{equation*}
Therefore, it is easy to conclude that $N_U(\epsilon,\cF_{\gamma,\epsilon},m,\|.\|_2^{\infty})=1$. Let $\cH$ to be the class of all threshold functions $h_{a}$ from $\bR$ to $[0,1]$, where $h_a(x)=\indicator\{x\geq a\}$. Consider an input set $S=\{x_1,\ldots,x_m\}$ where $x_1\leq\ldots\leq x_m$. Given any $k\in[m]$ we can find $a\in\bR$ such that $x_i < a$ for $1\leq i\leq k$ and $x_i\geq a$ for $k<i\leq m$, e.g., set $a=(x_k+x_{k+1})/2$. We also know that for any $i,j\in[m]$, $h_a(x_i)\neq h_a(x_j)$ only if $x_i<a\leq x_j$. Therefore, it is easy to verify that $\cH_{|S}=m+1$ and that for any ${h_a}_{|S}$ and ${h_{a'}}_{|S}$ in $\cH_{|S}$ we have $\left\|{h_{a'}}_{|S}-{h_a}_{|S}\right\|_2\geq 1$. We can therefore conclude that $N_U(\epsilon,\cH,m,\|.\|_2^{\infty})=m+1$. Next, consider the class $\cH\circ\cF_{\gamma,\epsilon}$. We prove that $N_U(\epsilon',\cH\circ\cF_{\gamma,\epsilon},m,\|.\|_2^{\infty})=2^m$. 

We first mention the fact that given any $(y_1,\ldots,y_m)$ and $(y'_1,\ldots,y'_m)$ in $\{0,1\}^m$ if there exists $i\in[m]$ such that $y_i\neq y'_i$, then $\left\|(y'_1,\ldots,y'_m)-(y_1,\ldots,y_m)\right\|_2\geq 1$. Also, the range of the functions in $\cH\circ\cF$ is $[0,1]$, therefore, we are only interested in $\epsilon'<1$. In the following, we prove that for any $m$ there exists a set $S'$ with $|S'|=m$ such that the restriction of $\cH\circ\cF$ to set $S'$ has $2^m$ elements and the result follows.

Consider the input set $S'=\{z_1,\ldots,z_m\}$ such that $0\leq z_1<\ldots<z_m\leq \epsilon/2$. Given any $(y_1,\ldots,y_m)\in\{0,1\}^m$ we map $(z_1,\ldots,z_m)$ to $(e_1,\ldots,e_m)$ as follows: for any $i\in[m]$ if $y_i=1$ we define $e_i=z_i+\epsilon/2$, otherwise we define $e_i=z_i-\epsilon/2$. This mapping can be done by some function $f_{\gamma,\epsilon}$ from $\cF_{\gamma,\epsilon}$ since for any $i\in[m]$ we have $|e_i-z_i|=\epsilon/2$. Let $a=\epsilon/4$. We know that $h_a(e_i)$ is 1 if $y_i=1$ and 0 otherwise. Therefore, we can conclude that for every element $(y_1,\ldots,y_m)$ in $\{0,1\}^m$, there exists $(h_a\circ f_{\gamma,\epsilon})_{|S'
}$ in $(\cH\circ\cF_{\gamma,\epsilon})_{|S'}$ such that $(\cH\circ\cF_{\gamma,\epsilon})_{|S'}=(y_1,\ldots,y_m)$. Since $|\{0,1\}^m|=2^m$ and for any two distinct elements $(y_1,\ldots,y_m)$ and $(y'_1,\ldots,y'_m)$ in $(\cH\circ\cF_{\gamma,\epsilon})_{|S'}$ we have $\left\|(y_1,\ldots,y_m)-(y'_1,\ldots,y'_m)\right\|_2\geq 1$, we can say that $N(\epsilon',(\cH\circ\cF_{\gamma,\epsilon})_{|S},\|.\|_2^{\infty,m})=2^m$. Therefore,
\begin{equation*}
    N_U(\epsilon',\cH\circ\cF_{\gamma,\epsilon},m,\|.\|_2^{\infty})=\sup_{|S|=m}\left\{N(\epsilon',(\cH\circ\cF_{\gamma,\epsilon})_{|S}),\|.\|_2^{\infty,m})\right\}\geq 2^m.
\end{equation*}
\end{proof}
\section{Proofs of theorems and lemmas in Section \ref{sec:results}}\label{app:sec:results}
\paragraph{Notation.} \label{not.} For a (random) function $f$ and an input set $S=\{x_1,\ldots,x_m\}$, we define the restriction of $f$ to $S$ as $f_{|S}=(f(x_1),\ldots,f(x_m))$. Therefore, the restriction of the class $\cF$ to $S$ can be denoted as $\cF_{|S}=\{f_{|S}:f\in\cF\}$. We also denote by $\mD(\rv{x})$ the probability density functions of the random variable $\rv{x}$. For two Borel functions $f_1$ and $f_2$, we denote by $\pi^*(f_1(\rv{x}),f_2(\rv{x}))$ a coupling between random variables $f_1(\rv{x}),f_2(\rv{x})$ such that 
\begin{equation*}
    \mM_{\pi^*}(A)=\begin{cases}
    \mM_{\rv{x}}(B) & \exists B\subset \cB(\cX) \text{ such that } A=f_1(B)\times f_2(B) \\
    0 & \text{otherwise},
    \end{cases}
\end{equation*}
where $\cB(\cX)$ is the set of all Borel sets over $\cX$, $\mM_{\pi^*}(A)$ is the measure that $\pi^*$ assigns to the Borel set $A$, and $\mM_{\rv{x}}(B)$ is the measure that random variable $\rv{x}$ assigns to Borel set $B$.

\subsection{Proof of Theorem \ref{thm:tv_to_ell2}}
\begin{proof}
It is easy to verify that $N_U(\epsilon,\rv{\cF},m,d_{TV}^{\infty},\rv{\Delta_d}) \leq N_U(\epsilon,\rv{\cF},m,d_{TV}^{\infty},\rv{\cX_d})$. Since we know that $\rv{\Delta_d}\subset\rv{\cX_d}$, we have
\begin{equation}\label{thm90}
    N_U(\epsilon,\rv{\cF},m,d_{TV}^{\infty},\rv{\Delta_d}) = \sup_{\substack{\rv{S}\subset\rv{\Delta_d} \\ |\rv{S}|=m}}\left\{ N(\epsilon,\rv{\cF}_{|\rv{S}},d_{TV}^{\infty})\right\} \leq \sup_{\substack{\rv{S}\subset\rv{\cX_d} \\ |\rv{S}|=m}}\left\{ N(\epsilon,\rv{\cF}_{|\rv{S}},d_{TV}^{\infty})\right\} = N_U(\epsilon,\rv{\cF},m,d_{TV}^{\infty},\rv{\cX_d}).
\end{equation}
Let $S=\{x_1,\ldots,x_m\}\subset\bR^d$ be an input set. Denote $\rv{S}=\{\rv{\delta_{x_1}},\ldots,\rv{\delta_{x_m}}\}\subset \rv{\Delta_d}$ and let $C=\{\rv{\hat{{f_1}}}_{|\rv{S}},\ldots,\rv{\hat{{f_r}}}_{|\rv{S}} \mid \rv{\hat{f}_r} \in \rv{\cF}, i \in [r]\}$ be an $\epsilon$-cover for $\rv{\cF}_{|\rv{S}}$ with respect to $d_{TV}^{\infty}$. Define a new set of non-random functions $\hat{\cH}=\left\{\hat{h}_i(x)=\expects{\rv{\hat{f_i}}}{~\rv{\hat{f_i}}({x})} \mid i \in [r]\right\}$.

Given any random function $\rv{f}\in\rv{\cF}$ and considering the fact that $C$ is an $\epsilon$-cover for $\rv{\cF}_{|\rv{S}}$ and that $\rv{f}_{|\rv{S}} \in \rv{\cF}_{|\rv{S}}$, we know there exists $\rv{\hat{f_i}},\,i\in[r
]$ such that 
\begin{equation}\label{eq:th91}
    d_{TV}^{\infty}\left(\rv{\hat{f_i}}_{|\rv{S}},\rv{f}_{|\rv{S}}\right)=d_{TV}^{\infty}\left((\rv{\hat{f_i}}(\rv{\delta_{x_1}}),\ldots,\rv{\hat{f_i}}(\rv{\delta_{x_m}})),(\rv{f}(\rv{\delta_{x_1}}),\ldots,\rv{f}(\rv{\delta_{x_m}}))\right)\leq \epsilon.
\end{equation}
From Equation~\ref{eq:th91} we can conclude that for any $k \in [m]$, $d_{TV}\left(\rv{\hat{f_i}}(\rv{\delta_{x_k}}),\rv{f}(\rv{\delta_{x_k}})\right)\leq \epsilon$. Further, for the corresponding $ h,\hat{h}_i \in \cH$, we know that
\begin{equation*}
    \begin{aligned}
           &\hat{h}_i(x_k) = \expects{\rv{\hat{f_i}}}{~\rv{\hat{f_i}}({\rv{\delta_{x_k}}})} = \int_{\bR^d}  x\mD(\rv{\hat{f_i}}({\rv{\delta_{x_k}}}))(x)dx,\\
            &h(x_k) = \expects{\rv{f}}{~\rv{f}({\rv{\delta_{x_k}}})}=\int_{\bR^d}  x\mD(\rv{f}({\rv{\delta_{x_k}}}))(x)dx.
    \end{aligned}
\end{equation*}
 Denote $I = \mD(\rv{f}({\rv{\delta_{x_k}}}))$ and $\hat{I} = \mD(\rv{\hat{f_i}}({\rv{\delta_{x_k}}}))$. Define two new density functions $I_{diff}$ and $\hat{I}_{diff}$ as
\begin{equation*}
    \begin{aligned}
      I_{diff}(x) &= \left\{
    \begin{array}{ll}
       \displaystyle \frac{I(x) - \hat{I}(x)}{d_{TV}(I,\hat{I})} &  I(x)\geq \hat{I}(x)\\\\
        0 & \text{otherwise,} 
    \end{array}
    \right. \\
     \hat{I}_{diff}(x) &= \left\{
    \begin{array}{ll}
       \displaystyle \frac{\hat{I}(x) - I(x)}{d_{TV}(I,\hat{I})} &  \hat{I}(x)\geq I(x)\\\\
        0 & \text{otherwise.} 
    \end{array}
    \right.
    \end{aligned}
\end{equation*}
Also, we define $I_{min}$ as 
\begin{equation*}
    I_{min}(x) = \frac{\min\{I(x),\hat{I}(x)\}}{\int\min\{I(x),\hat{I}(x)\}dx} = \frac{\min\{I(x),\hat{I}(x)\}}{1-d_{TV}(I,\hat{I})}.
\end{equation*}
It is easy to verify that 
\begin{equation*}
    \begin{aligned}
     I(x) &= \left(1-d_{TV}(I,\hat{I})\right)I_{min}(x) + d_{TV}(I,\hat{I}).I_{diff}(x)\\
     \hat{I}(x) &= \left(1-d_{TV}(I,\hat{I})\right)I_{min}(x) + d_{TV}(I,\hat{I}).\hat{I}_{diff}(x).
    \end{aligned}
\end{equation*}
We can then find the $\ell_2$ distance between $\hat{h}_i(x_k)$ and $h(x_k)$ by
\begin{equation*}
    \begin{aligned}
           &\left\|\hat{h}_i(x_k)-h(x_k)\right\|_2\\
           &=\left\|\int_{\bR^d}  x\hat{I}(x)dx - \int_{\bR^d}  xI(x)dx \right\|_2\\
           &=\left\|\int_{\bR^d}  x\left[\left(1-d_{TV}(I,\hat{I})\right)I_{min}(x) + d_{TV}(I,\hat{I}).\hat{I}_{diff}(x)\right]\right.\\ 
           & \left. - x\left[\left(1-d_{TV}(I,\hat{I})\right)I_{min}(x) + d_{TV}(I,\hat{I}).I_{diff}(x)\right]dx \right\|_2\\
           & = \left\|\int_{\bR^d}  xd_{TV}(I,\hat{I})\left[\hat{I}_{diff}(x) - I_{diff}(x)\right]dx\right\|_2\\
           & = d_{TV}(I,\hat{I})\left\|\int_{\bR^d}x\left[\hat{I}_{diff}(x) - I_{diff}(x)\right]dx\right\|_2 
           \\
           &\leq 2B\sqrt{p}\,d_{TV}\left(\rv{f}(\rv{\delta_{x_k}}),\rv{\hat{f_i}}(\rv{\delta_{x_k}})\right) && \text{(Bounded domain $[-B,B]^p$ and triangle inequality)}\\
           &\leq 2B\epsilon\sqrt{p}.
    \end{aligned}
\end{equation*}
Since this result holds for any $k \in [m]$, we have 
\begin{equation}\label{thm92}
    \left\| \hat{h_i}_{|S}-h_{|S}\right\|_2^{\infty}=\left\|(\hat{h}_i(x_1),\ldots,\hat{h}_i(x_m))- (h(x_1),\ldots,h(x_m)) \right\|_2^{\infty} \leq 2B\epsilon\sqrt{p}.  
\end{equation}
In other words, for any $h_{|S} \in \cH_{|S}$ there exists a $\hat{h_i}_{|S} \in \hat{\cH}_{|S}$ such that $\left\| \hat{h_i}_{|S}-h_{|S}\right\|_2^{\infty} \leq 2B\epsilon\sqrt{p}$. Therefore, $\hat{\cH}_{|S}$ is a $2B\epsilon\sqrt{p}$ cover for $\cH_{|S}$ with respect to $\|.\|_2^{\infty}$ and $|\hat{\cH}_{|S}|=r$. 

The bound in Equation~\ref{thm92} holds for any subset $S$ of $\bR^d$ with $|S|=m$. Therefore,
\begin{equation}\label{thm93}
     N_U(2B\epsilon\sqrt{p},\cH,m,\|.\|_2^{\infty}) \leq N_U(\epsilon,\rv{\cF},m,d_{TV}^{\infty},\rv{\Delta_d}).
\end{equation}
Putting Equations~\ref{thm90} and \ref{thm93} together, we conclude 
\begin{equation*}
      N_U(2B\epsilon\sqrt{p},\cH,m,\|.\|_2^{\infty}) \leq N_U(\epsilon,\rv{\cF},m,d_{TV}^{\infty},\rv{\Delta_d}) \leq N_U(\epsilon,\rv{\cF},m,d_{TV}^{\infty},\rv{\cX_d}).
\end{equation*}
To prove the second part that involves covering number with respect to $\|.\|_2^{\ell_2}$, we can follow the same steps. Similarly, we know that
\begin{equation*}
    N_U(\epsilon,\rv{\cF},m,d_{TV}^{\ell_2},\rv{\Delta_d}) = \sup_{\substack{\rv{S}\subset\rv{\Delta_d} \\ |\rv{S}|=m}}\left\{ N(\epsilon,\rv{\cF}_{|\rv{S}},d_{TV}^{\ell_2})\right\} \leq \sup_{\substack{\rv{S}\subset\rv{\cX_d} \\ |\rv{S}|=m}}\left\{ N(\epsilon,\rv{\cF}_{|\rv{S}},d_{TV}^{\ell_2})\right\} = N_U(\epsilon,\rv{\cF},m,d_{TV}^{\ell_2},\rv{\cX_d}).
\end{equation*}
Consider the same input sets $S$ and $\rv{S}$ and let $\tilde{C}=\{\rv{\tilde{{f_1}}}_{|\rv{S}},\ldots,\rv{\tilde{{f_r}}}_{|\rv{S}} \mid \rv{\tilde{f}_t} \in \rv{\cF}, i \in [t]\}$ be an $\epsilon$-cover for $\rv{\cF}_{|\rv{S}}$ with respect to $d_{TV}^{\ell_2}$. Define a new set of non-random functions $\tilde{\cH}=\left\{\tilde{h}_i(x)=\expects{\rv{\tilde{f_i}}}{~\rv{\tilde{f_i}}({x})} \mid i \in [r]\right\}$.

Similarly, consider $f_{|\rv{S}}$ and $\tilde{f_i}_{|\rv{S}}$ such that
\begin{equation*}
    d_{TV}^{\ell_2}\left(\rv{\tilde{f_i}}_{|\rv{S}},\rv{f}_{|\rv{S}}\right)=d_{TV}^{\ell_2}\left((\rv{\tilde{f_i}}(\rv{\delta_{x_1}}),\ldots,\rv{\tilde{f_i}}(\rv{\delta_{x_m}})),(\rv{f}(\rv{\delta_{x_1}}),\ldots,\rv{f}(\rv{\delta_{x_m}}))\right)\leq \epsilon.
\end{equation*}
Using the same analysis as before, we can conclude that for any $k \in [m]$,
\begin{equation*}
           \left\|\tilde{h}_i(x_k)-h(x_k)\right\|_2
           \leq 2B\sqrt{p}\,d_{TV}\left(\rv{f}(\rv{\delta_{x_k}}),\rv{\tilde{f_i}}(\rv{\delta_{x_k}})\right). 
\end{equation*}
We can then conclude that 
\begin{equation*}
    \begin{aligned}
     &\|\tilde{h_i}_{|S}-h_{|S}\|_2^{\ell_2}\\
    & = \sqrt{\frac{1}{m}\sum_{i=1}^k\left\|\tilde{h_i}(x_k)-h(x_k)\right\|_2^2}\\
    & \leq \sqrt{\frac{1}{m}\sum_{i=1}^k (2B\sqrt{p})^2\left(d_{TV}\left(\rv{f}(\rv{\delta_{x_k}}),\rv{\tilde{f_i}}(\rv{\delta_{x_k}})\right)\right)^2}\\
    & \leq 2B\sqrt{p} \sqrt{\frac{1}{m}\sum_{i=1}^k \left(d_{TV}\left(\rv{f}(\rv{\delta_{x_k}}),\rv{\tilde{f_i}}(\rv{\delta_{x_k}})\right)\right)^2}\\
    &\leq 2B\sqrt{p}\, d_{TV}^{\ell_2}\left(\rv{\tilde{f_i}}_{|\rv{S}},\rv{f}_{|\rv{S}}\right)\\
    & \leq 2B\epsilon\sqrt{p}.
    \end{aligned}
\end{equation*}
We can then say that $\tilde{\cH}_{|S}$ is a $2B\epsilon\sqrt{p}$ cover for $\cH_{|S}$ with respect to $\|.\|_2^{\ell_2}$ and $|\hat{\cH}_{|S}|=t$. It follows that
\begin{equation*}
      N_U(2B\epsilon\sqrt{p},\cH,m,\|.\|_2^{\ell_2}) \leq N_U(\epsilon,\rv{\cF},m,d_{TV}^{\ell_2},\rv{\Delta_d}) \leq N_U(\epsilon,\rv{\cF},m,d_{TV}^{\ell_2},\rv{\cX_d}).
\end{equation*}
\end{proof}

\subsection{Proof of Lemma \ref{lemma:compose_tv}}
\begin{proof}
Denote $\rv{\cQ}=\rv{\cH} \circ \rv{\cF}$. Consider an input set of random variables $\rv{S}=\{\rv{x_1},\ldots,\rv{x_m}\}\subset\rv{\cX_d}$. Denote $r_1=N(\epsilon,\rv{\cF}_{|\rv{S}},d_{TV}^{\infty})$ and let $\rv{C}=\{{\rv{\hat{f_1}}}_{|\rv{S}},\ldots,\rv{\hat{f}_{r_1}}_{|\rv{S}} \mid \rv{\hat{f_i}}\in\rv{\cF}, i \in [r_1]\}$ be an $\epsilon$-cover for $\rv{\cF}_{|\rv{S}}$ with respect to $d_{TV}^{\infty}$ and $\rv{S'} = \{\rv{\hat{f_i}}(\rv{x_k})\mid i\in [r_1], k \in [m]\}$. Clearly, $|\rv{S'}|\leq mr_1$. Also, let $\rv{C'}=\{{\rv{\hat{h}_1}}_{|\rv{S'}},\ldots,\rv{\hat{h}_{r_2}}_{|\rv{S'}}\mid\rv{\hat{h}_j}\in\rv{\cH},j\in [r_2]\}$ be an $\epsilon'$-cover for $\rv{\cH}_{|\rv{S'}}$ with respect to $d_{TV}^{\infty}$ metric, where $r_2=N(\epsilon',\rv{\cH}_{|\rv{S'}},d_{TV}^{\infty})$ is the cardinality of the cover set $\rv{C'}$. Denote $\rv{\hat{\cQ}}=\{\rv{\hat{h}_j}\circ {\rv{\hat{f}_i} \mid i \in [r_1], j \in [r_2]}\}$. We claim that $\rv{\hat{\cQ}}_{|\rv{S}}$ is an $(\epsilon+\epsilon')$-cover for $\rv{\cQ}_{|\rv{S}}$ with respect to $d_{TV}^{\infty}$. Since the cardinality of $\rv{\hat{\cQ}}_{|\rv{S}}$ is no more than $r_1r_2$, we can conclude that $N(\epsilon,\rv{\cQ}_{|\rv{S}},d_{TV}^{\infty}) \leq N(\epsilon,\rv{\cF}_{|\rv{S}},d_{TV}^{\infty})N(\epsilon',\rv{\cH}_{|\rv{S'}},d_{TV}^{\infty})$.

Consider ${(\rv{h}\circ \rv{f})}_{|\rv{S}}=\left(\rv{h}(\rv{f}(\rv{x_1}),\ldots,\rv{h}(\rv{f}(\rv{x_m})\right)\in \rv{\cQ}_{|\rv{S}}$, where $\rv{f}\in\rv{\cF}$ and $\rv{h}\in\rv{\cH}$. Since $\rv{\cF}_{|\rv{S}}$ is $\epsilon$-covered by $\rv{C}$, we know that there exists $\rv{\hat{f}_i}\in\rv{\cF}$ such that
\begin{equation*}
    d_{TV}^{\infty}\left((\rv{\hat{f}_i}(\rv{x_1}),\ldots,\rv{\hat{f}_i}(\rv{x_m})),(\rv{f}(\rv{x_1}),\ldots,\rv{f}(\rv{x_m}))\right) \leq \epsilon.
\end{equation*}
By data processing inequality for total variation distance (Lemma \ref{lemma:DPI}), we conclude that 
\begin{equation*}
    d_{TV}\left(\rv{h}(\rv{\hat{f}_i}(\rv{x_k})),\rv{h}(\rv{f}(\rv{x_k}))\right)\leq \epsilon
\end{equation*} 
for $k\in[m]$. Therefore,
\begin{equation}\label{eq:A1}
    d_{TV}^{\infty}\left((\rv{h}(\rv{\hat{f}_i}(\rv{x_1})),\ldots,\rv{h}(\rv{\hat{f}_i}(\rv{x_m}))),(\rv{h}(\rv{f}(\rv{x_1})),\ldots,\rv{h}(\rv{f}(\rv{x_m})))\right)\leq \epsilon.
\end{equation}
Since $\rv{\hat{f}_i}_{|\rv{S}}=(\rv{\hat{f}_i}(\rv{x_1}),\ldots,\rv{\hat{f}_i}(\rv{x_m}))\in \rv{C}$, we know that $\rv{\hat{f}_i}(\rv{x_k}) \in \rv{S'}$ for $k\in[m]$. We also know that $\rv{\cH}_{|\rv{S'}}$ is $\epsilon'$-covered by $\rv{C'}$, therefore, there exists ${\rv{\hat{h}_j}}\in\rv{\cH}$ such that 
\begin{equation}\label{eq:A2}
    d_{TV}^{\infty}\left( (\rv{\hat{h}_j}(\rv{\hat{f}_i}(\rv{x_1})),\ldots,\rv{\hat{h}_j}(\rv{\hat{f}_i}(\rv{x_m}))),(\rv{h}(\rv{\hat{f}_i}(\rv{x_1})),\ldots,\rv{h}(\rv{\hat{f}_i}(\rv{x_m})))  \right)\leq \epsilon'
\end{equation}

Combining Equations~\ref{eq:A1} and $\ref{eq:A2}$ and by using triangle inequality for total variation distance, we conclude that
\begin{equation*}
       d_{TV}^{\infty}\left( (\rv{\hat{h}_j}(\rv{\hat{f}}_i(\rv{x_1})),\ldots,\rv{\hat{h}_j}(\rv{\hat{f}_i}(\rv{x_m}))),\left(\rv{h}(\rv{f}(\rv{x_1})),\ldots,\rv{h}(\rv{f}(\rv{x_m}))\right)  \right)\leq \epsilon+\epsilon',
\end{equation*}
which suggests that for any $(\rv{h}\circ \rv{f})_{|\rv{S}}\in \rv{\cQ}_{|\rv{S}}$, there exists $(\rv{\hat{h}_j}\circ \rv{\hat{f}_i})_{|\rv{S}}\in\rv{\hat{\cQ}}_{|\rv{S}}$ such that 
\begin{equation*}
    d_{TV}^{\infty}\left({(\rv{h}\circ \rv{f})_{|\rv{S}},(\rv{\hat{h}_j}\circ \rv{\hat{f}_i})_{|\rv{S}}}\right)\leq \epsilon+\epsilon'.
\end{equation*}
In other words, $\rv{\cQ}_{|\rv{S}}$ is $(\epsilon+\epsilon')$-covered by $\rv{\hat{\cQ}}_{|\rv{S}}$. 

Let $N_1=N_U(\epsilon,\rv{\cF},m,d_{TV}^{\infty},\rv{\cX_d})$. We know that $mr_1\leq mN_1$ and, therefore,  $N(\epsilon',\rv{\cH}_{|\rv{S'}},d_{TV}^{\infty})\leq N_U(\epsilon',\rv{\cH},mr_1,d_{TV}^{\infty},\rv{\cX_d})\leq N_U(\epsilon',\rv{\cH},mN_1,d_{TV}^{\infty},\rv{\cX_d})$. Since the result holds for any input $\rv{S}\subset\rv{\cX_d}$ of cardinality $m$ and we know that $ r_1\leq N_U(\epsilon,\rv{\cF},m,d_{TV}^{\infty},\rv{\cX_d})$, it follows that
\begin{equation*}
    N_U(\epsilon+\epsilon',\rv{\cQ},m,d_{TV}^{\infty},\rv{\cX_d}) \leq N_U(\epsilon',\rv{\cH},mN_1,d_{TV}^{\infty},\rv{\cX_d}).N_U(\epsilon,\rv{\cF},m,d_{TV}^{\infty},\rv{\cX_d}).
\end{equation*}

The bound for $\rv{\Delta_d}$ is almost exactly the same as that of $\rv{\cX_d}$. The only difference is that 
 $\rv{S}=\{\rv{{\delta}_{x_1}},\ldots,\rv{{\delta}_{x_m}}\}\subseteq\rv{\Delta_d}$, and we have a uniform $\epsilon$-covering number with respect to $\rv{\Delta_d}$. We conclude that
\begin{equation*}
      N_U\left(\epsilon+\epsilon',\rv{\cH}\circ\rv{\cF},m,d_{TV}^{\infty},\rv{\Delta_d}\right) \leq  N_U\left(\epsilon',\rv{\cH},mN_2,d_{TV}^{\infty},\rv{\cX_d}\right).N_U(\epsilon,\rv{\cF},m,d_{TV}^{\infty},\rv{\Delta_d}).
\end{equation*}

The bound with respect to $d_{TV}^{\ell_2}$ follows the same analysis. Consider a new set $\rv{S_z}=\{\rv{\delta{z_1}},\ldots,\rv{\delta{z_m}}\}\subset\rv{\Delta_d}$. Denote $t_1=N(\epsilon,\rv{\cF}_{|\rv{S_z}},d_{TV}^{\ell_2})$ and let $\rv{C_z}=\{{\rv{\tilde{f}_1}}_{|\rv{S_z}},\ldots,\rv{\tilde{{f}_{t_1}}}_{|\rv{S_z}}\mid\rv{\tilde{f}_i}\in\rv{\cF}, i \in [t_1]\}$ be an $\epsilon$-cover for $\rv{\cF}_{|\rv{S_z}}$ with respect to $d_{TV}^{\ell_2}$ and $\rv{S_z'} = \{\rv{\tilde{f}_i}(\rv{\delta{z_k}})\mid i\in [t_1], k \in [m]\}$. Clearly, $|\rv{S_z'}|\leq mt_1$. Let $\rv{C_z'}=\{{\rv{\tilde{h}_1}}_{|\rv{S_z'}},\ldots,\rv{\tilde{h}_{t_2}}_{|\rv{S_z'}}\mid\rv{\tilde{h}_j}\in\rv{\cH},j\in [t_2]\}$ be an $\epsilon'$-cover for $\rv{\cH}_{|\rv{S_z'}}$ with respect to $d_{TV}^{\infty}$ metric, where $t_2=N(\epsilon',\rv{\cH}_{|\rv{S_z'}},d_{TV}^{\infty})$ is the cardinality of the cover set $\rv{C_z'}$. Denote $\rv{\tilde{\cQ}}=\{\rv{{\tilde{h}_j}}\circ {\rv{\tilde{f}_i} \mid i \in [t_1], j \in [t_2]}\}$. We claim that $\rv{\tilde{\cQ}}_{|\rv{S_z}}$ is an $(\epsilon+\epsilon')$-cover for $\rv{\cQ}_{|\rv{S_z}}$ with respect to $d_{TV}^{\ell_2}$. We can then conclude that $N(\epsilon,\rv{\cQ}_{|\rv{S_z}},d_{TV}^{\ell_2}) \leq N(\epsilon,\rv{\cF}_{|\rv{S_z}},d_{TV}^{\ell_2}).N(\epsilon',\rv{\cH}_{|\rv{S_z'}},d_{TV}^{\infty})$.

Consider ${(\rv{h}\circ \rv{f})}_{|\rv{S_z}}=\left(\rv{h}(\rv{f}(\rv{\delta_{z_1}}),\ldots,\rv{h}(\rv{f}(\rv{\delta_{z_m}})\right)\in \rv{\cQ}_{|\rv{S_z}}$, where $\rv{f}\in\rv{\cF}$ and $\rv{h}\in\rv{\cH}$. Since $\rv{\cF}_{|\rv{S_z}}$ is $\epsilon$-covered by $\rv{C_z}$, we know that there exists $\rv{\tilde{f}_i}\in\rv{\cF}$ such that
\begin{equation*}
\begin{aligned}
     &d_{TV}^{\ell_2}\left((\rv{\tilde{f}_i}(\rv{\delta_{z_1}}),\ldots,\rv{\tilde{f}_i}(\rv{\delta_{z_m}})),(\rv{f}(\rv{\delta_{z_1}}),\ldots,\rv{f}(\rv{\delta_{z_m}}))\right) \\
    &  = \sqrt{\frac{1}{m}\sum_{k=1}^m \left(d_{TV}(\rv{\tilde{f}_i}(\rv{\delta_{z_k}}),\rv{f}(\rv{\delta_{z_k}}))\right)^2}\leq \epsilon.
\end{aligned}
\end{equation*}
Similarly, by data processing inequality, we conclude that $d_{TV}\left(\rv{h}(\rv{\tilde{f}_i}(\rv{\delta_{z_k}})),\rv{h}(\rv{f}(\rv{\delta_{z_k}}))\right)\leq d_{TV}\left(\rv{\tilde{f}_i}(\rv{\delta_{z_k}}),\rv{f}(\rv{\delta_{z_k}})\right)$ for $k\in[m]$. Therefore,
\begin{equation}\label{eq:B3}
\begin{aligned}
     &d_{TV}^{\ell_2}\left((\rv{h}(\rv{\tilde{f}_i}(\rv{\delta_{z_1}})),\ldots,\rv{h}(\rv{\tilde{f}_i}(\rv{\delta_{z_m}}))),(\rv{h}\rv{f}(\rv{\delta_{z_1}})),\ldots,\rv{h}(\rv{f}(\rv{\delta_{z_m}})))\right)\\
      &  = \sqrt{\frac{1}{m}\sum_{k=1}^m \left(d_{TV}(\rv{h}(\rv{\tilde{f}_i}(\rv{\delta_{z_k}})),\rv{h}(\rv{f}(\rv{\delta_{z_k}})))\right)^2}\\
       &  \leq \sqrt{\frac{1}{m}\sum_{k=1}^m \left(d_{TV}(\rv{\tilde{f}_i}(\rv{\delta_{z_k}}),\rv{f}(\rv{\delta_{z_k}}))\right)^2}\leq \epsilon.
\end{aligned}
\end{equation}
Now, using the fact that $\rv{\tilde{f}_i}_{|\rv{S_z}}=(\rv{\tilde{f}_i}(\rv{\delta_{z_1}}),\ldots,\rv{\tilde{f}_i}(\rv{\delta_{z_m}}))\in \rv{C_z}$, we know that $\rv{\tilde{f}_i}(\rv{\delta_{z_k}}) \in \rv{S_z'}$ for $k\in[m]$. We also know that $\rv{\cH}_{|\rv{S_z'}}$ is $\epsilon'$-covered by $\rv{C_z'}$ with respect to $d_{TV}^{\infty}$. Therefore, there exists $\rv{\tilde{h}_j}\in\rv{\cH}$ such that 
\begin{equation}\label{eq:B4}
    d_{TV}^{\infty}\left( (\rv{\tilde{h}_j}(\rv{\tilde{f}_i}(\rv{\delta_{z_1}})),\ldots,\rv{\tilde{h}_j}(\rv{\tilde{f}_i}(\rv{\delta_{z_m}}))),(\rv{h}(\rv{\tilde{f}_i}(\rv{\delta_{z_1}})),\ldots,\rv{h}(\rv{\tilde{f}_i}(\rv{\delta_{z_m}})))  \right)\leq \epsilon'.
\end{equation}
From Equation~\ref{eq:B4} we can conclude that $d_{TV}\left((\rv{\tilde{h}_j}(\rv{\tilde{f}_i}(\rv{\delta_{z_k}})) , (\rv{h}(\rv{\tilde{f}_i}(\rv{\delta_{z_k}}))\right)\leq \epsilon'$ for $k \in [m]$. Using triangle inequality for total variation distance, we can write
\begin{equation}\label{eq:B5}
\begin{aligned}
     &d_{TV}\left((\rv{\tilde{h}_j}(\rv{\tilde{f}_i}(\rv{\delta_{z_k}})) , (\rv{h}(\rv{f}(\rv{\delta_{z_k}}))\right)\\
     &\leq  d_{TV}\left((\rv{\tilde{h}_j}(\rv{\tilde{f}_i}(\rv{\delta_{z_k}})) , (\rv{h}(\rv{\tilde{f}_i}(\rv{\delta_{z_k}}))\right) + d_{TV}\left(\rv{h}(\rv{\tilde{f}_i}(\rv{\delta_{z_k}})),\rv{h}(\rv{f}(\rv{\delta_{z_k}}))\right) \\
     &\leq d_{TV}\left(\rv{h}(\rv{\tilde{f}_i}(\rv{\delta_{z_k}})),\rv{h}(\rv{f}(\rv{\delta_{z_k}}))\right) + \epsilon'.
\end{aligned}
\end{equation}
We can then conclude that
\begin{equation*}
    \begin{aligned}
     &d_{TV}^{\ell_2}\left((\rv{\tilde{h}_j}(\rv{\tilde{f}_i}(\rv{\delta_{z_1}})),\ldots,\rv{\tilde{h}_j}(\rv{\tilde{f}_i}(\rv{\delta_{z_m}}))),(\rv{h}(\rv{f}(\rv{\delta_{z_1}})),\ldots,\rv{h}(\rv{f}(\rv{\delta_{z_m}})))\right)\\
      &  = \sqrt{\frac{1}{m}\sum_{k=1}^m \left(d_{TV}(\rv{\tilde{h}_j}(\rv{\tilde{f}_i}(\rv{\delta_{z_k}})),\rv{h}(\rv{f}(\rv{\delta_{z_k}})))\right)^2}\\
       &  \leq \sqrt{\frac{1}{m}\sum_{k=1}^m \left(d_{TV}(\rv{h}(\rv{\tilde{f}_i}(\rv{\delta_{z_k}})),\rv{h}(\rv{f}(\rv{\delta_{z_k}})))+\epsilon'\right)^2} && \text{(From Equation~\ref{eq:B5})}\\
        &  \leq \sqrt{\frac{1}{m}\sum_{k=1}^m \left(d_{TV}(\rv{h}(\rv{\tilde{f}_i}(\rv{\delta_{z_k}})),\rv{h}(\rv{f}(\rv{\delta_{z_k}})))\right)^2+\frac{1}{m}\sum_{k=1}^m\epsilon'^2} \\
        &  \leq \sqrt{\frac{1}{m}\sum_{k=1}^m \left(d_{TV}(\rv{h}(\rv{\tilde{f}_i}(\rv{\delta_{z_k}})),\rv{h}(\rv{f}(\rv{\delta_{z_k}})))\right)^2}+\sqrt{\frac{1}{m}\sum_{k=1}^m\epsilon'^2}\\
        & \leq \epsilon + \epsilon'. && \text{(From Equation~\ref{eq:B3})}
    \end{aligned}
\end{equation*}
As a result, $\rv{\cQ}_{|\rv{S_z}}$ is $(\epsilon+\epsilon')$-covered by $\rv{\tilde{\cQ}}_{|\rv{S_z}}$. Let $N_3=N_U(\epsilon,\rv{\cF},m,d_{TV}^{\ell_2},\rv{\Delta_d})$. Since $mt_1\leq mN_3$, we can write $N(\epsilon',\rv{\cH}_{|\rv{S'}},d_{TV}^{\infty})\leq N_U(\epsilon',\rv{\cH},mt_1,d_{TV}^{\infty},\rv{\cX_d})\leq N_U(\epsilon',\rv{\cH},mN_3,d_{TV}^{\infty},\rv{\cX_d})$. We know that the result holds for any input $\rv{S_z}\subset\rv{\Delta_d}$ of cardinality $m$ and $ t_1\leq N_U(\epsilon,\rv{\cF},m,d_{TV}^{\ell_2},\rv{\Delta_d})$, therefore, it follows that
\begin{equation*}
    N_U(\epsilon+\epsilon',\rv{\cQ},m,d_{TV}^{\ell_2},\rv{\Delta_d}) \leq N_U(\epsilon',\rv{\cH},mN_3,d_{TV}^{\infty},\rv{\cX_d}).N_U(\epsilon,\rv{\cF},m,d_{TV}^{\ell_2},\rv{\Delta_d}).
\end{equation*}

\end{proof}
\subsection{TV distance of composition of a class with noise}
The following lemma, which is used in bounding the total variation distance by Wasserstein distance, is borrowed from \citet{chae2020wasserstein}. This lemma will be used in proof of the remaining lemmas in Section~\ref{sec:results}.
\begin{lemma}[Bounding TV distance by Wasserstein distance]\label{lemma:wasser_conv}
Given a density function $K$ over $\bR^d$ and two probability measures $\mu,\nu$ over $\cX$ with probability density functions $I_{\mu}$ and $I_{\nu}$, respectively, we have
\begin{equation*}
    \|K*I_{\mu}-K*I_{\nu}\|_1\leq\sup_{y\neq z}\left\{\frac{\|K(x-y)-K(x-z)\|_1}{\|y-z\|_2
    }\right\}d_{\cW}(\mu,\nu)
\end{equation*}
\end{lemma}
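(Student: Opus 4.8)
The plan is to unfold both convolutions through a single coupling of $\mu$ and $\nu$ and then combine the triangle inequality with Tonelli's theorem. First I would fix an optimal coupling $\pi\in\Pi(\mu,\nu)$, so that $d_\cW(\mu,\nu)=\int\|y-z\|_2\,d\pi(y,z)$; if one wants to sidestep the existence of an optimal coupling, one can instead take an arbitrary $\pi\in\Pi(\mu,\nu)$ and pass to the infimum over couplings at the very end. Since $\mu$ and $\nu$ have densities $I_\mu$ and $I_\nu$, for each $x$ we have $(K*I_\mu)(x)=\int K(x-y)\,d\mu(y)=\int K(x-y)\,d\pi(y,z)$ and likewise $(K*I_\nu)(x)=\int K(x-z)\,d\pi(y,z)$, hence
\[
  (K*I_\mu)(x)-(K*I_\nu)(x)=\int\big(K(x-y)-K(x-z)\big)\,d\pi(y,z).
\]

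Next I would take the $L_1$ norm in $x$, push the absolute value inside the $\pi$-integral by the triangle inequality, and swap the order of integration (legitimate by Tonelli, since $|K(x-y)-K(x-z)|$ is nonnegative and jointly measurable in $(x,y,z)$):
\begin{align*}
  \|K*I_\mu-K*I_\nu\|_1
  &\le \int_{\bR^d}\int\big|K(x-y)-K(x-z)\big|\,d\pi(y,z)\,dx\\
  &= \int\Big(\int_{\bR^d}\big|K(x-y)-K(x-z)\big|\,dx\Big)\,d\pi(y,z).
\end{align*}
The inner integral is exactly $\|K(\cdot-y)-K(\cdot-z)\|_1$ in the notation of the statement. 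Writing $L:=\sup_{y'\ne z'}\|K(\cdot-y')-K(\cdot-z')\|_1/\|y'-z'\|_2$, we have $\|K(\cdot-y)-K(\cdot-z)\|_1\le L\|y-z\|_2$ for $y\ne z$, and this also holds trivially (as $0=L\cdot 0$) when $y=z$, hence $\pi$-a.e. Substituting and pulling $L$ out of the integral gives
\[
  \|K*I_\mu-K*I_\nu\|_1\le L\int\|y-z\|_2\,d\pi(y,z)=L\,d_\cW(\mu,\nu),
\]
which is the desired inequality.

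The argument is in essence coupling followed by the triangle inequality and Tonelli's theorem, so there is no genuine obstacle here — and indeed the statement can also be invoked directly from \citet{chae2020wasserstein}. The only points that deserve a sentence of care are the routine measure-theoretic checks: joint measurability of $(x,y,z)\mapsto K(x-y)$ so that Tonelli applies, and the identity $(K*I_\mu)(x)=\int K(x-y)\,d\mu(y)$, which holds because $I_\mu$ is a density for $\mu$. A final harmless point: the supremum defining $L$ ranges over all $y'\ne z'$ rather than only over pairs in the support of $\pi$, which can only enlarge the right-hand side, so nothing is lost.
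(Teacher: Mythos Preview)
Your proof is correct and follows essentially the same route as the paper's: write the difference of convolutions through a coupling, push the absolute value inside, swap the order of integration, and bound by the supremum times $\int\|y-z\|_2\,d\pi(y,z)$, then pass to the infimum over couplings. The only cosmetic differences are that the paper cites Jensen's inequality where you say triangle inequality, and Fubini where you (more precisely) invoke Tonelli.
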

\begin{proof}
For any coupling $\pi$ of $\mu$ and $\nu$, we have
\begin{equation*}
    K*I_{\mu}(x)-K*I_{\nu}(x)=\int(K(x-y)-K(x-z))d\pi(y,z).
\end{equation*}
Therefore, 
\begin{equation*}
\begin{aligned}
       \|K*(I_{\mu}-I_{\nu})\|_1&=\int\left|\int\left((K(x-y)-K(x-z)\right)d\pi(y,z)\right|dx\\
       &\leq \int\int\left|(K(x-y)-K(x-z)\right|d\pi(y,z)dx &&\text{(By Jensen's inequality)}\\
       &= \int\left\|K(x-y)-K(x-z)\right\|_1d\pi(y,z)&&\text{(By Fubini's theorem)}\\
       &\leq \sup_{y\neq z}\left\{\frac{\left\|K(x-y)-K(x-z)\right\|_1}{\|y-z\|_2}\right\}\int\|y-z\|_2d\pi(y,z)
\end{aligned}
\end{equation*}
Since this holds for any coupling $\pi$ of $\mu$ and $\nu$ we conclude that
\begin{equation*}
\|K*(I_{\mu}-I_{\nu})\|_1\leq \sup_{y\neq z}\left\{\frac{\left\|K(x-y)-K(x-z)\right\|_1}{\|y-z\|_2}\right\}d_{\cW}(\mu,\nu)
\end{equation*}
\end{proof}
\subsection{Proof of Theorem \ref{thm:wasser_tv}}
\begin{proof}
Fix an input set $\rv{S}=\{\rv{x_1},\ldots,\rv{x_m}\}\subset\bR^d$. Let $\rv{C}=\{{\rv{\hat{f}_1}}_{|\rv{S}},\ldots,\rv{\hat{f}_r}_{|\rv{S}}:\rv{\hat{f}_i}\in\rv{\cF}, i \in [r]\}$ be an $\epsilon$-cover for $\rv{\cF}_{|\rv{S}}$ with respect to $d_{\cW}^{\infty}$ metric. Denote $\rv{\cQ}=\rv{\cG_{\sigma}} \circ \rv{\cF}$. We define a new class of random functions $\rv{\hat{\cQ}}=\{\rv{g_{\sigma}} \circ \rv{\hat{f}_i} \mid i \in [r]\}$. We show that $\rv{\cQ}_{|\rv{S}}$ is $(\frac{\epsilon}{2\sigma})$-covered by $\rv{\hat{\cQ}}_{|\rv{S}}$ and since $|\rv{\hat{Q}}_{|\rv{S}}| = r$, the result follows.

Let $I_{\sigma}$ denote the probability density function of $\cN(\mathbf{0},\sigma^2 I_d)$. For any $\rv{f}\in\rv{\cF}$, we have $\rv{g_{\sigma}}(\rv{f}(x))=\rv{f}(x)+\rv{z}$, where $\rv{z}$ is a random variable with probability density function $I_{\sigma}$, therefore, we know that $\mD(\rv{g_{\sigma}}(\rv{f}(x))=\mD(\rv{f}(x))*I_{\sigma}$.

Given $(\rv{g_{\sigma}} \circ \rv{f})_{|\rv{S}}=(\rv{g_{\sigma}}(\rv{f}(\rv{x_1})),\ldots,\rv{g_{\sigma}}(\rv{f}(\rv{x_m})))\in\rv{\cQ}_{|\rv{S}}$, we know that $\rv{f}_{|\rv{S}}=(\rv{f}(\rv{x_1}),\ldots,\rv{f}(\rv{x_m}))$ is in $\rv{\cF}_{|\rv{S}}$. Therefore, there exists $\rv{\hat{f}_i}\in\rv{\cF}$ such that $d_{\cW}^{\infty}(\rv{\hat{f}_i}_{|\rv{S}},\rv{f}_{|\rv{S}})\leq \epsilon$, i.e.,
\begin{equation}\label{eq:A3}
    d_{\cW}^{\infty}\left( (\rv{\hat{f}_i}(\rv{x_1}),\ldots,\rv{\hat{f}_i}(\rv{x_m})),(\rv{f}(\rv{x_1}),\ldots,\rv{f}\rv{x_m}))\right) \leq \epsilon.
\end{equation}
From Equation~\ref{eq:A3}, we know that $d_\cW(\rv{\hat{f}_i}(\rv{x_k}),\rv{f}(\rv{x_k}))\leq\epsilon$ for all $k \in [m]$. From Lemma~\ref{lemma:wasser_conv}, we can conclude that for all $k\in[m]$,
\begin{equation}\label{eq:A4}
\begin{aligned}
        &\frac{1}{2}\left\|I_{\sigma}*\mD(\rv{\hat{f}_i}(\rv{x_k})-I_{\sigma}*\mD(\rv{f}(\rv{x_k})))\right\|_1\\ & \leq \frac{1}{2}\left(\sup_{y\neq z}\left\{\frac{\left\|I_{\sigma}(x-y)-I_{\sigma}(x-z)\right\|_1}{\|y-z\|_2}\right\}\right)d_\cW(\rv{\hat{f}_i}(\rv{x_k}),\rv{f}\rv{x_k}))\\
        &\leq \frac{\epsilon}{2} \left(\sup_{y\neq z}\left\{\frac{\left\|I_{\sigma}(x-y)-I_{\sigma}(x-z)\right\|_1}{\|y-z\|_2}\right\}\right).
\end{aligned}
\end{equation}
Moreover, $I_{\sigma}*\mD(\rv{\hat{f}_i}(\rv{x_k})$ and $I_{\sigma}*\mD(\rv{f}(\rv{x_k}))$ are probability density functions of $\rv{g_{\sigma}}(\rv{\hat{f}_i}(\rv{x_k}))$ and $\rv{g_{\sigma}}(\rv{f}(\rv{x_k}))$, respectively. Therefore, from Equation~\ref{eq:A4},
\begin{equation}\label{eq:A5}
    d_{TV}\left(\rv{g_{\sigma}}(\rv{\hat{f}_i}(\rv{x_k})),\rv{g_{\sigma}}(\rv{f}(\rv{x_k}))\right)\leq \frac{\epsilon}{2} \left(\sup_{y\neq z}\left\{\frac{\left\|I_{\sigma}(x-y)-I_{\sigma}(x-z)\right\|_1}{\|y-z\|_2}\right\}\right).
\end{equation}
Since Equation~\ref{eq:A5} holds for all $k \in [m]$, it follows that
\begin{equation*}
     d_{TV}^{\infty}\left(\rv{g_{\sigma}}(\rv{\hat{f}_i}(\rv{x_k})),\rv{g_{\sigma}}(\rv{f}(\rv{x_k}))\right)\leq \frac{\epsilon}{2} \left(\sup_{y\neq z}\left\{\frac{\left\|I_{\sigma}(x-y)-I_{\sigma}(x-z)\right\|_1}{\|y-z\|_2}\right\}\right).
\end{equation*}
This shows that for any $(\rv{g_{\sigma}} \circ \rv{f})_{|\rv{S}} \in \rv{\cQ}_{|\rv{S}}$ there exists $(\rv{g_{\sigma}} \circ \rv{\hat{f}_i})_{|\rv{S}} \in \rv{\hat{\cQ}}_{|\rv{S}}$ such that 
\begin{equation}\label{eq:A6}
    d_{TV}^{\infty}\left((\rv{g_{\sigma}} \circ \rv{f})_{|\rv{S}},(\rv{g_{\sigma}} \circ \rv{\hat{f}_i})_{|\rv{S}}\right) \leq \frac{\epsilon}{2} \left(\sup_{y\neq z}\left\{\frac{\left\|I_{\sigma}(x-y)-I_{\sigma}(x-z)\right\|_1}{\|y-z\|_2}\right\}\right).
\end{equation}
It is only left to bound the supremum term in Equation~\ref{eq:A6}.

Based on Theorem~\ref{thm:tv_gaussian}, we know that for two Gaussian distributions $\cN(\mu_1,\sigma^2I)$ and $\cN(\mu_2,\sigma^2I)$ their total variation distance can be bounded by
\begin{equation}\label{eq:A7}
    d_{TV}\left(\cN(\mu_1,\sigma^2I),\cN(\mu_2,\sigma^2I)\right) \leq \frac{1}{2\sigma}\|\mu_1-\mu_2\|_2.
\end{equation}
We also know that $\left\|I_{\sigma}(x-y)-I_{\sigma}(x-z)\right\|_1=2d_{TV}(\cN(y,\sigma^2I),\cN(z,\sigma^2I))$. Combining Equations~\ref{eq:A6} and $\ref{eq:A7}$, we can write
\begin{equation}\label{eq:A8}
    \begin{aligned}
            d_{TV}^{\infty}\left((\rv{g_{\sigma}} \circ \rv{f})_{|\rv{S}},(\rv{g_{\sigma}} \circ \rv{\hat{f}_i})_{|\rv{S}}\right) \leq  \frac{\epsilon}{2} \left(\sup_{y\neq z}\left\{ \frac{\frac{1}{\sigma}\|y-z\|_2}{\|y-z\|_2 }\right\}\right)\leq \frac{\epsilon}{2\sigma}.
    \end{aligned}
\end{equation}
From Equation~\ref{eq:A8} it follows that $\rv{\cQ}_{|\rv{S}}$ is $(\frac{\epsilon}{2\sigma})$-covered by $\rv{\hat{\cQ}}_{|\rv{S}}$. Since the result holds for any subset $\rv{S}$ of $\rv{\cX_d}$ with cardinality $m$, we can conclude that
\begin{equation*}
    N_U\left(\frac{\epsilon}{2\sigma},\rv{\cG_{\sigma}}\circ\rv{\cF},m,d_{TV}^{\infty},\rv{\cX_d}\right) \leq  N_U(\epsilon,\cF,m,d_\cW^{\infty},\rv{\cX_d}).
\end{equation*}

The second part of the proof is similar. We consider a set of inputs $\rv{S_z} = \{\rv{\delta_{z_1}}, \ldots, \rv{\delta_{z_m}}\}\subset \rv{\Delta_d}$. We can then consider an $\epsilon$-cover $\rv{C_z} = \{\rv{\tilde{f}_1}_{|\rv{S}},\ldots,\rv{\tilde{f}_t}_{|\rv{S}}:\rv{\tilde{f}_i}\in\rv{\cF}, i \in [t]\}$ for $\rv{\cF}_{|\rv{S_z}}$. We will then construct a class of functions $\rv{\tilde{\cQ}} = \{\rv{g_{\sigma}}\circ\rv{\tilde{f}_i} \mid i \in [t]\}$ and show that $\rv{\cQ}_{|\rv{S_z}}$ is $(\frac{\epsilon}{2\sigma})$-covered by $\rv{\tilde{\cQ}}_{|\rv{S_z}}$. The proof follows the same steps as the previous part. Particularly, let $\rv{\tilde{f}_i} \in \rv{\cF}$ be such that $d_{\cW}^{\infty}(\rv{\tilde{f}_i}_{|\rv{S_z}}, \rv{f}_{|\rv{S_z}})\leq \epsilon$. For any $k \in [m]$, we can write that
\begin{equation}
\begin{aligned}
        &\frac{1}{2}\left\|I_{\sigma}*\mD(\rv{\tilde{f}_i}(\rv{\delta_{z_k}})-I_{\sigma}*\mD(\rv{f}(\rv{\delta_{z_k}})))\right\|_1\\ & \leq \frac{1}{2}\left(\sup_{y\neq z}\left\{\frac{\left\|I_{\sigma}(x-y)-I_{\sigma}(x-z)\right\|_1}{\|y-z\|_2}\right\}\right)d_\cW(\rv{\tilde{f}_i}(\rv{\delta_{z_k}}),\rv{f}(\rv{\delta_{z_k}}))\\
        &\leq \frac{\epsilon}{2} \left(\sup_{y\neq z}\left\{\frac{\left\|I_{\sigma}(x-y)-I_{\sigma}(x-z)\right\|_1}{\|y-z\|_2}\right\}\right).
\end{aligned}
\end{equation}
Using the same arguments as the previous part, we will have that
\begin{equation*}
     d_{TV}\left(\rv{g_{\sigma}}(\rv{\tilde{f}_i}(\rv{\delta_{z_k}})),\rv{g_{\sigma}}(\rv{f}(\rv{\delta_{z_k}}))\right)\leq \frac{\epsilon}{2} \left(\sup_{y\neq z}\left\{\frac{\left\|I_{\sigma}(x-y)-I_{\sigma}(x-z)\right\|_1}{\|y-z\|_2}\right\}\right)\leq \frac{\epsilon}{2\sigma}.
\end{equation*}
Therefore, we can conclude that for any $\rv{f}\in\rv{\cF}$ there exists $\rv{\tilde{f}_i},\, i\in[t]$ such that
\begin{equation*}
                d_{TV}^{\infty}\left((\rv{g_{\sigma}} \circ \rv{f})_{|\rv{S_z}},(\rv{g_{\sigma}} \circ \rv{\tilde{f}_i})_{|\rv{S_z}}\right) \leq \frac{\epsilon}{2\sigma},
\end{equation*}
which means that $\rv{\cQ}_{\rv{S_z}}$ is $(\frac{\epsilon}{2\sigma})$-covered by $\rv{\tilde{\cQ}}_{|\rv{S_z}}$. Since the result holds for every $\rv{S_z} \subset \rv{\Delta_d}$ of cardinality m, we can conclude that 
\begin{equation*}
    N_U\left(\frac{\epsilon}{2\sigma},\rv{\cG_{\sigma}}\circ\rv{\cF},m,d_{TV}^{\infty},\rv{\Delta_d}\right) \leq  N_U(\epsilon,\rv{\cF},m,d_\cW^{\infty},\rv{\Delta_d}).
\end{equation*}
\end{proof}
\subsection{Proof of Corollary \ref{coll:ell2_to_tv}}
\begin{proof}
First, from Proposition~\ref{prop12}, we can conclude that
\begin{equation}\label{eq:col1}
    N_U(\epsilon,\cF,d_{\cW}^{\infty},m,\rv{\Delta_d})=N_U(\epsilon,\cF,m,\|.\|_2^{\infty}).
\end{equation}

Then, consider an input set $\rv{S_z}=\{\rv{\delta_{x_1}},\ldots,\rv{\delta_{x_m}}\}\subset \rv{\Delta_d}$. Let $\rv{C_z}=\{\hat{{f_1}}_{|\rv{S_z}},\ldots,\hat{{f_{r}}}_{|\rv{S_z}}\mid \hat{f_i} \in \cF, i \in [r]\}$ be an $\epsilon$-cover for $\cF_{|\rv{S_z}}$ with respect to $d_{\cW}^{\infty}$, then for a given $f_{|\rv{S_z}}\in\cF_{|\rv{S_z}}$ and $\hat{f_i}_{|\rv{S_z}}\in \rv{C_z}$, where $d_{\cW}^{\infty}(f_{|\rv{S_z}},\hat{f_i}_{|\rv{S_z}})\leq \epsilon$, from Equations~\ref{eq:A4} and \ref{eq:A8}, we know that for all $k\in[m]$
\begin{equation*}
\begin{aligned}
            &d_{TV}(\rv{g_{\sigma}}(\hat{f}_i(\rv{\delta_{x_k}})),\rv{g_{\sigma}}(f(\rv{\delta_{x_k}})))=d_{TV}\left(\cN(\hat{f}_i(x_k),\sigma^2I_p),\cN(f(x_k),\sigma^2I_p)\right)\\
            &\leq \frac{1}{2\sigma}\|\hat{f}_i(x_k)-f(x_k)\|_2 \leq \frac{1}{2\sigma}d_{\cW}\left(\hat{f_i}(\rv{\delta_{x_k}}),f(\rv{\delta_{x_k}})\right)\\
            &\leq \frac{\epsilon}{2\sigma}.
\end{aligned}
\end{equation*}
Therefore, we can conclude that
\begin{equation*}
\begin{aligned}
       &d_{TV}^{\infty}\left((\rv{g_{\sigma}} \circ \hat{f}_i)_{|\rv{S_z}},(\rv{g_{\sigma}} \circ f)_{|\rv{S_z}}\right)\\
       &=d_{TV}^{\infty}\left( (\rv{g_{\sigma}}(\hat{f}_i(\rv{\delta_{x_1}})),\ldots,\rv{g_{\sigma}}(\hat{f}_i(\rv{\delta_{x_m}}))),(\rv{g_{\sigma}}(f(\rv{\delta_{x_1}})),\ldots,\rv{g_{\sigma}}(f(\rv{\delta_{x_m}})))\right)\\
       &\leq \frac{\epsilon}{2\sigma},
\end{aligned}
\end{equation*}
It follows that for any $(\rv{g_{\sigma}}\circ f)_{|\rv{S_z}} \in (\rv{\cG_{\sigma}}\circ\cF)_{|\rv{S_z}}$, there exists $\hat{f_i}_{|\rv{S_z}}\in \rv{C_z}$ such that $\displaystyle d_{TV}^{\infty}\left((\rv{g_{\sigma}} \circ \hat{f}_i)_{|\rv{S_z}},(\rv{g_{\sigma}} \circ f)_{|\rv{S_z}}\right)\leq  \frac{\epsilon}{2\sigma}$. Therefore,
\begin{equation*}
    N(\frac{\epsilon}{2\sigma},(\rv{\cG_{\sigma}}\circ\cF)_{|\rv{S_z}},d_{TV}^{\infty}) \leq N(\epsilon, \cF_{|\rv{S_z}}, d_{\cW}^{\infty}).
\end{equation*}
Since this results holds for any $\rv{S_z}\subset \rv{\Delta_d}$, we can conclude that
\begin{equation*}
    N_U(\frac{\epsilon}{2\sigma},\rv{\cG_{\sigma}}\circ\cF,m,d_{TV}^{\infty},\rv{\Delta_d}) \leq N_U(\epsilon, \cF,m,d_{\cW}^{\infty},\rv{\Delta_d}) = N_{U}(\epsilon,\cF,m,\|.\|_2^{\infty}).
\end{equation*}

The proof of the second part again follows from Proposition~\ref{prop12}. We can write that 
\begin{equation*}
    N_{U}(\epsilon,{\cF},d_\cW^{\ell_2}, m, \rv{\Delta_d}) = N_{U}(\epsilon,\cF,m,\|.\|_2^{\ell_2}).
\end{equation*}

Consider the input set $\rv{S_z}\subset \rv{\Delta_{d}}$ as defined above and let $\rv{\tilde{C}_z}=\{\tilde{{f_1}}_{|\rv{S_z}},\ldots,\tilde{{f_{t}}}_{|\rv{S_z}}\mid \tilde{f_i} \in \cF, i \in [t]\}$ be an $\epsilon$-cover for $\cF_{|\rv{S_z}}$ with respect to $d_{\cW}^{\ell_2}$. Now, for a given $f_{|\rv{S_z}} \in \cF_{|\rv{S_z}}$ and the corresponding $\tilde{f_i}_{|\rv{S_z}} \in \rv{\tilde{C}_z}$, where $d_{\cW}^{\ell_2}(f_{|\rv{S_z}},\tilde{f_i}_{|\rv{S_z}})\leq\epsilon$, we know that for all $k \in [m]$
\begin{equation*}
\begin{aligned}
            &d_{TV}(\rv{g_{\sigma}}(\tilde{f}_i(\rv{\delta_{x_k}})),\rv{g_{\sigma}}(f(\rv{\delta_{x_k}})))=d_{TV}\left(\cN(\tilde{f}_i(x_k),\sigma^2I_p),\cN(f(x_k),\sigma^2I_p)\right)\\
            &\leq \frac{1}{2\sigma}\|\tilde{f}_i(x_k)-f(x_k)\|_2 \leq \frac{1}{2\sigma} d_{\cW}\left(\tilde{f_i}(\rv{\delta_{x_k}}),f(\rv{\delta_{x_k}})\right).
\end{aligned}
\end{equation*}
Therefore,
\begin{equation*}
    \begin{aligned}
    & d_{TV}^{\ell_2}\left((\rv{g_{\sigma}} \circ \tilde{f}_i)_{|\rv{S_z}},(\rv{g_{\sigma}} \circ f)_{|\rv{S_z}}\right)\\
       &=\sqrt{\frac{1}{m}\sum_{k=1}^m \left(d_{TV}\left(\rv{g_{\sigma}}(\tilde{f}_i(\rv{\delta_{x_k}})),\rv{g_{\sigma}}(f(\rv{\delta_{x_k}}))\right)\right)^2}\\
       & \leq \sqrt{\frac{1}{m}\sum_{k=1}^m\frac{\left(\,d_{\cW}\left(\tilde{f_i}(\rv{\delta_{x_k}}),f(\rv{\delta_{x_k}})\right)\right)^2}{(2\sigma)^2}}\\
       &\leq \frac{1}{2\sigma} d_{\cW}^{\ell_2}(\tilde{f_i}_{|S_z},f_{|S_z}) \leq \frac{\epsilon}{2\sigma}.
    \end{aligned}
\end{equation*}
Therefore, for any $(\rv{g_{\sigma}}\circ f)_{|\rv{S_z}} \in (\rv{\cG_{\sigma}}\circ\cF)_{|\rv{S_z}}$, there exists $\tilde{f_i}_{|\rv{S_z}}\in \rv{C_z}$ such that $\displaystyle d_{TV}^{\ell_2}\left((\rv{g_{\sigma}} \circ \tilde{f}_i)_{|\rv{S_z}},(\rv{g_{\sigma}} \circ f)_{|\rv{S_z}}\right)\leq \frac{\epsilon}{2\sigma}$. As a result,
\begin{equation*}
    N(\frac{\epsilon}{2\sigma},(\rv{\cG_{\sigma}}\circ\cF)_{|S_z},d_{TV}^{\ell_2}) \leq N(\epsilon, \cF_{|S_z}, d_{\cW}^{\ell_2}).
\end{equation*}
Since this results holds for any $\rv{S_z}\subset \rv{\Delta_d}$, we can conclude that
\begin{equation*}
    N_U(\frac{\epsilon}{2\sigma},\rv{\cG_{\sigma}}\circ\cF,m,d_{TV}^{\ell_2},\rv{\Delta_d}) \leq N_U(\epsilon, \cF,m,d_{\cW}^{\ell_2},\rv{\Delta_d}) = N_{U}(\epsilon,\cF,m,\|.\|_2^{\ell_2}).
\end{equation*}
\end{proof}

\subsection{Proof of Theorem \ref{thm:global_ell2_to_tv}}
\begin{proof}
Let $\rv{\cQ}=\rv{\cG_\sigma} \circ \cF$. Denote by $r=N_U(\epsilon,\cF,\infty,\|.\|_2^{\infty})$. Let $C=\{\hat{{f_i}}(x) \mid \hat{f}_i\in\cF,\forall x\in\bR^d, i\in [r]\}$ be a global $\epsilon$-cover for $\cF$ with respect to $\|.\|_2$ metric. We will show that for all $(\rv{g_\sigma}\circ f)_{|\rv{\cX_{B,d}}},\,f\in\cF$, there exists $\hat{f}_i\in \cF$ such that $d_{TV}^{\infty}\left((\rv{g_\sigma}\circ f)_{|\rv{\cX_{B,d}}},(\rv{g_\sigma}\circ \hat{f}_i)_{|\rv{\cX_{B,d}}}\right)\leq \frac{\epsilon}{2\sigma}$. Clearly, $|C|\leq r$ and the result follows.

Since $C$ covers the restriction of $\cF$ to $\bR^d$, for any $f\in \cF$, there exists $\hat{{f_i}}$ such that $\|f(x)-\hat{f}_i(x)\|_2\leq \epsilon$ for every $x\in\bR^d$. Next, for any $\rv{x}\in\rv{\cX_{B,d}}$ and for the coupling $\pi^*(f(\rv{x}),\hat{f_i}(\rv{x}))$ as defined in Notations we can write
\begin{equation*}
    \begin{aligned}
     \int_{\bR^d \times \bR^d}\|x-y\|_2d\pi^*(x,y)\leq \epsilon \int_{\bR^d \times \bR^d}d\pi^*(x,y)\leq \epsilon,
    \end{aligned}
\end{equation*}
which comes from the fact that $\hat{f}_i$ is ``globally close'' to $f$ with respect to $\|.\|_2$ distance. We, therefore, know that
\begin{equation*}
\begin{aligned}
     d_{\cW}(f(\rv{x}),\hat{f_i}(\rv{x}))=&\inf_{\pi\in\Pi(f(\rv{x}),\hat{f_i}(\rv{x}))}\int_{\bR^d \times \bR^d}\|x-y\|_2d\pi(x,y)\\
     &\leq \int_{\bR^d \times \bR^d}\|x-y\|_2d\pi^*(x,y)\leq \epsilon.
\end{aligned}
\end{equation*}
Since this holds for any $\rv{x}\in\rv{\cX_{B,d}}$, we can conclude that
\begin{equation*}
    d_{\cW}^{\infty}\left (f_{|\rv{\cX_{B,d}}},\hat{f_i}_{|\rv{\cX_{B,d}}}\right )\leq \epsilon.
\end{equation*}
Next, from the arguments in Theorem~\ref{thm:wasser_tv} and Equation~\ref{eq:A8}, we know that 
\begin{equation*}
    d_{TV}^{\infty}\left( (\rv{g}_\sigma\circ f)_{|\rv{\cX_{B,d}}},(\rv{g}_\sigma \circ \hat{f_i})_{|\rv{\cX_{B,d}}}\right)\leq \frac{1 }{2\sigma}d_{\cW}^{\infty}\left (f_{|\rv{\cX_{B,d}}},\hat{f_i}_{|\rv{\cX_{B,d}}}\right )\leq\frac{\epsilon}{2\sigma},
\end{equation*}

which is exactly what we wanted to prove. Therefore, the size of the TV cover for $\rv{\cG_{\sigma}}\circ \cF$ can be bounded by the size of $\|.\|_2$ cover of $\cF$,
\begin{equation*}
      N_U(\frac{\epsilon}{2\sigma},\rv{\cG_\sigma}\circ\cF,\infty,d_{TV}^{\infty},\rv{\cX_{B,d}})\leq N_U(\epsilon,\cF,\infty,\|.\|_2^{\infty}).
\end{equation*}
\end{proof}

\section{Proof of theorem in Section \ref{sub:NN}}\label{app:sub:NN}
\paragraph{Notation.} For a vector $V\in\bR^d$, we denote its angle by $\angle V$. By $\angle (V_1,V_2)$, we are referring to the angle between two vectors $V_1$ and $V_2$. Also, we denote by $\indicator\{x=a\}$ the indicator function that outputs $1$ if $x=a$ and $0$ if $x\neq a$. We also denote by $\langle V_1,V_2\rangle$ the inner product between vectors $V_1$ and $V_2$. We denote by $\mD(\rv{x})$ the probability density function of the random variable $\rv{x}$. For two Borel functions $f_1$ and $f_2$, we denote by $\pi^*(f_1(\rv{x}),f_2(\rv{x}))$ a coupling between random variables $f_1(\rv{x}),f_2(\rv{x})$ such that 
\begin{equation*}
    \mM_{\pi^*}(A)=\begin{cases}
    \mM_{\rv{x}}(B) & \exists B\subset \cB(\cX) \text{ such that } A=f_1(B)\times f_2(B) \\
    0 & \text{otherwise},
    \end{cases}
\end{equation*}
where $\cB(\cX)$ is the set of all Borel sets over $\cX$, $\mM_{\pi^*}(A)$ is the measure that $\pi^*$ assigns to the Borel set $A$, and $\mM_{\rv{x}}(B)$ is the measure that random variable $\rv{x}$ assigns to Borel set $B$. We also denote by $Ball_d(x,R)$ the $d$ dimensional ball of radius $R$ centered at $x$.

\subsection{Proof of Theorem~\ref{lemma:net}}
In the following we state a stronger version of Theorem~\ref{lemma:net} which presents a uniform covering number bound for neural network classes that have a general activation function that is Lipschitz continuous, monotone, and has a bounded domain.

\begin{theorem}[Stronger version of Theorem~\ref{lemma:net}] \label{lemma:strong_net}
Consider the class $\net{d}{p}{L}$ of single-layer neural networks, where the activation function is Lipschitz continuous with Lipschtiz factor $L$, monotone, and has a bounded output in $[-B,B]^p$. The global covering number of $\rv{\cG_{\sigma}}\circ\net{d}{p}{L}$ with respect to total variation distance is bounded by
\begin{align*}
   &N_U(\epsilon,\rv{\cG_{\sigma}}\circ\net{d}{p}{L},\infty,d_{TV}^{\infty},\rv{\cG_\sigma}\circ\rv{\cX_{B,d}}) \\
   &\leq  \left(\frac{4(4+B)^{3/2}}{(2\pi)^{1/4}}\frac{d^{5/2}L\sqrt{Bu}}{\epsilon^{3/2}\sigma^2}\ln\left(\frac{(4+B)Bd}{\epsilon\sigma}\right)\right)^{p(d+1)},
\end{align*}
where $u = \max\left\{\left|\phi^{-1}\left(B-\sigma \epsilon/((4+B)d)\right)\right|,\left|\phi^{-1}\left(-B+\sigma \epsilon/((4+B)d)\right)\right|\right\}$.
\end{theorem}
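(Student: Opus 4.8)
The plan is to follow the three-step route sketched just before the theorem. \textbf{Step 1 (reduce to a Wasserstein cover).} Since the output of $\net{d}{p}{L}$ is $p$-dimensional, $\rv{\cG_{\sigma}}\circ\net{d}{p}{L}=\rv{\cG_{\sigma,p}}\circ\net{d}{p}{L}$, so Theorem~\ref{thm:wasser_tv} --- whose proof goes through verbatim with $\Gamma=\rv{\cG_\sigma}\circ\rv{\cX_{B,d}}$ in place of $\rv{\cX_d}$, as it only fixes an input set and transfers the cover --- reduces the task to bounding $N_U(2\sigma\epsilon,\net{d}{p}{L},\infty,d_\cW^\infty,\rv{\cG_\sigma}\circ\rv{\cX_{B,d}})$. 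This $2\sigma$ factor is where the $\sigma^{-2}$ of the final bound is born.

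\textbf{Step 2 (reduce to single Gaussian inputs).} Every $\rv{v}=\rv{g_\sigma}(\rv{x})$ with $\rv{x}\in\rv{\cX_{B,d}}$ has density $p_{\rv{x}}*\phi_\sigma$; discretizing the law of $\rv{x}$ onto a grid of spacing $\eta_c$ inside $[-B,B]^d$ produces a finite mixture $\rv{m}=\sum_i\lambda_i\cN(c_i,\sigma^2 I_d)$ with $d_{TV}(\rv{v},\rv{m})\le \eta_c\sqrt{d}/(4\sigma)$ --- this is Lemma~\ref{lema:gmm}, and the estimate follows by feeding the Gaussian kernel into Lemma~\ref{lemma:wasser_conv} and using Theorem~\ref{thm:tv_gaussian} to see the relevant kernel constant is $\le 1/\sigma$. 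By Lemma~\ref{lemma:DPI} and the bounded output, $d_\cW(f_W(\rv{v}),f_W(\rv{m}))\le 2B\sqrt{p}\, d_{TV}(\rv{v},\rv{m})$, which is made negligible by taking $\eta_c$ polynomially small. Moreover $d_\cW$ is convex under mixing, so a cover that is good for \emph{every} single component $\cN(c,\sigma^2 I_d)$, $c\in[-B,B]^d$, is automatically good for every mixture $\rv{m}$; hence it suffices to cover $\{W\mapsto \Phi(W^{\top}\cN(c,\sigma^2 I_d))\}$ uniformly over $c\in[-B,B]^d$.

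\textbf{Step 3 (clip the weights, then grid).} This is the only place the unboundedness of $W$ is confronted, and it is the crux. Coupling all $p$ output coordinates through a shared standard normal, write the $j$-th coordinate as $\phi\bigl(\rho_j(\hat w_j^{\top}c+\sigma g)\bigr)$ with $\rho_j=\|w_j\|$ and $g\sim\cN(0,1)$. The decisive fact is that $\hat w_j^{\top}c+\sigma g$ has density at most $1/(\sigma\sqrt{2\pi})$ everywhere, so $\phi(\rho_j\,\cdot\,)$ --- which departs from its saturation values only on a window of width $O(u/\rho_j)$ around its threshold, where $u$ is exactly the distance one must go out for $\phi$ to come within tolerance of $\pm B$ (the quantity $u$ in the statement) --- differs in $d_\cW$ from $\phi\bigl(\rho_{\max}(\hat w_j^{\top}c+\sigma g)\bigr)$ by only $O\bigl(u/(\rho_{\max}\sigma)\bigr)$ once $\rho_j>\rho_{\max}$. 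Hence we \emph{clip} each $w_j$ to the ball of radius $\rho_{\max}$, with $\rho_{\max}$ chosen so the total clipping error stays within budget; an $\eta_w$-grid on that ball, together with the $L$-Lipschitzness of $\phi$ and elementary Gaussian moment bounds for the self-coupling, gives $d_\cW\lesssim L\eta_w\cdot\mathrm{poly}(d,B,\sigma)$ inside each cell, so $\eta_w$ can be taken comparably small. The cover is the product of the $p$ per-neuron grids, of size $\bigl(O(\rho_{\max}/\eta_w)\bigr)^{p(d+1)}$ (the extra ``$+1$'' absorbing the clipping-level and output-dimension overhead); substituting the chosen $\rho_{\max}$ and $\eta_w$, and allocating the mixture-approximation, clipping and gridding errors so their sum is at most $2\sigma\epsilon$, yields the claimed bound after Theorem~\ref{thm:wasser_tv}.

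The main obstacle is precisely the clipping argument: one must show that, although $W$ ranges over all of $\bR^{d\times p}$, the noisy network's output law stabilizes once the weight norms pass a threshold that depends only polynomially on $1/\epsilon$, $1/\sigma$, $d$ (and logarithmically on the rest) --- and this stabilization must be established for the \emph{joint} law of the $p$ saturating coordinates, which share the input randomness, so the per-coordinate saturation estimate has to be assembled across coordinates with only a polynomial loss. The bounded-density property of the smoothed input --- which fails for deterministic networks, and for inputs in $\rv{\cX_d}$ without the extra $\rv{\cG_\sigma}$ --- is exactly what makes this possible. Everything else (the two TV$\leftrightarrow$Wasserstein conversions, the Gaussian-mixture discretization, and the gridding) is bookkeeping on top of Theorems~\ref{thm:wasser_tv} and \ref{thm:tv_gaussian} and Lemma~\ref{lemma:wasser_conv}, and the hypothesis $\sigma\ll d/\epsilon$ (equivalently, well-definedness of $u$) is just the non-degeneracy that keeps $\rho_{\max}$ and the grid spacings meaningful.
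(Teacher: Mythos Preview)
Your proposal is correct and follows the same high-level route as the paper: approximate the smoothed input by a Gaussian mixture (Lemma~\ref{lema:gmm}), exploit the bounded density of the smoothed input to show that neurons with large weight norm are saturated except on a thin strip, build a finite grid on the weights, and convert the resulting Wasserstein cover to a TV cover via Theorem~\ref{thm:wasser_tv}. You correctly identify the saturation argument as the crux and the reason the extra $\rv{\cG_\sigma}$ on the input is indispensable.

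There are two tactical differences worth noting. First, you reduce to \emph{single} Gaussian inputs via joint convexity of $d_\cW$, whereas the paper works with the mixture directly and controls it through the tail bound of Lemma~\ref{lemma:norm_gaussian} on $\|\rv{h}\|_2$; your reduction is cleaner and avoids that lemma. Second, for the grid you ``clip the norm to $\rho_{\max}$, then $\eta_w$-grid the ball,'' while the paper uses a polar construction: it discretizes directions via points on the surface of a hypercube and norms separately, so that in Case~2 (large $\|V\|_2$) the cover element $\hat V$ has both a perturbed direction \emph{and} a different norm, and an extra geometric argument ($\cos\alpha\ge\sin\beta$) is needed to show $V^{\top}x$ and $\hat V^{\top}x$ lie on the same side of the saturation threshold. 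Your clip-then-grid decoupling sidesteps that step. The polar construction is also why the paper picks up the exponent $p(d{+}1)$ rather than the $pd$ your rectangular grid would naturally give; your hand-wave about the ``$+1$'' is not really needed---your route would in fact reproduce a bound at least as good as the stated one.
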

Note that Theorem~\ref{lemma:net} is a special case of the above theorem where the activation function is the sigmoid function with Lipschitz continuity factor of 1 and a bounded domain in $[0,1]^p$. In the case of sigmoid function, we can also conclude that 
\begin{equation*}
    \begin{aligned}
     u &= \max\left\{\left|\phi^{-1}\left(1-\epsilon\sigma /((4+B)d)\right)\right|,\left|\phi^{-1}\left(\epsilon\sigma /((4+B)d)\right)\right|\right\}\\
     &= \left|\phi^{-1}\left(1- \epsilon\sigma /((4+B)d)\right)\right|\\
     & = \ln\left(((4+B)d-\epsilon\sigma)/(\epsilon\sigma)\right)\\
     & \leq \ln\left((5d-\epsilon\sigma)/(\epsilon\sigma)\right).
    \end{aligned}
\end{equation*}
\begin{proof}
We bound the global covering number of class $\net{d}{p}{L}=\{f:\bR^d\rightarrow\bR^p \mid f(x)=\Phi(W\transpose x)\}$ with respect to Wasserstein distance by constructing a grid for the weights $V_i\in\bR^{d}$ of $W\transpose=[V_1\transpose \ldots V_p\transpose]$. Then, we find the TV covering number using Theorem~\ref{thm:wasser_tv}. To construct the grid, we consider two cases for each $V_i$ based on its $\ell_2$ norm. In case $\|V_i\|_2\leq B_v$, we construct the grid based on $\|V_i\|_2$ and its angle, while for the case that $\|V_i\|_2>B_v$, we prove that only a grid on the angle of $V_i$ is sufficient. Further, we choose $B_v$ based on $\epsilon$ and $\sigma$. We then show that for each matrix $W\transpose=[V_1\transpose \ldots V_p\transpose]$, there exists $\hat{W}\transpose=[\hat{V}_1\transpose \ldots \hat{V}_p\transpose]$ in the grid such that $d_{\cW}\left(\Phi(W\transpose \rv{x}),\Phi(\hat{W}\transpose \rv{x})\right)$ is bounded for all $\rv{x}\in \rv{\cG_{\sigma}}\circ \rv{\cX_{B,d}}$.

Denote $r=\lceil \frac{2B_v}{\delta} \rceil$ and
\begin{equation}\label{eq:As}
    A=\{-B_v+i\delta\mid \in [r]\}^d.
\end{equation}
Define a new set
\begin{equation*}
    A_S=\left\{(a_1,\ldots,a_d) \in A \mid \left(\sum_{i=1}^d \indicator\{a_i=B_v\}+\sum_{i=1}^d\indicator\{a_i=-B_v\}\right)\geq 1\right\}.
\end{equation*}
Informally, $A_S$ is the grid of points on sides of a $d$-dimensional hypercube. For any point $b=(b_1,\ldots,b_d)\in A_S$, we define the following set of vectors
\begin{equation*}
    P_b=\{\frac{i\zeta}{B_v}[b_1 \ldots b_d]\in\bR^d\mid i\in[\lceil \frac{B_v}{\zeta}\rceil]\}.
\end{equation*}
Note that the way we defined $A_S$ in Equation~\ref{eq:As}, implies that for any $(b_1,\ldots,b_d)\in A_S$, there exists at least one $b_i$ such that $|b_i|=B_v$. Therefore, whenever $i=\lceil \frac{B_v}{\zeta}\rceil$, we know that $\|\frac{i\zeta}{B_v}[b_1 \ldots b_d]\|_2\geq B_v$.

Now, we can define the grid of vectors $V\in\bR^d$ in the following way
\begin{equation*}\label{grid}
    C=\bigcup_{b\in A_S}P_b.
\end{equation*}
Informally speaking, we are discretizing the norms in $\lceil \frac{B_v}{\zeta} \rceil$ values and then for each vector from origin to gird points on the sides of the hypercube, we use $\lceil \frac{B_v}{\zeta} \rceil$ vectors with the same angle and different norms as our grid. Clearly, the size of grid $|C|$ is upper bounded by $\lceil \frac{B_v}{\zeta} \rceil\lceil \frac{2B_v}{\delta}\rceil^d$.

Next, we turn into proving that given any vector $V$ in $\bR^d$, there exists a vector $\hat{V}$ in $C$ such that for any $\rv{z}\in\rv{\cG_\sigma} \circ \rv{\cX_{B,d}}$, $d_\cW(\phi(V\transpose \rv{z}),\phi(\hat{V}\transpose \rv{z}))\leq (B+4)\epsilon$.

\paragraph{Case 1.} In this case, we consider vectors $V\in\bR^d$ such that $\|V\|_2\leq B_v$.
The way that we constructed the set of vectors $C$ implies that given any vector there exists a $b\in A_S$ and the set of aligned vectors $P_b$ such that the angle between $V$ and vectors in set $P_b$ can be bounded. More specifically, for any $V'\in P_b$, we know that
\begin{equation*}
    \angle (V,V') \leq \arcsin{\frac{\delta}{B_v}},
\end{equation*}
since $\arcsin$ is a monotone increasing functions over $[-1,1]$ and we know that $\left\|[b_1 \ldots b_d]\right\|_2\geq B_v$.
Let $\theta=\arcsin{\frac{\delta}{B_v}}$. Moreover, since $\|V\|_2\leq B_v$, we know that there exists $\hat{V}\in P_b$ such that 
\begin{equation*}
    \left| \|V\|_2-\|\hat{V}\|_2\right|\leq \frac{\zeta}{B_v}\|[b_1\ldots b_d]\|_2\leq \frac{\zeta}{B_v}\sqrt{d}B_v\leq \sqrt{d}\zeta.
\end{equation*}
Without loss of generality, let $ \|V\|_2 \leq \|\hat{V}\|_2$. We can then write
\begin{equation*}
    \frac{\|\hat{V}\|_2}{\|V\|_2}\leq 1+\frac{\sqrt{d}\zeta}{\|V\|_2}
\end{equation*}
Denote $\hat{V}_{\bot}=\|\hat{V}\|_2 sin(\angle (V,\hat{V}))V_{\bot}$ and $\hat{V}_{\parallel}=\|\hat{V}\|_2cos(\angle (V,\hat{V}))\frac{V}{\|V\|_2}$, where $V_{\bot}$ is a normalized vector orthogonal to $V$.
Denote $B_z=(B+\sigma)\sqrt{d}+\sigma \sqrt{2\ln \frac{B}{\epsilon}}$. For any $x\in\bR^d$ such that $\|x\|_2\leq B_z$, we can write
\begin{equation*}
\begin{aligned}
  &\langle \hat{V},x\rangle = \langle \hat{V}_{\bot},x\rangle + \langle \hat{V}_{\parallel},x\rangle=\langle \hat{V}_{\bot},x\rangle + \langle V,x\rangle \frac{\|\hat{V}_{\parallel}\|_2}{\|V\|_2}\\
  &=\|\hat{V}_{\bot}\|_2\|x\|_2cos(\angle (\hat{V}_{\bot},x))+ \langle V,x\rangle \frac{\|\hat{V}_{\parallel}\|_2}{\|V\|_2}\\
  &\leq \|\hat{V}_{\bot}\|_2\|x\|_2+ \langle V,x\rangle \frac{\|\hat{V}_{\parallel}\|_2}{\|V\|_2}\\
   &\leq \|\hat{V}\|_2\|x\|_2 sin(\angle(V,\hat{V}))+ \langle V,x\rangle \frac{\|\hat{V}\|_2cos(\angle(V,\hat{V}))}{\|V\|_2}\\
  & \leq \|\hat{V}\|_2\|x\|_2 \frac{\delta}{B_v} + \langle V,x\rangle \frac{\|\hat{V}\|_2}{\|V\|_2}\\
   & \leq \sqrt{d}B_v\|x\|_2 \frac{\delta}{B_v} + \langle V,x\rangle (1+\frac{\sqrt{d}\zeta}{\|V\|_2}).
\end{aligned}
\end{equation*}
Therefore, we can conclude that
\begin{equation}\label{eq:190}
    \begin{aligned}
     &\langle \hat{V},x\rangle -\langle V,x\rangle\leq \sqrt{d}B_v\|x\|_2 \frac{\delta}{B_v} + \|V\|_2\|x\|_2(\frac{\sqrt{d}\zeta}{\|V\|_2})\\
     &\leq(\sqrt{d}\delta+ \sqrt{d}\zeta)\|x\|_2\\
     &\leq (\sqrt{d}\delta+ \sqrt{d}\zeta)\left((B+\sigma)\sqrt{d}+\sigma \sqrt{2\ln \frac{B}{\epsilon}}\right).
    \end{aligned}
\end{equation}
Now, for any $\rv{z}\in\rv{\cG_\sigma}\circ \rv{\cX_{B,d}}$, by Lemma~\ref{lema:gmm}, we know that we can find a mixture of $m=\lceil\frac{B}{\eta}\rceil^d$ $d$-dimensional Gaussian random variables $\rv{h}=\sum_{i=1}^m w_ig_i$ with bounded means in $[-B,B]^d$ and covariance matrices $\sigma^2I_d$ such that $d_{TV}(\rv{h},\rv{z})\leq 2\sqrt{d}\eta/\sigma$. Let $\rv{\cH}$ be the class of all such mixtures.

From Lemma~\ref{lemma:norm_gaussian}, we know that
\begin{equation}
    \bP\left[\|x\|_2^2\geq (B+\sigma)\sqrt{d}+\sigma \sqrt{2t} \right]\leq e^{-t}.
\end{equation}
Setting $t=\ln\frac{B}{\epsilon}$ and $\delta=\zeta=\epsilon/(2dL \ln\frac{B}{\epsilon})$, we can conclude that
\begin{equation}\label{eq:191}
    \bP\left[\|x\|_2\geq B_z\right]=\bP\left[\|x\|_2\geq (B+\sigma)\sqrt{d}+\sigma \sqrt{2\ln \frac{B}{\epsilon}} \right]\leq \frac{\epsilon}{B}.
\end{equation}
Therefore, from Equations~\ref{eq:190} and \ref{eq:191}, we can conclude that for the random variable $\rv{h}=\sum_{i=1}^m w_ig_i$ with $\mD(h)=I_h$ and for the coupling $\pi^*$ of $\phi(V\transpose \rv{h})$ and $\phi(\hat{V}\transpose \rv{h})$ as defined in notations we can write
\begin{equation}\label{eq:192}
    \begin{aligned}
     &\int_{\bR^d \times \bR^d}\|x-y\|_2d\pi^*\left(x,y\right)\\
     &\leq \int_{Ball_d(0,B_z)}L\sqrt{d}(\delta+\zeta)\left((B+\sigma)\sqrt{d}+\sigma \sqrt{2\ln \frac{B}{\epsilon}}\right)
     \,dI_h\\
     &+\int_{\bR^d\setminus Ball_d(0,B_z)}2B\, dI_h\\
     &\leq \frac{(B+\sigma)\epsilon}{2\ln \frac{B}{\epsilon}} + \frac{\epsilon\sigma}{\sqrt{2d\ln\frac{B}{\epsilon}}} + 2\epsilon,
    \end{aligned}
\end{equation}
where we used the fact that for any $x\in\bR^d$, we know that $\|V\transpose x-\hat{V}\transpose x\|_2$ is bounded and the activation function $\phi(x)$ is Lipschitz continuous with Lipschitz constant $L$. Here, we assume that the variance of noise is always smaller than 1, i.e., $\sigma\leq 1$. We know that $d\geq 1$ and assuming that $\ln\frac{B}{\epsilon}\geq 1$ (*), we can rewrite Equation~\ref{eq:192} as
\begin{equation*}
    \begin{aligned}
     &\int_{\bR^d \times \bR^d}\|x-y\|_2d\pi^*\left(x,y\right) \leq (B+1) \epsilon + \epsilon +2\epsilon \leq (B+4)\epsilon,
    \end{aligned}
\end{equation*}
Then, we have
\begin{equation*}
\begin{aligned}
      &d_{\cW}\left(\phi(V\transpose \rv{h}),\phi(\hat{V}\transpose \rv{h})\right)=\inf_{\pi\in\Pi\left(\phi(V\transpose \rv{h}),\phi(\hat{V}\transpose  \rv{h})\right)}\int_{\bR^d \times \bR^d}\|x-y\|_2d\pi\left(x,y\right)\\
      &\leq \int_{\bR^d \times \bR^d}\|x-y\|_2d\pi^*\left(x,y\right)\leq (B+4)\epsilon.
\end{aligned}
\end{equation*}
Therefore, we have proved that for any $V\in\bR^d$ such that $\|V\|_2\leq B_v$, there exists a vector $\hat{V}$ in $C$ such that for any $\rv{z}\in\rv{\cG_\sigma}\circ \rv{\cX_{B,d}}$ and its estimation with a mixture $\rv{h}$ of Gaussian random variables, we have
\begin{equation*}
    d_{\cW}\left(\phi(V\transpose \rv{h}),\phi(\hat{V}\transpose \rv{h})\right)\leq (B+4)\epsilon.
\end{equation*}
\paragraph{Case 2} Now, we turn to analyze the case where we have vectors $V$ in $\bR^d$ such that $\|V\|_2>B_v$. We assume that the function $\phi$ is invertible. Taking into account that $\phi$ is also bounded in $[-B,B]$, denote $u=\max\left\{|\phi^{-1}(B-\epsilon)|,|\phi^{-1}(-B+\epsilon)|\right\}$. For a given vector $V\in\bR^d$, select $b\in A_S$ such that for all $V'\in P_b$, we have $\angle(V,V')\leq \theta$, where $\theta$ is defined the same as case 1. From all vectors in $P_b$, select $\hat{V}$ such that it has the maximum $\ell_2$ norm, i.e., the one on the side of the hypercube. It is obvious that $\|\hat{V}\|_2\geq B_v$. We will show that for any $\rv{h}\in\rv{\cH}$, the Wasserstein distance between $\phi(V\transpose \rv{h})$ and $\phi(\hat{V}\transpose \rv{h})$ is bounded.

Define following two sets
\begin{equation}
    \begin{aligned}
     S_1=\{x\in\bR^d\mid |\langle V,x\rangle|\leq u\},\\
     S_2=\{x\in\bR^d\mid |\langle \hat{V},x\rangle|\leq u\}.
    \end{aligned}
\end{equation}
Given any $x\in \bR^d\setminus S_1\cup S_2$ such that $\|x\|_2\leq B_z$, we show that both of  $\langle V,x\rangle$ and $\langle \hat{V},x\rangle$ are either smaller than $-u$ or larger than $u$. Assume that $\langle \hat{V},x\rangle>u$. Denote $\alpha=\angle(\hat{V},x)$ and $\beta=\angle(V,\hat{V})$. From the fact that $\langle \hat{V},x \rangle=\|\hat{V}\|_2\|x\|_2\cos\alpha\geq u$, we conclude that $\cos \alpha\geq 0$. On the other hand, to conclude that $\langle V,x \rangle$ is also larger than $u$, we only need to prove that $\langle V,x \rangle\geq 0$ since $x\in\bR^d\setminus S_1\cup S_2$ and we already know that $|\langle V,x \rangle|\geq u$. Therefore, we want to prove that   $\langle V,x \rangle=\|V\|_2\|x\|_2\cos(\alpha\pm\beta)\geq 0$. It implies that we need to prove $\cos\alpha\geq\sin\beta$. But we know that
\begin{equation*}
\begin{aligned}
   \cos\alpha&\geq \frac{u}{\|\hat{V}\|_2\|x\|_2}\\
   &\geq \frac{u}{\|\hat{V}\|_2B_z} && \text{(Since $\|x\|_2\leq B_z$)}\\
   & \geq \frac{u}{\sqrt{d}B_vB_z} && \text{(Since $\hat{V}\in P_b$ and $\|\hat{V}\|_2\leq \sqrt{d}B_v$)}\\
   &\geq \frac{B-\epsilon}{LB_vB_z\sqrt{d}}\\
   &\geq \frac{\delta}{B_v}\geq \sin\theta\geq\sin\beta,
\end{aligned}
\end{equation*}
where we used the fact that the function $\phi$ is Lipschitz continuous and we know that $|\phi(u)-\phi(-u)|\leq 2Lu$. The last line follows from the fact that $B_z\leq \left((B-\epsilon)/\epsilon\right)\left(2\sqrt{d}\ln(B/\epsilon)\right)$ (**). It is easy to verify in the same way that if $\langle \hat{V},x\rangle\leq -u$, then $\langle V,x\rangle\leq -u$. 

Next, since $\phi$ is monotone, we can conclude that for any $x\in\bR^d\setminus S_1\cup S_2$ such that $\|x\|_2\leq B_z$, we have either both $V\transpose x,\hat{V}\transpose x$ in $[B-\epsilon,B]$ or both $V\transpose x,\hat{V}\transpose x$ in $[-B,-B+\epsilon]$, which means that $|V\transpose x-\hat{V}\transpose x|\leq \epsilon$. Setting $B_v^2 = 4Bu/(\epsilon\sigma\sqrt{2\pi})$, for any mixture of Gaussian random variables $\rv{h}\in\rv{\cH}$ and for the coupling $\pi^*$ of $\phi(V\transpose \rv{h})$ and $\phi(\hat{V}\transpose \rv{h})$, we can write
\begin{equation*}
    \begin{aligned}
     &\int_{\bR^d \times \bR^d}\|x-y\|_2d\pi^*\left(x,y\right)\\
     &\leq \int_{Ball_d(0,B_z)\setminus S_1\cup S_2}\epsilon
     dI_h + \int_{S_1\cup S_2} 2B dI_h + \int_{\bR^d\setminus Ball_d(0,B_z)} 2B dI_h\\
     &\leq \epsilon + 4B\frac{u}{\sqrt{2\pi}\sigma B_v^2}+2\epsilon\\
     & \leq 4\epsilon ,
    \end{aligned}
\end{equation*}
where we used the union bound and the fact that $x\in S_1$ is similar to the probability that $|x|\leq  u/B_v$ for the zero mean Gaussian random variable $x$ with variance equal to $(\sigma B_v)^2$. We can, again, write that
\begin{equation*}
\begin{aligned}
      &d_{\cW}\left(\phi(V\transpose \rv{h}),\phi(\hat{V}\transpose \rv{h})\right)=\inf_{\pi\in\Pi\left(\phi(V\transpose \rv{h}),\phi(\hat{V}\transpose  \rv{h})\right)}\int_{\bR^d \times \bR^d}\|x-y\|_2d\pi\left(x,y\right)\\
      &\leq \int_{\bR^d \times \bR^d}\|x-y\|_2d\pi^*\left(x,y\right)\leq 4\epsilon.
\end{aligned}
\end{equation*}
So far, we proved that for any $V\in\bR^d$ there exists a $\hat{V}\in C$ such that $d_{\cW}\left(\phi(V\transpose \rv{h}),\phi(\hat{V}\transpose \rv{h})\right)\leq (4+B)\epsilon$ for all mixtures $\rv{h}\in\cH$, which comes from the fact that $4\epsilon\leq (4+B)\epsilon$.
Now, we turn to covering functions in \net{d}{p}{L}. Note that the output of $\phi(V\transpose x)$ is real-valued. We also know that $\Phi$ is applied element-wise. Consider the set
\begin{equation*}
    C_{W}=\{[V_1\transpose \ldots V_p\transpose]\transpose\mid V_i\in C \text{ for } i\in[p]\}.
\end{equation*}
We know that for any $W=[V_1\transpose \ldots V_p\transpose]\transpose$ there exists $\hat{W}\transpose=[\hat{V}_1\transpose \ldots \hat{V}_p\transpose]\transpose$ such that for every $i \in [p]$, we have $d_{\cW}\left(\phi(V_i\transpose \rv{h}),\phi(\hat{V}_i\transpose \rv{h})\right)\leq (4+B)\epsilon$. Therefore, since we keep the coupling the same $\pi^*$ for every $i\in[p]$, we can conclude that $d_{\cW}\left(\Phi(W\transpose \rv{h}),\Phi(\hat{W}_i\transpose \rv{h})\right)\leq (4+B)\epsilon d$.

Now, using Theorem~\ref{thm:wasser_tv}, we get that
\begin{equation}\label{eq:193}
d_{TV}\left(  \rv{g}_{\sigma}(\Phi(W\transpose \rv{h})),\rv{g}_{\sigma}(\Phi(\hat{W}\transpose \rv{h}))\right)\leq \frac{(4+B)\epsilon d}{2\sigma}
\end{equation}
Consequently, for any $\rv{z}\in\rv{\cG_{\sigma}}\circ\rv{\cX}_{B,d}$, we can write
\begin{equation}\label{eq:194}
\begin{aligned}
 & d_{TV}\left(  \rv{g}_{\sigma}(\Phi(W\transpose \rv{z})),\rv{g}_{\sigma}(\Phi(\hat{W}\transpose \rv{z}))\right)\\
  &\leq d_{TV}\left(  \rv{g}_{\sigma}(\Phi(W\transpose \rv{z})),\rv{g}_{\sigma}(\Phi(W\transpose \rv{h}))\right)\\
   &+ d_{TV}\left(  \rv{g}_{\sigma}(\Phi(W\transpose \rv{h})),\rv{g}_{\sigma}(\Phi(\hat{W}\transpose \rv{h}))\right)\\
   &+ d_{TV}\left(  \rv{g}_{\sigma}(\Phi(\hat{W}\transpose \rv{h})),\rv{g}_{\sigma}(\Phi(\hat{W}\transpose \rv{z}))\right)\\
   &\leq \frac{4\sqrt{d}\eta}{\sigma}+(4+B)\frac{\epsilon d}{2\sigma},
\end{aligned}
\end{equation}
where we used data processing inequality and Equation~\ref{eq:193}. Equation \ref{eq:194} implies that $C_W$ is a global cover for $\rv{\cG_{\sigma}}\circ\net{d}{p}{L}$ with respect to $d_{TV}$ metric. Clearly,
    \begin{equation*}
    |C_W|\leq \left(\frac{(B_v)^{d+1}}{\delta^d\zeta}\right)^p = \left(\frac{2B_vdL\ln\frac{B}{\epsilon}}{\epsilon}\right)^{p(d+1)}.
\end{equation*} 
Therefore, setting $\eta=\sqrt{d}(4+B)\epsilon/8$ and $\epsilon' =  \epsilon\sigma/((4+B)d)$ we conclude that
\begin{equation}\label{eq:195}
    \begin{aligned}
   & N_U\left(\epsilon,\rv{\cG_{\sigma}}\circ\net{d}{p}{L},\infty,d_{TV},\rv{\cG_\sigma}\circ\rv{\cX_{B,d}}\right) \leq \left(\frac{2(4+B)d^{2}LB_v}{\epsilon\sigma}\ln\left(\frac{(4+B)Bd}{\epsilon\sigma}\right)\right)^{p(d+1)}
    \\
   &\leq \left(\frac{4(4+B)^{3/2}}{(2\pi)^{1/4}}\frac{d^{5/2}L\sqrt{Bu'}}{\epsilon^{3/2}\sigma^2}\ln\left(\frac{(4+B)Bd}{\epsilon\sigma}\right)\right)^{p(d+1)}
   ,
\end{aligned}
\end{equation}
where 
\begin{equation*}
\begin{aligned}
  u' &= \max\left\{|\phi^{-1}(B-\epsilon')|,|\phi^{-1}(-B+\epsilon')|\right\}\\
  &=\max\left\{\left|\phi^{-1}\left(B- \epsilon\sigma/((4+B)d)\right)\right|,\left|\phi^{-1}\left(-B+ \epsilon\sigma/((4+B)d)\right)\right|\right\},
\end{aligned}
\end{equation*}
and 
\begin{equation*}
    \sigma \leq \frac{(4+B)Bd}{\epsilon}.
\end{equation*}
Note that we always use $\sigma\leq 1$. In that case, having $\sigma > (4+B)Bd/\epsilon$ means that $\epsilon > (4+B)Bd>B\sqrt{d}$. On the other hand, the domain of the output of $\Phi$ is in $[-B,B]^d$ and, therefore, in this case the covering number would be simply one and no further analysis is required. Furthermore, the assumption (*) always holds since in order to obtain an $\epsilon$-cover for the single-layer neural network, we will need to bound the Wassestein distance between $\phi(V\transpose\bar{h})$ and $\phi(\hat{V}\transpose\bar{h})$ by $(4+B)\epsilon'$. In this case we have
\begin{equation*}
    \begin{aligned}
    & \ln \frac{B}{\epsilon'} \geq 1\\
     \Leftrightarrow \, & \frac{B}{\epsilon'} \geq e\\
     \Leftrightarrow \, & \frac{B}{e} \geq \frac{\epsilon\sigma}{(4+B)d} \\
     \Leftrightarrow \, & \frac{(4+B)d}{e\sigma}B \geq \epsilon,
    \end{aligned}
\end{equation*}
which holds since we consider $\sigma \leq 1$ and $\epsilon\leq B\sqrt{d}$. Moreover, for assumption (**) to hold, we need
\begin{equation*}
    \begin{aligned}
   & B_z\leq \left(\frac{B-\epsilon'}{\epsilon}\right)2\sqrt{d}\ln(\frac{B}{\epsilon'})\\
     \Leftrightarrow \,& (B+\sigma)\sqrt{d}+\sigma\sqrt{2\ln\frac{B}{\epsilon'}} \leq \left(\frac{B-\epsilon'}{\epsilon'}\right)2\sqrt{d}\ln(\frac{B}{\epsilon'})\\
     \Leftrightarrow \,&  \frac{B+1}{\sqrt{\ln \frac{B}{\epsilon'}}}+ \frac{\sqrt{2}}{\sqrt{d}} \leq 2\left(\frac{B-\epsilon'}{\epsilon'}\right) \sqrt{\ln\frac{B}{\epsilon'}}\\
     \Leftrightarrow \,&  \frac{B+1}{(\ln \frac{B}{\epsilon'})^{1/4}}+ \frac{\sqrt{2}}{\sqrt{d\ln(\frac{B}{\epsilon'})}}\leq 2\left(\frac{B-\epsilon'}{\epsilon'}\right)\\
       \Leftrightarrow \,&  \left(\frac{B+1}{\ln \frac{B}{\epsilon'}}+ \frac{\sqrt{2}}{\sqrt{d\ln(\frac{B}{\epsilon'})}}\right)\frac{\epsilon'}{2}\leq B-\epsilon'\\
      \Leftrightarrow \,&  \left(\frac{B+1+\sqrt{2}}{2}+1\right)\epsilon'\leq B\\
      \Leftrightarrow \,&  \left(\frac{B+3+\sqrt{2}}{2}\right)\left(\frac{\epsilon\sigma}{(4+B)d}\right)\leq B\\
      \Leftrightarrow \,&  \epsilon \leq \frac{2(4+B)d}{(B+3+\sqrt{2})\sigma}B,
    \end{aligned}
\end{equation*}
which is always true if $\sigma\leq 1$. Note that in both (*) and (**) we were interested in values of $\epsilon$ that are smaller than $ B\sqrt{d}$; Otherwise, the covering number would be one.
\end{proof}

We can also simplify the constants and write Equation~\ref{eq:195} as
\begin{equation*}
    \begin{aligned}
   & N_U\left(\epsilon,\rv{\cG_{\sigma}}\circ\net{d}{p}{L},\infty,d_{TV},\rv{\cG_\sigma}\circ\rv{\cX_{B,d}}\right) \leq \left(2.6(4+B)^{3/2}\frac{d^{5/2}L\sqrt{Bu'}}{\epsilon^{3/2}\sigma^2}\ln\left(\frac{(4+B)Bd}{\epsilon\sigma}\right)\right)^{p(d+1)}.
\end{aligned}
\end{equation*}
Also since $\phi$ is a monotone function, we can approximate $u'$ by
\begin{equation*}
\begin{aligned}
  u' \leq \max\left\{\left|\phi^{-1}\left(B-\frac{\sigma \epsilon}{(4+B)d}\right)\right|,\left|\phi^{-1}\left(-B+\frac{\sigma \epsilon}{(4+B)d}\right)\right|\right\}.
\end{aligned}
\end{equation*}

\section{Proofs of theorems and and lemmas in Section \ref{sec:applications}}\label{app:sec:application}
\subsection{Proof of Theorem~\ref{thm:neuralnet}}
\begin{proof}
We will prove the theorem for the stronger case where the output of single-layer neural network classes and $\cH$ is in $[-B,B]^{p_T}$. In the case of sigmoid function, $\phi(x)$, the output is in $[0,1]^{p_T}$. Since adding a constant to the output of functions in a class does not change its covering number, we can replace the sigmoid activation function in the class of single-layer neural networks with $\phi(x)-1/2$. Therefore, we can assume $B=1/2$ and consider outputs to be in $[-1/2,1/2]^{p_T}$. Consider two consecutive classes $\net{p_i-1}{p_{i}}{L_{i}}$ and $\net{p_i}{p_{i+1}}{L_{i+1}}$. From Lemma~\ref{lemma:compose_tv} we know that
     \begin{equation}\label{d:eq1}
     \begin{aligned}
        & N_U\left(\frac{2\epsilon}{2BT\sqrt{p_T}},\rv{\cG_{\sigma}}\circ \net{p_i}{p_{i+1}}{L_{i+1}}\circ\rv{\cG_{\sigma}}\circ\net{p_{i-1}}{p_i}{L_i},\infty,d_{TV}^{\infty},\rv{\cG_\sigma}\circ\rv{\cX_{B,p_{i-1}}}\right)\\
        & \leq N_U\left(\frac{\epsilon}{2BT\sqrt{p_T}},\rv{\cG_{\sigma}}\circ\net{p_{i-1}}{p_i}{L_i},\infty,d_{TV}^{\infty},\rv{\cG_\sigma}\circ\rv{\cX_{B,p_{i-1}}}\right) \\
        &. N_U\left(\frac{\epsilon}{2BT\sqrt{p_T}},\rv{\cG_{\sigma}}\circ\net{p_i}{p_{i+1}}{L_{i+1}},\infty,d_{TV}^{\infty},\rv{\cG_\sigma}\circ\rv{\cX_{B,p_{i}}}\right)=N_i.N_{i+1}.
     \end{aligned}
      \end{equation}
Let
\begin{equation*}
        \rv{\cQ}=\rv{\cG_{\sigma}}\circ\net{p_{T-1}}{p_T}{L_T}     \circ \ldots \circ \rv{\cG_{\sigma}}\circ\net{p_1}{p_2}{L_2}.
    \end{equation*}
It is clear that $\rv{\cF} = \rv{\cQ} \circ    \rv{\cG_{\sigma}}\circ\net{d}{p_1}{L_1}.$
Equation \ref{d:eq1} is true for every $2\leq i \leq T$. Therefore, we can conclude that
\begin{equation*}
    N_U\left(\frac{(T-1)\epsilon}{2BT\sqrt{p_T}},\rv{\cQ},\infty,d_{TV}^{\infty},\rv{\cG_\sigma}\circ\rv{\cX_{B,p_{1}}}\right) \leq \prod_{i=2}^T N_i.
\end{equation*}
Moreover, corollary \ref{coll:ell2_to_tv} suggests that
\begin{equation*}
    N_U\left(\frac{\epsilon}{2BT\sqrt{p_T}},\rv{\cG_{\sigma}}\circ\net{d}{p_1}{L_1},\infty,d_{TV}^{\ell_2},\rv{\Delta_d}\right) \leq N_U\left(\frac{2\sigma \epsilon}{2BT\sqrt{p_T}},\net{d}{p_1}{L_1},\infty,\|.\|_2^{\ell_2},\rv{\Delta_d}\right)
\end{equation*}
Using Lemma~\ref{lemma:compose_tv}, we can again write that
\begin{equation*}
    \begin{aligned}
      & N_U\left(\frac{\epsilon}{2B\sqrt{p_T}},\rv{\cF},m,d_{TV}^{\ell_2},\rv{\Delta_d}\right) \\
      & \leq  N_U\left(\frac{(T-1)\epsilon}{2BT\sqrt{p_T}},\rv{\cQ},\infty,d_{TV}^{\infty},\rv{\cG_\sigma}\circ\rv{\cX_{B,p_{1}}}\right) .  N_U\left(\frac{\epsilon}{2BT\sqrt{p_T}},\rv{\cG_{\sigma}}\circ\net{d}{p_1}{L_1},\infty,d_{TV}^{\ell_2},\rv{\Delta_d}\right)\\
      & \leq \prod_{i=1}^T N_i.
    \end{aligned}
\end{equation*}
Finally, from Theorem~\ref{thm:tv_to_ell2} and the fact that $\rv{\cF}$ is a class of functions from $\bR^{d}$ to $[-B,B]^p$, we can conclude that
\begin{equation*}
    \begin{aligned}
     & N_U\left(\epsilon,\cH,m,\|.\|_2^{\ell_2}\right) \leq  N_U\left(\frac{\epsilon}{2B\sqrt{p_T}},\rv{\cF},m,d_{TV}^{\ell_2},\rv{\Delta_d}\right) \leq  \prod_{i=1}^T N_i.
    \end{aligned}
\end{equation*}
\end{proof}

\subsection{A technique to build deeper networks from networks with bounded covering number}
The following lemma is a technique that can be used to ``break'' networks in two parts. Then one can find a $\|.\|_2$ covering number for the first few layers and use Theorem~\ref{lemma:net} for the rest. It is a useful technique that enables the use of existing networks with bounded $\|.\|_2$ covering number to create deeper networks while controlling the capacity. Another possible application of the following lemma is that it gives us the opportunity to get tighter bounds on the covering number in special settings. One example of such settings would be networks that have small norms of weights in the first few layers and potentially large weights in the final layers. In this case, it is possible to use $\|.\|_2$ covering numbers that are dependent on the norms of weights for the first few layers and Theorem~\ref{lemma:net} for the rest, which does not depend on the norms of weights.
\begin{lemma}\label{lemma:compos_old}
 Let $\cQ$ be a class of functions (e.g., neural networks) from $\bR^d$ to $\bR^{p_0}$ and $\net{p_0}{p_1}{L_1}$, $\net{p_1}{p_2}{L_2},\ldots,\net{p_{T-1}}{p_T}{L_T}$ be $T$ classes of neural networks. Denote the composition of the $T$-layer neural network and $\cQ$ as  
    \begin{equation*}
        \rv{\cF}=\rv{\cG_{\sigma}}\circ\net{p_{T-1}}{p_T}{L_T}     \circ \ldots \circ \rv{\cG_{\sigma}}\circ\net{p_1}{p_2}{L_2}     \circ    \rv{\cG_{\sigma}}\circ\net{p_0}{p_1}{L_1}\circ \rv{\cG_{\sigma}}\circ \cQ,
    \end{equation*}
    and let $\cH=\{h:\bR^d\to [-B,B]^{p_T} \mid h(x)=\expects{\rv{f}}{~\rv{f}({x})}, \rv{f}\in \rv{\cF} \}$.
    Define the uniform covering numbers of composition of neural network classes with the Gaussian noise class (with respect to $d_{TV}^{\infty}$) as
    \begin{equation*}
        N_i=N_U\left(\frac{\epsilon}{4BT\sqrt{p_T}},\rv{\cG_{\sigma}}\circ\net{p_{i-1}}{p_i}{L_i},\infty,d_{TV}^{{\infty}},\rv{\cG_\sigma}\circ\rv{\cX_{B,p_{i-1}}}\right), \,\, 1\leq i\leq T,
    \end{equation*}
        and define the uniform covering number of class $\cQ$ as
    \begin{equation*}
        N_0 = N_U\left(\frac{\sigma\epsilon}{2B\sqrt{p_T}}, \cQ, m, \|.\|_2^{\ell_2}\right).
    \end{equation*}
    Then we have,
    \begin{equation*}
        N_U\left(\epsilon,\cH,m,\|.\|_2^{\ell_2}\right)\leq \prod_{i=0}^T N_i.
    \end{equation*}
\end{lemma}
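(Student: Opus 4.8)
The plan is to mirror the proof of Theorem~\ref{thm:neuralnet} almost verbatim, with the class $\cQ$ now playing the role that the first layer $\net{d}{p_1}{L_1}$ played there. The only structural change is that the accumulated total-variation error must now be split between the $T$ noisy neural-network layers and the extra smoothed copy $\rv{\cG_{\sigma}}\circ\cQ$ sitting at the bottom, which is exactly why the scales in the definitions of $N_0,\dots,N_T$ carry a factor $4$ in place of the factor $2$ of Theorem~\ref{thm:neuralnet}. As there, after replacing the sigmoid $\phi$ by $\phi-\tfrac{1}{2}$ we may take $B=\tfrac{1}{2}$ and assume each $\net{}{}{}$-layer and $\cH$ have output in a box $[-B,B]^{\bullet}$ of the appropriate dimension; the argument is written for a general activation bound $B$.

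First I would convert the ordinary $\|.\|_2^{\ell_2}$ cover of $\cQ$ into a total-variation cover of the smoothed class $\rv{\cG_{\sigma}}\circ\cQ$: applying the $d_{TV}^{\ell_2}$ inequality of Corollary~\ref{coll:ell2_to_tv} with $\epsilon'=\tfrac{\sigma\epsilon}{2B\sqrt{p_T}}$ gives
\begin{equation*}
N_U\!\left(\tfrac{\epsilon}{4B\sqrt{p_T}},\ \rv{\cG_{\sigma}}\circ\cQ,\ m,\ d_{TV}^{\ell_2},\ \rv{\Delta_d}\right)\ \le\ N_U\!\left(\tfrac{\sigma\epsilon}{2B\sqrt{p_T}},\ \cQ,\ m,\ \|.\|_2^{\ell_2}\right)\ =\ N_0.
\end{equation*}
Next I would compose the $T$ noisy layers $\rv{\cG_{\sigma}}\circ\net{p_{i-1}}{p_i}{L_i}$, $1\le i\le T$, one at a time by Lemma~\ref{lemma:compose_tv} applied (as in the proof of Theorem~\ref{thm:neuralnet}) with the smoothed domain families: at each junction the output of a noisy sigmoid layer is bounded in $[-B,B]^{\bullet}$, hence lies in $\rv{\cG_{\sigma}}\circ\rv{\cX_{B,\bullet}}$, so the strong global cover of the next layer with respect to $\rv{\cG_{\sigma}}\circ\rv{\cX_{B,\bullet}}$ supplied by Theorem~\ref{lemma:net} (equivalently Theorem~\ref{lemma:strong_net}) applies; since each of the $T$ layer covers is taken at scale $\tfrac{\epsilon}{4BT\sqrt{p_T}}$, the errors add to $\tfrac{\epsilon}{4B\sqrt{p_T}}$ and the counts multiply to $\prod_{i=1}^{T}N_i$. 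Composing this $T$-layer piece with $\rv{\cG_{\sigma}}\circ\cQ$ via the third (i.e.\ $d_{TV}^{\ell_2}$, $\rv{\Delta_d}$) inequality of Lemma~\ref{lemma:compose_tv} then adds the two scales, yielding
\begin{equation*}
N_U\!\left(\tfrac{\epsilon}{2B\sqrt{p_T}},\ \rv{\cF},\ m,\ d_{TV}^{\ell_2},\ \rv{\Delta_d}\right)\ \le\ \prod_{i=0}^{T}N_i.
\end{equation*}
Finally, since $\rv{\cF}$ has output in $[-B,B]^{p_T}$, the $\|.\|_2^{\ell_2}$ inequality of Theorem~\ref{thm:tv_to_ell2} turns this into a $\|.\|_2^{\ell_2}$ cover of $\cH=\{x\mapsto\expects{\rv{f}}{\rv{f}(x)}\}$ at scale $2B\sqrt{p_T}\cdot\tfrac{\epsilon}{2B\sqrt{p_T}}=\epsilon$, which is the asserted bound.

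I expect the only genuinely delicate point --- the one place this goes beyond the bookkeeping of Theorem~\ref{thm:neuralnet} --- to be the \emph{domain compatibility at the bottom junction}. Lemma~\ref{lemma:compose_tv} requires a cover of the outer class valid for the random variables produced by the covering elements of the inner class, whereas Theorem~\ref{lemma:net} only provides covers relative to the Gaussian-smoothed family $\rv{\cG_{\sigma}}\circ\rv{\cX_{B,\bullet}}$. For the internal junctions this matches automatically because bounded (sigmoid) activations have output in $[-B,B]^{\bullet}$; but for $\rv{\cG_{\sigma}}\circ\cQ$ feeding into $\rv{\cG_{\sigma}}\circ\net{p_0}{p_1}{L_1}$ it forces $\cQ$ to have output in $[-B,B]^{p_0}$ (the implicit standing hypothesis, satisfied when $\cQ$ is itself a bounded-output network), and checking that the covering elements of $\rv{\cG_{\sigma}}\circ\cQ$ --- which have the form $\rv{\cG_{\sigma}}\circ\hat{q}$ with $\hat{q}$ valued in $[-B,B]^{p_0}$, so that on a Dirac input they become a Gaussian centered in $[-B,B]^{p_0}$ --- indeed land in $\rv{\cG_{\sigma}}\circ\rv{\cX_{B,p_0}}$ is precisely the reason the extra $\rv{\cG_{\sigma}}$ is interposed between $\cQ$ and the first layer. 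Everything else is the routine constant-tracking through Corollary~\ref{coll:ell2_to_tv}, Lemma~\ref{lemma:compose_tv}, and Theorem~\ref{thm:tv_to_ell2} already done for Theorem~\ref{thm:neuralnet}.
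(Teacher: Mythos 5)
Your proof follows the paper's argument essentially verbatim: convert the $\|.\|_2^{\ell_2}$ cover of $\cQ$ to a $d_{TV}^{\ell_2}$ cover of $\rv{\cG_\sigma}\circ\cQ$ via Corollary~\ref{coll:ell2_to_tv}, compose the $T$ noisy layers and then $\rv{\cG_\sigma}\circ\cQ$ via Lemma~\ref{lemma:compose_tv} with the stated error budget, and finish with Theorem~\ref{thm:tv_to_ell2}. You also correctly flag a point the paper leaves implicit, namely that $\cQ$ must have output in $[-B,B]^{p_0}$ for the covering elements of $\rv{\cG_\sigma}\circ\cQ$ to land in $\rv{\cG_\sigma}\circ\rv{\cX_{B,p_0}}$ so that the global TV cover of the first noisy layer actually applies.
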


\begin{proof}
From Corollary~\ref{coll:ell2_to_tv}, we can conclude that
\begin{equation*}
    N_U(\frac{\epsilon}{4B\sqrt{p_T}},\rv{\cG_{\sigma}}\circ \cQ,m,d_{TV}^{\ell_2},\rv{\Delta_d})\leq N_U(\frac{\sigma\epsilon}{2B\sqrt{p_T}},\cQ,m,\|.\|_2^{\ell_2})=N_0.
\end{equation*}
Same as proof of Theorem~\ref{thm:neuralnet}, by using Lemma~\ref{lemma:compose_tv}, we can say that for two consecutive classes $\net{p_i-1}{p_{i}}{L_{i}}$ and $\net{p_i}{p_{i+1}}{L_{i+1}}$
     \begin{equation*}
     \begin{aligned}
        & N_U\left(\frac{2\epsilon}{4BT\sqrt{p_T}},\rv{\cG_{\sigma}}\circ \net{p_i}{p_{i+1}}{L_{i+1}}\circ\rv{\cG_{\sigma}}\circ\net{p_{i-1}}{p_i}{L_i},\infty,d_{TV}^{\infty},\rv{\cG_\sigma}\circ\rv{\cX_{B,p_{i-1}}}\right)\\
        & \leq N_U\left(\frac{\epsilon}{4BT\sqrt{p_T}},\rv{\cG_{\sigma}}\circ\net{p_{i-1}}{p_i}{L_i},\infty,d_{TV}^{\infty},\rv{\cG_\sigma}\circ\rv{\cX_{B,p_{i-1}}}\right) \\
        &. N_U\left(\frac{\epsilon}{4BT\sqrt{p_T}},\rv{\cG_{\sigma}}\circ\net{p_i}{p_{i+1}}{L_{i+1}},\infty,d_{TV}^{\infty},\rv{\cG_\sigma}\circ\rv{\cX_{B,p_{i}}}\right)=N_i.N_{i+1}
     \end{aligned}
      \end{equation*}
Let
\begin{equation*}
        \rv{\cE}=\rv{\cG_{\sigma}}\circ\net{p_{T-1}}{p_T}{L_T}     \circ \ldots \circ \rv{\cG_{\sigma}}\circ\net{p_0}{p_1}{L_1}.
    \end{equation*}
It is clear that $\rv{\cF} = \rv{\cE} \circ \rv{\cG_{\sigma}}\circ \cQ.$
Now, from Lemma~\ref{lemma:compose_tv}, we can conclude that
\begin{equation*}
\begin{aligned}
  & N_U\left(\frac{\epsilon}{2B\sqrt{p_T}},\rv{\cF},m,d_{TV}^{\ell_2},\rv{\Delta_d}\right)\\
  & \leq N_U\left(\frac{\epsilon}{4B\sqrt{p_T}},\rv{\cE},\infty,d_{TV}^{\infty},\rv{\cG_\sigma}\circ\rv{\cX_{B,p}}\right). N_U(\frac{\epsilon}{4B\sqrt{p_T}},\rv{\cG_{\sigma}}\circ \cQ,m,d_{TV}^{\ell_2},\rv{\Delta_d})\\
  &\leq \prod_{i=0}^T N_i.
\end{aligned}
\end{equation*}
Lastly, from Theorem~\ref{thm:tv_to_ell2}, we can conclude that
\begin{equation*}
    N_U(\epsilon, \cH,m,\|.\|_2^{\ell_2})\leq N_U\left(\frac{\epsilon}{2B\sqrt{p_T}},\rv{\cF},\infty,d_{TV}^{\ell_2},\rv{\Delta_d}\right)\leq \prod_{i=0}^T N_i.
\end{equation*}
\end{proof}

\section{Uniform convergence by bounding the covering number}\label{app:dudley}
In this section we provide some technical backgrounds that are related to estimating NVAC and finding valid $GB$s. Specifically, we discuss how to turn a bound on $\|.\|_2^{\ell_2} $covering number to a bound on generalization gap with respect to ramp loss.

{\bf Preliminaries.} For any $x\in\bR$, the ramp function $r_{\gamma}$ with respect to a margin $\gamma$ is defined as
\begin{equation*}
    r_{\gamma}(x)=\begin{cases}
    0 & x\leq -\gamma,\\
    1 + \frac{x}{\gamma} & [-\gamma,0],\\
    1 &\gamma > 0.
    \end{cases}
\end{equation*}
Let $x=[x^{(1)},\ldots,x^{(k)}]\transpose\in\bR^k$ be a vector and $\cY=[k]$. The margin function $\cM:\bR^k \times \cY \rightarrow\bR$ is defined as $\cM(x,i):=x^{(i)} - \max_{j\neq i}x^{(j)}$. Next, we define the ramp loss for classification.

\begin{definition}[Ramp loss]
Let $f:\cX\rightarrow \bR^{k}$ be a function and let $\cD$ be a distribution over $\cX\times\cY$ where $\cY=[k]$. We define the ramp loss of function $f$ with respect to margin parameter $\gamma$ as $l_{\gamma}(f)=\expects{(x,y)\sim \cD}{r_{\gamma}(-\cM\left(f(x),y)\right)}$. We also define the empirical counterpart of ramp loss on an input set $S\sim\cD^m$ by $\hat{l}_{\gamma}(f)=\frac{1}{m}\sum_{(x,y)\in S}r_{\gamma}(-\cM(f(x),y))$.
\end{definition}
It is worth mentioning that using (surrogate) ramp loss is a natural case for classification tasks; see e.g., \citet{boucheron2005theory,bartlett2006convexity}.

Next, we define the composition of a hypothesis class with the ramp loss function.
\begin{definition}[Composition with ramp loss)]
Let $\cF$ be a hypothesis class from $\cX$ to $\bR^k$ and $\cY=[k]$. We denote the class of its composition with the ramp loss function by $\cF_{\gamma}:\cX \times \cY \rightarrow [0,1]$ and define it as $\cF_{\gamma}=\left\{(f_{\gamma}(x,y)=r_{\gamma}\left(-\cM(f(x),y)\right): f\in\cF\right\}$. 
\end{definition}
The following lemma states that we can always bound the 0-1 loss by the ramp loss.
\begin{lemma}\label{lemma:zero_one}
Let $\cD$ be a distribution over $\cX\times\cY$, where $\cY=[k]$ and let $f$ be a function from $\cX$ to $\bR^{k}$. We have
\begin{equation*}
    \expects{(x,y)\sim \cD}{l^{0-1}(f(x),y)} \leq \expects{(x,y)\sim \cD}{r_{\gamma}(-\cM(f(x),y))} = l_{\gamma}(f).
\end{equation*}
\end{lemma}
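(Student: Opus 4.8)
The plan is to establish the inequality pointwise and then integrate. That is, I will show that for every pair $(x,y)\in\cX\times\cY$ one has
\[
l^{0-1}(f(x),y)\;\leq\; r_{\gamma}\!\left(-\cM(f(x),y)\right),
\]
after which the claimed inequality of expectations follows immediately from monotonicity of the expectation, and the final equality is nothing but the definition of the ramp loss $l_{\gamma}(f)$.

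To prove the pointwise bound, fix $(x,y)$ and write $u=f(x)\in\bR^k$. First I would record two elementary facts: (i) the ramp function $r_{\gamma}$ takes values in $[0,1]$, and in particular $r_{\gamma}(t)\geq 0$ for every $t$, and $r_{\gamma}(t)=1$ whenever $t\geq 0$; (ii) if ${\text{argmax}}_i u^{(i)}\neq y$, then there exists $j\neq y$ with $u^{(j)}\geq u^{(y)}$, hence $\cM(u,y)=u^{(y)}-\max_{j\neq y}u^{(j)}\leq 0$, i.e.\ $-\cM(u,y)\geq 0$. Then I split into the two cases. If ${\text{argmax}}_i u^{(i)}=y$, then $l^{0-1}(u,y)=0$, and since $r_{\gamma}\geq 0$ the inequality holds trivially. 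If ${\text{argmax}}_i u^{(i)}\neq y$, then $l^{0-1}(u,y)=1$, while by (ii) we have $-\cM(u,y)\geq 0$, so by (i) $r_{\gamma}(-\cM(u,y))=1$; thus the two sides are in fact equal.

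Having the pointwise inequality $l^{0-1}(f(x),y)\leq r_{\gamma}(-\cM(f(x),y))$ for all $(x,y)$, I take expectations over $(x,y)\sim\cD$ to obtain
\[
\expects{(x,y)\sim\cD}{l^{0-1}(f(x),y)}\;\leq\;\expects{(x,y)\sim\cD}{r_{\gamma}(-\cM(f(x),y))}\;=\;l_{\gamma}(f),
\]
which is the statement of the lemma.

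There is essentially no hard step here; the only point requiring a little care is the treatment of ties in the $\text{argmax}$ defining $l^{0-1}$ (ensuring that a misclassification really does force $\cM(f(x),y)\leq 0$ rather than merely $<0$) and the observation that $r_{\gamma}$ is nonnegative and equals $1$ on the nonnegative half-line, which are both immediate from the definitions.
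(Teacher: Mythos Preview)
Your proof is correct and is exactly the standard pointwise argument: the paper does not give its own proof but simply cites Section~A.2 of \citet{bartlett2017spectrally}, where precisely this case split (misclassification forces $\cM(f(x),y)\leq 0$, hence $r_\gamma(-\cM(f(x),y))=1$) is carried out. Your handling of ties is fine and there are no gaps.
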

For a proof of Lemma~\ref{lemma:zero_one} see section A.2 in \citet{bartlett2017spectrally}.

One way to bound the generalization gap of a learning algorithm is to find the rate of uniform convergence for class $\cF_{\gamma}$. We define uniform convergence in the following.
\begin{definition}[Uniform convergence]
Let $\cF$ be a hypothesis class and $l$ be a loss function. We say that $\cF$ has uniform convergence property if there exists some function $m_{UC}:(0,1)^2\rightarrow \bN$ such that for every distribution $\cD$ over $\cX\times\cY$ and any sample $S\sim\cD^m$ if $m\geq m_{UC}(\epsilon,\delta)$ with probability at least $1-\delta$ (over the randomness of $S$) for every hypothesis $f\in\cF$ we have
\begin{equation*}
    \left|\expects{(x,y)\sim \cD}{l(f(x),y)} -\frac{1}{m}\sum_{(x,y)\in S} l(f(x),y) \right|\leq \epsilon.
\end{equation*}
\end{definition}
An standard approach for finding the rate of uniform convergence is by analyzing the Rademacher complexity of $\cF_{\gamma}$. We now define the empirical Rademacher complexity.
\begin{definition}[Empirical Rademacher complexity]
Let $\cF$ be a class of hypotheses from $\cZ$ to $\bR$ and $\cD$ be a distribution over $\cZ$. The empirical Rademacher complexity of class $\cF$ with respect to sample $S=\left\{z_1,\ldots,z_m\right\}\sim \cD^m$ is denoted by $\hat{\fR}(\cF_{|S})$ and is defined as
\begin{equation*}
    \hat{\fR}(\cF_{|S}) = \expects{\sigma}{\sup_{f\in\cF} \sum_{i=1}^m\sigma_i f(z_i)}
\end{equation*}
where $\sigma=(\sigma_1,\ldots,\sigma_m)$ and $\sigma_i$ are i.i.d. Rademacher random variables uniformly drawn from $\{0,1\}$. 
\end{definition}

The following theorem relates the Rademacher complexity of $\cF_{\gamma}$ to its rate of uniform convergence and provides a generalization bound for the ramp loss and its empirical counterpart on a sample $S$.
\begin{theorem}\label{thm:rademacher}
Let $\cF$ be a class of functions from $\cX$ to $\bR^k$ and $\cD$ be a distribution over $\cX\times\cY$ where $\cY=[k]$. Let $S\sim\cD^m$ denote a sample. Then, for every $\delta$ and every $f\in\cF$, with probability at least $1-\delta$ (over the randomness of $S$) we have
\begin{equation*}
    l_{\gamma}(f) \leq \hat{l}_{\gamma}(f) + 2\hat{\fR}({\cF_{\gamma}}_{|S}) + 3 \sqrt{\frac{\ln(2/\delta)}{2m}}
\end{equation*}
\end{theorem}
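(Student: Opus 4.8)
The plan is to recognize this statement as an instance of the standard Rademacher‑complexity uniform convergence bound applied to the loss‑composed class $\cF_{\gamma}$. Writing $z_i=(x_i,y_i)$ and $\cZ=\cX\times\cY$, each $f_{\gamma}\in\cF_{\gamma}$ is a function $\cZ\to[0,1]$ (since $r_{\gamma}$ maps into $[0,1]$), and $l_{\gamma}(f)=\mathbb{E}_{z\sim\cD}[f_{\gamma}(z)]$, $\hat{l}_{\gamma}(f)=\frac1m\sum_{i=1}^m f_{\gamma}(z_i)$. Thus the claim is exactly the classical generalization bound for a bounded hypothesis class (see, e.g., Theorem~3.3 of \citet{mohri2018foundations} or Theorem~26.5 of \citet{shalev2014understanding}). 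I would either cite this directly or reproduce its short proof as outlined next.

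\emph{Step 1 (bounded differences for the sup‑gap).} Let $\Phi(S)=\sup_{f\in\cF}(l_{\gamma}(f)-\hat{l}_{\gamma}(f))$. Because $f_{\gamma}$ takes values in $[0,1]$, replacing one sample point $z_i$ by $z_i'$ changes every empirical average $\hat{l}_{\gamma}(f)$ by at most $1/m$, hence changes $\Phi$ by at most $1/m$. McDiarmid's bounded‑differences inequality then gives, with probability at least $1-\delta/2$ over $S$,
\[
\Phi(S)\le \mathbb{E}_S[\Phi(S)]+\sqrt{\frac{\ln(2/\delta)}{2m}}.
\]

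\emph{Step 2 (symmetrization and a second bounded‑differences step).} Introducing an independent ghost sample $S'=\{z_1',\dots,z_m'\}\sim\cD^m$ and then i.i.d.\ sign variables $\sigma_i$, the standard symmetrization argument yields $\mathbb{E}_S[\Phi(S)]\le 2\,\mathbb{E}_S[\hat{\fR}({\cF_{\gamma}}_{|S})]$ (with $\hat{\fR}$ normalized as in the definition above). Since $S\mapsto \hat{\fR}({\cF_{\gamma}}_{|S})$ also has bounded differences $1/m$ — a single change of $z_i$ moves the inner supremum by at most $1/m$ — a second application of McDiarmid gives, with probability at least $1-\delta/2$,
\[
\mathbb{E}_S[\hat{\fR}({\cF_{\gamma}}_{|S})]\le \hat{\fR}({\cF_{\gamma}}_{|S})+\sqrt{\frac{\ln(2/\delta)}{2m}}.
\]
A union bound over the two $(1-\delta/2)$ events, together with collecting the bare $\sqrt{\ln(2/\delta)/(2m)}$ from Step~1 and the factor‑$2$ copy from Step~2 into a single $3\sqrt{\ln(2/\delta)/(2m)}$, gives simultaneously for all $f\in\cF$ that $l_{\gamma}(f)\le \hat{l}_{\gamma}(f)+2\hat{\fR}({\cF_{\gamma}}_{|S})+3\sqrt{\ln(2/\delta)/(2m)}$, which in particular holds for the specific $f$ of interest.

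\emph{Main obstacle.} There is no genuine difficulty: the result is a textbook consequence of McDiarmid's inequality plus the symmetrization trick. The only points needing care are (i) verifying the bounded‑differences constant $1/m$ in both McDiarmid applications, which hinges on $\cF_{\gamma}\subseteq[0,1]^{\cZ}$ (a consequence of $r_{\gamma}$ being $[0,1]$‑valued), and (ii) keeping the normalization of $\hat{\fR}$ consistent with the definition and bookkeeping the numerical constant $3$.
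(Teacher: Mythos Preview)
Your proposal is correct and matches the paper's approach: the paper simply observes that the statement is an immediate instance of the standard Rademacher generalization bound (citing Theorem~3.3 of \citet{mohri2018foundations}) once one identifies $l_{\gamma}(f)$ and $\hat l_{\gamma}(f)$ with the population and empirical means of $f_{\gamma}\in\cF_{\gamma}\subseteq[0,1]^{\cX\times\cY}$. Your optional McDiarmid-plus-symmetrization sketch is precisely the textbook derivation underlying that cited theorem, so there is no substantive difference.
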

Theorem~\ref{thm:rademacher} is an immediate result of standard generalization bounds based on Rademacher complexity (see e.g. Theorem 3.3 in \citet{mohri2018foundations}) once we realize that $\expects{(x,y)\sim \cD}{f_{\gamma}}=l_{\gamma}(f)$ and $\frac{1}{m}\sum_{(x,y)\in S}f_{\gamma}(x,y)=\hat{l}_{\gamma}(f)$.

We will use Dudley entropy integral \citep{dudley2010universal} for chaining to bound the Rademacher complexity by covering number; see \citet{shalev2014understanding} for a proof.
\begin{theorem}[Dudley entropy integral]\label{thm:dudely}
Let $\cF$ be a class of hypotheses with bounded output in $[0,c_x]$. Then
\begin{equation*}
    \hat{\fR}(\cF_{|S}) \leq \inf_{\epsilon\in [0,c_x/2]}\left\{4\epsilon + \frac{12}{\sqrt{m}}\int_{\epsilon}^{c_x/2}\sqrt{\ln N_U(\nu,\cF,m,\|.\|_2^{\ell_2})}\,d\nu\right\}.
\end{equation*}
\end{theorem}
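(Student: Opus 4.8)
The plan is the classical chaining (Dudley) argument; throughout I use the standard normalisation $\hat{\fR}(\cF_{|S})=\frac1m\expects{\sigma}{\sup_{f\in\cF}\sum_{i=1}^m\sigma_i f(z_i)}$. First I would view $\cF_{|S}=\{a^{(f)}:=(f(z_1),\dots,f(z_m)):f\in\cF\}$ as a subset of $\bR^m$ equipped with the extended metric $\rho(a,b)=\|a-b\|_2^{\ell_2,m}=\tfrac1{\sqrt m}\|a-b\|_2$. Since $\expects{\sigma}{\langle\sigma,v\rangle}=0$ for every fixed $v$, and translation is an isometry (hence leaves both $\hat{\fR}$ and the covering numbers $N_U(\nu,\cF,m,\|.\|_2^{\ell_2})$ unchanged), I may subtract the fixed vector $(\tfrac{c_x}{2},\dots,\tfrac{c_x}{2})$ and assume $\cF_{|S}\subseteq[-\tfrac{c_x}{2},\tfrac{c_x}{2}]^m$, so that $\cF_{|S}$ lies in the $\rho$-ball of radius $\tfrac{c_x}{2}$ about $\mathbf0$. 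The one probabilistic tool I need is Massart's finite-class lemma: for a finite $D\subseteq\bR^m$ with $\max_{v\in D}\|v\|_2\le R$, $\expects{\sigma}{\max_{v\in D}\langle\sigma,v\rangle}\le R\sqrt{2\ln|D|}$, which follows from the sub-Gaussian tail of the Rademacher sum $\langle\sigma,v\rangle$ together with a union bound.

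Next I would set up the chain. Fix $\epsilon\in(0,\tfrac{c_x}{4}]$ (for $\epsilon\in(\tfrac{c_x}{4},\tfrac{c_x}{2}]$ the claimed bound is trivial, since then $\hat{\fR}(\cF_{|S})\le\tfrac{c_x}{2}<c_x\le4\epsilon$). Put $\epsilon_j=2^{-j}\tfrac{c_x}{2}$ for $j\ge0$. For $j\ge1$ take a minimal $\rho$-$\epsilon_j$-cover $C_j$ of $\cF_{|S}$, so $|C_j|\le N_U(\epsilon_j,\cF,m,\|.\|_2^{\ell_2})=:N(\epsilon_j)$; take $C_0=\{\mathbf0\}$ (a legitimate $\epsilon_0$-cover by the previous paragraph); and let $\pi_j:\cF_{|S}\to C_j$ be nearest-point maps with $\pi_0\equiv\mathbf0$. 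Let $N$ be the largest index with $\epsilon_N\ge\epsilon$, so $\epsilon\le\epsilon_N<2\epsilon$. For each $f$ I would telescope
\begin{equation*}
a^{(f)}=\pi_0(f)+\sum_{j=1}^{N}\bigl(\pi_j(f)-\pi_{j-1}(f)\bigr)+\bigl(a^{(f)}-\pi_N(f)\bigr),\qquad \pi_0(f)=\mathbf0,
\end{equation*}
pair with $\sigma$, and take $\sup_f$ and then $\expects{\sigma}{\cdot}$, using subadditivity of $\sup$ and of expectation.

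The base term is $0$. The tail term is bounded deterministically: $|\langle\sigma,a^{(f)}-\pi_N(f)\rangle|\le\|a^{(f)}-\pi_N(f)\|_1\le\sqrt m\,\|a^{(f)}-\pi_N(f)\|_2=m\,\rho\bigl(a^{(f)},\pi_N(f)\bigr)\le m\,\epsilon_N<2m\epsilon$. For the $j$-th link, $\pi_j(f)-\pi_{j-1}(f)$ ranges over at most $|C_j|\,|C_{j-1}|\le N(\epsilon_j)^2$ vectors (as $N(\cdot)$ is non-increasing, $N(\epsilon_{j-1})\le N(\epsilon_j)$), each of Euclidean norm $\le\sqrt m(\epsilon_j+\epsilon_{j-1})=3\sqrt m\,\epsilon_j$, so Massart's lemma gives $\expects{\sigma}{\sup_f\langle\sigma,\pi_j(f)-\pi_{j-1}(f)\rangle}\le 3\sqrt m\,\epsilon_j\sqrt{2\ln\bigl(N(\epsilon_j)^2\bigr)}=6\sqrt m\,\epsilon_j\sqrt{\ln N(\epsilon_j)}$. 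Summing over $j$, dividing by $m$, and then using that $\nu\mapsto\sqrt{\ln N_U(\nu,\cF,m,\|.\|_2^{\ell_2})}$ is non-increasing with $\epsilon_j-\epsilon_{j+1}=\tfrac{\epsilon_j}2$ to get $\epsilon_j\sqrt{\ln N(\epsilon_j)}\le 2\int_{\epsilon_{j+1}}^{\epsilon_j}\sqrt{\ln N_U(\nu,\cF,m,\|.\|_2^{\ell_2})}\,d\nu$, I would obtain
\begin{equation*}
\hat{\fR}(\cF_{|S})\le 2\epsilon+\frac{12}{\sqrt m}\int_{\epsilon/2}^{c_x/2}\sqrt{\ln N_U(\nu,\cF,m,\|.\|_2^{\ell_2})}\,d\nu
\end{equation*}
(the limits following from $\epsilon_{N+1}\ge\tfrac\epsilon2$ and $\epsilon_1\le\tfrac{c_x}2$). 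Replacing $\epsilon$ by $2\epsilon$ (allowed because $\epsilon\le\tfrac{c_x}4$) turns this into $4\epsilon+\frac{12}{\sqrt m}\int_{\epsilon}^{c_x/2}\sqrt{\ln N_U(\nu,\cF,m,\|.\|_2^{\ell_2})}\,d\nu$, and taking the infimum over $\epsilon\in[0,c_x/2]$ (together with the trivial large-$\epsilon$ regime) finishes the argument.

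As for difficulty: there is no conceptual obstacle --- this is textbook chaining --- so the only point requiring care is numerical bookkeeping: applying Massart's bound at each scale with the correct radius ($3\sqrt m\,\epsilon_j$) and count ($N(\epsilon_j)^2$), and then matching the telescoped Riemann sum to the integral over exactly $[\epsilon,c_x/2]$, which is what forces the mild reparametrisation $\epsilon\mapsto 2\epsilon$ and the separate handling of $\epsilon>\tfrac{c_x}4$.
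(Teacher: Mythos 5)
Your proof is correct and is the standard Dudley chaining argument; the paper itself does not prove this theorem but simply cites \citet{dudley2010universal} and \citet{shalev2014understanding}, so you have effectively reproduced the textbook proof the authors defer to. Your centering step, your use of Massart's finite-class lemma at each scale with radius $3\sqrt{m}\,\epsilon_j$ and count $N(\epsilon_j)^2$, and your Riemann-sum-to-integral step all check out and deliver the constant $12$.

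One small bookkeeping point worth tightening: after the substitution $\epsilon\mapsto 2\epsilon$, your intermediate bound only directly yields the stated inequality for $\epsilon\in(0,c_x/8]$, whereas you dismissed only $\epsilon\in(c_x/4,c_x/2]$ as trivial. In fact the trivial regime is $\epsilon>c_x/8$ (there $4\epsilon>c_x/2\ge\hat{\fR}(\cF_{|S})$ after centering), so the two ranges $(0,c_x/8]$ and $(c_x/8,c_x/2]$ do cover everything; and the endpoint $\epsilon=0$ is handled by continuity of $\epsilon\mapsto 4\epsilon+\frac{12}{\sqrt m}\int_{\epsilon}^{c_x/2}\sqrt{\ln N_U(\nu,\cF,m,\|.\|_2^{\ell_2})}\,d\nu$, so the infimum over $[0,c_x/2]$ equals the infimum over $(0,c_x/2]$. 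With that one-line adjustment the argument is complete.
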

Putting Theorems~\ref{thm:rademacher}, \ref{thm:dudely}, and Lemma~\ref{lemma:zero_one} together, we are now ready to state the following theorem to bound the 0-1 loss based on the covering number of $\cF_{\gamma}$ and empirical ramp loss.
\begin{theorem}\label{thm:gen_cover}
Let $\cF$ be a class of functions from $\cX$ to $\bR^k$ and $\cD$ be a distribution over $\cX\times \cY$ where $\cY=[k]$. Let $S\sim\cD^m$ be a sample. Then, with probability at least $1-\delta$ (over the randomness of $S$) for every $f\in\cF$ we have
\begin{equation*}
\begin{aligned}
        &\expects{(x,y)\sim \cD}{l^{0-1}(f(x),y)}\leq \\
       &l_{\gamma}(f) \leq \hat{l}_{\gamma}(f) +\inf_{\epsilon\in [0,1/2]}\left\{2\left[4\epsilon + \frac{12}{\sqrt{m}}\int_{\epsilon}^{1/2}\sqrt{\ln N_U(\nu,\cF_{\gamma},m,\|.\|_2^{\ell_2})}\,d\nu\right]\right\}+ 3 \sqrt{\frac{\ln(2/\delta)}{2m}}.
\end{aligned}
\end{equation*}
\end{theorem}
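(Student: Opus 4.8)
The plan is to assemble Theorem~\ref{thm:gen_cover} by chaining together the three results that precede it, applied to the surrogate class $\cF_\gamma$ rather than to $\cF$ itself. The key observation is that $\cF_\gamma$ is a class of functions from $\cX\times\cY$ to $[0,1]$, so it has bounded output with $c_x=1$, which is exactly the hypothesis needed for Dudley's integral (Theorem~\ref{thm:dudely}) and puts us in the regime where the Rademacher bound of Theorem~\ref{thm:rademacher} applies.

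First I would invoke Theorem~\ref{thm:rademacher} with the class $\cF$ and its induced surrogate class $\cF_\gamma$: with probability at least $1-\delta$ over $S\sim\cD^m$, for every $f\in\cF$,
\begin{equation*}
    l_\gamma(f)\leq \hat l_\gamma(f) + 2\hat\fR({\cF_\gamma}_{|S}) + 3\sqrt{\frac{\ln(2/\delta)}{2m}}.
\end{equation*}
Next I would bound the Rademacher term $\hat\fR({\cF_\gamma}_{|S})$ using Theorem~\ref{thm:dudely} applied to $\cF_\gamma$ with $c_x=1$, giving
\begin{equation*}
    \hat\fR({\cF_\gamma}_{|S})\leq \inf_{\epsilon\in[0,1/2]}\left\{4\epsilon + \frac{12}{\sqrt m}\int_\epsilon^{1/2}\sqrt{\ln N_U(\nu,\cF_\gamma,m,\|.\|_2^{\ell_2})}\,d\nu\right\}.
\end{equation*}
Substituting this into the previous display (and noting the factor of $2$ distributes over the infimum since the bracketed quantity is nonnegative) yields the middle inequality of the theorem. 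Finally, I would prepend Lemma~\ref{lemma:zero_one}, which gives $\expects{(x,y)\sim\cD}{l^{0-1}(f(x),y)}\leq l_\gamma(f)$ deterministically, to obtain the leftmost inequality. Stringing the three together produces exactly the claimed chain of inequalities, holding with probability at least $1-\delta$ and simultaneously for all $f\in\cF$.

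I do not expect a genuine obstacle here — this is a bookkeeping argument. The only points requiring a little care are: (i) confirming that $\cF_\gamma$ really does have range contained in $[0,1]$, which follows immediately from the definition of the ramp function $r_\gamma$ taking values in $[0,1]$; (ii) making sure the ``for every $f\in\cF$'' quantifier in Theorem~\ref{thm:rademacher} is uniform over the same high-probability event, which it is, since the Rademacher-based bound is a uniform-convergence statement; and (iii) checking that pulling the constant $2$ inside the infimum is valid, which holds because $4\epsilon + \tfrac{12}{\sqrt m}\int_\epsilon^{1/2}\sqrt{\ln N_U(\nu,\cF_\gamma,m,\|.\|_2^{\ell_2})}\,d\nu\geq 0$ for every admissible $\epsilon$, so doubling the infimand equals the infimum of the doubled infimand. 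With these three checks in place the proof is just the composition of the cited results, and I would write it in three short displayed lines.
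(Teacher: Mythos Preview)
Your proposal is correct and matches the paper's approach exactly: the paper simply states that Theorem~\ref{thm:gen_cover} follows by putting together Theorem~\ref{thm:rademacher}, Theorem~\ref{thm:dudely} (with $c_x=1$ since $\cF_\gamma$ takes values in $[0,1]$), and Lemma~\ref{lemma:zero_one}, which is precisely the three-step chain you describe.
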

We will use above theorem in the next appendix to estimate NVAC based on $\|.\|_2^{\ell_2}$ covering number of composition of a class with ramp loss.
\section{Estimating NVAC using the covering number}\label{app:nvacfromCover}
In this appendix, we will use Theorem~\ref{thm:gen_cover} to establish a way of approximating NVAC from a covering number bound. In Remark~\ref{rem:nvac} we state the technique used to approximate NVAC and in the following we will justify why this would be a good approximation.
\begin{remark}\label{rem:nvac}
Let $\cF$ be a hypothesis class from $\cX$ to $\bR^{k}$, $S$ be a sample of size $m$ and $\hat{h}\in\cF$. We find $n^*$ such that the following holds
\begin{equation}\label{eq:e0}
     \frac{6}{\sqrt{mn^*}}\sqrt{\ln N_U(\epsilon, \cF_{\gamma}, mn^*, \|.\|_2^{\ell_2})} \leq \epsilon, \quad \epsilon = \frac{1 - \hat{l}_{\gamma}(\hat{h})}{10},
\end{equation}
and choose $mn^*$ as an approximation of NVAC. Here, $\hat{l}_{\gamma}(\hat{h})$ is the empirical ramp loss of $\hat{h}$ on sample $S$. In Appendix~\ref{app:nvacgraphs}, where we empirically compare NVAC of different covering number bounds, we choose $S$ to be the MNIST dataset and $\hat{h}$ as the trained neural network (from a class $\cF$ of all neural networks with a certain architecture) on this dataset.
\end{remark}
In the following we discuss why this choice of $mn^*$ is a good estimate of NVAC. First, let $S^n \in (\cX \times \cY)^{mn}$ be an input set and $\cD$ be a distribution over $(\cX\times\cY)$, where $mn$ is larger than $mn^*$ as found in Remark~\ref{rem:nvac}. From Theorem~\ref{thm:gen_cover} and using the fact that the ramp loss is in $[0,1]$ we can write
\begin{equation}\label{eq:e1}
\begin{aligned}
     &\expects{(x,y)\sim \cD}{l^{0-1}(\hat{h}(x),y)}\leq \hat{l}_{\gamma}(\hat{h}) + 2\fR({\cF_{\gamma}}_{|S^n}) + 3 \sqrt{\frac{\ln(2/\delta)}{2mn}}\\
     & \leq \hat{l}_{\gamma}(\hat{h}) +  \inf_{\epsilon\in [0,1/2]}\left\{2\left[4\epsilon + \frac{12}{\sqrt{mn}}\int_{\epsilon}^{1/2}\sqrt{\ln N_U(\nu,\cF_{\gamma},mn,\|.\|_2^{\ell_2})}\,d\nu\right]\right\} + 3 \sqrt{\frac{\ln(2/\delta)}{2mn}}.
\end{aligned}
\end{equation}
Since $S^n$ consists of $n$ copies of the sample $S$, we can replace $\hat{l}_{\gamma}(\hat{h})$ on $S^n$ by the ramp loss of $\hat{h}$ on $S$ (this would be equal to the ramp loss of trained neural network when we empirically compare NVACs in Appendix~\ref{app:nvacgraphs}). Moreover, since the number of samples are very large and $\delta=0.01$, we can approximate the last term in the right hand side of Equation~\ref{eq:e1} with zero. Therefore, we can write that
\begin{equation}\label{eq:e2}
\begin{aligned}
     &\expects{(x,y)\sim \cD}{l^{0-1}(\hat{h}(x),y)}\\
     &\leq\ \hat{l}_{\gamma}(\hat{h}) +  \inf_{\epsilon\in [0,1/2]}\left\{2\left[4\epsilon + \frac{12}{\sqrt{mn}}\int_{\epsilon}^{1/2}\sqrt{\ln N_U(\nu,\cF_{\gamma},mn,\|.\|_2^{\ell_2})}\,d\nu\right]\right\}\\
     &\leq \hat{l}_{\gamma}(\hat{h}) +  2\left[4\epsilon + \frac{12}{\sqrt{mn^*}}\int_{\epsilon}^{1/2}\sqrt{\ln N_U(\nu,\cF_{\gamma},mn^*,\|.\|_2^{\ell_2})}\,d\nu\right] && (\forall \epsilon \in [0,1/2])\\
     &\leq\ \hat{l}_{\gamma}(\hat{h}) +  2\left[4\epsilon + \frac{6}{\sqrt{mn^*}}\sqrt{\ln N_U(\epsilon,\cF_{\gamma},mn^*,\|.\|_2^{\ell_2})}\right],
\end{aligned}
\end{equation}
where we used the fact that $N_U(\epsilon,\cF_{\gamma},mn,\|.\|_2^{\ell_2})$ decreases monotonically with $\epsilon$ and the integral is over $[\epsilon,1/2]$. Note that in the above equation we subtly used the fact that covering number grows at most polynomially with the number of samples and, therefore, increasing number of samples will always result in smaller right hand side term in Equation~\ref{eq:e2}. In Appendix~\ref{app:coverapproaches}, we will show why this is a valid assumption for the covering number bounds that we use in our experiments (see Remark~\ref{rem:sample_grow}). 

Since Equation~\ref{eq:e2} holds for any $\epsilon \in [0,1/2]$, we can set $\epsilon = (1 - \hat{l}_{\gamma}(\hat{h}))/10$ and conclude that
\begin{equation}\label{eq:e3}
\begin{aligned}
     &\expects{(x,y)\sim \cD}{l^{0-1}(\hat{h}(x),y)}\\
     &\leq\ \hat{l}_{\gamma}(\hat{h}) +  2\left[4\epsilon + \frac{6}{\sqrt{mn^*}}\sqrt{\ln N_U(\epsilon,\cF_{\gamma},mn^*,\|.\|_2^{\ell_2})}\right]\\
     & \leq \hat{l}_{\gamma}(\hat{h}) + 2\,\frac{5(1-\hat{l}_{\gamma}(\hat{h})) }{10}\\
     &\leq 1.
\end{aligned}
\end{equation}
From the above equation, we can conclude that by setting $mn$ to be larger that $mn^*$ as defined in Remark~\ref{rem:nvac}, we can provide the following valid generalization bound with respect to $l^{0-1}$ and $l_{\gamma}$:
\begin{equation*}
    GB(\hat{h},S^n) = 2\left[4\epsilon + \frac{6}{\sqrt{mn }}\sqrt{\ln N_U(\epsilon,\cH,mn,\|.\|_2^{\ell_2})}\right].
\end{equation*}
Moreover, for any $S^n$ such that $mn\geq mn^*$ we can conclude that the $GB$ defined above results in a non-vacuous bound, i.e.,
\begin{equation*}
    GB(\hat{h},S^{n}) + \hat{l}_{\gamma}(\hat{h}) \leq 1,
\end{equation*}
which concludes that $mn^*$ is a reasonable approximation for NVAC.

In the next appendix, we discuss different covering number bounds that were mentioned in Section~\ref{subsec:cover_bounds}. We state the exact form of these covering number bounds for a general $T$-layer network in Appendix~\ref{app:coverapproaches}. Finally in Appendix~\ref{app:nvacgraphs}, we present the settings of our experiments and the empirical results of NVAC found by Remark~\ref{rem:nvac}.
\section{Different approaches to bound the covering number}\label{app:coverapproaches}
In the following, we will state the covering number bounds that were compared in Sections~\ref{subsec:cover_bounds} and \ref{sec:experiments}. We first give two preliminary lemmas. Lemma~\ref{lemma:cover_to_margin} connects the covering number of a hypothesis class $\cF$ to the covering number of $\cF_{\gamma}$, which is used in Remark~\ref{rem:nvac} to obtain NVAC and generalization bounds. In Lemma~\ref{lemma:real_high} we will show a way to find the covering number of a class of functions from $\bR^d$ to $\bR^p$ from the covering number of real-valued classes that correspond to each dimension. We will use this lemma when we want to compare covering number bounds in the literature that are given for real-valued functions, i.e., Norm-based, Lipschitzness-based, and Pseudo-dim-based bounds.

In the following remark, we will discuss the motivation behind the choice of specific generalization bounds in
Section~\ref{sec:experiments}.
\begin{remark}[Choice of generalization bounds]
In our experiments in Section~\ref{sec:experiments} we have not assessed the PAC-Bayes bound in \citet{neyshabur2017pac} since it is always looser than the Spectral bound of \citep{bartlett2017spectrally}; see \citet{neyshabur2017pac} for a discussion. Furthermore, we exclude the generalization bounds that are proved in ``two steps''. For example, a naive two-step approach is to divide the training data into a large and a small subsets; one can then train the network using the large set and evaluate the resulting hypothesis using the small set. This will give a rather tight generalization bound since in the second step we are evaluating a single hypothesis. However, it does not explain why the learning worked well (i.e., how the learning model came up with a good hypothesis in the first step). 
More sophisticated two-step approaches such as \citet{DR17,arora2018stronger,zhou2019non} offer more insights on why the model generalizes. However, they do not fully explain why the first step works well (i.e., the prior distribution in \citet{DR17} or the uncompressed network in \cite{arora2018stronger,zhou2019non}. Therefore, we focus on bounds based on covering numbers (uniform convergence).
\end{remark}
Next, we state the preliminaries lemmas that we use for some of the covering number bounds in literature to relate them to covering numbers for the composition of neural networks with the ramp loss.
\begin{lemma}[From covering number of $\cF$ to covering number of $\cF_{\gamma}$]\label{lemma:cover_to_margin}
Let $\cF$ be a hypothesis class of functions from $\cX$ to $\bR^k$ and $\cF_{\gamma}:\cX \times \cY \rightarrow [0,1]$ be the class of its composition with ramp loss, where $\cY=[k]$. Then we have
\begin{equation*}
    N_U(\epsilon,\cF_{\gamma},m,\|.\|_2^{\ell_2}) \leq N_U(\frac{\gamma\epsilon}{2},\cF,m,\|.\|_2^{\ell_2}). 
\end{equation*}
\end{lemma}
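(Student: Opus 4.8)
The plan is to show that the map sending $f \in \cF$ to $f_\gamma \in \cF_\gamma$ is Lipschitz (with constant $2/\gamma$) with respect to the relevant extended $\ell_2$ metrics, and then transfer a cover directly. First I would fix an arbitrary sample $S = \{(x_1,y_1),\dots,(x_m,y_m)\} \subseteq \cX \times \cY$ and let $S_{\cX} = \{x_1,\dots,x_m\}$ be the projection onto the inputs. Let $C$ be a minimal $\tfrac{\gamma\epsilon}{2}$-cover of $\cF_{|S_{\cX}}$ with respect to $\|.\|_2^{\ell_2,m}$, realized by functions $\hat f_1,\dots,\hat f_r \in \cF$ with $r = N(\tfrac{\gamma\epsilon}{2}, \cF_{|S_{\cX}}, \|.\|_2^{\ell_2,m}) \le N_U(\tfrac{\gamma\epsilon}{2},\cF,m,\|.\|_2^{\ell_2})$. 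I claim $\{(\hat f_i)_\gamma{}_{|S}\}_{i=1}^r$ is an $\epsilon$-cover of $(\cF_\gamma)_{|S}$ with respect to $\|.\|_2^{\ell_2,m}$.

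The key step is the pointwise Lipschitz estimate: for any $f, \hat f \in \cF$ and any $(x,y)$,
\begin{equation*}
  |f_\gamma(x,y) - \hat f_\gamma(x,y)| = |r_\gamma(-\cM(f(x),y)) - r_\gamma(-\cM(\hat f(x),y))| \le \frac{1}{\gamma}\,|\cM(f(x),y) - \cM(\hat f(x),y)|,
\end{equation*}
using that $r_\gamma$ is $\tfrac{1}{\gamma}$-Lipschitz. Then I would bound the difference of margins: since $\cM(u,y) = u^{(y)} - \max_{j\neq y} u^{(j)}$, and both $v \mapsto v^{(y)}$ and $v \mapsto \max_{j\neq y} v^{(j)}$ are $1$-Lipschitz in $\|.\|_\infty$ (hence in $\|.\|_2$), we get $|\cM(f(x),y) - \cM(\hat f(x),y)| \le 2\|f(x) - \hat f(x)\|_2$. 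Combining, $|f_\gamma(x,y) - \hat f_\gamma(x,y)| \le \tfrac{2}{\gamma}\|f(x)-\hat f(x)\|_2$.

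Finally I would propagate this through the $\ell_2$-extended metric: given any $f \in \cF$, pick $\hat f_i$ with $\|.\|_2^{\ell_2,m}((f(x_k))_k, (\hat f_i(x_k))_k) \le \tfrac{\gamma\epsilon}{2}$, i.e.\ $\tfrac1m\sum_k \|f(x_k)-\hat f_i(x_k)\|_2^2 \le (\tfrac{\gamma\epsilon}{2})^2$. Then
\begin{equation*}
  \big(\|.\|_2^{\ell_2,m}\big)^2\big((f_\gamma(x_k,y_k))_k,((\hat f_i)_\gamma(x_k,y_k))_k\big) = \frac1m\sum_{k=1}^m |f_\gamma(x_k,y_k) - (\hat f_i)_\gamma(x_k,y_k)|^2 \le \frac{4}{\gamma^2}\cdot\frac1m\sum_{k=1}^m\|f(x_k)-\hat f_i(x_k)\|_2^2 \le \epsilon^2,
\end{equation*}
so the scaled cover works. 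Since $S$ was arbitrary, taking the supremum over $S$ of size $m$ on both sides gives the claim. I do not anticipate a genuine obstacle here; the only mild subtlety is being careful that the cover for $\cF$ is taken with respect to the input projection $S_\cX$ while the cover for $\cF_\gamma$ lives over the labeled sample $S$ — but since $\cF_\gamma$'s value at $(x,y)$ depends on $f$ only through $f(x)$, a cover over $S_\cX$ automatically induces one over $S$, so this is a bookkeeping point rather than a difficulty. (One should also note $\cF_\gamma$ has range $[0,1]$ so the uniform covering number is well-defined; this is immediate from the definition of $r_\gamma$.)
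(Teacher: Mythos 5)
Your proposal is correct and follows essentially the same route as the paper's proof: you fix a sample, take a $\tfrac{\gamma\epsilon}{2}$-cover of $\cF$ restricted to the inputs, and transfer it via the $\tfrac{2}{\gamma}$-Lipschitzness of $r_\gamma \circ (-\cM)$ through the $\ell_2$-extended metric. The only superficial difference is that you spell out the projection $S \mapsto S_{\cX}$ and the argument for why $\cM$ is $2$-Lipschitz (the paper cites these as known facts from \citet{bartlett2017spectrally}), but the structure of the argument is identical.
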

\begin{proof}
First, it is easy to verify that $r_{\gamma}$ and $-\cM(x,y)$ (with respect to the first input) are Lipschitz continuous functions with respect to $\|.\|_2$ with Lipschitz factors of $1/\gamma$ and $2$, respectively; see e.g., section A.2 in \citet{bartlett2017spectrally}. Therefore, we can conclude that $r_{\gamma}\left(-\cM(f(x),y)\right)$ is Lipschitz continuous with Lipschitz factor of $2/\gamma$. 

Fix an input set $S=\{(x_1,y_1),\ldots,(x_m,y_m)\}\subset \cX \times \cY$ and let $C=\{\hat{f_i}_{|S}\mid \hat{f_i}\in\cF,\, i\in[r]\}$ be an $(\gamma\epsilon/2)$-cover for $\cF_{|S}$. In the following, we will denote the composition of $\hat{f_i}$ with ramp loss by $\hat{f}_{{\gamma,i}}$ for the simplicity of notation. Now, we prove that $C_{\gamma}=\{\hat{f}_{\gamma,i_{|S}}\mid \hat{f}_{{\gamma,i}}\in\cF_{\gamma},\, i\in[r]\}$ is also an $\epsilon$-cover for ${\cF_\gamma}_{|S}$.

Given any $f\in\cF$, there exists $\hat{f_i}_{|S}\in C$ such that
\begin{equation*}
    \left\| (\hat{f}_i(x_1),\ldots,\hat{f}_i(x_m))-(f(x_1),\ldots,f(x_m))\right\|_2^{\ell_2}\leq \frac{\gamma\epsilon}{2}.
\end{equation*}
We can then write that
\begin{equation}\label{eq:0}
    \begin{aligned}
          &\left\| (\hat{f}_{{\gamma,i}}(x_1),\ldots,\hat{f}_{{\gamma,i}}(x_m))-(f_{\gamma}(x_1),\ldots,f_{\gamma}(x_m))\right\|_2^{\ell_2}\\
      & = \sqrt{\frac{1}{m} \sum_{k=1}^{m} \left( \hat{f}_{{\gamma,i}}(x_k) - (f_{\gamma}(x_k) \right)^2}\\
      &\leq  \sqrt{\frac{1}{m} \sum_{k=1}^{m} \left( r_{\gamma}\left(-\cM(\hat{f}_i(x_k),y_k)\right) - r_{\gamma}(-\cM(f(x_k),y_k))\right)^2}
    \end{aligned}
\end{equation}
From the Lipschitz continuity of $r_{\gamma}\left(-\cM(x,y)\right)$ we can conclude that for any $(x,y)\in\cX \times \cY$
\begin{equation*}
    \begin{aligned}
     \left|r_{\gamma}\left(-\cM(f(x),y)\right) - r_{\gamma}(-\cM(\hat{f}_i(x),y))\right|\leq \frac{1}{\gamma} \|\cM(\hat{f}_i(x),y) - \cM(f(x),y)\|_2 \leq  \frac{2}{\gamma}\|\hat{f}_i(x) - f(x)\|_2 
    \end{aligned}.
\end{equation*}
Taking the above equation into account, we can rewrite Equation~\ref{eq:0} as
\begin{equation*}
    \begin{aligned}
      &\left\| (\hat{f}_{{\gamma,i}}(x_1),\ldots,\hat{f}_{{\gamma,i}}(x_m))-(f_{\gamma}(x_1),\ldots,f_{\gamma}(x_m))\right\|_2^{\ell_2}\\
      &\leq \frac{2}{\gamma} \sqrt{\frac{1}{m} \sum_{k=1}^{m} \left( (\hat{f}_i(x_k) - f(x_k))\right)^2}\\
      & \leq   \frac{2}{\gamma} \left\| (\hat{f}_i(x_1),\ldots,\hat{f}_i(x_m))-(f(x_1),\ldots,f(x_m))\right\|_2^{\ell_2}\\
      &\leq \frac{2}{\gamma}\frac{\gamma\epsilon}{2}\\
      &\leq \epsilon.
    \end{aligned}
\end{equation*}
In other words, for any $f_{\gamma_{|S}}\in{\cF_\gamma}_{|S}$ there exists $\hat{f}_{\gamma,i_{|S}}\in S$ such that $\left \|\hat{f}_{\gamma,i_{|S}} - f_{\gamma_{|S}}\right\|_2^{\ell_2}\leq \epsilon$ and, therefore, $C_{\gamma}$ is an $\epsilon$-cover for ${\cF_\gamma}_{|S}$ and the result follows.
\end{proof}
The following lemma finds a covering number for a class of functions with outputs in $\bR^p$ from the covering number of the classes of real-valued functions corresponding to each dimension.  
\begin{lemma}\label{lemma:real_high}
Let $\cF_{1},\ldots,\cF_{p}:\cX \rightarrow \bR$ be $p$ classes of real valued functions. Further let $\cF=\left \{f(x)=[f_1(x),\ldots,f_p(x)]\transpose \mid f_i\in\cF_i,\,i\in [p]\right \}$ be a class of functions from $\cX$ to $\bR^p$, where each dimension $i$ in their output comes from the output of a real-valued function in $\cF_i$. Then, we have
\begin{equation*}
    N_U(\epsilon,\cF,m,\|.\|_2^{\ell_2}) \leq \prod_{i=1}^p N_U(\frac{\epsilon}{\sqrt{p}},\cF_i,m,\|.\|_2^{\ell_2}).
\end{equation*}
\end{lemma}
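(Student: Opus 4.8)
\textbf{Proof plan for Lemma~\ref{lemma:real_high}.}

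The plan is to take a product construction: cover each coordinate class $\cF_i$ separately (with a slightly smaller radius $\epsilon/\sqrt p$), and then show the Cartesian product of these covers is a valid cover for the vector-valued class $\cF$ with radius $\epsilon$. First I would fix an arbitrary input set $S=\{x_1,\dots,x_m\}\subseteq\cX$ with $|S|=m$, since the uniform covering number is the supremum over all such $S$. For each $i\in[p]$, let $C_i=\{\hat f_{i,1|S},\dots,\hat f_{i,r_i|S}\}$ be an $(\epsilon/\sqrt p)$-cover of ${\cF_i}_{|S}$ with respect to $\|.\|_2^{\ell_2}$, where $r_i=N(\epsilon/\sqrt p,{\cF_i}_{|S},\|.\|_2^{\ell_2,m})\le N_U(\epsilon/\sqrt p,\cF_i,m,\|.\|_2^{\ell_2})$.

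Next I would define the candidate cover for ${\cF}_{|S}$ as the set of all tuples obtained by stacking one chosen cover element per coordinate: $\hat C=\{(\hat f_{1,j_1}(x_k),\dots,\hat f_{p,j_p}(x_k))_{k\in[m]} \mid j_i\in[r_i]\}$, whose cardinality is at most $\prod_{i=1}^p r_i$. Then, given any $f=(f_1,\dots,f_p)\in\cF$, for each coordinate $i$ pick $\hat f_{i,j_i}\in\cF_i$ such that $\|(\hat f_{i,j_i}(x_1),\dots,\hat f_{i,j_i}(x_m))-(f_i(x_1),\dots,f_i(x_m))\|_2^{\ell_2}\le\epsilon/\sqrt p$, i.e.\ $\frac1m\sum_{k=1}^m (\hat f_{i,j_i}(x_k)-f_i(x_k))^2\le\epsilon^2/p$. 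The key computation is then to bound the $\|.\|_2^{\ell_2}$ distance of the stacked approximation from $f$ on $S$: for each $k$, $\|(\hat f_{1,j_1}(x_k),\dots,\hat f_{p,j_p}(x_k))-f(x_k)\|_2^2=\sum_{i=1}^p(\hat f_{i,j_i}(x_k)-f_i(x_k))^2$, so averaging over $k$ and summing over $i$ gives $\frac1m\sum_{k=1}^m\|\cdot\|_2^2=\sum_{i=1}^p\frac1m\sum_{k=1}^m(\hat f_{i,j_i}(x_k)-f_i(x_k))^2\le\sum_{i=1}^p\epsilon^2/p=\epsilon^2$, hence the $\ell_2$-extended distance is at most $\epsilon$. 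This shows $\hat C$ is an $\epsilon$-cover of ${\cF}_{|S}$, so $N(\epsilon,{\cF}_{|S},\|.\|_2^{\ell_2,m})\le\prod_{i=1}^p r_i\le\prod_{i=1}^p N_U(\epsilon/\sqrt p,\cF_i,m,\|.\|_2^{\ell_2})$; taking the supremum over all $S$ with $|S|=m$ yields the claim.

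This argument is essentially routine bookkeeping; there is no real obstacle. The only point requiring a little care is getting the radius split right — namely that shrinking each coordinate cover by the factor $1/\sqrt p$ is exactly what makes the Pythagorean sum over the $p$ coordinates come out to $\epsilon^2$ rather than $p\epsilon^2$ — and making sure the interchange of the finite sums over $k$ and $i$ is written cleanly. One should also note the product cover need not be minimal, but that is fine since covering numbers are defined via the smallest cover and any valid cover gives an upper bound. A completely analogous statement would hold for $\|.\|_2^{\infty}$ with the same radius split replaced by $\epsilon$ (no $\sqrt p$ needed), but that is not what is being claimed here.
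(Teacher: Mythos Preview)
Your proof is correct and follows essentially the same approach as the paper: fix $S$, take $(\epsilon/\sqrt p)$-covers of each ${\cF_i}_{|S}$, form the product cover, and use the Pythagorean identity $\|f(x_k)-\hat f(x_k)\|_2^2=\sum_{i}(f_i(x_k)-\hat f_i(x_k))^2$ together with the swap of the finite sums over $k$ and $i$ to bound the $\|.\|_2^{\ell_2}$ distance by $\epsilon$. One small correction to your closing aside: for the $\|.\|_2^{\infty}$ extended metric the inner norm is still the Euclidean norm on $\bR^p$, so the $1/\sqrt p$ radius split is needed there as well (if each coordinate satisfies $\max_k|f_i(x_k)-\hat f_i(x_k)|\le\epsilon/\sqrt p$ then $\max_k\|f(x_k)-\hat f(x_k)\|_2\le\epsilon$, but with per-coordinate radius $\epsilon$ you only get $\sqrt p\,\epsilon$).
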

\begin{proof}
Fix an input set $S=\{x_1,\ldots,x_m\}\subset \cX$. Let $C_1,\ldots,C_p$ be $(\epsilon/\sqrt{p})$-covers for ${\cF_1}_{|S},\ldots,{\cF_p}_{|S}$, respectively. We will construct the set $C$ as follows and prove that $C$ is an $\epsilon$-cover for $\cF_{|S}$
\begin{equation*}
    C=\left\{ [\hat{f}_1(x_k),\ldots,\hat{f_p}(x_k)]\transpose \mid \hat{f_i}_{|S}\in C_i, \, i \in [p],\, k \in [m]\right\}.
\end{equation*}
Particularly, from each class $\cF_i$, we are choosing all functions $\hat{f_i}$ such that $\hat{f_i}_{|S}$ is in $C_i$. We then use those functions as the dimension $i$ of the output to get functions $f\in\cF$. Then we put the restriction of these functions to set $S$ in $C$. Clearly, $|C|\leq \prod_{i=1}^p |C_i|$.

Let $f(x)=[f_1(x),\ldots,f_p(x)]\transpose$ be any function in $\cF$. Since $C_1,\ldots,C_p$ are $(\epsilon/\sqrt{p})$-covers for $\cF_1,\ldots,\cF_p$ we know that there exists another set of functions $\hat{f_i}\in\cF_i,\,i\in[p]$ such that $\hat{f_i}_{|S}\in C_i$ and
\begin{equation*}
    \left\| (\hat{f_i}(x_1),\ldots,\hat{f_i}(x_m)) - (f_i(x_1),\ldots,f_i(x_m))\right\|_2^{\ell_2}\leq \frac{\epsilon}{\sqrt{p}}, \quad \forall i\in [p].
\end{equation*}
Let $\hat{f}(x)=[\hat{f_1}(x),\ldots,\hat{f_p}(x)]\transpose$. We can then write that
\begin{equation*}
    \begin{aligned}
     \left \| f_{|S} - \hat{f}_{|S}\right\|_2^{\ell_2}= &\left \| (f(x_1),\ldots,f(x_m)) - (\hat{f}(x_1),\ldots,\hat{f}(x_m))\right\|_2^{\ell_2}\\
     & = \sqrt{\frac{1}{m}\sum_{k=1}^m \left\|f(x_k)-\hat{f}(x_k) \right\|_2^2}\\
     & \leq \sqrt{\frac{1}{m}\sum_{k=1}^m \sum_{i=1}^p\left( f_i(x_k) - \hat{f_i}(x_k)\right)^2}\\
     & \leq \sqrt{\sum_{i=1}^p \sum_{k=1}^m\frac{1}{m}\left( f_i(x_k) - \hat{f_i}(x_k)\right)^2}\\
     & \leq \sqrt{\sum_{i=1}^p \left( \left\|(f_i(x_1),\ldots,f_i(x_m))-(\hat{f_i}(x_1),\ldots\hat{f_i}(x_m))\right\|_2^{\ell_2}\right)^2}\\
      &\leq \sqrt{\sum_{i=1}^p\frac{\epsilon^2}{p}}\\
      &\leq \epsilon
    \end{aligned}
\end{equation*}
Therefore, we can conclude that $C$ is an $\epsilon$-cover for $\cF_{|S}$. Since $|C|\leq \prod_{i=1}^p |C_i|$ the result follows.
\end{proof}
In the following we will state the covering number bounds that are compared using their NVACs and generalization bounds in Section~\ref{sec:experiments}. 

{\bf Covering number bounds.} We first state the bound in Theorem~\ref{thm:neuralnet}, where we use the covering number of Theorem~\ref{lemma:net} for every but the first layer of the neural network and Lemma 14.7 in \cite{anthony1999neural} for the first layer. This theorem is almost the same as Corollary~\ref{thm:ours_main} in Section~\ref{sec:applications}. It only has one more step on relating the covering number of the class $\cH$ to $\cH_{\gamma}$.
\begin{theorem}[Covering number bound of Theorem~\ref{thm:neuralnet} for ramp loss]\label{thm:ours}
 Let $\net{d}{p_1}{L_1}$,$\net{p_1}{p_2}{L_2}$,$\ldots$, $\net{p_{T-1}}{p_T}{L_T}$ be $T$ classes of neural networks. Denote the $T$-layer noisy network by
    \begin{equation*}
        \rv{\cF}=\rv{\cG_{\sigma}}\circ\net{p_{T-1}}{p_T}{L_T}     \circ \ldots \circ \rv{\cG_{\sigma}}\circ\net{p_1}{p_2}{L_2}     \circ    \rv{\cG_{\sigma}}\circ\net{d}{p_1}{L_1},
    \end{equation*}
    and let $\cH=\{h:\bR^d\to [0,1]^{p_T} \mid h(x)=\expects{\rv{f}}{~\rv{f}({x})}, \rv{f}\in \rv{\cF} \}$.
    Then we have
    \begin{equation*}
    \begin{aligned}
            &\ln N_U\left(\epsilon,\cH_{\gamma},m,\|.\|_2^{\ell_2}\right)\\
            &\leq \sum_{i=2}^T p_i.p_{i-1}\ln\left(30\frac{(2T\sqrt{p_T})^{3/2}p_{i-1}^{5/2}\sqrt{\ln\left(\frac{\displaystyle (10/\gamma)T\sqrt{p_T}p_{i-1}-\epsilon\sigma}{\displaystyle \epsilon\sigma} \right)}}{(\gamma\epsilon)^{3/2}\sigma^2}\ln\left(\frac{10Tp_{i-1}\sqrt{p_T}}{\gamma\epsilon\sigma}\right)\right)\\
            & + dp_1\ln\left(\frac{Tem\sqrt{p_T}}{\gamma\epsilon\sigma}\right).
    \end{aligned}
    \end{equation*}
\end{theorem}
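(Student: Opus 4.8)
\textbf{Proof proposal for Theorem~\ref{thm:ours}.}
The plan is to derive this statement as a one-line consequence of Corollary~\ref{thm:ours_main} (equivalently Theorem~\ref{thm:neuralnet}) combined with Lemma~\ref{lemma:cover_to_margin}, the only extra ingredient being the passage from $\cH$ to its composition $\cH_\gamma$ with the ramp loss. Since $\cH$ is a class of functions from $\bR^d$ to $[0,1]^{p_T}\subset\bR^{p_T}$ and we take $\cY=[p_T]$, Lemma~\ref{lemma:cover_to_margin} applies and gives, for every $\epsilon>0$ and $m\in\bN$,
\[
N_U(\epsilon,\cH_\gamma,m,\|.\|_2^{\ell_2})\ \le\ N_U\!\Big(\tfrac{\gamma\epsilon}{2},\cH,m,\|.\|_2^{\ell_2}\Big).
\]
Hence it suffices to bound the right-hand side, which is precisely what Corollary~\ref{thm:ours_main} provides when its free accuracy parameter is set to $\gamma\epsilon/2$. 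I would not re-derive Corollary~\ref{thm:ours_main}: it already packages (i) the bound on $N_1$ (the $\|.\|_2^{\ell_2}$ covering number of the first, noiseless layer $\net{d}{p_1}{L_1}$) via Lemma~14.7 of \citet{anthony1999neural}, and (ii) the bound on each subsequent $N_i$ via the global TV cover of Theorem~\ref{lemma:net} transported through Corollary~\ref{coll:ell2_to_tv}, Lemma~\ref{lemma:compose_tv}, and Theorem~\ref{thm:tv_to_ell2}; all of this is inherited verbatim.

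The remaining work is routine constant-chasing in the substitution $\epsilon\mapsto\gamma\epsilon/2$ inside the bound of Corollary~\ref{thm:ours_main}. In the summand indexed by $i$ this replaces each $\epsilon\sigma$ by $\gamma\epsilon\sigma/2$ and each $\epsilon^{3/2}$ by $(\gamma\epsilon/2)^{3/2}$; pulling the factor $2^{3/2}$ out of the denominator turns $30\,(T\sqrt{p_T})^{3/2}/\epsilon^{3/2}$ into $30\,(2T\sqrt{p_T})^{3/2}/(\gamma\epsilon)^{3/2}$, and clearing the factor $\gamma$ inside the logarithms rewrites $\tfrac{5T\sqrt{p_T}p_{i-1}-\epsilon'\sigma}{\epsilon'\sigma}$ as $\tfrac{(10/\gamma)T\sqrt{p_T}p_{i-1}-\epsilon\sigma}{\epsilon\sigma}$ and $\tfrac{5Tp_{i-1}\sqrt{p_T}}{\epsilon'\sigma}$ as $\tfrac{10Tp_{i-1}\sqrt{p_T}}{\gamma\epsilon\sigma}$. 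In the last term, $dp_1\ln\!\big(Tem\sqrt{p_T}/(2\epsilon\sigma)\big)$ becomes $dp_1\ln\!\big(Tem\sqrt{p_T}/(\gamma\epsilon\sigma)\big)$ under the same substitution. This reproduces exactly the claimed expression.

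The one place warranting care — and the likeliest source of a gap if one is careless — is verifying that the hypotheses of the imported results survive the rescaling. Theorem~\ref{lemma:net} is stated under $\sigma<5d/\epsilon'$ for its effective accuracy $\epsilon'$, and here layer $i$ is invoked with $\epsilon' = \gamma\epsilon/(2T\sqrt{p_T})$ and $d=p_{i-1}$, so one needs $\sigma\le (10/\gamma)T\sqrt{p_T}p_{i-1}/\epsilon$; this holds in any reasonable regime (e.g.\ $\sigma\le 1$), and in the complementary regime the output box $[-B,B]^{p_{i-1}}$ already forces the covering number to be $1$ — exactly the degenerate case handled inside the proof of Theorem~\ref{lemma:net}. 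I would also double-check that the Lipschitz constant $2/\gamma$ of $r_\gamma\circ(-\cM)$ used in Lemma~\ref{lemma:cover_to_margin} is what produces the $\gamma\epsilon/2$ rescaling (it is: $r_\gamma$ is $1/\gamma$-Lipschitz and $-\cM$ is $2$-Lipschitz in its first argument), and that $\cH_\gamma$ is the composition obtained with $k=p_T$ label classes. Modulo these checks the statement follows immediately, so there is no substantive obstacle here — the difficulty is entirely front-loaded into Theorem~\ref{lemma:net} and Theorem~\ref{thm:neuralnet}, which this theorem merely specializes to the ramp-loss setting.
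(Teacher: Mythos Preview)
Your proposal is correct and matches the paper's own proof: both reduce to Corollary~\ref{thm:ours_main} (equivalently Theorem~\ref{thm:neuralnet} instantiated with Theorem~\ref{lemma:net} for layers $2,\ldots,T$ and the Anthony--Bartlett lemma for layer~$1$) followed by Lemma~\ref{lemma:cover_to_margin}, with the substitution $\epsilon\mapsto\gamma\epsilon/2$ carried through the constants exactly as you describe. The only cosmetic difference is ordering --- the paper re-derives the layerwise bounds and applies Lemma~\ref{lemma:cover_to_margin} at the end, whereas you invoke Corollary~\ref{thm:ours_main} as a black box and front-load the ramp-loss reduction --- but the content is identical.
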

\begin{proof}
We first use Theorem~\ref{lemma:net} to find the covering number of $\net{p_{i-1}}{p_i}{}$. Particularly, for any $2\leq i\leq T$ we have,
\begin{equation*}
    \begin{aligned}
     & \ln N_i=\ln N_U\left(\frac{\epsilon}{T\sqrt{p_T}},\rv{\cG_{\sigma}}\circ\net{p_{i-1}}{p_i}{L_i},\infty,d_{TV}^{\infty},\rv{\cG_\sigma}\circ\rv{\cX_{1,p_{i-1}}}\right)\\
     & \leq p_i.p_{i-1}\ln\left(30\frac{(T\sqrt{p_T})^{3/2}p_{i-1}^{5/2}\sqrt{\ln\left(\frac{\displaystyle 5T\sqrt{p_T}p_{i-1}-\epsilon\sigma}{\displaystyle \epsilon\sigma} \right)}}{\epsilon^{3/2}\sigma^2}\ln\left(\frac{5Tp_{i-1}\sqrt{p_T}}{\epsilon\sigma}\right)\right).
    \end{aligned}
\end{equation*}
Moreover, we use Lemma 14.17 in \citet{anthony1999neural} to find a bound on $N_1$. This lemma provides a bound with respect to $\|.\|_2^{\infty}$, however, we know that $\|.\|_2^{\ell_2}$ is always smaller than $\|.\|_2^{\infty}$ (see Remark~\ref{rem:one_infty}). Therefore, we can bound $N_1$ as follows
\begin{equation*}
    \ln N_1 \leq dp_1\ln\left(\frac{Tem\sqrt{p_T}}{2\epsilon\sigma}\right).
\end{equation*}
From Theorem~\ref{thm:neuralnet} we know that $\ln N_U\left(\epsilon,\cH,m,\|.\|_2^{\ell_2}\right)\leq \sum_{i=1}^T \ln N_i$, therefore, we can write that
\begin{equation*}
    \begin{aligned}
     &\ln N_U\left(\epsilon,\cH,m,\|.\|_2^{\ell_2}\right)\\
     & \leq \sum_{i=1}^T p_i.p_{i-1}\ln\left(30\frac{(T\sqrt{p_T})^{3/2}p_{i-1}^{5/2}\sqrt{\ln\left(\frac{\displaystyle 5T\sqrt{p_T}p_{i-1}-\epsilon\sigma}{\displaystyle \epsilon\sigma} \right)}}{\epsilon^{3/2}\sigma^2}\ln\left(\frac{5Tp_{i-1}\sqrt{p_T}}{\epsilon\sigma}\right)\right)\\
     & + dp_1\ln\left(\frac{Tem\sqrt{p_T}}{2\epsilon\sigma}\right).
    \end{aligned}
\end{equation*}
Applying Lemma~\ref{lemma:cover_to_margin} to turn this covering number into a covering number for $\cH_{\gamma}$ concludes the result.
\end{proof}
{\bf Notation.} For a matrix $W\in\bR^{d\times p}$ we denote its $\|.\|_{s,t}$ norm as $\left\|\left( \|W_{:,1}\|_s,\ldots,\|W_{:,p}\|_s \right)\right\|_{t}$, where $W_{:,i}$ denotes the $i$th column of $W$ (e.g. for a weight matrix $W$, $\|W\transpose\|_{1,\infty}$ refers to the maximum of $\|.\|_1$ norm of incoming weights of a neuron). By $\|W\|_{\sigma}$ we denote the spectral norm of a matrix. For a matrix $X\in\bR^{d\times m}$ we denote its normalized Frobenious norm by$\|X\|_F$, which is defined as $\|X\|_F=\sqrt{\frac{1}{m}\sum x_{i,j}^2}$.

We would like to mention that, in the experiments, we use a slightly different form of sigmoid function for the activation function rather than the one in Definition~\ref{def:neuralnet}. Indeed, we will add a constant to the sigmoid function to turn it into an odd function in $[-1/2,1/2]$. In the following remark we will discuss the reason behind this choice and the fact that it does not change the covering number in Theorem~\ref{thm:ours}. 
\begin{remark}
The bound in the Spectral covering number requires the activation functions to output 0 at the origin. Therefore, in our experiments in Section~\ref{sec:experiments}, we set $\phi(x)= \frac{1}{1+e^{-x}}-\frac{1}{2}$ as activation functions for neurons of the network, so that $\phi(0)=0$ and $\phi(x)\in [-1/2,1/2]$. This will not affect the covering number bound of Theorem~\ref{thm:ours}. The bound in Theorem~\ref{thm:ours} is derived from the covering number bound of Theorem~\ref{lemma:net} for single-layer neural network classes. There are three sources of dependency on the activation function in Theorem~\ref{lemma:net}. The first one is the dependence on the range of output, which is 1 for both $\phi(x)=\frac{1}{1+e^{-x}}-\frac{1}{2}$ and the sigmoid function ($\phi(x)=\frac{1}{1+e^{-x}}$ defined in Definition~\ref{def:neuralnet}. The second dependecy is the Lipschitz factor which is 1 for both of the activation functions. The final dependency is on $u=\max \left\{\left |\phi^{-1}(B-\epsilon)\right|, \left |\phi^{-1}(-B+\epsilon)\right|\right\}$. It is easy to verify that the value of $u$ for $\phi(x)=\frac{1}{1+e^{-x}}-\frac{1}{2}$ is exactly the same as the value of $u$ for $\phi(x)=\frac{1}{1+e^{-x}}$. As a result, using both $\phi(x)=\frac{1}{1+e^{-x}}$ and $\phi(x)=\frac{1}{1+e^{-x}}-\frac{1}{2}$ will result in the same covering number bound in Theorem~\ref{thm:ours}. Generally, adding a constant to the output of functions in a class will not change its covering number.
\end{remark} 

We will now discuss the Norm-based bound from Theorem 14.17 in \citet{anthony1999neural}, which is a bound for real-valued networks. Therefore, we will apply Lemma~\ref{lemma:real_high} to relate it to a covering number for neural networks with $p$ output dimensions.
\begin{theorem}[Norm-based covering number]\label{thm:norm_based}
Let $\wnet{d}{p}{v}=\{f_W:\bR^d\to[0,1]^p \mid f_W(x)=\Phi(W\transpose x), W\in\bR^{d\times p}  \text{ and } \|W\transpose\|_{1,\infty}\leq v\}$ be the class of single-layer neural networks with $d$ inputs and $p$ outputs where $\|.\|_{1,\infty}$ norm of the layer is bounded by $v$. Let $\wnet{d}{p_1}{v_1},\ldots,\wnet{p_{T-1}}{p_T}{v_T}$ be $T$ classes of neural networks and denote the $T$-layer neural network by $\cF = \wnet{p_{T-1}}{p_T}{v_T}\circ\ldots\circ \wnet{d}{p_1}{v_1}$. Denote by $V$ the maximum of  $\|.\|_{1,\infty}$ among the layers of the network, i.e., $V=\max_{i} v_i$. Then we have
\begin{equation*}
    \log_2 N_U(\epsilon,\cF_{\gamma},m,\|.\|_2^{\ell_2}) \leq \frac{p_T}{2}(\frac{2\sqrt{p_T}}{\gamma\epsilon})^{2T}(2V)^{T(T+1)}\log_2(2d+2).
\end{equation*}
\end{theorem}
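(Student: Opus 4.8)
The plan is to build the cover for the full network layer by layer, composing a one-layer bound iteratively, and then translate from the hypothesis class $\cF$ to the ramp-loss composition $\cF_\gamma$ at the very end. First I would recall the one-layer bound for $\wnet{d}{p}{v}$ with bounded $\|\cdot\|_{1,\infty}$ norm: for a single layer, Theorem~14.17 in \citet{anthony1999neural} gives a covering number for the real-valued class (one output coordinate) whose logarithm scales like $\tilde O\big(v^2 \epsilon^{-2}\log_2 d\big)$, because a single sigmoid neuron with $\ell_1$-bounded weights over a bounded input domain has an appropriate fat-shattering/pseudo-dimension control that is $v^2$-dependent. Since the one-layer class $\wnet{d}{p}{v}$ with $p$ outputs is exactly the product of $p$ copies of the real-valued class (one per output coordinate), Lemma~\ref{lemma:real_high} upgrades the real-valued bound to the $\bR^p$-valued bound at the cost of replacing $\epsilon$ by $\epsilon/\sqrt{p}$ and raising to the $p$-th power.

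Next I would compose the layers. The key structural fact is that each layer $\wnet{p_{i-1}}{p_i}{v_i}$, restricted to the relevant bounded input domain, is Lipschitz in its input: a sigmoid layer with $\|\cdot\|_{1,\infty}\le v_i$ has Lipschitz constant $O(v_i)$ in $\|\cdot\|_\infty$ (and hence, up to dimension factors, in $\|\cdot\|_2$), since $\phi$ is $1$-Lipschitz. This is the ``global cover'' mechanism of Remark~\ref{rem:global}: to cover $\cH\circ\cF$ for Lipschitz $\cH$, one takes a product of a cover of $\cF$ (at scale $\epsilon/(\text{Lip constant of }\cH)$) with a cover of $\cH$. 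Iterating this from the output layer inward, each step down picks up a multiplicative factor that shrinks the effective $\epsilon$ by roughly $2V\sqrt{p_T}$ (the per-layer Lipschitz constant times the dimension normalization), so after $T$ layers the scale appearing inside the per-layer covering number is of order $(\gamma\epsilon)/(\sqrt{p_T}(2V)^T)$, up to constants, and the product of the $T$ per-layer cover sizes produces the exponents $2T$ on $1/(\gamma\epsilon)$, $T(T+1)$ on $2V$, and the overall $p_T$ prefactor from propagating the $p$-dimensional upgrade through each layer. Finally, Lemma~\ref{lemma:cover_to_margin} converts the cover of $\cF$ into a cover of $\cF_\gamma$ by replacing $\epsilon$ with $\gamma\epsilon/2$, which is why $\gamma$ enters paired with $\epsilon$ everywhere.

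The main obstacle I expect is bookkeeping the dimension and Lipschitz factors cleanly through the recursion so that the exponents come out exactly $T$, $2T$, and $T(T+1)$ rather than something looser. In particular, one must be careful that (a) the Lipschitz constant used when composing layer $i$ with the rest is the Lipschitz constant of \emph{all subsequent layers combined}, which is a product $\prod_{j>i} O(v_j) \le (2V)^{T-i}$, so the scales at which the inner layers must be covered are geometrically small; and (b) each application of Lemma~\ref{lemma:real_high} introduces its own $\sqrt{p_i}$ factor and a $p_i$-th power, and these must be bounded uniformly by $\sqrt{p_T}$ and absorbed into the stated constants without blowing up the exponent beyond $T$ in the $2V$-base term. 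A secondary subtlety is that the one-layer bound of \citet{anthony1999neural} is stated with respect to $\|\cdot\|_2^\infty$ (or a pseudo-dimension style bound), and one should note that $\|\cdot\|_2^{\ell_2}\le\|\cdot\|_2^\infty$ (Remark~\ref{rem:one_infty}) so the bound transfers directly to the $\ell_2$-extended metric without loss. Once these accountings are pinned down, the bound follows by a straightforward induction on the number of layers.
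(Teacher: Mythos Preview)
Your proposal rests on a misreading of the cited result. Theorem~14.17 in \citet{anthony1999neural} is \emph{not} a one-layer bound that you then compose yourself; it is already the full $T$-layer norm-based bound for a real-valued network with $\ell_1$-bounded weights and $1$-Lipschitz activations. It directly gives
\[
\log_2 N_U(\epsilon,\cF_i,m,\|.\|_2^{\ell_2}) \;\le\; \tfrac{1}{2}\Big(\tfrac{1}{\epsilon}\Big)^{2T}(2V)^{T(T+1)}\log_2(2d+2)
\]
for each real-valued coordinate network $\cF_i$. The paper's proof is therefore three lines: invoke that theorem for each of the $p_T$ output coordinates, apply Lemma~\ref{lemma:real_high} once (replace $\epsilon$ by $\epsilon/\sqrt{p_T}$ and pick up a factor $p_T$), and apply Lemma~\ref{lemma:cover_to_margin} once (replace $\epsilon$ by $\gamma\epsilon/2$). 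No layerwise recursion is needed.

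Your plan --- building the composition by hand via a Lipschitz-based inductive cover, applying Lemma~\ref{lemma:real_high} at every layer, and tracking geometric shrinkage of $\epsilon$ --- is essentially a sketch of how one would \emph{prove} Theorem~14.17 itself, not how one uses it. That route can be made to work (it is roughly how Anthony and Bartlett derive the exponent $T(T+1)$ on $2V$ and $2T$ on $1/\epsilon$), but all the ``bookkeeping obstacles'' you flag are exactly the content of that theorem's own proof, and reproducing them is unnecessary here. Note also that Lemma~\ref{lemma:real_high} need only be applied once at the output, not per layer: the real-valued network already has the intermediate widths $p_1,\ldots,p_{T-1}$ built in, and only the last layer collapses to a single coordinate.
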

\begin{proof}
The proof simply follows from Theorem 14.17 in \citet{anthony1999neural} and Lemmas~\ref{lemma:cover_to_margin} and \ref{lemma:real_high} once we note that the sigmoid function is Lipschitz continuous with Lipschitz factor of 1.
\end{proof}
Next we state the Pseudo-dim-based bound. 

\begin{theorem}[Psuedo-dim-based covering number]\label{thm:pdim_based}
Let $\net{d}{p_1}{W_1},\ldots,\net{p_{T-1}}{p_T}{W_T}$ be $T$ classes of neural networks and $\cF = \net{p_{T-1}}{p_T}{W_T}\circ\ldots\circ \net{d}{p_1}{W_1}$. Denote by $\cF_i$ the class of real-valued functions corresponding to $i$-th dimension of output of functions in class $\cF$. Denote the total number of weights of the real-valued network $\cF_i$ by $W_{rvo}=dp_1+\sum_{i=2}^{T-1} p_{i-1}.p_i + p_{T-1}$ and the total number of neurons in all but the input layer of the real-valued network $\cF_i$ by $r_{rvo}=1+\sum_{i=1}^{T-1} p_i$. Furthermore, let $P$ be as follows
\begin{equation*}
    P = \left((W_{rvo}+2)r_{rvo}\right)^2 + 11(W_{rvo}+2)r_{rvo}\log_2\left(18(W_{rvo}+2)r_{rvo}^2\right).
\end{equation*} 
Then given that $m>P$ we have
\begin{equation*}
    \ln N_U(\epsilon,\cF_{\gamma},m,\|.\|_2^{\ell_2}) \leq p_TP\ln\left(\frac{2\sqrt{p_T}em}{P\gamma\epsilon}\right).
\end{equation*}
\end{theorem}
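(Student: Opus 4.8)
The strategy is to peel off the two ``packaging'' layers of $\cF_\gamma$ with the lemmas already available, and then reduce everything to a pseudo-dimension bound for a single real-valued sigmoid subnetwork. Explicitly, Lemma~\ref{lemma:cover_to_margin} gives $N_U(\epsilon,\cF_\gamma,m,\|.\|_2^{\ell_2})\le N_U(\gamma\epsilon/2,\cF,m,\|.\|_2^{\ell_2})$, and then, since $\cF$ is contained in the Cartesian product of its coordinate classes $\cF_1,\dots,\cF_{p_T}$, Lemma~\ref{lemma:real_high} gives $N_U(\gamma\epsilon/2,\cF,m,\|.\|_2^{\ell_2})\le\prod_{i=1}^{p_T}N_U\bigl(\tfrac{\gamma\epsilon}{2\sqrt{p_T}},\cF_i,m,\|.\|_2^{\ell_2}\bigr)$. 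So it remains to show each factor is at most $\bigl(\tfrac{2\sqrt{p_T}\,em}{P\gamma\epsilon}\bigr)^{P}$.

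First I would bound the pseudo-dimension of $\cF_i$. Each $\cF_i$ is the class of functions computed by a feedforward sigmoid network with exactly $W_{rvo}=dp_1+\sum_{j=2}^{T-1}p_{j-1}p_j+p_{T-1}$ tunable weights and $r_{rvo}=1+\sum_{j=1}^{T-1}p_j$ computation units (including the single final output unit). The pseudo-dimension bound for feedforward sigmoid networks (cf.\ Theorem~14.2 in \citet{anthony1999neural}) then yields $\mathrm{Pdim}(\cF_i)\le((W_{rvo}+2)r_{rvo})^2+11(W_{rvo}+2)r_{rvo}\log_2(18(W_{rvo}+2)r_{rvo}^2)=P$, where the ``$+2$'' absorbs the extra parameters (the threshold used in the definition of pseudo-dimension, and a constant) introduced when passing to the thresholded subgraph class.

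Second, since $\cF_i$ is real valued with range contained in $[0,1]$ and $m>P\ge\mathrm{Pdim}(\cF_i)$, the classical pseudo-dimension-to-covering-number estimate (Haussler's bound; see \citet{anthony1999neural}) gives $N_U(\nu,\cF_i,m,\|.\|_2^{\infty})\le(em/(P\nu))^{P}$ for every $\nu\in(0,1)$; here I use that $d\mapsto(em/(d\nu))^{d}$ is nondecreasing for $d\le m$, so the upper bound $P$ may be substituted for the true pseudo-dimension. By Remark~\ref{rem:one_infty} the $\|.\|_2^{\ell_2}$ metric is dominated by the $\|.\|_2^{\infty}$ metric, so the same bound holds for the $\|.\|_2^{\ell_2}$ covering number. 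Taking $\nu=\gamma\epsilon/(2\sqrt{p_T})$ makes each factor $\le(2\sqrt{p_T}\,em/(P\gamma\epsilon))^{P}$; multiplying the $p_T$ factors, substituting into the chain above, and taking logarithms produces exactly $\ln N_U(\epsilon,\cF_\gamma,m,\|.\|_2^{\ell_2})\le p_TP\ln\bigl(\tfrac{2\sqrt{p_T}\,em}{P\gamma\epsilon}\bigr)$.

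The main obstacle is the pseudo-dimension step: getting the constants in $P$ to line up exactly with the sigmoid-network bound in \citet{anthony1999neural}, and being careful that $W_{rvo}$, $r_{rvo}$, and the ``$+2$'' count weights, units, and the thresholding parameters in a mutually consistent way. Once that is settled the rest is bookkeeping: the hypothesis $m>P$ is precisely what the covering-number-from-pseudo-dimension bound needs, and the two reductions (Lemmas~\ref{lemma:real_high} and~\ref{lemma:cover_to_margin}) are already in hand, with the $\epsilon$-rescalings ($\gamma\epsilon/2$, then division by $\sqrt{p_T}$) combining to yield the claimed $2\sqrt{p_T}/(P\gamma\epsilon)$ factor.
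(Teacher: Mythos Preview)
Your proposal is correct and follows essentially the same route as the paper: bound the pseudo-dimension of each coordinate class $\cF_i$ via Theorem~14.2 of \citet{anthony1999neural}, convert this to a covering number via the standard pseudo-dimension-to-covering estimate (Theorem~12.2 there), and then assemble the pieces with Lemmas~\ref{lemma:real_high} and~\ref{lemma:cover_to_margin}. You are slightly more explicit than the paper about two points it leaves implicit---the monotonicity of $d\mapsto d\ln(em/(\nu d))$ needed to replace $\mathrm{Pdim}(\cF_i)$ by the upper bound $P$, and the passage from the $\|.\|_2^{\infty}$ to the $\|.\|_2^{\ell_2}$ covering number---but the argument is the same.
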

\begin{proof}
By Theorem 14.2 in \citet{anthony1999neural} we know that the pseudo dimension, $P_{\text{dim}}$, of $\cF_i$ is smaller or equal to $P$ (for a definition of pseudo dimension see for instance Chapter 11 in \citet{anthony1999neural}). Furthermore, from the standard analysis of covering number and pseudo dimension (see e.g., Theorem 12.2 in \citet{anthony1999neural}), we can write
\begin{equation*}
     \ln N_U(\epsilon,\cF_i,m,\|.\|_2^{\ell_2}) \leq P_{\text{dim}} \ln (\frac{em}{\epsilon P_{\text{dim}}}).
\end{equation*}
Combining the above equation with Lemmas~\ref{lemma:cover_to_margin} and \ref{lemma:real_high} concludes the result.
\end{proof}
Now we turn into presenting the Lipschitzness-based bound.

\begin{theorem}[Lipschitzness-based covering number]\label{thm:lipsch-based}
Let $\wnet{d}{p}{v}=\{f_W:\bR^d\to[0,1]^p \mid f_W(x)=\Phi(W\transpose x), W\in\bR^{d\times p} \text{ and } \|W\transpose\|_{1,\infty}\leq v\}$ be the class of single-layer neural networks with $d$ inputs and $p$ outputs where $\|.\|_{1,\infty}$ norm of the layer is bounded by $v$. Let $\wnet{d}{p_1}{v_1}$,$\ldots$, $\wnet{p_{T-1}}{p_T}{v_T}$ be $T$ classes of neural networks and denote the $T$-layer neural network by $\cF = \wnet{p_{T-1}}{p_T}{v_T}\circ\ldots\circ \wnet{d}{p_1}{v_1}$. Denote by $\cF_i$ the class of real-valued functions corresponding to $i$-th dimension of output of functions in class $\cF$. Let $V$ the maximum of $\|.\|_{1,\infty}$ among all but the first layer of the network, i.e., $V=\max_{2 \leq i \leq T} v_i$ and denote the total number of weights of the real-valued networks by $W_{rvo}=dp_1+\sum_{i=2}^{T-1} p_{i-1}.p_i+p_{T-1}$. Then we have
\begin{equation*}
    \ln N_U(\epsilon,\cF_{\gamma},m,\|.\|_2^{\ell_2}) \leq p_T W_{rvo}\ln \left( \frac{4em\sqrt{p_T}W_{rvo}V^{T}}{\gamma\epsilon(V-1)}\right).
\end{equation*}
\end{theorem}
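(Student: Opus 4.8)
The plan is to reduce to the real-valued case, apply the classical Lipschitz-based covering bound for layered networks (Theorem~14.5 in \citet{anthony1999neural}) coordinate-by-coordinate, and then recombine the coordinates and pass to the margin loss.

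First I would strip off the ramp loss with Lemma~\ref{lemma:cover_to_margin}: since $r_\gamma(-\cM(\cdot,y))$ is $(2/\gamma)$-Lipschitz in its first argument, $N_U(\epsilon,\cF_\gamma,m,\|.\|_2^{\ell_2})\le N_U(\gamma\epsilon/2,\cF,m,\|.\|_2^{\ell_2})$. Next, writing each output coordinate of $\cF$ as a real-valued $T$-layer network, we have $\cF\subseteq\{x\mapsto(g_1(x),\dots,g_{p_T}(x)):g_i\in\cF_i\}$ (the containment being proper because the coordinate networks share hidden weights, whereas Lemma~\ref{lemma:real_high} allows them to be chosen independently); since all $p_T$ of the classes $\cF_i$ have the same architecture and hence the same covering number, Lemma~\ref{lemma:real_high} gives
\[
\ln N_U(\gamma\epsilon/2,\cF,m,\|.\|_2^{\ell_2})\le p_T\,\ln N_U\!\left(\tfrac{\gamma\epsilon}{2\sqrt{p_T}},\cF_i,m,\|.\|_2^{\ell_2}\right),
\]
so the whole problem reduces to covering one real-valued network with $W_{rvo}$ tunable weights.

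For $\cF_i$ I would invoke Theorem~14.5 in \citet{anthony1999neural}. The mechanism underlying it is: the sigmoid is $1$-Lipschitz with range in $[0,1]$, so every hidden activation vector has $\|\cdot\|_\infty\le1$; a layer with $\|W\transpose\|_{1,\infty}\le v$ acts as a $v$-Lipschitz map in $\|\cdot\|_\infty$, and perturbing that layer's incoming weights by $\Delta$ (with $\max_k\|\Delta^{(k)}\|_1\le\eta$) moves its output by $\le\eta$ in $\|\cdot\|_\infty$ because the inputs to the layer have sup-norm at most $1$; hence a weight perturbation in layer $j$ propagates to a change in the scalar output amplified by $\prod_{l>j}v_l\le V^{T-j}$. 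Summing over layers, the map from the weight vector to the vector of network values on a size-$m$ sample is Lipschitz (in $\|.\|_2^{\ell_2}$, which by Remark~\ref{rem:one_infty} is dominated by $\|.\|_2^{\infty}$) with constant $\le\sum_{j=1}^{T}V^{T-j}\le V^{T}/(V-1)$ (using $V>1$, as implicit in the statement). Covering the weight space — a product of bounded $\ell_1$-balls of total dimension $W_{rvo}$ — to precision $\nu(V-1)/V^{T}$ and applying the standard volumetric estimate yields $\ln N_U(\nu,\cF_i,m,\|.\|_2^{\ell_2})\le W_{rvo}\ln\!\big(2emW_{rvo}V^{T}/(\nu(V-1))\big)$.

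Taking $\nu=\gamma\epsilon/(2\sqrt{p_T})$ and multiplying through by $p_T$ produces exactly the claimed
\[
\ln N_U(\epsilon,\cF_\gamma,m,\|.\|_2^{\ell_2})\le p_T W_{rvo}\ln\!\left(\frac{4em\sqrt{p_T}\,W_{rvo}V^{T}}{\gamma\epsilon(V-1)}\right).
\]
The step I would be most careful about is matching the bookkeeping of the cited theorem to the normalization used here: checking that $W_{rvo}=dp_1+\sum_{i=2}^{T-1}p_{i-1}p_i+p_{T-1}$ is indeed the weight count of the real-valued network (the last layer contributing only $p_{T-1}$ because it has a single output), that the geometric series collapses to the $V^{T}/(V-1)$ factor rather than something slightly larger, that the first-layer norm $v_1$ (which does not appear in the final bound) is correctly absorbed into the $V^{T}$ and $m$ factors, and that the passage between $\|.\|_2^{\ell_2}$ and $\|.\|_2^{\infty}$ costs nothing. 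There is no deep obstacle here — the result is a direct composition of Lemmas~\ref{lemma:cover_to_margin} and~\ref{lemma:real_high} with Theorem~14.5 of \citet{anthony1999neural}.
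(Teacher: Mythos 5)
Your proposal is correct and follows essentially the same route as the paper: both reduce $\cF_\gamma$ to $\cF$ via Lemma~\ref{lemma:cover_to_margin}, reduce the vector-valued class to $p_T$ real-valued copies via Lemma~\ref{lemma:real_high}, invoke Theorem~14.5 of Anthony and Bartlett for the real-valued network, and pass from $\|.\|_2^{\infty}$ to $\|.\|_2^{\ell_2}$ using Remark~\ref{rem:one_infty}. The only difference is that you unpack the Lipschitz-propagation mechanism behind Theorem~14.5 rather than citing it as a black box, which the paper does.
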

\begin{proof}
The covering number follows from the bound in Theorem 14.5 in \citet{anthony1999neural}, which is a $\|.\|_2^{\infty}$ covering number, but we know that $\|.\|_2^{\ell_2}$ is always smaller than $\|.\|_2^{\infty}$. Therefore, from Theorem 14.5 in \citet{anthony1999neural}, Lemma~\ref{lemma:real_high}, and the fact that sigmoid is a Lipschitz continuous function with Lipschitz factor of 1 we know that
\begin{equation*}
        \ln N_U(\epsilon,\cF,m,\|.\|_2^{\ell_2}) \leq p_T W_{rvo}\ln \left( \frac{2em\sqrt{p_T}W_{rvo}V^{T}}{\epsilon(V-1)}\right).
\end{equation*}
Combining the above equation with Lemma~\ref{lemma:cover_to_margin} will result in the desired bound.
\end{proof}
Finally, we will present the Spectral bound in \citet{bartlett2017spectrally}.
\begin{theorem}[Spectral covering number]\label{thm:spectral}
Let $\snet{d}{p}{s}{b}=\{f_W:\bR^d\to[0,1]^p \mid f_W(x)=\Phi(W\transpose x), W\in\bR^{d\times p} \text{ and } \|W\transpose\|_{\sigma}\leq s, \|W\transpose\|_{2,1}\leq b\}$ be the class of single-layer neural networks with $d$ inputs and $p$ outputs where spectral and $\|.\|_{2,1}$ norms of the layer is bounded by $s$ and $b$, respectively. Let $\snet{d}{p_1}{s_1}{b_1},\ldots,\snet{p_{T-1}}{p_T}{s_T}{b_T}$ be $T$ classes of neural networks and denote the $T$-layer neural network by $\cF = \snet{p_{T-1}}{p_T}{s_T}{b_T}\circ\ldots\circ \snet{d}{p_1}{s_1}{b_1}$. For an input set $S=\{x_1,\ldots,x_m\}\subset \bR^d$ define $X=[x_1 \ldots x_m]\in\bR^{d\times m}$ as the collection of input samples. Finally, denote by $w$ the maximum number of neurons in all layers of the network (including the input layer). Then we have
\begin{equation*}
\ln N_U(\epsilon,\cF_{\gamma},m,\|.\|_2^{\ell_2}) \leq \frac{4\|X\|_{F}^2\ln(2w^2)}{\gamma^2\epsilon^2}\left(\prod_{i=1}^Ts_i^2\right)\left(\sum_{i=1}^T\left(\frac{b_i}{s_i}\right)^{2/3}\right)^3.
\end{equation*}
\end{theorem}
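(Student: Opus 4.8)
The plan is to reduce the Spectral covering bound for the composed network $\cF$ to the single-layer covering bound of \citet{bartlett2017spectrally} via a standard chaining/composition argument for spectrally-normalized networks. First I would recall that Theorem~3.3 in \citet{bartlett2017spectrally} gives, for a fixed data matrix $X=[x_1 \ \ldots \ x_m]$ and a class of $T$-layer networks with per-layer spectral norm bounds $s_i$ and per-layer $\|\cdot\|_{2,1}$ bounds $b_i$ (with $1$-Lipschitz activations fixing $0\mapsto 0$, which is why the paper uses the recentered sigmoid $\phi(x)=\tfrac{1}{1+e^{-x}}-\tfrac12$), a bound on $\ln N_U$ of the network (mapping into $\bR$, i.e.\ composed with a final linear functional, or more precisely on the matrix-covering quantity) of the form
\begin{equation*}
\ln N_U(\nu,\cF,m,\|.\|_2^{\ell_2}) \;\lesssim\; \frac{\|X\|_F^2 \ln(w^2)}{\nu^2}\left(\prod_{i=1}^T s_i^2\right)\left(\sum_{i=1}^T \left(\frac{b_i}{s_i}\right)^{2/3}\right)^3,
\end{equation*}
where $\|X\|_F$ here is the \emph{normalized} Frobenius norm $\sqrt{\tfrac1m\sum x_{i,j}^2}$ (the normalization by $1/\sqrt m$ is exactly what converts their $\|\cdot\|_2$-scale covering number into the $\ell_2$-extended metric $\|.\|_2^{\ell_2}$ used in this paper) and $w$ is the maximum layer width. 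The main content of this step is just bookkeeping to match their normalization conventions to Definition~\ref{def:unif_cn} and to confirm that the $p_T$-dimensional output is handled directly by their matrix-covering lemma rather than needing Lemma~\ref{lemma:real_high}.

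Second I would pass from the covering number of $\cF$ to that of $\cF_\gamma$, the composition with the ramp loss. This is exactly Lemma~\ref{lemma:cover_to_margin}: since $r_\gamma\bigl(-\cM(\cdot,y)\bigr)$ is $\tfrac{2}{\gamma}$-Lipschitz in $\|\cdot\|_2$, we have $N_U(\epsilon,\cF_\gamma,m,\|.\|_2^{\ell_2}) \le N_U\bigl(\tfrac{\gamma\epsilon}{2},\cF,m,\|.\|_2^{\ell_2}\bigr)$. Substituting $\nu = \gamma\epsilon/2$ into the displayed bound replaces $1/\nu^2$ by $4/(\gamma^2\epsilon^2)$, and absorbing the factor of $4$ (and the constant factor $2$ inside $\ln(2w^2)$) into the stated form yields precisely
\begin{equation*}
\ln N_U(\epsilon,\cF_{\gamma},m,\|.\|_2^{\ell_2}) \;\le\; \frac{4\|X\|_{F}^2\ln(2w^2)}{\gamma^2\epsilon^2}\left(\prod_{i=1}^Ts_i^2\right)\left(\sum_{i=1}^T\left(\frac{b_i}{s_i}\right)^{2/3}\right)^3.
\end{equation*}

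The step I expect to be the main obstacle — really the only nontrivial one — is faithfully invoking \citet{bartlett2017spectrally}'s covering bound with the right constants and checking that the hypotheses of their theorem are met by the class $\snet{d}{p}{s}{b}$ as defined here: in particular that the activation is $1$-Lipschitz and maps $0$ to $0$ (handled by the recentering remark), that the reference point for covering is the zero network (so only the $\|\cdot\|_{2,1}$ norms, not norms of differences from some data-dependent reference, enter), and that their $\mathcal{R}_i$ quantities specialize to $\|W_i\transpose\|_{2,1}=b_i$ under the parametrization used here. I would also double-check that their bound, stated for scalar-output networks composed with a margin operator, already accommodates the $p_T$-dimensional output without an extra $\sqrt{p_T}$ or $p_T$ factor — i.e.\ that the $\|X\|_F$ and $\ln(w^2)$ terms already carry all the output-dimension dependence through $w \ge p_T$ — so that no application of Lemma~\ref{lemma:real_high} is needed here (unlike the Norm-based, Pseudo-dim-based, and Lipschitzness-based bounds). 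Everything else is substitution and constant-chasing, so the proof is short: cite Theorem~3.3 of \citet{bartlett2017spectrally}, apply Lemma~\ref{lemma:cover_to_margin}, and collect constants.
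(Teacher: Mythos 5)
Your proposal matches what the paper does: the paper provides no explicit proof of Theorem~\ref{thm:spectral}, merely citing \citet{bartlett2017spectrally} and remarking on the normalization convention for $\|X\|_F$, and your plan — invoke Bartlett et al.'s whole-network covering bound (which already handles multi-dimensional outputs via a matrix/Frobenius cover, so Lemma~\ref{lemma:real_high} is unnecessary), reconcile their unnormalized $\|\cdot\|_2$ covering scale with this paper's $\|.\|_2^{\ell_2}$ metric via the $1/\sqrt{m}$ normalization of $\|X\|_F$, and then apply Lemma~\ref{lemma:cover_to_margin} to pick up the $4/\gamma^2$ factor — is exactly that route with the bookkeeping made explicit. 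One small nit: the $\ln(2w^2)$ factor is already present in Bartlett et al.'s single-layer Maurey lemma rather than being something you absorb, but this does not affect the correctness of the argument.
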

The original bound in \citet{bartlett2017spectrally} considers the input norm $\|X\|_F^2$ to be the sum of $\|.\|_2^2$ norms of input samples and adjusts the chaining technique of Theorem~\ref{thm:dudely} to account for this assumption. Here, for the sake of consistency, we consider the Forbenious norm to be normalized and use the conventional chaining technique, which applies to the $\|.\|_2^{\ell_2}$ metric.
\begin{remark}\label{rem:sample_grow}
Some of the bounds that we presented are dependent on the number of input samples, $m$. However, for all of them the logarithm of covering number has at most a logarithmic dependence on the number of samples. It is also worth mentioning that the Spectral bound is dependent on the normalized Frobenious norm and increasing the number of copies of $S$ in Equation~\ref{eq:e2} (i.e., $mn$) will not change this norm and, therefore, the Spectral bound. 
\end{remark}
\section{Empirical results}\label{app:nvacgraphs}
In this appendix we will discuss details of the learning settings for the empirical results that were stated in Section~\ref{sec:experiments}. We train fully connected neural networks on the publicly available MNIST dataset, which consists of handwritten digits ($28 \times 28$ pixel images) with $10$ labels. Our baseline architecture has 3 hidden layers each containing 250 neurons, one input layer, and one output layer. The input layer has $784$ neurons, which are pixels of each image in MNIST dataset. The output layer has $10$ neurons, corresponding to the $10$ labels. All the activation functions are the shifted variant of the sigmoid function as discussed in Appendix~\ref{app:coverapproaches}, i.e., $\phi(x)=\frac{1}{1+e^{-x}}-\frac{1}{2}$. The additional architecures that we use are as follows: (a) fully connected neural networks with one input layer, one output layer, and $2,4,5$ hidden layers each containing 250 neurons; (b) fully connected neural networks with one input layer, one output layer, and three hidden layers each containing $64,150,350,500,800,1000,1500$ neurons. All of the experiments are performed using NVIDIA Titan V GPU.

Networks are trained with SGD optimizer with a momentum of $0.9$ and a learning rate of $0.3$. For the purpose of training the loss is set to be the cross-entropy loss. For the rest of the experiments (e.g., to report the accuracy and NVACs) ramp loss with a margin of $\gamma=0.1$ is used. The size of training, validation, and test sets are $59000$, $1000$, and $10000$, respectively. In Corollary~\ref{thm:ours_main} we are considering noisy networks with its expectation as output. Therefore, for reporting results of Corollary~\ref{thm:ours_main} we compute the output $50$ times and take an average. Computing random outputs several times and averaging them yields in negligible error bars in the demonstrated results.

The results of NVAC as a function of depth and width are depicted in Figure~\ref{fig:app1}. All of the NVACs are derived according to Remark~\ref{rem:nvac}. In Figure~\ref{fig:app1}, we also include the Norm-based approach (Theorem~\ref{thm:norm_based}) which was omitted from the Figures in Section~\ref{sec:experiments} due to its large scale. As mentioned in Section~\ref{sec:experiments}, Corollary~\ref{thm:ours_main} outperforms other bounds. In the following, we will investigate this observation. 

  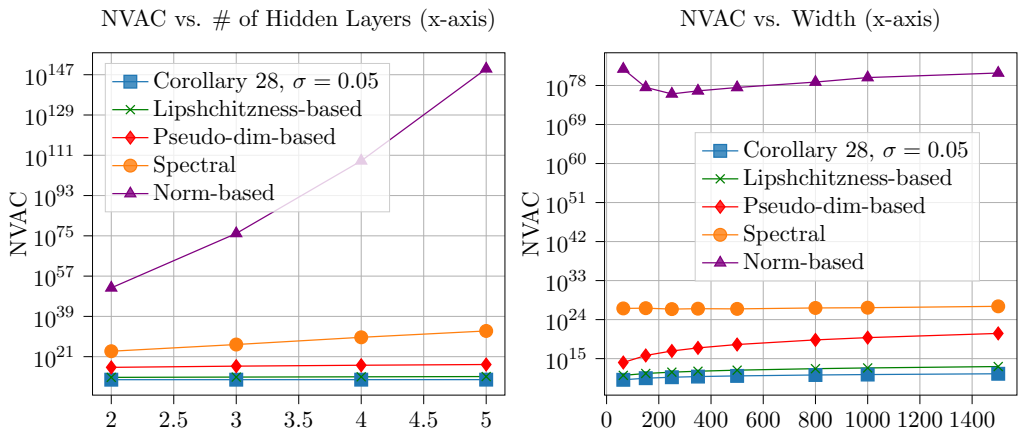
\begin{figure}
    \centering
    \subfloat{
\begin{tikzpicture}[scale=0.8]

\definecolor{darkgray176}{RGB}{176,176,176}
\definecolor{darkorange25512714}{RGB}{255,127,14}
\definecolor{green01270}{RGB}{0,127,0}
\definecolor{lightgray204}{RGB}{204,204,204}
\definecolor{purple}{RGB}{128,0,128}
\definecolor{steelblue31119180}{RGB}{31,119,180}

\begin{axis}[
legend cell align={left},
legend style={
  fill opacity=0.8,
  draw opacity=1,
  text opacity=1,
  at={(0.03,0.97)},
  anchor=north west,
  draw=lightgray204
},
log basis y={10},
tick align=outside,
tick pos=left,
title={NVAC vs. \# of Hidden Layers (x-axis)},
x grid style={darkgray176},
xlabel={},
xmajorgrids,
xmin=1.85, xmax=5.15,
xtick style={color=black},
y grid style={darkgray176},
ylabel={NVAC},
ymajorgrids,
ymin=6514.84759889463, ymax=6.47882009500106e+156,
ymode=log,
ytick style={color=black},
ytick={1e-15,1000,1e+21,1e+39,1e+57,1e+75,1e+93,1e+111,1e+129,1e+147,1e+165,1e+183},
yticklabels={
  \(\displaystyle {10^{-15}}\),
  \(\displaystyle {10^{3}}\),
  \(\displaystyle {10^{21}}\),
  \(\displaystyle {10^{39}}\),
  \(\displaystyle {10^{57}}\),
  \(\displaystyle {10^{75}}\),
  \(\displaystyle {10^{93}}\),
  \(\displaystyle {10^{111}}\),
  \(\displaystyle {10^{129}}\),
  \(\displaystyle {10^{147}}\),
  \(\displaystyle {10^{165}}\),
  \(\displaystyle {10^{183}}\)
}
]
\addplot [semithick, steelblue31119180, mark=square*, mark size=3, mark options={solid}]
table {%
2 51348186946.4345
3 51897254520.7146
4 52940188934.7658
5 54163313232.3868
};
\addlegendentry{Corollary~28, $\sigma=0.05$}
\addplot [semithick, green01270, mark=x, mark size=3, mark options={solid}]
table {%
2 561348889381.558
3 756692656869.744
4 998775160950.438
5 1300399857111.24
};
\addlegendentry{Lipshchitzness-based}
\addplot [semithick, red, mark=diamond*, mark size=3, mark options={solid}]
table {%
2 1.67727514437642e+16
3 5.81160033330929e+16
4 1.47368111391297e+17
5 3.11306486643974e+17
};
\addlegendentry{Pseudo-dim-based}
\addplot [semithick, darkorange25512714, mark=*, mark size=3, mark options={solid}]
table {%
2 2.70414353980701e+23
3 2.65862473449465e+26
4 4.48382303730719e+29
5 3.07972375913828e+32
};
\addlegendentry{Spectral}
\addplot [semithick, purple, mark=triangle*, mark size=3, mark options={solid}]
table {%
2 5.54205663826141e+51
3 1.13376887344502e+76
4 3.21099967436937e+108
5 4.3074257595747e+149
};
\addlegendentry{Norm-based}
\end{axis}

\end{tikzpicture}}
    \subfloat{
\begin{tikzpicture}[scale=0.8]

\definecolor{darkgray176}{RGB}{176,176,176}
\definecolor{darkorange25512714}{RGB}{255,127,14}
\definecolor{green01270}{RGB}{0,127,0}
\definecolor{lightgray204}{RGB}{204,204,204}
\definecolor{purple}{RGB}{128,0,128}
\definecolor{steelblue31119180}{RGB}{31,119,180}

\begin{axis}[
legend cell align={left},
legend style={
  fill opacity=0.8,
  draw opacity=1,
  text opacity=1,
  at={(0.91,0.55)},
  anchor=east,
  draw=lightgray204
},
log basis y={10},
tick align=outside,
tick pos=left,
title={NVAC vs. Width (x-axis)},
x grid style={darkgray176},
xlabel={},
xmajorgrids,
xmin=-7.8, xmax=1571.8,
xtick style={color=black},
xticklabel style={/pgf/number format/1000 sep=},
y grid style={darkgray176},
ylabel={NVAC},
ymajorgrids,
ymin=3820068.27640608, ymax=3.52196140113934e+85,
ymode=log,
ytick style={color=black},
ytick={0.001,1000000,1e+15,1e+24,1e+33,1e+42,1e+51,1e+60,1e+69,1e+78,1e+87,1e+96},
yticklabels={
  \(\displaystyle {10^{-3}}\),
  \(\displaystyle {10^{6}}\),
  \(\displaystyle {10^{15}}\),
  \(\displaystyle {10^{24}}\),
  \(\displaystyle {10^{33}}\),
  \(\displaystyle {10^{42}}\),
  \(\displaystyle {10^{51}}\),
  \(\displaystyle {10^{60}}\),
  \(\displaystyle {10^{69}}\),
  \(\displaystyle {10^{78}}\),
  \(\displaystyle {10^{87}}\),
  \(\displaystyle {10^{96}}\)
}
]
\addplot [semithick, steelblue31119180, mark=square*, mark size=3, mark options={solid}]
table {%
64 13205924003.3771
150 30686361777.6472
250 51216675135.9733
350 72268338983.3325
500 104064325981.589
800 168856734692.262
1000 211558469685.488
1500 322519292017.711
};
\addlegendentry{Corollary~28, $\sigma=0.05$}
\addplot [semithick, green01270, mark=x, mark size=3, mark options={solid}]
table {%
64 137766868695.87
150 380186511096.053
250 756692656869.744
350 1247122380783.46
500 2185495634834.64
800 4809059777031.86
1000 7131565315725.06
1500 14937255156100.9
};
\addlegendentry{Lipshchitzness-based}
\addplot [semithick, red, mark=diamond*, mark size=3, mark options={solid}]
table {%
64 126913696006677
150 5.37782869049592e+15
250 5.81160033330929e+16
350 2.97997493555181e+17
500 1.79263869708445e+18
800 2.09689808144143e+19
1000 6.98024506489934e+19
1500 6.52683570575932e+20
};
\addlegendentry{Pseudo-dim-based}
\addplot [semithick, darkorange25512714, mark=*, mark size=3, mark options={solid}]
table {%
64 3.7956495693336e+26
150 4.33315129139668e+26
250 2.65862473449465e+26
350 3.29559739224765e+26
500 2.97220906335308e+26
800 4.87732395291279e+26
1000 5.58337408836247e+26
1500 1.15447141003031e+27
};
\addlegendentry{Spectral}
\addplot [semithick, purple, mark=triangle*, mark size=3, mark options={solid}]
table {%
64 6.36929183470006e+81
150 3.64209159126428e+77
250 1.13376887344502e+76
350 5.69429440409314e+76
500 3.45174289577547e+77
800 6.1450842623535e+78
1000 6.70106956864947e+79
1500 7.59349454378849e+80
};
\addlegendentry{Norm-based}
\end{axis}

\end{tikzpicture}} 
    \caption{NVAC of different generalization bounds as a function of the number of hidden layers and width of the network.}    \label{fig:app1}
\end{figure}

The first justification behind this observation is the dependence on $1/\epsilon$. As it was discussed, we know that the NVAC in Norm-based and Spectral bounds has an extra polynomial dependence on $1/\epsilon$, compared to other bounds including Corollary~\ref{thm:ours_main}.

The second reason behind this observation is that the Spectral and Norm-based bounds depend on the product of the weights. Although one may think that in networks with large number of parameters this dependency would be better than those on the number of parameters, we will see that the Pseudo-dim-based bound, Lipschitzness-based bound, and Corollary~\ref{thm:ours_main} perform better in these cases. For instance, consider the network that has been trained with three hidden layers, each containing $1500$ neurons. In this case, the number of parameters is $\approx 5\times 10^{9}$, while in the Spectral bound, the contribution of product of norms to covering number is $\approx 1\times 10^{9}$ and the contribution of $1/\epsilon$ is $\approx 4 \times 10^{4}$. In the norm-based bound the contribution of the product of norms is $\approx 1 \times 10^{53}$ alone.

Finally, we will explore this observation by considering the dependence of these bounds on size of the network. In Section~\ref{subsec:cover_bounds} we discussed that Pseudo-dim-based bound has the worst dependence and comparing Corollary~\ref{thm:ours_main} with Lipschitzness-based bound is not straightforward. The empirical results, however, suggests that the Lipschitzness-based bound is worse than Corollary~\ref{thm:ours_main}.

It is worth mentioning that in the rightmost graph in Figure~\ref{fig:nvac_all}, the output of noisy networks are averaged over 1000 noisy outputs to obtain results that are more close to the true expectation that has been considered in the output of architecture in Corollary~\ref{thm:ours_main}. 

\section{Techniques to estimate smooth densities with mixtures of Gaussians}\label{app:gmm}
\paragraph{Notation.} 
Denote by $\mD(\rv{x})$ the probability density function of the random variable $\rv{x}$. Let $\indicator\{x \in S\}$ be an indicator function that outputs $1$ if $x\in S$ and $0$ if $x \notin S$. For a function $f:\cX\rightarrow\cY$, let $f_+(x)=\max\{0,f(x)\}$ and $f_-(x)=\min\{0,f(x)\}$. By $\bR^d\setminus [-B,B]^d$ we refer to the complement of set $[-B,B]^d$ with respect to $\bR^d$. We also denote by $f*g$ the convolution of functions $f$ and $g$. For two sets $S_1$ and $S_2$, we define their Cartesian product by $S_1\times S_2$ and by $S^d$ we refer to the Cartesian power, i.e., $S^d=\{(s_1,\ldots,s_d)\mid s_i\in S,\forall i\in[d]\}$. In the following lemma, we sometimes drop the overlines in our notation and simply write $x$ when we are referring to random variables. When it is clear from the context, we write $f$ instead of $f(x)$.

\begin{lemma}[Gaussian kernel estimation of bounded distributions]\label{lemma:gaus_kernel}
Let $\rv{x}$ be a random variable in $\rv{\cX_{B,d}}$ and denote its probability density function by $f=\mD(\rv{x})$. Let $g$ be the density function of a zero mean Gaussian random variable with covariance matrix $\sigma^2I_d$. Given a set $S=\{x_1,\ldots,x_n\}\subset\bR^d$ of i.i.d. samples $x_i\sim f,\,i\in[n]$, we define the empirical measure as $\mu_n(x)=\frac{\indicator\{x\in S\}}{n}$. Then, we have
\begin{equation*}
    \expect{\int_{\bR^d} \left|(\mu_n * g)(x) - (f*g)(x)\right|dx}\leq 2\sqrt{\frac{1}{n}}\left(\frac{2B}{\sqrt{(2\pi\sigma^2)}}+1\right)^d
\end{equation*}
\end{lemma}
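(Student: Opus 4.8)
The plan is to bound the expected $L_1$ distance between the kernel-smoothed empirical measure $\mu_n * g$ and the kernel-smoothed true density $f * g$ by a bias-free argument that exploits the fact that $\mu_n$ is an unbiased estimator of $f$ (as a measure). First I would write $(\mu_n * g)(x) = \frac{1}{n}\sum_{i=1}^n g(x - x_i)$, so that $(\mu_n * g)(x)$ is an average of i.i.d.\ random functions with mean $\expect{g(x - x_1)} = (f * g)(x)$ for each fixed $x$. Then I would pull the expectation inside the integral (Tonelli, since the integrand is nonnegative) and, for each $x$, bound $\expect{|(\mu_n*g)(x) - (f*g)(x)|}$ by the standard deviation $\sqrt{\operatorname{Var}((\mu_n*g)(x))}$ via Jensen's inequality. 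Since the $n$ summands are i.i.d., $\operatorname{Var}((\mu_n*g)(x)) = \frac{1}{n}\operatorname{Var}(g(x-x_1)) \le \frac{1}{n}\expect{g(x-x_1)^2}$.

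The next step is to bound $\int_{\bR^d} \sqrt{\frac{1}{n}\expect{g(x-x_1)^2}}\,dx$. Using Jensen/Cauchy--Schwarz on the outer integral is the wrong direction (the domain is unbounded), so instead I would use the pointwise bound $g(x - x_1)^2 \le \|g\|_\infty\, g(x-x_1)$, where $\|g\|_\infty = (2\pi\sigma^2)^{-d/2}$ is the peak of the Gaussian density. This gives $\expect{g(x-x_1)^2} \le \|g\|_\infty (f*g)(x)$, hence
\begin{equation*}
\int_{\bR^d} \sqrt{\tfrac{1}{n}\expect{g(x-x_1)^2}}\,dx \le \sqrt{\tfrac{\|g\|_\infty}{n}} \int_{\bR^d} \sqrt{(f*g)(x)}\,dx.
\end{equation*}
The remaining quantity $\int \sqrt{(f*g)(x)}\,dx$ is not finite in general over all of $\bR^d$, so the key idea is that $f$ is supported on $[-B,B]^d$, which forces $f*g$ to concentrate. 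I would split the integral over a slightly enlarged box and its complement, or — more cleanly — use Cauchy--Schwarz against an indicator of a bounded region only after observing that $f * g$ has most of its mass near $[-B,B]^d$; actually the cleanest route is: $\int \sqrt{(f*g)}\,dx = \int \sqrt{(f*g)(x)}\cdot 1\,dx$ and bound this by covering $\bR^d$ with unit cubes, noting $\int_{\text{cube}}\sqrt{h} \le \sqrt{\int_{\text{cube}} h}$ by Cauchy--Schwarz, and $\sum_{\text{cubes}} \sqrt{\int_{\text{cube}}(f*g)} \le \sqrt{(\#\text{relevant cubes})}$ since $\int_{\bR^d}(f*g) = 1$. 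The number of cubes that carry nonnegligible mass is controlled by $\big(2B/\sqrt{2\pi\sigma^2} + 1\big)^d$ — this is where the stated constant comes from, and it matches the factor $\|g\|_\infty^{-1/d}\cdot$(something); more precisely one shows $\int_{\bR^d}\sqrt{(f*g)}\,dx \le \big(\tfrac{2B}{\sqrt{2\pi\sigma^2}}+1\big)^{d} \cdot \|g\|_\infty^{1/2} \cdot (\text{const})$, or directly that $\sqrt{\|g\|_\infty}\int\sqrt{f*g} \le 2(\tfrac{2B}{\sqrt{2\pi\sigma^2}}+1)^d$.

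I expect the main obstacle to be making the final bound $\sqrt{\|g\|_\infty}\int_{\bR^d}\sqrt{(f*g)(x)}\,dx \le 2\big(\tfrac{2B}{\sqrt{2\pi\sigma^2}}+1\big)^d$ precise with the right constant. The natural approach is a partition of $\bR^d$ into axis-aligned boxes of side length $\sigma$ (or $\sqrt{2\pi}\sigma$); for each box $Q$, Cauchy--Schwarz gives $\int_Q \sqrt{f*g} \le \sqrt{|Q|}\sqrt{\int_Q f*g}$, and then $\sum_Q \sqrt{|Q|}\sqrt{\int_Q f*g} \le \sqrt{|Q|}\sqrt{N}\cdot\sqrt{\sum_Q \int_Q f*g} \le \sqrt{|Q|}\sqrt{N}$ where $N$ is the number of boxes needed to capture, say, all but a geometrically decaying tail of the Gaussian convolution — but $f*g$ has full-support Gaussian tails, so one actually keeps all boxes and uses the factor $\sqrt{|Q|}\cdot\sqrt{N_{\text{eff}}}$ with $N_{\text{eff}}$ accounting for both the $[-B,B]^d$ support and an $O(\sigma)$-width Gaussian collar, yielding $N_{\text{eff}} \approx (2B/\sigma + C)^d$. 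Choosing the box side length to match $\sqrt{2\pi\sigma^2}$ so that $\sqrt{|Q|\,\|g\|_\infty} = 1$ collapses the two $\sigma$-dependent factors and produces exactly $\big(\tfrac{2B}{\sqrt{2\pi\sigma^2}} + 1\big)^d$, with the leading $2$ absorbing the Gaussian-tail correction. Once this geometric estimate is in hand, the rest is assembling the chain of inequalities from the variance bound.
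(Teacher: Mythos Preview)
Your first two steps coincide with the paper's: swap expectation and integral by Tonelli, then bound the pointwise expected absolute deviation by $\sqrt{\operatorname{Var}((\mu_n*g)(x))}\le\sqrt{\frac{1}{n}(f*g^2)(x)}$. (The paper actually detours through $\int\mathbb{E}[(\cdot)_+]=\tfrac12\int\mathbb{E}|\cdot|$ before applying Jensen, which is where the leading factor $2$ in the statement comes from; your direct Jensen step would drop that factor but still prove the stated bound.)

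The approaches diverge at the geometric estimate $\int_{\mathbb{R}^d}\sqrt{(f*g^2)(x)}\,dx$. The paper does \emph{not} pass through $g^2\le\|g\|_\infty g$ and $\int\sqrt{f*g}$; it keeps $f*g^2$ and splits $\mathbb{R}^d=[-B,B]^d\cup([-B,B]^d)^c$. On $[-B,B]^d$ it uses the crude uniform bound $f*g^2\le(2\pi\sigma^2)^{-d}$, giving the term $(2B)^d(2\pi\sigma^2)^{-d/2}$. On the complement it partitions according to the set of coordinates $j$ with $|x^{(j)}|>B$: in those coordinates the integrand is dominated by a half-variance Gaussian (using $e^{-t^2/\sigma^2}=(e^{-t^2/(2\sigma^2)})^2$), while the remaining coordinates each contribute a factor $2B$. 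This yields the terms $\binom{d}{i}(2B)^{d-i}(\sqrt{2\pi}\sigma)^i/(2\pi\sigma^2)^{d/2}$ for $i=0,\dots,d$, and the binomial theorem collapses them to exactly $\bigl(\tfrac{2B}{\sqrt{2\pi\sigma^2}}+1\bigr)^d$. That is where the stated constant comes from.

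Your box-partition route is a legitimate alternative, but the counting in your last paragraph is off. With side length $\ell=\sqrt{2\pi\sigma^2}$ so that $\sqrt{|Q|\,\|g\|_\infty}=1$, the number of boxes meeting $[-B,B]^d$ is $N\approx(2B/\ell+1)^d$, and Cauchy--Schwarz over these $N$ boxes gives $\sum_{Q}\sqrt{\int_Q(f*g)}\le\sqrt{N}$, not $N$. Together with a geometrically-summable Gaussian-tail contribution, your method would deliver a bound of order $\bigl(\tfrac{2B}{\sqrt{2\pi\sigma^2}}+1\bigr)^{d/2}$, which is \emph{tighter} than what the lemma asserts. So there is no gap in proving the statement --- your approach succeeds a fortiori --- but it does not ``produce exactly'' the stated constant as you claim; that specific constant is an artifact of the paper's cruder split-domain-plus-binomial argument, not of any box partition.
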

\begin{proof}
Note that $\int \mu_n(x)dx=1$ and since $f$ and $g$ are probability density functions, we know that $\int(f*g)(x)dx=1$ and $\int (\mu_n*g)(x)dx=1$. Therefore, we have (for simplicity, we write $\bE_{x_i\sim f}$ instead of $\bE_{\substack{x_i\sim f,\\i\in[n]}}$)
\begin{equation}\label{eq:C0}
    \begin{aligned}
           &\expects{x_i\sim f}{\int_{\bR^d} \left|(\mu_n * g)(x) - (f*g)(x)\right|dx}\\
           &= \int_{\bR^d} \expects{x_i\sim f}{ \left|(\mu_n * g)(x) - (f*g)(x)\right|dx}\\
           &=2\int_{\bR^d} \expects{x_i\sim f}{ \left(\mu_n * g - f*g\right)_+(x)dx}\\
           &\leq 2\int_{\bR^d} \sqrt{\expects{x_i\sim f}{ \left((\mu_n * g)(x) - (f*g)(x)\right)^2}}dx && \text{(By Jensen's inequality)}\\
           &\leq 2\int_{\bR^d} \sqrt{\expects{x_i\sim f}{ \left(\frac{1}{n}\sum_{i=1}^n g(x-x_i)- \int f(y)g(x-y)dy\right)^2}}dx.
    \end{aligned}
\end{equation}
Now, we can write
\begin{equation}\label{eq:C1}
    \begin{aligned}
    &\expects{x_i\sim f}{ \left(\frac{1}{n}\sum_{i=1}^n g(x-x_i)- \int f(y)g(x-y)dy\right)^2} = \expects{x_i\sim f}{\left(\frac{1}{n}\sum_{i=1}^n g(x-x_i)\right)^2}\\
    &+ \expects{x_i\sim f}{\left(  \int f(y)g(x-y)dy \right)^2}- \expects{x_i\sim f}{2\left(\frac{1}{n}\sum_{i=1}^n g(x-x_i)\right)\left(  \int f(y)g(x-y)dy \right)}\\
    &= \expects{x_i\sim f}{\left(\frac{1}{n}\sum_{i=1}^n g(x-x_i)\right)^2}+\left(\int f(y)g(x-y)dy \right)^2\\
    &- 2\left(\int f(y)g(x-y)dy \right)\left(\frac{1}{n}\sum_{i=1}^n \expects{x_i\sim f}{g(x-x_i)}\right)\\
    &=\expects{x_i\sim f}{\left(\frac{1}{n}\sum_{i=1}^n g(x-x_i)\right)^2}-\left(\int f(y)g(x-y)dy \right)^2,
    \end{aligned}
\end{equation}
where the last equality comes from the fact that the expectation is over random variables $x_1,\ldots,x_n$
\begin{equation*}
    \begin{aligned}
    \frac{1}{n}\sum_{i=1}^n \expects{x_i\sim f}{g(x-x_i)}=\frac{1}{n}\sum_{i=1}^n\int g(x-y)f(y)dy=\int g(x-y)f(y)dy=f*g.
    \end{aligned}
\end{equation*}
Next, we know that
\begin{equation}\label{eq:C2}
    \begin{aligned}
    &\expects{x_i\sim f}{\left(\frac{1}{n}\sum_{i=1}^n g(x-x_i)\right)^2} = \frac{1}{n^2}\expects{x_i\sim f}{\left(\sum_{i=1}^n g(x-x_i)\right)^2}\\
    &=\frac{1}{n^2}\expects{x_i\sim f}{\sum_{i=1}^n g(x-x_i)^2}+\frac{1}{n^2}\expects{}{\sum_{i\neq j}^n g(x-x_i)g(x-x_j)}\\
    &=\frac{1}{n^2}\sum_{i=1}^n \expects{x_i\sim f}{g(x-x_i)^2}+\frac{1}{n^2}\sum_{i\neq j}^n\expects{x_i,x_j\sim f}{ g(x-x_i)g(x-x_j)}\\
    &=\frac{1}{n}\expects{x_i\sim f}{g(x-x_i)^2}+\frac{1}{n^2}\sum_{i\neq j}^n\expects{x_i\sim f}{g(x-x_i)}\expects{x_j\sim f}{g(x-x_j)}\\
    &=\frac{1}{n}\expects{x_i\sim f}{g(x-x_i)^2}+(1-\frac{1}{n})\left(\expects{x_i\sim f}{g(x-x_i)}\right)^2\\
    &=\frac{1}{n}\expects{x_i\sim f}{g(x-x_i)^2}+(1-\frac{1}{n})\left(\int g(x-y)f(y)dy\right)^2.
    \end{aligned}
\end{equation}
Putting Equations~\ref{eq:C2} and $\ref{eq:C1}$ together, we have
\begin{equation}
    \begin{aligned}
     &\expects{x_i\sim f}{ \left(\frac{1}{n}\sum_{i=1}^n g(x-x_i)- \int f(y)g(x-y)dy\right)^2}\\
     &=\frac{1}{n}\expects{x_i\sim f}{g(x-x_i)^2}-\frac{1}{n}\left(\int g(x-y)f(y)dy\right)^2\\
     &=\frac{1}{n}\int g(x-y)^2f(y)dy-\frac{1}{n}\left(\int g(x-y)f(y)dy\right)^2\\
     &=\frac{1}{n}\left(f*g^2-(f*g)^2\right).
    \end{aligned}
\end{equation}
Therefore, we can rewrite Equation~\ref{eq:C0} as
\begin{equation}\label{eq:C3}
    \begin{aligned}
           &\expects{x_i\sim f}{\int_{\bR^d} \left|(\mu_n * g)(x) - (f*g)(x)\right|dx}\\
           &\leq {2\int_{\bR^d}\sqrt{\frac{1}{n}(f*g^2-(f*g)^2)}dx}\\
           &\leq {2\sqrt{\frac{1}{n}}\int_{\bR^d}\sqrt{(f*g^2-(f*g)^2)}dx}.
    \end{aligned}
\end{equation}
We know that $g$ is the probability density function of $\cN(\mathbf{0},\sigma^2I_d)$. Consequently, we know that
\begin{equation*}
    g(x)^2=\frac{1}{(2\pi)^d \sigma^{2d}}\exp({-\frac{1}{\sigma^2}x\transpose x})\leq\frac{1}{(2\pi\sigma ^2)^d},
\end{equation*}
and we can rewrite Equation~\ref{eq:C3} as
\begin{equation*}
    \begin{aligned}
      &\expects{x_i\sim f}{\int_{\bR^d} \left|(\mu_n * g)(x) - (f*g)(x)\right|dx}\\
           &\leq 2\sqrt{\frac{1}{n}}\int_{\bR^d}\sqrt{(f*g^2-(f*g)^2)}dx\leq 2\sqrt{\frac{1}{n}} \int_{\bR^d}\sqrt{f*g^2}dx\\
           & \leq 2\sqrt{\frac{1}{n}} \int_{\bR^d}\sqrt{\int g(x-y)^2f(y)dy}\,dx\\
           & =2\sqrt{\frac{1}{n}} \int_{\bR^d}\sqrt{\int \frac{1}{(2\pi\sigma^2)^d}\exp\left(-\frac{1}{\sigma^2}(x-y)\transpose(x-y)\right)f(y)dy}\,dx\\
            & =2\sqrt{\frac{1}{n}} \int_{[-B,B]^d}\sqrt{\int \frac{1}{(2\pi\sigma^2)^d}\exp\left(-\frac{1}{\sigma^2}(x-y)\transpose(x-y)\right)f(y)dy}\,dx\\
           & +2\sqrt{\frac{1}{n}} \int_{\bR^d\setminus [-B,B]^d}\sqrt{\int \frac{1}{(2\pi\sigma^2)^d}\exp\left(-\frac{1}{\sigma^2}(x-y)\transpose(x-y)\right)f(y)dy}\,dx\\
            & \leq 2\sqrt{\frac{1}{n}} \int_{[-B,B]^d}\sqrt{\int \frac{1}{(2\pi\sigma^2)^d}f(y)dy}\,dx \\
            &+ 2\sqrt{\frac{1}{n}} \int_{\bR^d\setminus[-B,B]^d}\sqrt{\frac{1}{(2\pi\sigma^2)^d}\int \exp\left(-\frac{1}{\sigma^2}(x-y)\transpose(x-y)\right)f(y)dy}\,dx.
    \end{aligned}
\end{equation*}
We can then conclude that
\begin{equation}
    \begin{aligned}
             &\expects{x_i\sim f}{\int_{\bR^d} \left|(\mu_n * g)(x) - (f*g)(x)\right|dx}\\
             & \leq 2\sqrt{\frac{1}{n}} \int_{[-B,B]^d}\sqrt{ \frac{1}{(2\pi\sigma^2)^d}}\,dx\\
            &+ 2\sqrt{\frac{1}{n}} \int_{\bR^d\setminus[-B,B]^d}\sqrt{\frac{1}{(2\pi\sigma^2)^d}\int \exp\left(-\frac{1}{\sigma^2}(x-y)\transpose(x-y)\right)f(y)dy}\,dx\\
             & \leq 2\sqrt{\frac{1}{n}} \frac{(2B)^d}{\sqrt{(2\pi\sigma^2)^d}}+ 2\sqrt{\frac{1}{n}} \int_{\bR^d\setminus[-B,B]^d}\sqrt{\frac{1}{(2\pi\sigma^2)^d}\int \exp\left(-\frac{1}{\sigma^2}(x-y)\transpose(x-y)\right)dy}\,dx\\
            & \leq 2\sqrt{\frac{1}{n}} \frac{(2B)^d}{\sqrt{(2\pi\sigma^2)^d}}+2\sqrt{\frac{1}{n}} \int_{\bR^d\setminus[-B,B]^d}\sqrt{\frac{1}{(2\pi\sigma^2)^d}} \int\exp\left(-\frac{1}{2\sigma^2}(x-y)\transpose(x-y)\right)dy\,dx\\
            & \leq 2\sqrt{\frac{1}{n}}\frac{(2B)^d}{\sqrt{(2\pi\sigma^2)^d}}+2\sqrt{\frac{1}{n}}\sum_{i=1}^d \binom{d}{i} \frac{(2B)^{d-i}\sqrt{(2\pi)^i}\sigma^i}{\sqrt{(2\pi\sigma^2)^d}}\\
            &\leq 2\sqrt{\frac{1}{n}}\sum_{i=0}^d \binom{d}{i} \frac{(2B)^{d-i}}{\sqrt{(2\pi\sigma^2)^{d-i}}} = 2\sqrt{\frac{1}{n}}\left(\frac{2B}{\sqrt{(2\pi\sigma^2)}}+1\right)^d.
    \end{aligned}
\end{equation}
Here, we used the fact that for $f$ is supported on $[-B,B]^d$ and the maximum value of $\exp(-(1/\sigma^2)(x-y)\transpose(x-y))$ is $1$ over $[-B,B]^d$. Moreover, for a fixed $x$ in $\bR^d\setminus[-B,B]^d$, the maximum value of $\exp(-(1/\sigma^2)(x-y)\transpose(x-y))$ happens when $(x-y)\transpose(x-y)$ is minimized, therefore,
Whenever $x^{(i)}>B$, the minimization occurs when $y^{(i)}=B$. On the other hand, when $x^{(i)}<B$, the minimization happens when $y^{(i)}=-B$. We can, then, consider the integration over $\bR^d\setminus[-B,B]^d$ as sum of integrals over subsets where for some $i \in [d],|x^{(i)}|>B$. Then we can upper bound the integration over each subset by the marginalization of the Gaussian variable in dimensions where $|x^{(i)}|>B$ and consider the fact that the exponent is always smaller than the exponent of an $i$ dimensional Gaussian distribution in those subsets. Note that, when we use this lemma, we consider large values of $n$ such that the expectation of our kernel estimation can get as small as desired. It is also noteworthy that the upper bound on the expectation implies that there exists a set of samples $S=\{x_1,\ldots,x_n\}$ that can achieve the desired upper bound.
\end{proof}

Lemma~\ref{lemma:gaus_kernel} can be used to estimate any bounded distribution that is perturbed with Gaussian noise with a mixture of Gaussians with bounded means and equal diagonal covariance matrices. To do so, we can first use Lemma~\ref{lemma:gaus_kernel} to approximate the distributions with Gaussian kernels over $n$ i.i.d samples from the distribution. We can then divide the subset $[-B,B]^d$ into several subsets and define a Gaussian on each subset that has a weight equal to the number of samples on each interval. We provide the formal version of this estimation in the following lemma. 
\begin{lemma}\label{lema:gmm}
Let $\rv{x}\in \rv{\cX_{B,d}}$ be a random variable and denote its probability density function by $f=\mD(\rv{x})$. Let $g$ be the density function of a zero mean Gaussian random variable with covariance matrix $\sigma^2I_d$. Then for any small value $\eta$, we can estimate $f*g$ by a mixture of $k=\lceil \frac{B}{\eta} \rceil^d$ Gaussians $\sum_{i=1}^{k} g(x-\mu_i)$, where $\mu_i\in[-B,B]^d$ and
\begin{equation*}
    d_{TV}(f*g,\sum_{i=1}^{k} g(x-\mu_i))\leq \frac{2\sqrt{d}\eta}{\sigma}
\end{equation*}
\end{lemma}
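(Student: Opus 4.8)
The plan is to chain two approximations: first replace $f*g$ by a Gaussian kernel density estimate built from finitely many samples of $\rv{x}$ (controlled by Lemma~\ref{lemma:gaus_kernel}), and then snap the center of each kernel bump to a nearby point of a fixed $\eta$-grid inside $[-B,B]^d$ (controlled by the Gaussian total variation bound of Theorem~\ref{thm:tv_gaussian}). Throughout, the claimed ``mixture $\sum_{i=1}^{k} g(x-\mu_i)$'' should be read as a \emph{weighted} mixture $\sum_{i=1}^k w_i\,g(x-\mu_i)$ with $w_i\ge 0$ and $\sum_i w_i=1$, consistent with how the lemma is invoked in the proof of Theorem~\ref{lemma:strong_net}.

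\textbf{Step 1 (finite-sample kernel estimate).} Apply Lemma~\ref{lemma:gaus_kernel} with a sample size $n$ to be fixed later. Its conclusion bounds the \emph{expected} $L^1$ error $\expect{\int_{\bR^d}|(\mu_n*g)(x)-(f*g)(x)|\,dx}$ over i.i.d.\ draws $x_1,\dots,x_n\sim f$ by $2\sqrt{1/n}\,(2B/\sqrt{2\pi\sigma^2}+1)^d$; hence there exists one realization $S=\{x_1,\dots,x_n\}\subset[-B,B]^d$ (the inclusion because $\rv{x}$ is supported on $[-B,B]^d$) achieving at most this value, i.e.
\[
 d_{TV}\!\left(f*g,\ \tfrac1n\textstyle\sum_{j=1}^n g(\cdot-x_j)\right)\ \le\ \sqrt{\tfrac1n}\,\Big(\tfrac{2B}{\sqrt{2\pi\sigma^2}}+1\Big)^{d}.
\]
Choosing $n$ large enough makes the right-hand side at most $\sqrt{d}\,\eta/\sigma$.

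\textbf{Step 2 (snapping to the grid) and combining.} Partition $[-B,B]^d$ into $k=\lceil B/\eta\rceil^{d}$ congruent axis-aligned cells, each of side length at most $2\eta$, and let $\mu_i$ be the center of the $i$-th cell $C_i$, so every point of $C_i$ lies within Euclidean distance $\sqrt{d}\,\eta$ of $\mu_i$. Put $w_i=|S\cap C_i|/n$, so $\sum_i w_i=1$, and for a sample $x_j\in C_{\iota(j)}$ apply Theorem~\ref{thm:tv_gaussian}:
\[
 d_{TV}\!\big(g(\cdot-x_j),\,g(\cdot-\mu_{\iota(j)})\big)=d_{TV}\!\big(\cN(x_j,\sigma^2 I_d),\cN(\mu_{\iota(j)},\sigma^2 I_d)\big)\le \frac{\|x_j-\mu_{\iota(j)}\|_2}{2\sigma}\le \frac{\sqrt{d}\,\eta}{2\sigma}.
\]
Since $d_{TV}=\tfrac12\|\cdot\|_1$, the triangle inequality for $\|\cdot\|_1$ gives $d_{TV}\big(\tfrac1n\sum_j g(\cdot-x_j),\,\sum_i w_i g(\cdot-\mu_i)\big)\le \tfrac1n\sum_j d_{TV}(g(\cdot-x_j),g(\cdot-\mu_{\iota(j)}))\le \sqrt{d}\,\eta/(2\sigma)$. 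Combining with Step~1 through the triangle inequality for $d_{TV}$,
\[
 d_{TV}\!\Big(f*g,\ \textstyle\sum_{i=1}^k w_i\,g(\cdot-\mu_i)\Big)\ \le\ \frac{\sqrt{d}\,\eta}{\sigma}+\frac{\sqrt{d}\,\eta}{2\sigma}\ \le\ \frac{2\sqrt{d}\,\eta}{\sigma},
\]
which is the claim.

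The only step requiring care is the passage in Step~1 from an in-expectation bound to a statement about a concrete sample set; this is the standard ``an average is at least as good as its mean'' argument, already flagged at the end of the proof of Lemma~\ref{lemma:gaus_kernel}, and it is what lets us name explicit means $\mu_i\in[-B,B]^d$. Everything else is bookkeeping: checking the $w_i$ form a probability vector, that $d_{TV}$ is subadditive under re-centering the mixture components (immediate from the $L^1$ representation), and that subdividing each coordinate axis into $\lceil B/\eta\rceil$ pieces yields cells of side $\le 2\eta$ and hence half-diagonal $\le \sqrt{d}\,\eta$; there is enough slack to reach the stated constant $2\sqrt{d}\,\eta/\sigma$.
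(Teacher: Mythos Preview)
Your proof is correct and follows essentially the same approach as the paper: both use Lemma~\ref{lemma:gaus_kernel} to obtain a finite-sample Gaussian kernel estimate of $f*g$, then snap each kernel center to the nearest point of a regular $\eta$-grid in $[-B,B]^d$, bounding the displacement cost via Theorem~\ref{thm:tv_gaussian} and combining the two errors by the triangle inequality. Your constants are handled a bit more tightly (you get $\sqrt{d}\eta/(2\sigma)$ for the snapping step where the paper uses a looser $\sqrt{d}\eta/\sigma$), and you correctly make explicit that the mixture is weighted, but the argument is the same.
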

\begin{proof}
From Lemma~$\ref{lemma:gaus_kernel}$, we know that there exists a set $S=\{x_1,\ldots,x_n\}\subset\bR^d$ of i.i.d. samples from $f$ and its empirical measure $\mu_n(x)=\frac{\indicator\{x\in S\}}{n}$ such that the total variation between $f$ and the sum of Gaussian kernels defined on empirical measure is bounded
\begin{equation*}
    d_{TV}\left(f*g,\sum_{i=1}^n g(x-x_i)\right)\leq2\sqrt{\frac{1}{n}}\left(\frac{2B}{\sqrt{(2\pi\sigma^2)}}+1\right)^d = \epsilon.
\end{equation*}
Denote $m=\lceil \frac{B}{\eta} \rceil$. We construct the following grid $P$ of points on $[-B,B]^d$ and choose means of the Gaussian densities based on it
\begin{equation*}
    P=\{-B+2i\eta \mid i \in [m]\}^d.
\end{equation*}
For any $a=(a_1,\ldots,a_d)\in[m]^d$, we define
\begin{equation*}
    \mu_a=\left[-B+(2a_1+1)\eta, \,\,\,\ldots\,\,\, ,-B+(2a_d+1)\eta\right]\transpose \in \bR^d
\end{equation*}
as a choice of mean vector for the Gaussian mixture. We claim that by choosing appropriate weights, we can estimate $f*g$ with respect to total variation distance by a mixture of Gaussians with means in the following set
\begin{equation*}
    M=\left\{\mu_a=[\mu_a^{(1)} \ldots \mu_a^{(d)}]\transpose \in \bR^d \mid \mu_a^{(i)}=-B+(2a_i+1)\eta,\ \forall a=(a_1,\ldots,a_d)\in[m]^d\right\}.
\end{equation*}
For the set $S=\{x_1,\ldots,x_n\}$ that was sampled for kernel estimate $\mu_n*g$, we choose the weight $w_a$ for the Gaussian density with mean $\mu_a$ as follows. Define the set $S_a$ as
\begin{equation}\label{means}
    S_a=\left\{x_i\in S\mid x_i\in\left[-B+2a_1\eta, -B+2(a_1+1)\eta\right]\times\ldots\times[-B+2a_d\eta, -B+2(a_d+1)\eta]\right\}
\end{equation}
Next, we select $w_a$ as
\begin{equation*}
    w_a=\frac{1}{n}\sum_{i=1}^n \indicator\left\{x_i\in S_a \right\} =\frac{|S_a|}{n} .
\end{equation*}
In other words, $w_a$ is the number of samples in $S$ that the $\ell_{\infty}$ distance between those samples and $\mu_a$ is smaller than $2\eta$. Note that the cardinality of $M$, which is the number of Gaussian densities in the mixture is $|M|=(\lceil\frac{B}{\eta}\rceil)^d$.

We now prove that the total variation distance between $\mu_n*g$ and $\sum_{a\in[m]^d} w_ag(x-\mu_a)$ is smaller than $\frac{\sqrt{d}}{\sigma}\eta$. 
\begin{equation}\label{eq:gkernel1}
\begin{aligned}
        & d_{TV}\left(\frac{1}{n}\sum_{i=1}^n g(x-x_i),\sum_{a\in[m]^d} w_ag(x-\mu_a)\right)\\
        &=\frac{1}{2}\left\|  \frac{1}{n}\sum_{i=1}^n g(x-x_i)-\sum_{a\in[m]^d} w_ag(x-\mu_a)   \right\|_1\\
        & =\frac{1}{2}\left\| \sum_{a\in [m]^d}\left( \frac{1}{n}\sum_{x_i\in S_a} g(x-x_i)- w_ag(x-\mu_a)\right)   \right\|_1 \\
        & \leq \frac{1}{2}\sum_{a\in [m]^d}\left\| \left( \frac{1}{n}\sum_{x_i\in S_a} g(x-x_i)- w_ag(x-\mu_a)\right)   \right\|_1 && \text{(By triangle inequality)}.
\end{aligned}
\end{equation}
Now, we can write
\begin{equation}\label{eq:gkernel2}
    \begin{aligned}
           & \left\|\frac{1}{n}\sum_{x_i\in S_a} g(x-x_i)- w_ag(x-\mu_a)\right\|_1\\
           & \leq \left\|\frac{1}{n}\sum_{x_i\in S_a} \left(g(x-x_i)- g(x-\mu_a)\right)\right\|_1 && \text{(Since $w_a=\frac{|S_a|}{n}$)}\\
            & \leq \frac{1}{n}\sum_{x_i\in S_a} \left\|g(x-x_i)- g(x-\mu_a)\right\|_1.
    \end{aligned}
\end{equation}
From Theorem~\ref{thm:tv_gaussian}, we know that
\begin{equation}\label{eq:gkernel3}
    \begin{aligned}
           &2d_{TV}\left(  g(x-x_i)- g(x-\mu_a) \right) = \left\|g(x-x_i)- g(x-\mu_a)\right\|_1\\
           &\leq \frac{\|x_i-\mu_a\|_2}{\sigma}\leq \frac{\sqrt{d}}{\sigma}2\eta.
    \end{aligned}
\end{equation}
Putting Equation~\ref{eq:gkernel3} into Equation~\ref{eq:gkernel2}, we have
\begin{equation}\label{eq:gkernel4}
\begin{aligned}
     & \left\|\frac{1}{n}\sum_{x_i\in S_a} g(x-x_i)- w_ag(x-\mu_a)\right\|_1\\
     &  \leq \frac{1}{n}\sum_{x_i \in S_a} \frac{\sqrt{d}}{\sigma}2\eta=\frac{\sqrt{d}}{\sigma}2\eta w_a.
\end{aligned}
\end{equation}
Now, putting Equations~\ref{eq:gkernel2} and \ref{eq:gkernel4} together, we can rewrite Equation~\ref{eq:gkernel1} as
\begin{equation}\label{eq:gkernel5}
    \begin{aligned}
     & d_{TV}\left(\frac{1}{n}\sum_{i=1}^n g(x-x_i),\sum_{a\in[m]^d} w_ag(x-\mu_a)\right)\\
     & \leq \frac{1}{2}\sum_{a\in [m]^d}\left\| \left( \frac{1}{n}\sum_{x_i\in S_a} g(x-x_i)- w_ag(x-\mu_a)\right)   \right\|_1\\
    & \leq \frac{1}{2}\sum_{a\in[m]^d}\frac{\sqrt{d}}{\sigma}2\eta  w_a\\
    &=\frac{\sqrt{d}}{\sigma}\eta.
    \end{aligned}
\end{equation}
Note that the bound in Equation~\ref{eq:gkernel5} does not depend on the size of sampled set $S$. Therefore, we can choose $n$ as large as we want. Specifically, we choose $n$ as follows
\begin{equation*}
    n=\left(\frac{2B}{\sqrt{2\pi\sigma^2}}+1\right)^{2d}. \left(\frac{\sqrt{d}}{2\sigma}\eta\right)^{-2}
\end{equation*}
We can then conclude that for any random variable $\rv{x}$ defined over $[-B,B]^d$, we can approximate the density function of $\rv{x}+\rv{z},\rv{z}\sim \cN(\mathbf{0},\sigma^2I_d)$ with a mixture of $\lceil \frac{B}{\eta}\rceil^d$ Gaussians with means in $[-B,B]^d$ such that
\begin{equation*}
    d_{TV}\left(f*g,\sum_{a\in[m]^d}w_ag(x-\mu_a)\right)\leq \epsilon+\frac{\sqrt{d}}{\sigma}\eta=\frac{2\sqrt{d}\eta}{\sigma}.
\end{equation*}
\end{proof}

\end{document}